\titlespacing*{\section}{0pt}{6pt}{0pt}
\titlespacing*{\subsection}{0pt}{6pt}{0pt}
\newtheorem{theorem}{Theorem}[section]
\newtheorem{lemma}[theorem]{Lemma}
\newtheorem{corollary}[theorem]{Corollary}
\newtheorem{assumption}[theorem]{Assumption}
\newenvironment{remark}[1][Remark]
  {
  \begin{proof}[\textnormal{\textbf{#1}}]}
  {\end{proof}}
\newcommand{\eps}{\varepsilon}
\newcommand{\R}{\mathbb{R}}
\newcommand{\Id}{\mbf{I}}
\newcommand{\x}{{\mbf{x}}}
\newcommand{\y}{\mbf{y}}
\newcommand{\inprod}[2]{\left\langle #1, #2\right\rangle}
\renewcommand{\norm}[1]{\left\|#1\right\|}
\renewcommand{\braces}[1]{ \left\{ #1 \right\} }
\newcommand{\inv}{^{-1}}
\newcommand{\trans}{^\top}
\newcommand{\ps}[1]{^{(#1)}}
\newcommand{\indi}{\mathbbm{1}}
\newcommand{\E}{\mathop{\mathbb{E\/}}}
\renewcommand{\P}{\mathop{\mathbb{P\/}}}
\DeclareMathOperator*{\argmin}{argmin}
\DeclareMathOperator{\poly}{poly}
\newcommand{\mbb}{\mathbb}
\newcommand{\mrm}{\mathrm}
\newcommand{\mbf}{\bm}
\newcommand{\mcal}{\mathcal}
\newcommand{\X}{\mbf{X}}
\newcommand{\F}{\mbf{F}}
\newcommand{\w}{\mbf{w}}
\renewcommand{\v}{\mbf{v}}
\newcommand{\z}{{\mbf{z}}}
\newcommand{\A}{{\mbf{A}}}
\newcommand{\W}{\mbf{W}}
\renewcommand{\a}{\mbf{a}}
\newcommand{\Loss}{\mcal{L}}
\newcommand{\Tmp}{\textnormal{\texttt{Tmp}}}
\DeclareMathOperator{\scb}{SCB}
\let\P\relax\DeclareMathOperator{\P}{\mathbb{P}}
\newcommand\smalldots{\hbox to 1em{.\hss.\hss.}}
\newcommand{\cT}{\mcal{T}}
\newcommand{\bDelta}{\mbf{\Delta}}
\newcommand{\V}{\mbf{V}}
\newcommand{\e}{\mbf{e}}
\renewcommand{\b}{\mbf{b}}
\newcommand{\One}{\mbf{1}}
\newcommand{\diag}{\mrm{diag}}
\newcommand{\bE}{\mbf{E}}
\newcommand{\g}{\mbf{g}}
\renewcommand{\w}{{\mbf{w}}}
\newcommand{\bmu}{\mbf{\mu}}
\newcommand{\bP}{\mbf{P}}
\newcommand{\bQ}{\mbf{Q}}
\newcommand{\bA}{\A}
\newcommand{\q}{\mbf{q}}
\newcommand{\m}{\mbf{m}}
\newcommand{\h}{\mbf{h}}
\newcommand{\G}{\mbf{G}}
\newcommand{\hh}{\hat{\mbf{h}}}
\newcommand{\Ek}{{\textstyle\E\ps{k}}}
\newcommand{\beps}{\mbf{\eps}}
\newcommand{\bepsNoise}{\beps_{\texttt{noise}}}
\newcommand{\bepsApprox}{\beps_{\texttt{approx}}}
\DeclareMathOperator{\Vect}{Vec}
\title{Learning and Transferring Sparse Contextual Bigrams with Linear Transformers}
\author{%
  Yunwei Ren$^*$ $\quad$ Zixuan Wang$^*$ $\quad$ Jason D.~Lee 
  
  Princeton University 
  
}
\date{}
\begin{document}
\maketitle
\allowdisplaybreaks

\begin{abstract}
  Transformers have excelled in natural language modeling and one reason behind this success is their exceptional
  ability to combine contextual informal and global knowledge. However, the theoretical basis remains unclear. 
  In this paper, first we introduce the Sparse Contextual Bigram (SCB), a natural extension of the classical bigram model,
  where the next token's generation depends on a sparse set of earlier positions determined by the last token. 
  We then analyze the training dynamics and sample complexity of learning SCB using a one-layer linear transformer with 
  a gradient-based algorithm. We show that when trained from scratch, the training process can be split into an initial sample-intensive 
  stage where the correlation is boosted from zero to a nontrivial value, followed by a more sample-efficient stage of 
  further improvement. Additionally, we prove that, provided a nontrivial correlation between the downstream and 
  pretraining tasks, finetuning from a pretrained model allows us to bypass the initial sample-intensive stage. 
  We also empirically demonstrate that our algorithm can outperform SGD in this setting and discuss its relationship 
  with the usual softmax-based transformers.
\end{abstract}

\section{Introduction}
Transformers have played a central role in modern deep learning, achieving significant success across various 
fields, including language modeling \citep{openai2023gpt4}, computer vision \citep{dosovitskiy2020image}, and natural 
sciences \citep{Jumper2021HighlyAP}. The core of transformers is the self-attention layer 
\citep{vaswani2017attention}, which can attend to any subset of the input sequence to output a weighted 
linear combination of the (transformed) tokens. 

Several capabilities of the transformers contribute to their success in language modeling. 
First, they can extract contextual information from the input token sequences, which is essential in some 
arithmetic tasks \citep{edelman2022inductive, liu2022transformers, nanda2023progress, yao2021self}.
In addition, transformers can memorize global in-domain knowledge \citep{petroni2019language, zhang2023counterfactual,%
haviv2022understanding, 274574}. These two abilities combined enable transformers to predict the next token based on 
the \textit{in-context} information as well as \textit{global}  knowledge \citep{openai2023gpt4} acquired during training. 

To theoretically understand how transformers learn both capabilities, we propose a minimalist data-generating model, 
the \textbf{Sparse Contextual Bigram} ($\scb$). This model builds on the classical bigram model and requires learning 
both contextual information and the (global) transition probabilities. Here, the next token depends on the transition
matrix $\bP$ and a sparse set of prior tokens that is determined by the last token. In particular, $\scb$ can be 
represented by a one-layer linear transformer --- a simplified architecture that can serve as an abstraction for 
studying transformer optimization \citep{ahn2023linear}, which makes it suitable for theoretical analysis.

In this paper, we investigate the training dynamics and sample complexity of training a linear transformer to learn 
the $\scb$ task using a stochastic gradient-based algorithm. Our contributions are summarized as follows:
\begin{itemize}[leftmargin=15pt]
    \item \textbf{Data model:} We introduce the \textbf{Sparse Contextual Bigram} ($\scb$) model, a simple task that 
      requires the model to learn both in-context and global information.
    \item \textbf{Convergence:} We prove convergence guarantees for a one-layer linear transformer trained on 
      with the nonconvex $\ell_1$-regularized MSE loss using preconditioned projected proximal descent, given a dataset 
      sampled from the $\scb$ model.  
    \item \textbf{Sample Complexity:} Under mild conditions on the data distribution, initialization, and hyperparameters, 
      we prove that our algorithm can recover the ground-truth with polynomial dependence on the sequence length $T$, 
      number of states $N$, and the sparsity parameter $Q \ll T$. We show that the training first goes through an 
      initial sample-intensive stage which boosts the signal with $\poly(T)$ samples, followed by a more sample-efficient 
      stage to achieve final convergence with $\poly(N, Q)$ samples. We empirically verify that our gradient-based 
      methods converge to the ground truth with a small batch size, while unregularized stochastic gradient descent
      fails due to the large variance.
    \item \textbf{Transfer Learning:} We prove that, when there is a nontrivial correlation between the pretraining 
      and downstream tasks, we can transfer a pre-trained model to bypass the first sample intensive stage, 
      so that our algorithm converges to the ground truth of the downstream task with only $\poly(N, Q)$ samples.
\end{itemize}
    
\subsection{Related works}
\textbf{Training dynamics of transformers.} Several works have studied the learnability aspects of specific transformer 
architectures. \citet{jelassi2022vision} demonstrated that a Vision Transformer (ViT) \citep{dosovitskiy2020image} 
trained through GD, augmented with positional-embedding attention matrices, can effectively capture spatial structures. 
\citet{li2023theoretical} investigated the sample complexity necessary to achieve good generalization performance on a 
similar ViT model. \citet{tarzanagh2023transformers} established a connection between the optimization landscape of 
self-attention and the formulation of a hard-margin Support Vector Machine (SVM) problem that separates and selects 
specific optimal tokens and established global convergence under strong assumptions. 
\citet{tian2023scan, tian2023joma} provided insights into the training dynamics of the self-attention and MLP layers, 
respectively, although they did not establish convergence guarantees.

Another line of work focuses on the training dynamics of in-context learning. 
\citet{mahankali2023one} was among the first to introduce linear regression as an in-context learning task, 
while \citet{zhang2023trained} proved global convergence of gradient flow for a single-layer linear self-attention 
layer on this task. \citet{huang2023context} provided a convergence guarantee for a one-layer transformer with 
softmax attention on a similar task where the in-context tokens are drawn from a specific data distribution. 
\citet{chen2024training} generalized the single-task linear regression task to a multi-task setting and 
proved the global convergence of multi-head attention architecture using gradient flow on the population 
loss with specific initialization. In contrast, our work focuses on the language modeling ability of transformers 
instead of their in-context learning ability. 

Several recent works analyzed transformers from a Markov chain perspective. 
\citet{bietti2024birth} studied the in-context bigram (phrased as \textit{induction head}) from an associative memory 
viewpoint. \citet{nichani2024transformers} proved that a simplified two-layer transformer can learn the induction head 
and generalize it to certain latent causal graphs. \citet{edelman2024evolution} further investigated training 
process on bigram and general $n$-gram tasks, and observed multi-phase dynamics. 
\citet{makkuva2024attention} studied the loss landscape of transformers trained on sequences sampled from a single 
Markov Chain. Our $\scb$ model extends the classical bigram models to allow context-dependent sparse attention on 
previous tokens. 

Several works, including \citet{tian2023scan, zhang2023trained, huang2023context, tarzanagh2023transformers,%
nichani2024transformers,kim2024transformers}, and ours, use a similar reparameterization, consolidating the key and 
query matrices into a single matrix $\W$ to simplify the dynamics of the training process. 
Most previous studies \citep{tian2023scan, zhang2023trained, huang2023context, tarzanagh2023transformers,%
nichani2024transformers, kim2024transformers, wang2024transformers, chen2024training} uses population loss to simplify 
the analysis. In contrast, our work goes beyond the population loss to analyze the sample complexity of the stochastic 
gradient descent dynamics. Although \citet{li2023theoretical} also investigated the sample complexity on a different 
task, their model requires a pre-trained initialization, while our model is trained from scratch. 

\textbf{Transfer Learning.} 
Transfer learning \citep{devlin2018bert} has gained significant attention in this deep learning era. 
From a theoretical perspective, several works have investigated the statistical guarantees of transfer learning 
from the representation learning perspective \citep{tripuraneni2020theory, du2020few, arora2019theoretical, %
hanneke2023limits}. Recent studies on transfer learning mostly focus on linear models \citep{li2022transfer,%
tian2023transfer, fei2021estimation, zhang2022class, ju2023generalization, dar2022double}. 
For dynamics of transfer learning, \citet{lampinen2018analytic} studied the behaviors of multi-layer linear networks 
in a teacher-student setting, while \citet{dhifallah2021phase} analyzed single-layer perceptrons. 
\citet{damian2022neural} showed that a two-layer neural network can efficiently learn polynomials dependent on a 
few directions, enabling transfer learning. 

To the best of our knowledge, this is the first work studying transfer learning for transformers.
Moreover, unlike previous works that assume a shared structure between the pretraining and downstream tasks, we only 
require them to have a non-trivial correlation, which is a much weaker assumption. 

\subsection{Outline of this paper}
In Section~\ref{section: setup} we formalize the problem setup, including the $\scb$ task, the transformer architecture, 
and the training algorithm. Section~\ref{section: results for training from scratch} consists of our main results, and 
we analyze the population dynamics to provide intuitions. Section~\ref{section: transfer learning} contains our 
transfer learning results. Experimental results can be found in Section~\ref{section: experiments}.

\section{Setup}
\label{section: setup}
In this section, we describe our data-generating model, the one-layer linear transformer architecture, and the training algorithm.  

\textbf{Notations.} We use $[T]$ to denote the set $\{1,2,..., T\}$. Matrices and vectors are denoted in upper-case bold letters ($\A,\V,\bDelta$, etc.) and lower-case bold letters ($\a,\q$, etc.), respectively. For norm, $\norm{\cdot}$ denotes $\ell_2$ norm and $\|\cdot\|_F$ denotes the Frobenius norm. Additionally, for $\bmu \in \mbb{R}^N$, $\|\A\|_{\bmu}$ denotes $\mu$-norm for matrix $\A\in \R^{d\times N}$ for arbitrary $d$, which is defined as $\|\A\|_{\bmu}^2:=\Tr\qty(\A\diag(\bmu)\A^\top)$. We use $\indi\{\cdot\}$ to denote the indicator function. We use $\Tilde{O}(\cdot)$ to hide logarithmic factors in the asymptotic notations.

\subsection{Data-generating model: Sparse Contextual Bigram}
\label{subsec: data generating model}

The bigram model, where the next token depends only on the current one, is arguably one of the simplest language models.
To learn this model, it suffices to learn the transition probabilities $\bP \in \R^{N \times N}$ where $P_{n, m}
= \P[ X_{t+1} = n \mid X_t = m ]$, which is achievable through a linear model ($0$-layer transformer). 

A natural way to extend the classical bigram model is to allow the next token to depend on a context-dependent set of 
previous tokens. This extension can model situations such as generating the words after the phrase ``by Theorem 3.2'',
which requires us to retrieve the statement of ``Theorem 3.2''.
Here, we propose a simple extension of this type, which we call the \textbf{Sparse Contextual Bigram} ($\scb$). 
The contextual information is encoded by a sparse probability vector $\q$ determined by the last token. 
To generate the next token, the model retrieves the tokens referenced by $\q$ and applies the transition matrix 
$\bP$ (global knowledge) to one of them according to the distribution $\q$. 

Formally, our data-generating model $\scb$ can be described as follows. 
Let $T$ be the sequence length and $[N]$ the vocabulary. Let $\bP \in \R^{N \times N}$ be a transition matrix, with 
column\footnote{This differs from the convention of the usual Markov Chain literature where the rows are the 
transition probability vectors. We use this convention as it is more compatible with our notations.} $\bP_k$ being the 
transition probability vector of token $k$. Suppose that $\bmu \in \R_{\ge 0}^N$ is the stationary distribution of 
$\bP$. 

Each input sequence consists of $T+1$ tokens $(x_1, \dots, x_{T+1})$, i.i.d.~sampled from distribution $\bmu$. 
The output token (label) is generated as follows. For each $k \in [N]$, there is a probability 
vector $\q\ps{k} \in R_{\ge 0}^T$ that represents the tokens the model needs to attend to when the last token $x_{T+1}$ 
is $k$. For notational simplicity, we will write $\x = (x_1, \dots, x_T)$, $\X = (\e_{x_1}, \dots, \e_{x_T})$ 
and $\bQ = (\q\ps{1}, \dots, \q\ps{N}) \in \R^{T \times N}$. When $x_{T+1} = k$, the output token $x_o$ is sampled from 
the distribution 
\begin{equation}
  \label{eq: data-generating model}
  \P(x_o = n \mid x_{T+1} = k, \x)
  = \sum_{t=1}^T q\ps{k}_t P_{n, x_t}, 
  \quad 
  \forall n \in [N]. 
\end{equation}
In words, we first sample a position $s \in [T]$ according to $\q\ps{k}$ and then run one step of the Markov Chain 
$\bP$ from $x_s$ to generate $x_o$. Note that this model can be represented by a one-layer linear transformer
(see the next subsection for details).  
We make the following assumptions on $\scb$ task. 

\begin{assumption}[Sparse Contextual Bigram, $\scb$]
  \label{assumption: data-generating model}
  In the $\scb$ task, we assume the following: 
  \begin{enumerate}[(a)]
    \item \textbf{($Q$-sparse)} For some $Q \ll T$ and each of $\q\ps{k}$, at most $Q$ entries are nonzero. 
    \item \textbf{(Well-conditioned)} There exists some constant $C \ge 1$ such that for every $k \in [N]$ and $t \in [T]$, 
      $q\ps{k}_t \in [1/(CQ), C/Q]$ if it is nonzero, and $\mu_k \in [1/(CN), C/N]$. 
    \item \textbf{(Nontrivial transition)} $\norm{\bP}_\mu^2 - \norm{\bmu}^2 \ge \norm{\bmu}^2$. 
    \item \textbf{(Long sequence)} $T \ge (NQ)^{10}$.
  \end{enumerate}
\end{assumption}
\begin{remark}[Remark on condition (c)]
  We say the transition $\bP$ is trivial if the transition probability vectors are all the same, i.e., $\bP = \bmu\One\trans$.
  In this case, we have $\norm{\bP}_\mu^2 = \inprod{\bmu\One\trans}{\bmu\bmu\trans} = \norm{\bmu}^2$. Requiring 
  $\norm{\bP}_\mu^2 - \norm{\bmu}^2 \ge \norm{\bmu}^2$ rules out situations where $\bP$ is too close to the trivial 
  one. Also, note that for any well-conditioned $\bmu$, we have $\norm{\bmu}^2 \ge \Omega(1/N)$. 
\end{remark}

In this work, we focus on the case where $(\x, x_{T+1}, x_o)$ are given as (one data point of) the training data with  
$(x_1, \dots, x_{T+1})$ i.i.d.~sampled from $\mu$. The $\scb$ task can be extended to a sequence-to-sequence model: we drop $x_1$ and append $x_o$ to get a new input sequence $(x_2, \dots, x_{T+1}, x_o)$, 
and then repeat the same sampling procedure to generate another token. This generates a sequence $(x_t)_{t=1}^\infty$ 
where $(x_{T+2}, x_{T+1}, \dots)$ are not independent, and this makes our model a true language model. We leave the study 
of the more complicated learning-from-$(x_t)_{t=1}^\infty$ task to future works.

\subsection{Transformer architecture}
\label{subsec: transformer architecture}
Our learner model is a one-layer single-head linear transformer \citep{akyurek2022learning, zhang2023trained,%
ahn2023linear}. A general linear transformer can be expressed as:
$
  \F(\x, x_{T+1}; \V, \A)
  = \V \bE \left(\bE\trans \A \bE \right), 
$
where $\bE$ is the embedding of the input tokens and positions, and $\A, \V$ are the parameters of the attention 
and output layers, respectively. In our setting, we only need a simpler model: 
\begin{equation}
  \label{eq: learner models}
  \F(\x, x_{T+1}; \V, \A) 
  := \V\X \left( \Id_T \A \e_{x_{T+1}} \right)
  =: \V\X \a\ps{x_{T+1}},
\end{equation}
where $\V \in \R^{N \times N}$ and $\A \in \R^{T \times N}$ are the trainable parameters, and $\a\ps{k}$ denotes 
the $k$-th column of $\A$. 
This model uses cross-attention (replacing the last $\bE$ with $\e_{x_{T+1}}$), uses only the positional embeddings 
together with the last token to compute the attention weights (replacing the second $\bE$ with $\Id_T$), and discards 
the positional embeddings in the output layer (replacing the first $\bE$ with $\X$). This is equivalent to manually 
set certain blocks in the weight matrices to $0$, which is a common practice in the theoretical literature 
to simplify the analysis \citep{nichani2024transformers, huang2023context, zhang2023trained}. 

Note that our data-generating model \eqref{eq: data-generating model} can be represented using \eqref{eq: learner 
models} by setting $\A = \bQ$ and $\V = \bP$. We will show that a modified version of SGD can approximately recover 
this ground-truth model. 

\subsection{Training algorithm}
\label{subsec: training algorithm}
We assume that the stationary distribution $\bmu$ and certain norms of the ground-truth $\bP$ and $\bQ$ 
are known when choosing the initialization and learning rate. The goal here is to recover $\bP$ and $\bQ$. Our loss function
is the $\ell_1$-regularized MSE loss. The standard way to optimize an $\ell_1$-regularized loss is to use the proximal gradient
descent. We adopt this algorithm with several additional pre-conditioning and a projection step to ensure some basic 
properties. 

Formally, let the per-sample loss be defined as 
\begin{equation}
  \label{eq: per-sample loss}
  l(\x, x_{T+1}, x_o; \V, \A) 
  := \frac{1}{2} \norm{ \e_{x_o} - \V\X\A\e_{x_{T+1}} }^2. 
\end{equation}
We initialize $\A = \One_T\One_N\trans/T$ to have uniform attention and $\V = \bmu\One\trans$ to be the trivial transition.
At each step $\tau \ge 0$, we sample $B_\tau$ fresh samples $\{ \x\ps{i}, x\ps{i}_{T+1}, x\ps{i}_o \}_{i=1}^{B_\tau}$ 
to form a mini-batch. The $\ell_1$-regularized mini-batch loss is defined as 
\[
  l_{\mrm{reg}}\ps{B_\tau, \lambda}\left( \{ \x\ps{i}, x\ps{i}_{T+1}, x\ps{i}_o \}_{i=1}^{B_\tau} ; \V, \A\right) 
  := \frac{1}{B_\tau} \sum_{i=1}^{B_\tau} l(\x\ps{i}, x\ps{i}_{T+1}, x\ps{i}_o; \V, \A) 
    + \lambda \sum_{k=1}^{N} \norm{\a\ps{k}}_1, 
\]
where $\lambda > 0$ is a parameter that controls the strength of regularization. Let $\nabla_{\V}\ps{B_\tau} l$ and 
$\nabla_{\A}\ps{B_\tau} l$ denote the mini-batch gradients of the original $l$ w.r.t.~$\V$ and $\A$, 
respectively. We then define the preconditioned gradients as 
\begin{equation}
  \label{eq: preconditioned gradients}
  \begin{aligned}
    \hat{\nabla}_{\V}\ps{B_\tau} l 
    &:= \left( \Id_N - \frac{\One_N\One_N\trans}{N} \right) \left( \nabla_{\V}\ps{B_\tau} l \right) \diag(1/\bmu) 
      \left( \Id_N - \frac{\bmu\bmu\trans}{\norm{\bmu}^2} \right), \\
    \hat{\nabla}_{\a\ps{k}}\ps{B_\tau} l 
    &:= \frac{1}{\mu_k} \left( \Id_T - \frac{\One_T\One_T\trans}{T} \right)\left( \nabla_{\a\ps{k}}\ps{B_\tau} l \right), \quad \forall k \in [N].
  \end{aligned}
\end{equation}

Here, the $1/\bmu$ rescaling plays a role similar to importance sampling. We multiply $\nabla_{\V}\ps{B_\tau} l$
with $\Id - \One\One\trans/N$ and $\Id - \bmu\bmu\trans/\norm{\bmu}^2$ to ensure at least $\One_N\trans\V = \One_N\trans, \One_T\trans\A=\One_N\trans,$ 
and $\V\bmu = \bmu$ always hold throughout training. 
Note that we project each column of $\V$ to the affine space $\{ \v \in \R^T \,:\, \One\trans\v = 1 \}$ instead of the 
probability simplex. This is sufficient for our analysis and is much easier to compute than the latter. We update 
the output layer using 
\begin{equation}
  \label{eq: update of V}
  \V_{\tau+1} 
  = \V_\tau - \eta_{\V} \hat{\nabla}_{\V}\ps{B_\tau} l, 
\end{equation}
where $\eta_V > 0$ is the step size. 
Now, consider the attention layer. Due to the existence of the $\ell_1$-regularization, the update rule becomes a simple 
variant of the standard proximal gradient descent. Formally, for step size $\eta_A > 0$, each $k \in [N]$ and 
$t \in [T]$, we have 
\begin{equation}
  \label{eq: update of a}
  \begin{aligned}
    &\a\ps{k, '}_{\tau+1} 
    = \a\ps{k}_\tau - \frac{\eta_A}{\mu_k} {\nabla}_{\a\ps{k}}\ps{B_\tau} l, 
      \quad && \text{(preconditioned GD step)}, \\
    &a\ps{k, ''}_{t, \tau+1}
    = \begin{cases}
      a\ps{k, '}_{t, \tau+1} - \lambda, 
        & \text{if } a\ps{k, '}_{t, \tau+1} \ge \lambda, \\
      0, 
        & \text{if } \left| a\ps{k, '}_{t, \tau+1} \right| \le \lambda, 
    \end{cases}, 
    \quad && \text{(proximal step)}, \\
    &\a\ps{k}_{\tau+1}
    = \underset{\{\One\trans\a = 1\}}{\mrm{Proj}}\left( \a\ps{k, ''}_{\tau+1} \right) 
    = \a\ps{k, ''}_{\tau+1}
      + \left( 1 - \One\trans\a\ps{k, ''}_{\tau+1} \right) \frac{\One}{T}, 
    \quad && \text{(projection step)}. 
  \end{aligned}
\end{equation}
For the proximal step, we will later show that no $a\ps{k}_t$ can ever become smaller than $-\lambda$, so it suffices to 
consider those two cases. During the proximal step, all small $a\ps{k}_t$ are set to $0$, and $\lambda$ is 
subtracted from all large coordinates. For notational simplicity, we define $\g^{(k)}_{\lambda, \tau} := - \eta_A\inv 
( \a\ps{k}_{\tau+1} - \a\ps{k}_\tau )$ so that we can write the update as $\a\ps{k}_{\tau+1} = \a\ps{k}_\tau 
- \eta_A \g^{(k)}_\tau$. We will choose $\lambda = 0$ in certain stages of training. In this case, \eqref{eq: update of a}
becomes the usual projected preconditioned gradient descent and we have 
\[
  \a\ps{k}_{\tau+1}
  = \a\ps{k}_\tau - \eta_A \hat{\nabla}_{\a\ps{k}}\ps{B_\tau} l
  \qquad \text{(when $\lambda = 0$)}.
\]

Our algorithm consists of three stages with different hyperparameters being used in different stages and 
certain rounding required between stages. The pseudocode is given in Algorithm~\ref{alg: training_alg} and more details 
on the projection/normalization steps are provided in Appendix~\ref{appendix: stage 2: learning the model} and \ref{stage 3: final rounding}. When we train the model from scratch, all three stages are used and 
the initialization is $\V_0 = \bmu\One\trans$ and $\A = \One_T\One_N\trans/T$. 

\textbf{Transfer learning.} When doing transfer learning, the initialization will be obtained from the weights of the 
pre-trained model and one step of gradient update. Then, we will run Algorithm~\ref{alg: training_alg} from Stage~2. 

\renewcommand{\algorithmicrequire}{\textbf{Input:}}
\begin{algorithm}[htbp]
  \caption{Projected preconditioned $\ell_1$-proximal gradient descent}\label{alg: training_alg}
  \begin{algorithmic}
      \Require{Stationary distribution $\bmu$; initialization $\V_0, \A_0$; 
        learning rates $\eta_A\ps{i}, \eta_V\ps{i}$, $i \in [3]$; 
        threshold $\lambda_0$; 
        regularization strength $\hat\lambda$;
        times $\cT_1, \cT_2, \cT_3$}
      \State{\textbf{Stage 1:} 
          Run \eqref{eq: update of V} and \eqref{eq: update of a} with
          $\eta_A = \eta_A\ps{1}$, $\eta_V = \eta_V\ps{1}$, $\lambda = 0$ for $\cT_1$ steps;
        }
      \State{\textbf{Thresholding-projection:} $\forall k \in [n]$, $\hat{\a}\ps{k} = [ a\ps{k}_t \indi\{ a\ps{k}_t \ge \lambda_0 \} ]_t$, $\a\ps{k} \gets (\Id_T - \One_T\One_T\trans/T)\hat{\a}\ps{k} $}
      \State{\textbf{Stage 2:} 
          Run \eqref{eq: update of V} and \eqref{eq: update of a} with
          $\eta_A = \eta_A\ps{2}$, $\eta_V = \eta_V\ps{2}$, $\lambda = \hat\lambda$ for $\cT_2 - \cT_1$ steps;
        }
      \State{\textbf{Thresholding-normalization:} $\forall k \in [n]$, $\hat{\a}\ps{k} = [ a\ps{k}_t \indi\{ a\ps{k}_t \ge \Omega(1/Q) \} ]_t$.
        $\a\ps{k} \gets \hat{\a}\ps{k} / \One\trans \hat{\a}\ps{k}$
      }
      \State{\textbf{Stage 3:} 
          Run \eqref{eq: update of V} and \eqref{eq: update of a} with
          $\eta_A = 0$, $\eta_V = \eta_V\ps{3}$, $\lambda = 0$ for $\cT_3 - \cT_2$ steps;
        }
  \end{algorithmic}
  \textbf{Output:} $\A_{\cT_3},\V_{\cT_3} $.
\end{algorithm}

\section{Results for training from scratch}
\label{section: results for training from scratch}
In this section, we consider the situations where we train the model from scratch, i.e., the initialization is 
$\V_0 = \bmu\One\trans$ and $\A_0 = \One\One\trans/T$ and discuss the ideas of the proof of the following theorem. 
\begin{theorem}[Theorem~\ref{thm: main}]
  \label{thm: informal main thm}
  Let $\eps > 0$ be our target accuracy and $\cT_1 = \min\{ \tau \ge 0 \,:\, 
  \max\{ \alpha_{V, \tau}, \alpha_{A, \tau} \} \ge \Theta(1/(QN)) \}$. 
  We can choose the hyperparameters in Algorithm~\ref{alg: training_alg} such that within $\poly(N, Q, 1/\eps, \log T)$ 
  steps, we have 
  $\norm{\A - \bQ}_\mu^2 \le \eps$ and $\norm{\V - \bP}_\mu^2 \le \eps$ with probability at least $1 - \delta$ and 
  the numbers of samples used before and after $\cT_1$ are $\poly(T, \delta)$ and $\poly(N, Q, 1/\eps, \log T, \delta)$, 
  respectively.
\end{theorem}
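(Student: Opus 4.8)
The plan is to reduce the matrix‑valued stochastic dynamics to a handful of scalar order parameters, push those through the three stages, and union‑bound the concentration errors. First I would record the algorithmic invariants $\One\trans\V_\tau=\One\trans$, $\V_\tau\bmu=\bmu$, $\One\trans\a\ps{k}_\tau=1$, and $a\ps{k}_{t,\tau}\ge-\lambda$, so the proximal step is exactly as written and $\V_\tau-\bmu\One\trans$, $\a\ps{k}_\tau-\One/T$ stay in fixed mean‑zero subspaces. Then I would compute the population preconditioned gradients in closed form: using $\bP\bmu=\bmu$, $\V\bmu=\bmu$, and the affine constraints, the $\bmu\bmu\trans$ and $\One\One\trans$ contributions cancel, leaving $\E[\hat\nabla_{\a\ps{k}}l]=-\gamma_\tau(\q\ps{k}-\One/T)+\rho_\tau(\a\ps{k}_\tau-\One/T)$ with $\gamma_\tau:=\inprod{\V_\tau-\bmu\One\trans}{\bP-\bmu\One\trans}_\mu$ (the ``$\V$–$\bP$ correlation'') and $\rho_\tau:=\norm{\V_\tau-\bmu\One\trans}_\mu^2$; and $-\E[\hat\nabla_\V l]$ equal, to leading order, to a scalar $c(\A_\tau)=\sum_k\mu_k\inprod{\q\ps{k}}{\a\ps{k}_\tau}$-type coefficient times the preconditioned $(\bP-\V_\tau)$, plus a controllable rank‑one term. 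Hence $\V$ always contracts toward $\bP$ at rate $\propto\eta_V c(\A_\tau)$, $\A$ contracts toward $(\gamma_\tau/\rho_\tau)\bQ$ at rate $\propto\eta_A\rho_\tau$, and $\gamma_\tau$ measures how close $\V$ is to $\bP$ in the good directions. I would then track $(\gamma_\tau,\alpha_{A,\tau},\alpha_{V,\tau})$ together with error parameters (off‑support attention mass, the bad component of $\V_\tau-\bP$) by a stage‑wise coupled induction.

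In Stage~1 ($\lambda=0$, uniform start) we have $c(\A_0)=1/T$ and $\gamma_0=0$, so $\V$ creeps toward $\bP$ at rate $\eta_V/T$ and $\gamma_\tau$ grows roughly linearly until it reaches $\Theta(\norm{\bP-\bmu\One\trans}_\mu^2)=\Omega(1/N)$ by Assumption~(c); once $\gamma_\tau$ is nonnegligible the on‑support attention entries grow at rate $\approx\eta_A\gamma_\tau/Q$ while off‑support entries perform only a bounded mean‑zero random walk. The quantitative heart of this stage is a concentration bound: since the relevant population‑gradient coordinates here are as small as $\Theta(1/(QT))$ and preconditioning by $\diag(1/\bmu)$ inflates mini‑batch fluctuations by $\poly(N)$, matching the signal forces $B_\tau=\poly(T)$; a martingale/Bernstein argument over $\cT_1=\poly(N,Q,\log T)$ steps then yields the claimed $\poly(T,\delta)$ pre‑$\cT_1$ sample count (using $T\ge(NQ)^{10}$ to absorb $N,Q$). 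I would close the induction by showing the error parameters stay $\poly$‑small and that a gap survives (on‑support entries $\gtrsim\lambda_0\gg$ off‑support entries), so the thresholding‑projection resets $\A$ to be essentially supported on $\bigcup_k\supt(\q\ps{k})$ while (together with the projection in the update rule) approximately restoring the affine constraint.

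In Stage~2 the signal is $\Omega(1/(QN))$, so all relevant gradient coordinates are $\poly(1/(NQ))$ and $B_\tau=\poly(N,Q,1/\eps)$ suffices. Choosing $\hat\lambda\ll\eta_A\gamma_\tau/Q$, the proximal step keeps erasing the (tiny, essentially nonpositive) off‑support entries while the on‑support update $\bar{\a}\ps{k}_{\tau+1}=\bar{\a}\ps{k}_\tau+\eta_A[\gamma_\tau\bar{\q}\ps{k}-\rho_\tau\bar{\a}\ps{k}_\tau-\hat\lambda\,\sgn(\cdot)]$ contracts toward a point within $O(\hat\lambda Q)$ of a scalar multiple of $\bQ$; simultaneously $c(\A_\tau)\to\sum_k\mu_k\norm{\q\ps{k}}^2=\Theta(1/Q)$ so $\V$ contracts linearly toward $\bP$. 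Coupling the two contractions — a joint potential in the $\A$–$\bQ$ and $\V$–$\bP$ errors that decays geometrically down to the $\hat\lambda$/noise floor, with $\hat\lambda=\poly(\eps/(NQ))$ — drives both errors below $\eps$ within $\poly(N,Q,\log(1/\eps))$ steps; the thresholding‑normalization then makes $\A$ an exact probability vector on the correct support with $\norm{\A-\bQ}_\mu^2\le\eps$. Finally Stage~3 freezes $\A$, so the $\V$‑loss is an exact convex quadratic whose preconditioned Hessian has eigenvalues $\Theta(1/Q)$; projected preconditioned GD converges linearly to its minimizer, which is within $\eps$ of $\bP$ because $\A$ is $\eps$‑close to $\bQ$, giving $\norm{\V-\bP}_\mu^2\le\eps$ after $\poly(Q,\log(1/\eps))$ further steps with $\poly(N,Q,1/\eps)$ samples each. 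Summing step and sample counts over stages and union‑bounding over steps gives the $1-\delta$ guarantee.

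The main obstacle is Stage~1: proving that the minuscule, self‑bootstrapping signal — $\gamma_\tau$ feeding $\alpha_{A,\tau}$ and $\A_\tau$ feeding back into $\gamma_\tau$, with no first‑order drift at initialization — survives the mini‑batch noise, and that off‑support attention coordinates provably remain below $\lambda_0$ throughout; this is exactly where the $\poly(T)$ sample cost is incurred and it demands the most delicate coupled induction and concentration bookkeeping. A secondary difficulty is the Stage~2 analysis, where the non‑smooth proximal map must be controlled jointly with the non‑convex $\A$–$\V$ coupling, i.e.\ showing the sign pattern stabilizes and the $\hat\lambda$‑shrinkage never overtakes the growing on‑support signal.
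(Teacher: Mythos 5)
Your overall architecture is the paper's: your order parameters $\gamma_\tau=\inprod{\V_\tau-\bmu\One\trans}{\bP-\bmu\One\trans}_\mu$, $\rho_\tau$, $c(\A_\tau)$ are exactly the paper's $K_{VP},K_V,K_{AQ}$ (i.e.\ unnormalized versions of $\alpha_V,\alpha_A$ plus the orthogonal errors $\bDelta_V,\bDelta_A$), and the three-stage plan — $\poly(T)$-batch signal boosting, $\ell_1$/proximal erasure of off-support coordinates so that only entrywise concentration (hence $\poly(N,Q)\log T$ samples) is needed, then thresholding--normalization of $\A$ followed by training $\V$ alone — is the same as Lemmas~\ref{lemma: stage 1: main result}, \ref{lemma: stage 2: gradient denoising}--\ref{lemma: stage 2: main}, and \ref{lemma: stage 3: rounding A}--Corollary~\ref{cor: stage 3: main}.

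However, your Stage~1 mechanism, as stated, does not deliver the bounds being proven. You claim that $\gamma_\tau$ ``grows roughly linearly'' at rate $\eta_V/T$ until it reaches $\Theta(\norm{\bP-\bmu\One\trans}_\mu^2)=\Omega(1/N)$, i.e.\ until $\alpha_V=\Theta(1)$, and only then does the attention move. Drift-driven linear growth at rate $\eta_V/T$ needs $\Omega(T)$ iterations to reach that level, which contradicts both your own later claim that $\cT_1=\poly(N,Q,\log T)$ steps and the theorem's total step count $\poly(N,Q,1/\eps,\log T)$; moreover it overshoots the theorem's $\cT_1$ threshold $\Theta(1/(QN))$, so running $\poly(T)$-sized batches until $\alpha_V=\Theta(1)$ would also violate the ``$\poly(N,Q,1/\eps,\log T)$ samples after $\cT_1$'' clause. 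The missing quantitative ingredient is the coupled two-dimensional linear system \eqref{eq: main text: dynamics of alpha}: the off-diagonal coupling gives $\alpha_{V}+\alpha_{A}$ a multiplicative factor $1+\Theta(\eta)$ per step, so the $\Theta(1/(TK_Q))$ seed is amplified geometrically and Stage~1 ends after only $O(\log(TK_Q)/\eta)$ steps at $\alpha\approx 1/(QN)$; you gesture at this ``bootstrapping'' in your difficulties paragraph, but your stated growth picture must be replaced by this argument for the step and sample accounting to close. Two smaller points: (i) your Stage~2 claim that a joint potential drives the $\alpha$-mismatch itself below $\eps$ is exactly what the paper says fails ($\alpha_A-\alpha_V$ is not contractive near $(1,1)$, the iterates stall at $(1+\delta,1-\delta)$); your normalization-plus-freeze-$\A$ Stage~3 makes that overclaim unnecessary, which is also the paper's fix, but the Stage~2 guarantee you can actually prove is only $\alpha$'s within a constant of $1$ together with small $\bDelta_A,\bDelta_V$; (ii) the Stage~2 batch size needs the $\log T$ factor from the union bound over all $T$ off-support coordinates (to keep them below $\hat\lambda$ at every step), which your ``$B_\tau=\poly(N,Q,1/\eps)$'' omits, though the theorem's stated sample complexity allows it.
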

The overall strategy is analyzing the population process and then controlling the distance between the mini-batch
trajectory and the population process\footnote{Strictly speaking, what we actually control is the distance of the 
mini-batch trajectory to the subspace the population process lies. This allows us to prevent the potential exponential
growth of the error caused by error compounding in the analysis. For details on this technique, see Appendix~\ref{sec: 
population projection}.}. In Section~\ref{sec: training-from-scratch: population process and the signal noise ratio}, 
we discuss the key properties of the population process that simplify the analysis. After that, we describe the 
dynamics of the algorithm and the signal-noise-ratio (SNR) in each of the three stages of 
Algorithm~\ref{alg: training_alg} in Section~\ref{sec: training-from-scratch: stage 1}$\sim$\ref{sec: 
training-from-scratch: stage 3}.

\subsection{The population process}
\label{subsec: the population process}
\label{sec: training-from-scratch: population process and the signal noise ratio}
In this subsection, we analyze the behavior of the population process and the evolution of the signal-noise ratio. 
More details can be found in Appendix~\ref{sec: population projection}, where the so-called population projected 
process are defined and rigorously analyzed. 

For ease of presentation, we assume $\lambda = 0$ and access to the population loss $\Loss:= \E l$. In other words, 
we consider the projected preconditioned gradient descent. By Lemma~\ref{lemma: expectation: expected preconditioned 
gradients}, the dynamics of the population process is 
controlled by 
\begin{equation}
  \label{eq: population process}
  \begin{aligned}
    \V_{\tau+1} 
    &= \V_{\tau} - \eta_V \left( 
        \norm{\A}_\mu^2 \left( \V - \bmu\One\trans \right)
        - \inprod{\bQ}{\A}_\mu \left( \bP - \bmu\One\trans \right)
      \right), \\
    \A_{\tau+1}
    &= \A_\tau 
      - \eta_A \left(  
        \left( \norm{\V}_\mu^2 - \norm{\bmu}^2 \right) \left( \a\ps{k} - \frac{\One}{T} \right) 
        - \left( \inprod{\V}{\bP}_\mu - \norm{\bmu}^2 \right) \left( \q\ps{k} - \frac{\One}{T} \right)
      \right). 
  \end{aligned}
\end{equation}
One can prove via induction on $\tau$ that $\V$ (resp.~$\A$) always stays on the straight line crossing 
$\bmu\One\trans$ and $\bP$ (resp.~$\One\One\trans/T$ and $\bQ$). In other words, there exists some time-dependent real 
numbers $\alpha_{V, \tau}$, $\alpha_{A, \tau}$, $\beta_{V, \tau} := 1 - \alpha_{V, \tau}$, $\beta_{A, \tau} 
:= 1 - \alpha_{A, \tau}$ such that $\V_\tau = \alpha_{V, \tau} \bP + \beta_{V, \tau} \bmu\One\trans$ and 
$\A_{\tau} = \alpha_{A, \tau} \bQ + \beta_{A, \tau} \One\One\trans/T$. The same calculation yields the following 
equations that govern the dynamics of $\alpha_V$ and $\alpha_A$: 
\begin{equation*}
  \begin{aligned}
    \alpha_{V, \tau+1}
    &= \alpha_{V, \tau} 
      + \eta_V K_Q \left( 1 - \alpha_A \alpha_V \right) \alpha_A
      + \eta_V \frac{1 - \alpha_V}{T} , \\
    \alpha_{A, \tau+1} 
    &= \alpha_{A, \tau} 
      + \eta_A K_P \left( 1 - \alpha_V \alpha_A \right) \alpha_V, 
  \end{aligned}
\end{equation*}
where $\alpha_{V, 0} = \alpha_{A, 0} = 0$, $K_P := \norm{\bP}_\mu^2 - \norm{\bmu}^2 \gtrsim 1/N$ and 
$K_Q := \norm{\bQ}_\mu^2 - 1/T \gtrsim 1/Q$. Choose $\eta_V = \eta/K_Q$ and $\eta_A = \eta / K_P$, and we can write the 
above in matrix form as 
\begin{equation}
  \label{eq: main text: dynamics of alpha}
  \begin{bmatrix}
    \alpha_{V, \tau+1} \\
    \alpha_{A, \tau+1}
  \end{bmatrix}
  = \eta (1 - \alpha_A \alpha_V) \begin{bmatrix} 0 & 1 \\ 1 & 0  \end{bmatrix}
    \begin{bmatrix}
      \alpha_{V, \tau} \\
      \alpha_{A, \tau}
    \end{bmatrix}
    + \frac{\eta}{K_Q} \begin{bmatrix} (1 - \alpha_V)/T \\ 0 \end{bmatrix}. 
\end{equation}
Hence, in order to analyze the population process, it suffices to analyze the above $2$-dimensional ODE. In what 
follows, when we say the signal, we usually refer to these $\alpha$'s or some quantities whose size is proportional to 
them. In particular, as one can see from \eqref{eq: main text: dynamics of alpha}, the size of the expected gradients 
is proportional to $\alpha_V$ and/or $\alpha_A$\footnote{We will often drop the time subscript $\tau$ and write $\alpha_V:=\alpha_{V,\tau}$ for simplicity.}. 

Note that when both $\alpha_V, \alpha_A$ are still small, the population dynamics of $\alpha_V, \alpha_A$ are a linear 
system with coefficient matrix $\eta \begin{bsmallmatrix} 0 & 1 \\ 1 & 0 \end{bsmallmatrix}$ and drift 
$\begin{bsmallmatrix} \eta T / K_Q \\ 0 \end{bsmallmatrix}$. The drift term will provide a small initial signal that 
guides the process toward the correct direction and then the linear term will amplify this signal. Since 
the linear term is close to $0$ at initial and the initial signal provided by the drift term has order $1/T$, we should 
expect that $\poly T$ samples are necessary to distinguish it from noises (Stage~1). After the signal becomes 
reasonably large, the first term will have order $1 / \poly(N, Q)$, and we can then rely on it (combined with the 
$l_1$-regularization) instead of the drift term to learn the model (Stage~2). 

\subsection{Stage 1: boosting the signal}
\label{sec: training-from-scratch: stage 1}
At initialization, we have $\alpha_V = \alpha_A = 0$. We define Stage~1 to be the phase until at least one of 
them has grown from $0$ to some small $\sigma_1 = 1 / \poly(N, Q)$. Note that in this stage, the mini-batch version
of \eqref{eq: main text: dynamics of alpha} is approximately equivalent to 
\begin{equation}
  \label{eq: main text: staeg 1: dynamics of alpha}
  \begin{bmatrix}
    \alpha_{V, \tau+1} \\
    \alpha_{A, \tau+1}
  \end{bmatrix}
  \approx \eta \begin{bmatrix} 0 & 1 \\ 1 & 0  \end{bmatrix}
    \begin{bmatrix}
      \alpha_{V, \tau} \\
      \alpha_{A, \tau}
    \end{bmatrix}
    + \frac{\eta}{K_Q} \begin{bmatrix} 1/T \\ 0 \end{bmatrix}
    + \bepsNoise + \bepsApprox,
\end{equation}
where $\bepsNoise$ and $\bepsApprox$ represent the errors introduced by the difference between the mini-batch 
and population gradients, and the fact that we are not exactly on the population trajectory. If we had
infinite amount of samples so that both $\bepsNoise$ and $\bepsApprox$ were $0$, then the second term on the RHS of 
\eqref{eq: main text: staeg 1: dynamics of alpha} could provide a small positive signal to $\alpha$ and the first term 
would quickly boost it to $\sigma_1$ within $\cT_1=\log(T) / \eta$ iterations. 
In order for the above analysis to work, we need both $\bepsNoise$ and $\bepsApprox$ to be at least $O(1)/T$ small. 
Since, unfortunately, $\bepsNoise$ does not scale with $1/T$, we need $\poly(T)$ samples to ensure these conditions. 

We conjecture that this $\poly(T)$ dependence is unavoidable (when only a polynomial amount of computing time is 
available). That is because around the initialization, the only signal comes from the second term and the first term amplifies 
whatever the second term provides, even if it has been corrupted by the errors. It either takes $\poly(T)$ fresh samples
each step to reveal the signal or $\poly(T)$ steps (whence also $\poly(T)$ samples) for the random noises to (hopefully) 
cancel with each other.

\subsection{Stage 2: learning the model}
\label{sec: training-from-scratch: stage 2}
We know that at the end of Stage~1, at least one of $\alpha_V$ and $\alpha_A$ is $\sigma_1 = 1/\poly(Q, N)$ large.
Hence, one may expect that the signal is $1/\poly(Q, N)$ large now so that we no longer need to make the noises 
$1/T$ small and therefore, only $\poly(N, Q)$ samples are needed. Unfortunately, this argument will not work directly,
since the variance of the mini-batch gradients scales with $T$.\footnote{It is possible almost explicitly to compute 
the covariance matrix through some tedious calculation. Intuitively, the reason it scale with $T$ is $\nabla_{\a\ps{k}} l$ has $T$ 
entries with most of them almost uncorrelated in a certain sense.} Therefore, we still need $\poly(T)$ samples to 
reduce the squared $\ell_2$-norm of $\bepsNoise$ from $\Omega(T)$ to $1/\poly(Q, N)$. To address this issue, we 
introduce the $\ell_1$-regularizer and use a variant of proximal gradient descent. 

The idea is, while the concentration in the $\ell_2$ sense is difficult, controlling the $\ell_\infty$-error is easy as every 
entry of $\nabla_{\a\ps{k}} l$ is bounded whence subgaussian. As a result, we can make the coordinate-wise difference between the 
population and mini-batch gradients $1/\poly(N, Q)$ small using only $\poly(N, Q) \log T$ samples by a standard concentration argument. 
Moreover, we have (cf.~the proof of Lemma~\ref{lemma: stage 2: gradient denoising}) 
\begin{equation}
  \label{eq: E d akt l}
  - \E \partial_{a\ps{k}_t} l
  = \mu_k \alpha_V K_P \left( q\ps{k}_t - \alpha_V a\ps{k}_t \right)
  = \mu_k \alpha_V K_P \times \begin{cases}
    \Omega(1/Q), & \text{if } q\ps{k}_t \ne 0, \\
    O(\alpha_V a\ps{k}_t), & \text{if } q\ps{k}_t = 0. 
  \end{cases}
\end{equation}
Thus, as long as $\alpha_V \ge 1/\poly(N, Q)$, the $\ell_\infty$-norm of the gradient noise being small is enough 
to create a separation between those useful entries ($q\ps{k}_t \ne 0$) and useless entries ($q\ps{k}_t = 0$) and 
ensure the $\ell_2$-error of those $Q$ useful entries is small. 

The above analysis suggests removing all small entries from the gradient will work. Now, we claim that 
$\ell_1$-regularization and proximal gradient descent naturally implement this strategy, at least approximately. 
We believe softmax-based attention layers also automatically implement this strategy. See Section~\ref{section: 
experiments} for more discussion on the relationship between our model and softmax transformers.

Note that, at the end of Stage~1 and after the thresholding-projection step --- which is approximately equivalent to running one proximal step first 
--- we know that all useful $a\ps{k}_t$ are at least $\Omega(\alpha_V / Q) = 1 / \poly(N, Q)$, while all useless 
entries are of size $O(1)/T$. By our previous discussion, we know that if $\lambda$ is chosen appropriately, with 
$\poly(N, Q, \log T)$ samples, the gradients w.r.t.~those useful entries can be made approximately correct, while the gradients w.r.t.~those useless entries are much smaller than $\lambda$. Thus, after one gradient step, the absolute value of each of those useless $a\ps{k}_t$ is 
still much smaller than $\lambda$. As a result, they will be set to $0$ in the proximal step (and to $O(1/T)$ after 
the projection step), which is equivalent to filtering out all those entries, up to a small bias. Therefore, the proximal gradient updates stay close to the population trajectory, and the growth of the signals $\alpha_A,\alpha_V$ can be analyzed using the population dynamics.

We end Stage~2 when $(\alpha_V + \alpha_A) / 2 \approx 1$. Similar to Stage~1, this also only takes $\cT_2=\tilde{O}(1/\eta)$
steps. We also show that the difference between the mini-batch trajectory and the ``population trajectory'' can decrease 
to a small value (cf.~Lemma~\ref{lemma: stage 2: main}). This allows us to decouple the error introduced by Stage~1 and the target 
accuracy. We defer the proof details to Appendix~\ref{appendix: stage 2: learning the model}.

\subsection{Stage 3: final rounding and convergence}
\label{sec: training-from-scratch: stage 3}

The purpose of Stage~3 is to fix a minor issue regarding $| \alpha_V - \alpha_A |$. Taylor expand 
\eqref{eq: main text: dynamics of alpha} around $(\alpha_A, \alpha_V) = (1, 1)$ and one will notice that although
$(\alpha_V + \alpha_A)/2$ can converge to $1$ at a linear rate (and the approximation error also decreases exponentially
fast), the convergence rate of $\alpha_A - \alpha_V$ is much slower, and the process will get stuck around $(1 + \delta, 1 - \delta)$ 
for some small nonzero $\delta$, instead of converging to $(1, 1)$. To accelerate this process, we directly round $\A$ via 
normalization, which is possible only after the approximation error becomes small in Stage~2. Then we freeze $\A$ and train 
$\V$ to the desired accuracy. More details about this stage can be found in Appendix~\ref{stage 3: final rounding}.

\section{Results for transfer learning}
\label{section: transfer learning}

The transferability of neural networks and transformers and their benefits have been widely observed and studied in both 
practice and theory. It is often assumed that the downstream and pretraining tasks share a common structure or 
representations/features, and these models can learn these common structures during training, 
and then leverage them in fine-tuning. 

In this section, we offer a different perspective: as long as there is a (potentially small) nontrivial correlation 
between the pretraining and downstream tasks, the pretrained model can be used to provide a nonzero initial signal, 
allowing us to bypass the initial sample-intensive signal-boosting stage.

Formally, we consider the following setting. Let $\hat{\bP}$ be the transition matrix of the pretraining task and 
$(\bP, \bQ)$ the transition matrix and $\bQ$-matrix of the downstream task. We still assume Assumption~\ref{assumption: 
data-generating model}. In addition, we assume $\hat\bP$ and $\bP$ share the same stationary distribution $\bmu$, 
$\norm{\hat\bP}_\mu^2 = \Theta(1) \norm{\bP}_\mu^2$ and 
\(
  \inprod{\hat\bP}{\bP}_\mu - \norm{\bmu}^2 \ge \norm{\bmu}^2. 
\)
The last condition can be viewed as the transfer learning version of condition (c) of Assumption~\ref{assumption: 
data-generating model}. Note that we allow the correlation between $\hat\bP$ and $\bP$ to be as small as $o(1)$.

\begin{theorem}[informal version of Theorem~\ref{thm: transfer learning: main}]
  Consider the above setting. Initialize $\A = \One\One\trans/T$, $\V = \theta \hat{\bP} 
  + (1 - \theta) \bmu\One\trans$ for some small $\theta > 0$, and run one step of gradient update on $\A$. 
  Then, running Algorithm~\ref{alg: training_alg} from Stage~2 allows us to recover $(\bP, \bQ)$ to $\eps$-accuracy
  with high probability with $\poly(N, Q, 1/\eps, \log T)$ samples. 
\end{theorem}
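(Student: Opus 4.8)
The plan is to reduce the transfer statement to the from-scratch analysis of Stages~2--3 of Algorithm~\ref{alg: training_alg}: I will show that the transfer initialization plus one gradient step on $\A$ reproduces, up to a small geometrically-contracting perturbation, exactly the configuration the from-scratch proof reaches at the end of Stage~1, where \emph{both} signals $\alpha_V,\alpha_A$ are already $\ge\sigma_1=1/\poly(N,Q)$. First I would unpack the initialization. Writing $\hat\bP-\bmu\One\trans=\rho\,(\bP-\bmu\One\trans)+\bDelta$ with $\bDelta\perp_\mu(\bP-\bmu\One\trans)$, the identities $\hat\bP\bmu=\bP\bmu=\bmu$ and $\One\trans\bmu=1$ give $\rho=(\inprod{\hat\bP}{\bP}_\mu-\norm{\bmu}^2)/K_P$, and they also force $\bDelta\perp_\mu\bmu\One\trans$ and $\inprod{\bDelta}{\bP}_\mu=0$, so $\{\bmu\One\trans,\,\bP-\bmu\One\trans,\,\bDelta\}$ are mutually $\mu$-orthogonal. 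The correlation hypothesis and $\norm{\hat\bP}_\mu^2=\Theta(1)\norm{\bP}_\mu^2$ (together with $K_P\ge\tfrac12\norm{\bP}_\mu^2$, which follows from condition~(c)) pin $\inprod{\hat\bP}{\bP}_\mu-\norm{\bmu}^2$ to $[\norm{\bmu}^2,\,\Theta(K_P)]$ and hence $\rho$ to $[\Omega(1/N),\,\Theta(1)]$. Thus $\V_0=\bmu\One\trans+\alpha_{V,0}(\bP-\bmu\One\trans)+\V^\perp$ with $\alpha_{V,0}=\theta\rho$ and $\norm{\V^\perp}_\mu=\theta\norm{\bDelta}_\mu=O(\theta)$.

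Next, since $\A_0=\One\One\trans/T$ annihilates the self-term of the $\A$-gradient in \eqref{eq: population process} and $\V_0\bmu=\bmu$, the expected preconditioned $\A$-gradient at initialization is $-(\inprod{\V_0}{\bP}_\mu-\norm{\bmu}^2)(\q\ps{k}-\One/T)=-\alpha_{V,0}K_P(\q\ps{k}-\One/T)$; one step of size $\eta_A$ yields $\a\ps{k}_1=\One/T+\alpha_{A,1}(\q\ps{k}-\One/T)$ with $\alpha_{A,1}=\eta_A\alpha_{V,0}K_P$. I would choose $\theta$ (still $o(1)$) and $\eta_A$ polynomially in $N,Q$ so that $\min\{\alpha_{V,0},\alpha_{A,1}\}\ge\sigma_1$. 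Because every entry of the per-sample $\A$-gradient is bounded, $\poly(N,Q,\log T)$ fresh samples make the coordinate-wise error of this single step $\ll\sigma_1/Q$; after it the useful entries are $\asymp\alpha_{A,1}q\ps{k}_t\gtrsim\sigma_1/Q$ while the useless ones are $O(1/T)$ plus that negligible error, so the thresholding-projection step puts $\A$ on the line through $\One\One\trans/T$ and $\bQ$ at parameter $\alpha_{A,1}$ with useless coordinates $O(1/T)$ --- precisely the state from which Stage~2 runs in the from-scratch proof.

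From here I would invoke Stages~2 and~3 almost verbatim, the only new ingredient being the extra summand $\V^\perp$ carried by $\V$. Mutual $\mu$-orthogonality and $\inprod{\V^\perp}{\bP}_\mu=0$ make the population dynamics \eqref{eq: population process} split: $\inprod{\V}{\bP}_\mu-\norm{\bmu}^2=\alpha_V K_P$ and $\norm{\V}_\mu^2-\norm{\bmu}^2=\alpha_V^2K_P+\norm{\V^\perp}_\mu^2$, so $(\alpha_V,\alpha_A)$ obey \eqref{eq: main text: dynamics of alpha} up to the harmless extra damping $+\norm{\V^\perp}_\mu^2$ in the $\A$-row, while $\V^\perp_{\tau+1}=(1-\eta_V\norm{\A}_\mu^2)\V^\perp_\tau$ contracts geometrically and so stays $\le O(\theta)=1/\poly(N,Q)$ throughout. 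Feeding this into the population-projected-process construction of Appendix~\ref{sec: population projection}, now with the $\V$-subspace enlarged by the single fixed direction $\bDelta$, the mini-batch trajectory stays close to this population process, Lemma~\ref{lemma: stage 2: main} drives $(\alpha_V+\alpha_A)/2\to1$ and $\V^\perp\to0$ in $\tilde{O}(1/\eta)$ steps on $\poly(N,Q,\log T)$ samples each, the thresholding-normalization sets $\A\approx\bQ$, and Stage~3 freezes $\A$ and contracts $\abs{\alpha_V-1}$ and $\norm{\V^\perp}_\mu$ below $\eps$ with $\poly(N,Q,1/\eps,\log T)$ more samples; adding the $\poly(N,Q,\log T)$ samples used for the single gradient step gives the stated total, with no $\poly(T)$ term.

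The main obstacle is the Stage~2 part just described: re-running the Stage~2 estimates and the population-projected-process bounds in the presence of the nonzero orthogonal component. One must check (i) $\V^\perp$ never re-enters the $(\alpha_V,\alpha_A)$ recursion except through the benign damping term; (ii) the $\ell_1$-regularization / proximal / thresholding mechanism that keeps $\A$ on the line through $\One\One\trans/T$ and $\bQ$ depends on $\V$ only through $\alpha_V K_P$ and $\alpha_V^2K_P+\norm{\V^\perp}_\mu^2$, both controlled, hence is unaffected; and (iii) enlarging the tracked subspace by $\bDelta$ inflates the concentration arguments by at most $\poly(N,Q,\log T)$ samples, so the $\poly(T)$ cost does not creep back. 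The conceptual crux --- and the reason transfer beats training from scratch --- is that here a single \emph{clean} gradient step already boosts $\alpha_A$ past $\sigma_1$ because $\alpha_{V,0}=\Theta(\theta)$ is macroscopic, whereas in Stage~1 the only signal has size $\sim 1/T$ and so forces $\poly(T)$ samples to beat the noise.
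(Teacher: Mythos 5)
Your proposal is correct and follows essentially the same route as the paper's proof of Theorem~\ref{thm: transfer learning: main}: decompose the pretrained initialization into a signal component $\alpha_V=\Theta(\theta\,\rho)$ plus a $\mu$-orthogonal part absorbed into $\bDelta_V$ (Lemma~\ref{lemma: transfer learning: init}), use one clean gradient step on $\A$ --- whose expectation is proportional to $\alpha_V K_P(\q\ps{k}-\One/T)$ --- together with coordinate-wise concentration, thresholding and projection to obtain $\alpha_A\ge 1/\poly(N,Q)$ with small $\bDelta_A$ (Lemma~\ref{lemma: transfer learning: first step}), and then hand off to the Stage~2/3 analysis, whose hypotheses in Lemma~\ref{lemma: stage 2: main} already tolerate an initial $\norm{\bDelta_V}_\mu$ much larger than $\alpha_V$ because it enters the signal dynamics only as a damping term scaled by $\alpha_A$ and contracts thereafter. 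The only point you leave implicit is the quantitative choice $\theta\asymp 1/\sqrt{\log(1/\eps)}$ and $\eps_{\Tmp}\asymp 1/\log(1/\eps)$ ensuring $\norm{\bDelta_V}_\mu^2=O(\theta^2K_P)$ and $|\alpha_V-\alpha_A|$ meet the $O(1/\log(1/\eps))$-type conditions of Lemma~\ref{lemma: stage 2: main}, which is exactly how the paper closes the argument.
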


To intuitively see why using $\poly(N, Q, 1/\eps, \log T)$ samples is possible, recall from \eqref{eq: main text: staeg 
1: dynamics of alpha} that the reason we need $\poly(T)$ samples in Stage~1 is the signal is additive and has order 
$O(1/T)$, so we need the size of the noise to be at most $O(1/T)$. On the other hand, when we initialize 
$\V = \theta \hat\bP + (1 - \theta) \bmu\One\trans$, we have $\alpha_V \ge \Theta( \theta / (N K_P) ) \gg 1/T$.
Then we can rely on $\alpha_V$, instead of the $1/T$-sized additive signal, to boost the 
signal of $\alpha_A$ to $\omega(1/T)$ in one step, which leads to a sample complexity that depends on $\alpha_V$ instead of 
the $1/T$-sized additive signal. Then, we can reuse the analysis of Stage~2 and 3 to show that the downstream $(\bP, \bQ)$
can be approximately recovered using $\poly(N, Q, 1/\eps, \log T)$.

Note that unlike the case of training from scratch, when performing transfer learning, the initial approximation
error $\norm{\bDelta_V}_\mu$, i.e., the distance between $\V$ and its population projection, can be much larger than the signal $\alpha_V$, and it might seem unreasonable to expect that we can leverage the small signal in the 
presence of a large approximation error. To handle this issue, we show that the \textit{influence} of 
$\norm{\bDelta_V}_\mu^2$ on the dynamics scales with $\alpha_A (\approx \alpha_V)$, which is small. In addition,  
we also show that as long as $\alpha_A$ is bounded away from $0$ and the batch size is large, the approximation error 
will not grow. This allows us to ignore the approximation errors in the signal-boosting stage until we enter the 
regime of the Stage~2 analysis.

\section{Experiments and relationship with softmax transformers}
\label{section: experiments}

This section contains our experimental results. We also discuss the relationship between our linear model and the 
usual softmax transformers. 

\paragraph{Experiment setup} 
We use the same shallow transformer model (\ref{eq: learner models}) to train on the synthetic data. The data 
distribution follows the $\scb$ model (\ref{eq: data-generating model}) with a randomly sampled transition matrix $\bP$ 
together with its stationary $\mu$, and the ground truth attention pattern $\bQ$. We choose the number of states 
$N = 3$, sparsity $Q=2$, and the sequence length $T=5000\gg N, Q$. We use a batch size $B=64$ to run the online 
projected proximal gradient descent with $\lambda = 1$e-5 and the vanilla SGD for $\cT=1000$ iterations. Through the 
signal boosting stage $\tau\in[0,400]$, we use $\eta_1=0.01$ to accelerate the process. After $\tau> 400$, we use 
$\eta_2=0.005$ for further improvement. For SGD, we add another set of experiments with $\eta_2'=0.001$ to prevent 
potential instability. For more details, see Appendix~\ref{appendix: experiment details}.

\subsection{Convergence} 

Our experiments (cf.~Fig.~\ref{fig: approximation error and similarity exp}) show that after switching to proximal gradient descent after Stage 1 (the signal-boosting stage), both $\norm{\bP-\V}_\mu$ and $\norm{\A-\bQ}_\mu$ decrease faster than SGD. The final distance to the ground-truth after normalization gets close to 0, and the similarity between the ground truth and parameters quickly converges close to 1. In comparison, SGD struggles to converge with the same small batch size and large learning rate, while the convergence rate is too slow when a smaller learning rate is used. This phenomenon verifies our theory that the variance of the original stochastic gradient will be too large for SGD to converge when $T\gg Q, N$, while proximal gradient descent with an $\ell_1$ regularizer can resolve this issue.

\begin{figure}[ht]
  \centering
  \includegraphics[width=0.9\textwidth]{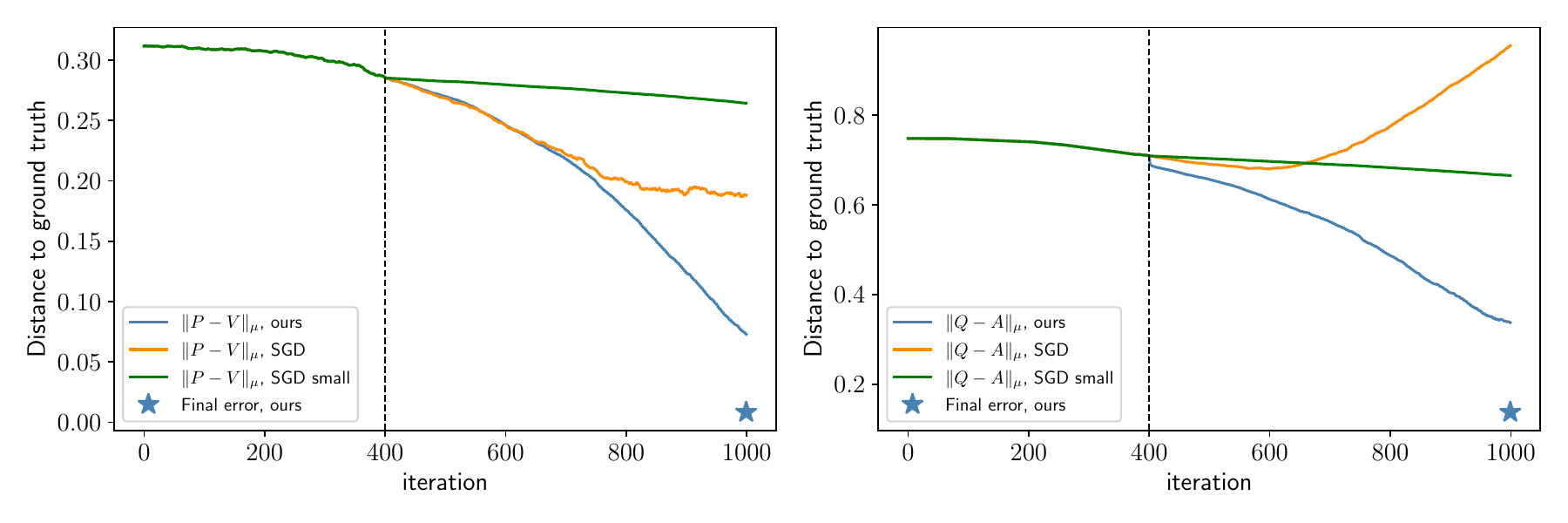}
  \caption{ 
  \textbf{Convergence analysis:} We plot the distance to the ground truth $\|\V-\bP\|_\mu, \|\A-\bQ\|_\mu$ in different settings. After stage 1 ends at $\tau=400$ (when $\alpha_A,\alpha_V\approx 0.1$), we use vanilla SGD and our proximal gradient method to train the transformer. Compared with SGD, the $\ell_1$ regularized proximal gradient descent quickly converges, and the final solution (the star) recovers the ground truth. SGD either suffers from the large gradient variance (when $\eta_2$ is large) or a slow convergence rate (small $\eta_2'$).} 
  \label{fig: approximation error and similarity exp}
\end{figure}

\subsection{Relationship between our model and softmax transformers} 
We claim that they have our linear model and softmax transformers have qualitatively similar behaviors: 
there will be a sample-intensive initial stage, and after the model and the target have a nontrivial correlation, 
proximal gradient descent/SGD will become much more sample efficient. 

For ease of presentation, in the following, we will assume $N = 1$, write $\a := \a\ps{1}$, and assume the 
ground-truth $\q := \q\ps{1}$ is $\e_1 = (1, 0, \dots, 0)$. Most of our argument below can be generalized to the general
setting at least at a heuristic level. Recall that our linear model is $f(\X; \V, \a) = \V\X\a$. By a softmax transformer, 
we mean the model $f_\sigma(\X; \V, \w) = \V\X\sigma(\w) =: \V\X\a_\sigma$ where $\sigma$ is the softmax function and 
$\w \in \R^T$ is the trainable first-layer weights. 

Let $l$ denote the (per-sample) loss. We have 
\(
  \nabla_{\w} l(f_\sigma(\X)) 
  = \left( \diag(\a_\sigma) - \a_\sigma \a_\sigma\trans \right) (\V\X)\trans\nabla l(f_\sigma(\X)). 
\) 
As a result, the dynamics of the attention weights are controlled by 
\begin{align*}
  \a(\tau+1)
  &\approx  
    \a(\tau) - \eta \left(\Id - \frac{\One\One\trans}{T}\right) (\V\X)\trans \nabla l(f(\X)), 
    && \text{in our linear model}, \\
  \a_\sigma(\tau+1)
  &\approx
    \a_\sigma(\tau) - \eta \left( \diag(\a_\sigma) - \a_\sigma\a_\sigma\trans \right)^2 (\V\X)\trans\nabla l(f_\sigma(\X)), 
    && \text{in softmax transformers}. 
\end{align*}

In other words, the main difference is that there will be a preconditioning matrix $\left( \diag(\a) - \a\a\trans \right)^2$
in the dynamics of softmax transformers. 

Near initialization, i.e., when the attention pattern is still close to the uniform attention, we have 
$\left( \diag(\a_\sigma) - \a_\sigma\a_\sigma\trans \right)^2 \approx \frac{1}{T^2} \left(\Id - \frac{\One\One\trans}{T}\right)$. 
In other words, our linear model and softmax transformers are approximately equivalent up to a change in learning rates.

Now, suppose that there is a nontrivial correlation between $\a_\sigma$ and $\q = \e_1$, say, $a_{\sigma,1}$ is a small 
constant while all other entries are $O(1/T)$. In this case, we have 
$\left( \diag(\a_\sigma) - \a_\sigma\a_\sigma\trans \right)^2 \approx 
a_{\sigma, 1}(1 - a_{\sigma, 1}) \e_1 \e_1\trans + O(1/T)$. Effectively, softmax transformers automatically adjust the learning rate 
according to $a_{\sigma, t}$ and roughly ignore those positions with a small attention weight to stabilize the gradients. 
Note that this is also what $\ell_1$-regularization does in our algorithm. In fact, mimicking this behavior is one of 
the motivations of using $\ell_1$-regularization in our linear setting. 
We run further experiments to highlight the resemblance between softmax attention and our linear attention model 
(Figure~\ref{fig: rebuttal experiment 1: convergence heatmap}). 


\begin{figure}[t]
    \centering
    \includegraphics[width=0.58\textwidth]{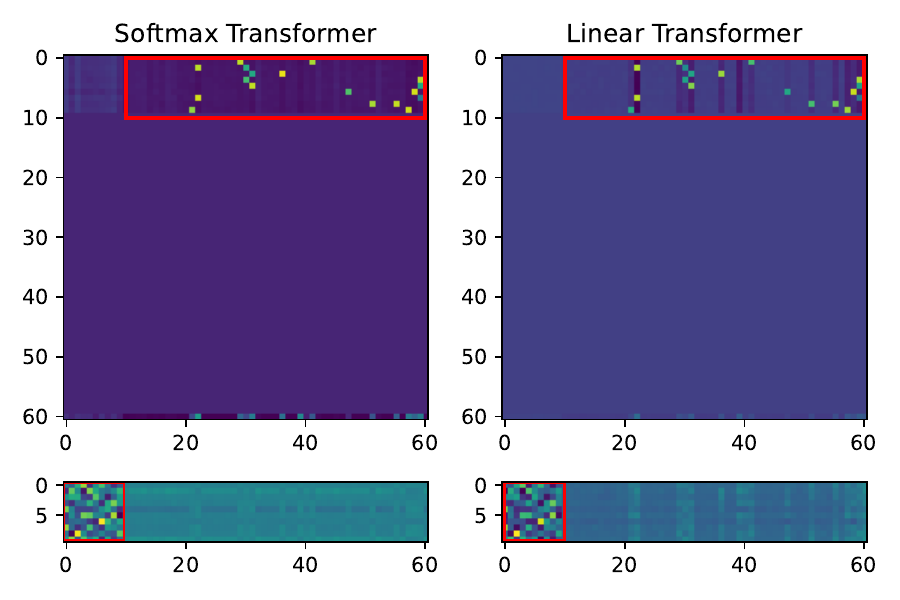}
    \includegraphics[width=0.39\textwidth]{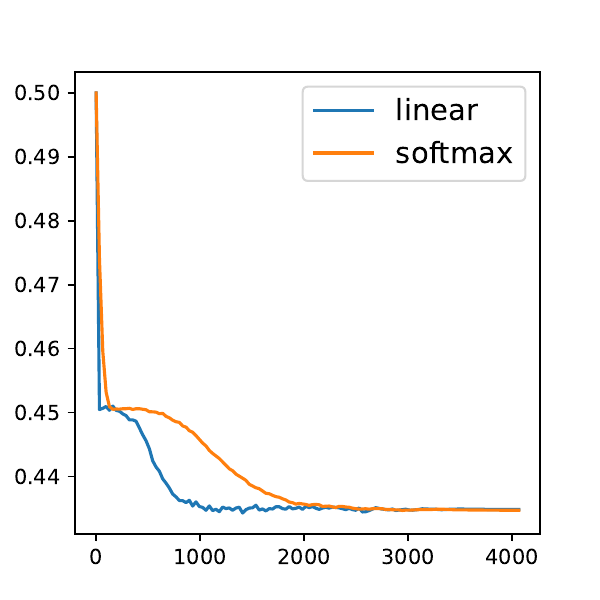}
    \caption{\textbf{Similarity between the softmax and linear attention.} We train two transformers 
    with (1) (Left) \textit{softmax} attention and (2) (Middle) \textit{linear} attention layer on the SCB tasks 
    with the same ground-truth ($T=50, N=10, Q=2$). 
    The attention pattern and the value matrix (learned transition matrix) are very similar 
    (left two plots) and they converge to approximately the same loss (right plot).
    }
    \label{fig: rebuttal experiment 1: convergence heatmap}
\end{figure}

\section{Conclusion and discussion}
In this paper, we propose the Sparse Contextual Bigram (SCB) model, which is a natural extension of the bigram model, 
that requires both contextual and global information. Then, we analyze the problem of learning a SCB model using a one-layer linear transformer and a gradient-based algorithm. We prove quantitative bounds on the convergence rate and the sample complexity. In particular, we show when trained from scratch, the training process can be split into two stages, where the first stage uses a lot of samples to boost the signal from zero to a nontrivial value, while the second stage is much more sample-efficient. Then, we consider the problem in a transfer learning setting and prove that when there is a nontrivial correlation between the pretraining and downstream tasks, the first sample intensive stage can be bypassed. 

Our data-generating model and results also lead to some interesting future directions. For example, can we improve the sample complexity of the first stage? What can we gain if the datapoints are sequences generated by repeatedly applying the SCB model? 

\section*{Acknowledgement}
JDL acknowledges support of the NSF CCF 2002272, NSF IIS 2107304, and NSF CAREER Award 2144994.

\bibliography{main}

\begin{thebibliography}{43}
\providecommand{\natexlab}[1]{#1}
\providecommand{\url}[1]{\texttt{#1}}
\expandafter\ifx\csname urlstyle\endcsname\relax
  \providecommand{\doi}[1]{doi: #1}\else
  \providecommand{\doi}{doi: \begingroup \urlstyle{rm}\Url}\fi

\bibitem[Ahn et~al.(2023)Ahn, Cheng, Song, Yun, Jadbabaie, and Sra]{ahn2023linear}
Kwangjun Ahn, Xiang Cheng, Minhak Song, Chulhee Yun, Ali Jadbabaie, and Suvrit Sra.
\newblock Linear attention is (maybe) all you need (to understand transformer optimization).
\newblock \emph{arXiv preprint arXiv:2310.01082}, 2023.

\bibitem[Aky{\"u}rek et~al.(2022)Aky{\"u}rek, Schuurmans, Andreas, Ma, and Zhou]{akyurek2022learning}
Ekin Aky{\"u}rek, Dale Schuurmans, Jacob Andreas, Tengyu Ma, and Denny Zhou.
\newblock What learning algorithm is in-context learning? investigations with linear models.
\newblock \emph{arXiv preprint arXiv:2211.15661}, 2022.

\bibitem[Arora et~al.(2019)Arora, Khandeparkar, Khodak, Plevrakis, and Saunshi]{arora2019theoretical}
Sanjeev Arora, Hrishikesh Khandeparkar, Mikhail Khodak, Orestis Plevrakis, and Nikunj Saunshi.
\newblock A theoretical analysis of contrastive unsupervised representation learning.
\newblock \emph{arXiv preprint arXiv:1902.09229}, 2019.

\bibitem[Bietti et~al.(2024)Bietti, Cabannes, Bouchacourt, Jegou, and Bottou]{bietti2024birth}
Alberto Bietti, Vivien Cabannes, Diane Bouchacourt, Herve Jegou, and Leon Bottou.
\newblock Birth of a transformer: A memory viewpoint.
\newblock \emph{Advances in Neural Information Processing Systems}, 36, 2024.

\bibitem[Carlini et~al.(2021)Carlini, Tram{\`e}r, Wallace, Jagielski, Herbert-Voss, Lee, Roberts, Brown, Song, Erlingsson, Oprea, and Raffel]{274574}
Nicholas Carlini, Florian Tram{\`e}r, Eric Wallace, Matthew Jagielski, Ariel Herbert-Voss, Katherine Lee, Adam Roberts, Tom Brown, Dawn Song, {\'U}lfar Erlingsson, Alina Oprea, and Colin Raffel.
\newblock Extracting training data from large language models.
\newblock In \emph{30th USENIX Security Symposium (USENIX Security 21)}, pages 2633--2650. USENIX Association, August 2021.
\newblock ISBN 978-1-939133-24-3.
\newblock URL \url{https://www.usenix.org/conference/usenixsecurity21/presentation/carlini-extracting}.

\bibitem[Chen et~al.(2024)Chen, Sheen, Wang, and Yang]{chen2024training}
Siyu Chen, Heejune Sheen, Tianhao Wang, and Zhuoran Yang.
\newblock Training dynamics of multi-head softmax attention for in-context learning: Emergence, convergence, and optimality.
\newblock \emph{arXiv preprint arXiv:2402.19442}, 2024.

\bibitem[Damian et~al.(2022)Damian, Lee, and Soltanolkotabi]{damian2022neural}
Alexandru Damian, Jason Lee, and Mahdi Soltanolkotabi.
\newblock Neural networks can learn representations with gradient descent.
\newblock In \emph{Conference on Learning Theory}, pages 5413--5452. PMLR, 2022.

\bibitem[Dar and Baraniuk(2022)]{dar2022double}
Yehuda Dar and Richard~G Baraniuk.
\newblock Double double descent: On generalization errors in transfer learning between linear regression tasks.
\newblock \emph{SIAM Journal on Mathematics of Data Science}, 4\penalty0 (4):\penalty0 1447--1472, 2022.

\bibitem[Devlin et~al.(2018)Devlin, Chang, Lee, and Toutanova]{devlin2018bert}
Jacob Devlin, Ming-Wei Chang, Kenton Lee, and Kristina Toutanova.
\newblock Bert: Pre-training of deep bidirectional transformers for language understanding.
\newblock \emph{arXiv preprint arXiv:1810.04805}, 2018.

\bibitem[Dhifallah and Lu(2021)]{dhifallah2021phase}
Oussama Dhifallah and Yue~M Lu.
\newblock Phase transitions in transfer learning for high-dimensional perceptrons.
\newblock \emph{Entropy}, 23\penalty0 (4):\penalty0 400, 2021.

\bibitem[Dosovitskiy et~al.(2020)Dosovitskiy, Beyer, Kolesnikov, Weissenborn, Zhai, Unterthiner, Dehghani, Minderer, Heigold, Gelly, et~al.]{dosovitskiy2020image}
Alexey Dosovitskiy, Lucas Beyer, Alexander Kolesnikov, Dirk Weissenborn, Xiaohua Zhai, Thomas Unterthiner, Mostafa Dehghani, Matthias Minderer, Georg Heigold, Sylvain Gelly, et~al.
\newblock An image is worth 16x16 words: Transformers for image recognition at scale.
\newblock \emph{arXiv preprint arXiv:2010.11929}, 2020.

\bibitem[Du et~al.(2020)Du, Hu, Kakade, Lee, and Lei]{du2020few}
Simon~S Du, Wei Hu, Sham~M Kakade, Jason~D Lee, and Qi~Lei.
\newblock Few-shot learning via learning the representation, provably.
\newblock \emph{arXiv preprint arXiv:2002.09434}, 2020.

\bibitem[Edelman et~al.(2022)Edelman, Goel, Kakade, and Zhang]{edelman2022inductive}
Benjamin~L Edelman, Surbhi Goel, Sham Kakade, and Cyril Zhang.
\newblock Inductive biases and variable creation in self-attention mechanisms.
\newblock In \emph{International Conference on Machine Learning}, pages 5793--5831. PMLR, 2022.

\bibitem[Edelman et~al.(2024)Edelman, Edelman, Goel, Malach, and Tsilivis]{edelman2024evolution}
Benjamin~L Edelman, Ezra Edelman, Surbhi Goel, Eran Malach, and Nikolaos Tsilivis.
\newblock The evolution of statistical induction heads: In-context learning markov chains.
\newblock \emph{arXiv preprint arXiv:2402.11004}, 2024.

\bibitem[Fei and Li(2021)]{fei2021estimation}
Zhe Fei and Yi~Li.
\newblock Estimation and inference for high dimensional generalized linear models: A splitting and smoothing approach.
\newblock \emph{Journal of Machine Learning Research}, 22\penalty0 (58):\penalty0 1--32, 2021.

\bibitem[Hanneke et~al.(2023)Hanneke, Kpotufe, and Mahdaviyeh]{hanneke2023limits}
Steve Hanneke, Samory Kpotufe, and Yasaman Mahdaviyeh.
\newblock Limits of model selection under transfer learning.
\newblock In \emph{The Thirty Sixth Annual Conference on Learning Theory}, pages 5781--5812. PMLR, 2023.

\bibitem[Haviv et~al.(2022)Haviv, Cohen, Gidron, Schuster, Goldberg, and Geva]{haviv2022understanding}
Adi Haviv, Ido Cohen, Jacob Gidron, Roei Schuster, Yoav Goldberg, and Mor Geva.
\newblock Understanding transformer memorization recall through idioms.
\newblock \emph{arXiv preprint arXiv:2210.03588}, 2022.

\bibitem[Huang et~al.(2023)Huang, Cheng, and Liang]{huang2023context}
Yu~Huang, Yuan Cheng, and Yingbin Liang.
\newblock In-context convergence of transformers.
\newblock \emph{arXiv preprint arXiv:2310.05249}, 2023.

\bibitem[Jelassi et~al.(2022)Jelassi, Sander, and Li]{jelassi2022vision}
Samy Jelassi, Michael Sander, and Yuanzhi Li.
\newblock Vision transformers provably learn spatial structure.
\newblock \emph{Advances in Neural Information Processing Systems}, 35:\penalty0 37822--37836, 2022.

\bibitem[Ju et~al.(2023)Ju, Lin, Squillante, Liang, and Shroff]{ju2023generalization}
Peizhong Ju, Sen Lin, Mark~S Squillante, Yingbin Liang, and Ness~B Shroff.
\newblock Generalization performance of transfer learning: Overparameterized and underparameterized regimes.
\newblock \emph{arXiv preprint arXiv:2306.04901}, 2023.

\bibitem[Jumper et~al.(2021)Jumper, Evans, Pritzel, Green, Figurnov, Ronneberger, Tunyasuvunakool, Bates, Zidek, Potapenko, Bridgland, Meyer, Kohl, Ballard, Cowie, Romera-Paredes, Nikolov, Jain, Adler, Back, Petersen, Reiman, Clancy, Zielinski, Steinegger, Pacholska, Berghammer, Bodenstein, Silver, Vinyals, Senior, Kavukcuoglu, Kohli, and Hassabis]{Jumper2021HighlyAP}
John~M. Jumper, Richard Evans, Alexander Pritzel, Tim Green, Michael Figurnov, Olaf Ronneberger, Kathryn Tunyasuvunakool, Russ Bates, Augustin Zidek, Anna Potapenko, Alex Bridgland, Clemens Meyer, Simon A~A Kohl, Andy Ballard, Andrew Cowie, Bernardino Romera-Paredes, Stanislav Nikolov, Rishub Jain, Jonas Adler, Trevor Back, Stig Petersen, David Reiman, Ellen Clancy, Michal Zielinski, Martin Steinegger, Michalina Pacholska, Tamas Berghammer, Sebastian Bodenstein, David Silver, Oriol Vinyals, Andrew~W. Senior, Koray Kavukcuoglu, Pushmeet Kohli, and Demis Hassabis.
\newblock Highly accurate protein structure prediction with alphafold.
\newblock \emph{Nature}, 596:\penalty0 583 -- 589, 2021.
\newblock URL \url{https://api.semanticscholar.org/CorpusID:235959867}.

\bibitem[Kim and Suzuki(2024)]{kim2024transformers}
Juno Kim and Taiji Suzuki.
\newblock Transformers learn nonlinear features in context: Nonconvex mean-field dynamics on the attention landscape.
\newblock \emph{arXiv preprint arXiv:2402.01258}, 2024.

\bibitem[Lampinen and Ganguli(2018)]{lampinen2018analytic}
Andrew~K Lampinen and Surya Ganguli.
\newblock An analytic theory of generalization dynamics and transfer learning in deep linear networks.
\newblock \emph{arXiv preprint arXiv:1809.10374}, 2018.

\bibitem[Li et~al.(2023)Li, Wang, Liu, and Chen]{li2023theoretical}
Hongkang Li, Meng Wang, Sijia Liu, and Pin-Yu Chen.
\newblock A theoretical understanding of shallow vision transformers: Learning, generalization, and sample complexity.
\newblock \emph{arXiv preprint arXiv:2302.06015}, 2023.

\bibitem[Li et~al.(2022)Li, Cai, and Li]{li2022transfer}
Sai Li, T~Tony Cai, and Hongzhe Li.
\newblock Transfer learning for high-dimensional linear regression: Prediction, estimation and minimax optimality.
\newblock \emph{Journal of the Royal Statistical Society Series B: Statistical Methodology}, 84\penalty0 (1):\penalty0 149--173, 2022.

\bibitem[Liu et~al.(2022)Liu, Ash, Goel, Krishnamurthy, and Zhang]{liu2022transformers}
Bingbin Liu, Jordan~T Ash, Surbhi Goel, Akshay Krishnamurthy, and Cyril Zhang.
\newblock Transformers learn shortcuts to automata.
\newblock \emph{arXiv preprint arXiv:2210.10749}, 2022.

\bibitem[Mahankali et~al.(2023)Mahankali, Hashimoto, and Ma]{mahankali2023one}
Arvind Mahankali, Tatsunori~B Hashimoto, and Tengyu Ma.
\newblock One step of gradient descent is provably the optimal in-context learner with one layer of linear self-attention.
\newblock \emph{arXiv preprint arXiv:2307.03576}, 2023.

\bibitem[Makkuva et~al.(2024)Makkuva, Bondaschi, Girish, Nagle, Jaggi, Kim, and Gastpar]{makkuva2024attention}
Ashok~Vardhan Makkuva, Marco Bondaschi, Adway Girish, Alliot Nagle, Martin Jaggi, Hyeji Kim, and Michael Gastpar.
\newblock Attention with markov: A framework for principled analysis of transformers via markov chains.
\newblock \emph{arXiv preprint arXiv:2402.04161}, 2024.

\bibitem[Nanda et~al.(2023)Nanda, Chan, Liberum, Smith, and Steinhardt]{nanda2023progress}
Neel Nanda, Lawrence Chan, Tom Liberum, Jess Smith, and Jacob Steinhardt.
\newblock Progress measures for grokking via mechanistic interpretability.
\newblock \emph{arXiv preprint arXiv:2301.05217}, 2023.

\bibitem[Nichani et~al.(2024)Nichani, Damian, and Lee]{nichani2024transformers}
Eshaan Nichani, Alex Damian, and Jason~D Lee.
\newblock How transformers learn causal structure with gradient descent.
\newblock \emph{arXiv preprint arXiv:2402.14735}, 2024.

\bibitem[OpenAI(2023)]{openai2023gpt4}
OpenAI.
\newblock Gpt-4 technical report, 2023.

\bibitem[Petroni et~al.(2019)Petroni, Rockt{\"a}schel, Lewis, Bakhtin, Wu, Miller, and Riedel]{petroni2019language}
Fabio Petroni, Tim Rockt{\"a}schel, Patrick Lewis, Anton Bakhtin, Yuxiang Wu, Alexander~H Miller, and Sebastian Riedel.
\newblock Language models as knowledge bases?
\newblock \emph{arXiv preprint arXiv:1909.01066}, 2019.

\bibitem[Tarzanagh et~al.(2023)Tarzanagh, Li, Thrampoulidis, and Oymak]{tarzanagh2023transformers}
Davoud~Ataee Tarzanagh, Yingcong Li, Christos Thrampoulidis, and Samet Oymak.
\newblock Transformers as support vector machines.
\newblock \emph{arXiv preprint arXiv:2308.16898}, 2023.

\bibitem[Tian and Feng(2023)]{tian2023transfer}
Ye~Tian and Yang Feng.
\newblock Transfer learning under high-dimensional generalized linear models.
\newblock \emph{Journal of the American Statistical Association}, 118\penalty0 (544):\penalty0 2684--2697, 2023.

\bibitem[Tian et~al.(2023{\natexlab{a}})Tian, Wang, Chen, and Du]{tian2023scan}
Yuandong Tian, Yiping Wang, Beidi Chen, and Simon Du.
\newblock Scan and snap: Understanding training dynamics and token composition in 1-layer transformer.
\newblock \emph{arXiv preprint arXiv:2305.16380}, 2023{\natexlab{a}}.

\bibitem[Tian et~al.(2023{\natexlab{b}})Tian, Wang, Zhang, Chen, and Du]{tian2023joma}
Yuandong Tian, Yiping Wang, Zhenyu Zhang, Beidi Chen, and Simon Du.
\newblock Joma: Demystifying multilayer transformers via joint dynamics of mlp and attention.
\newblock \emph{arXiv preprint arXiv:2310.00535}, 2023{\natexlab{b}}.

\bibitem[Tripuraneni et~al.(2020)Tripuraneni, Jordan, and Jin]{tripuraneni2020theory}
Nilesh Tripuraneni, Michael Jordan, and Chi Jin.
\newblock On the theory of transfer learning: The importance of task diversity.
\newblock \emph{Advances in neural information processing systems}, 33:\penalty0 7852--7862, 2020.

\bibitem[Vaswani et~al.(2017)Vaswani, Shazeer, Parmar, Uszkoreit, Jones, Gomez, Kaiser, and Polosukhin]{vaswani2017attention}
Ashish Vaswani, Noam Shazeer, Niki Parmar, Jakob Uszkoreit, Llion Jones, Aidan~N Gomez, {\L}ukasz Kaiser, and Illia Polosukhin.
\newblock Attention is all you need.
\newblock \emph{Advances in neural information processing systems}, 30, 2017.

\bibitem[Wang et~al.(2024)Wang, Wei, Hsu, and Lee]{wang2024transformers}
Zixuan Wang, Stanley Wei, Daniel Hsu, and Jason~D Lee.
\newblock Transformers provably learn sparse token selection while fully-connected nets cannot.
\newblock \emph{arXiv preprint arXiv:2406.06893}, 2024.

\bibitem[Yao et~al.(2021)Yao, Peng, Papadimitriou, and Narasimhan]{yao2021self}
Shunyu Yao, Binghui Peng, Christos Papadimitriou, and Karthik Narasimhan.
\newblock Self-attention networks can process bounded hierarchical languages.
\newblock \emph{arXiv preprint arXiv:2105.11115}, 2021.

\bibitem[Zhang et~al.(2023{\natexlab{a}})Zhang, Ippolito, Lee, Jagielski, Tram{\`e}r, and Carlini]{zhang2023counterfactual}
Chiyuan Zhang, Daphne Ippolito, Katherine Lee, Matthew Jagielski, Florian Tram{\`e}r, and Nicholas Carlini.
\newblock Counterfactual memorization in neural language models.
\newblock \emph{Advances in Neural Information Processing Systems}, 36:\penalty0 39321--39362, 2023{\natexlab{a}}.

\bibitem[Zhang et~al.(2023{\natexlab{b}})Zhang, Frei, and Bartlett]{zhang2023trained}
Ruiqi Zhang, Spencer Frei, and Peter~L Bartlett.
\newblock Trained transformers learn linear models in-context.
\newblock \emph{arXiv preprint arXiv:2306.09927}, 2023{\natexlab{b}}.

\bibitem[Zhang et~al.(2022)Zhang, Blanchet, Ghosh, and Squillante]{zhang2022class}
Xuhui Zhang, Jose Blanchet, Soumyadip Ghosh, and Mark~S Squillante.
\newblock A class of geometric structures in transfer learning: Minimax bounds and optimality.
\newblock In \emph{International Conference on Artificial Intelligence and Statistics}, pages 3794--3820. PMLR, 2022.

\end{thebibliography}

\newpage


\appendix
\section{Limitation}
\label{appendix: limitation}

In this section, we briefly discuss the limitation of this work.

First, we consider one-layer single-head linear transformers with certain (blocks of the) weights merged or fixed. 
Though this simplification are widely used in theoretical works and linear and nonlinear transformers share 
some training behaviors \citep{ahn2023linear}, this architecture is still very far away from the transformers used 
in practice. 

We also use a non-standard training algorithm that has several manually separated stages. Some parts of the 
modification are made to address certain issues of linear transformers, while the other are made to simplify the 
analysis. It would be interesting (and more challenging) to consider more natural/practical training algorithms. 

Finally, for our data-generating model, we only use it to generate one next token, instead of repeatedly apply 
$\scb$ on the previous generated results to obtain a long sequence. In our setting, the contextual tokens are 
independent. While this simplifies the analysis, it deviates from how natural language works.

\section{Probabilities, expectations, and variances}
\label{sec: probabilities, expectations, and variances}

We collect in this section closed-form formulas for the probabilities of certain events, and 
the expectations and variances of some random vectors of interest. All proofs are deferred to the 
end of this section. 

\subsection{Probabilities}

\begin{lemma}
  \label{lemma: probability: P(xo, xt | xT+1)}
  For any $t \in [T]$ and $k, n, m \in [N]$, we have 
  \[
    \P( x_o = n, x_t = m \mid x_{T+1} = k ) 
    = q\ps{k}_t P_{n, m} \mu_m + \left( 1 - q\ps{k}_t \right) \mu_n \mu_m. 
  \]
\end{lemma}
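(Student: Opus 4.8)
The plan is to condition on the context $\x=(x_1,\dots,x_T)$ and invoke the explicit form of the data-generating model \eqref{eq: data-generating model}. Since $x_1,\dots,x_T$ are i.i.d.\ from $\bmu$ and independent of $x_{T+1}$, conditioning on the event $\{x_{T+1}=k\}$ does not change their joint law, so
\[
  \P(x_o=n,\,x_t=m \mid x_{T+1}=k)
  = \E_{\x}\!\left[ \indi\{x_t=m\}\,\P(x_o=n \mid x_{T+1}=k,\x) \right]
  = \E_{\x}\!\left[ \indi\{x_t=m\} \sum_{s=1}^{T} q\ps{k}_s P_{n,x_s} \right].
\]

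Next I would split the inner sum into the diagonal term $s=t$ and the off-diagonal terms $s\ne t$. For $s=t$ no independence is needed: $\E[\indi\{x_t=m\}\,q\ps{k}_t P_{n,x_t}] = q\ps{k}_t P_{n,m}\,\P(x_t=m) = q\ps{k}_t P_{n,m}\mu_m$. For each $s\ne t$ the tokens $x_t$ and $x_s$ are independent, so $\E[\indi\{x_t=m\}\,P_{n,x_s}] = \mu_m \sum_{j\in[N]} \mu_j P_{n,j} = \mu_m\mu_n$, where the last equality is the stationarity $\bP\bmu=\bmu$ (recall that in our column convention $P_{n,j}=\P[X_{t+1}=n\mid X_t=j]$, so $\sum_j \mu_j P_{n,j} = (\bP\bmu)_n = \mu_n$).

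Finally, summing over $s\ne t$ and using that $\q\ps{k}$ is a probability vector, i.e.\ $\sum_{s=1}^T q\ps{k}_s = 1$, the off-diagonal contribution equals $\mu_m\mu_n \sum_{s\ne t} q\ps{k}_s = (1-q\ps{k}_t)\mu_n\mu_m$. Adding the diagonal and off-diagonal pieces yields $q\ps{k}_t P_{n,m}\mu_m + (1-q\ps{k}_t)\mu_n\mu_m$, as claimed. There is no genuine obstacle here; the only points requiring care are the independence of $x_t$ from the remaining context tokens (used in the $s\ne t$ terms) and the use of stationarity to evaluate $\sum_j \mu_j P_{n,j}$.
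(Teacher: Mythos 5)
Your proof is correct and follows essentially the same route as the paper's: both condition on the context, split the mixture $\sum_s q\ps{k}_s P_{n,x_s}$ into the $s=t$ term and the $s\ne t$ terms, and use independence of the tokens together with stationarity $\sum_j \mu_j P_{n,j}=\mu_n$ and $\sum_s q\ps{k}_s=1$. The only difference is cosmetic: you phrase the marginalization as an expectation over $\x$, whereas the paper sums explicitly over the configurations of the remaining tokens.
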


\begin{lemma}
  \label{lemma: probability: P(xo | xT+1, xs, xt)}
  For any $s \ne t \in [T]$, $k, n, m, l \in [N]$, we have 
  \[
    \P(x_o = n \mid x_{T+1} = k, x_s = m, x_t = l)
    = q\ps{k}_s P_{n, m} 
      + q\ps{k}_t P_{n, l} 
      + \left( 1 - q\ps{k}_s - q\ps{k}_t \right) \mu_n.
  \]
\end{lemma}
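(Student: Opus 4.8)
The plan is a direct computation via the tower property, marginalizing over the positions other than $s$ and $t$. First I would observe that, since $(x_1,\dots,x_{T+1})$ are i.i.d.\ from $\bmu$, the last token $x_{T+1}$ is independent of $\x=(x_1,\dots,x_T)$, so conditioning on $x_{T+1}=k$ does not alter the joint law of $\x$. Combining \eqref{eq: data-generating model} with the tower property over the remaining randomness in $\x$ then gives
\[
  \P(x_o=n\mid x_{T+1}=k,\ x_s=m,\ x_t=l)
  = \E\!\left[\ \sum_{r=1}^{T} q\ps{k}_r P_{n,x_r}\ \middle|\ x_s=m,\ x_t=l\ \right].
\]

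Next I would split the sum according to whether $r\in\{s,t\}$ or not. On the conditioning event we have $x_s=m$ and $x_t=l$, so the terms $r=s$ and $r=t$ contribute exactly $q\ps{k}_s P_{n,m}$ and $q\ps{k}_t P_{n,l}$. For each $r\notin\{s,t\}$, the token $x_r$ is independent of $(x_s,x_t)$ (again because the $x_i$'s are i.i.d.), hence
\[
  \E\!\left[P_{n,x_r}\mid x_s=m,\ x_t=l\right]
  = \sum_{j\in[N]} \mu_j P_{n,j}
  = (\bP\bmu)_n
  = \mu_n,
\]
where the last step uses that $\bmu$ is the stationary distribution of $\bP$, i.e.\ $\bP\bmu=\bmu$. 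Summing over the $T-2$ indices $r\notin\{s,t\}$ and using $\sum_{r=1}^T q\ps{k}_r=1$, so that $\sum_{r\notin\{s,t\}} q\ps{k}_r = 1-q\ps{k}_s-q\ps{k}_t$, yields the claimed identity. (As a sanity check, the one-position specialization of this computation, multiplied by $\P(x_t=m)=\mu_m$, recovers Lemma~\ref{lemma: probability: P(xo, xt | xT+1)}.)

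There is no genuine obstacle here; the computation is routine. The only points needing (minor) care are the independence of $x_{T+1}$ from $\x$, the independence of $x_r$ from $(x_s,x_t)$ for $r\notin\{s,t\}$, and the use of stationarity $\bP\bmu=\bmu$ to collapse the averaged-out positions to $\mu_n$. If one prefers not to invoke \eqref{eq: data-generating model} in its expectation form, an equivalent route is to condition additionally on the sampled position $S\sim\q\ps{k}$ and on $\x$, then take expectations; this leads to the same three-term decomposition but is slightly more verbose.
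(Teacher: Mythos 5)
Your proposal is correct and follows essentially the same route as the paper: marginalize over the positions other than $s,t$, apply the model's conditional formula \eqref{eq: data-generating model}, use the i.i.d.\ structure of the tokens, and collapse the averaged-out positions via $\bP\bmu=\bmu$ together with $\sum_r q\ps{k}_r=1$. The only cosmetic difference is that you phrase the marginalization as a conditional expectation while the paper writes out the sum over $\x_{-s,-t}$ explicitly.
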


\subsection{Gradient and Expectations} 
\begin{lemma} Suppose the last input token $\x_{T+1}$ and $\a := \A\x_{T+1}$. The gradients of the objective are
    \begin{align*}
  \nabla_{\V} l 
  &= \left( \V\X\a - \e_{x_o} \right) (\X\a)\trans, \\
  \nabla_{\a\ps{k}} l 
  &= \indi\{ x_{T+1} = k \} (\V\X)\trans \left( \V\X\a - \e_{x_o} \right)
\end{align*}
\label{lemma: gradient: vanilla gradient}
\end{lemma}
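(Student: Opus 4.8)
The plan is to differentiate the per-sample loss $l = \frac{1}{2}\norm{\e_{x_o} - \V\X\a}^2$ directly by matrix calculus, treating $\a = \A\e_{x_{T+1}} = \a\ps{x_{T+1}}$ as the only channel through which $\A$ enters. Set $\bfr := \V\X\a - \e_{x_o}$ for the residual, so that $l = \frac{1}{2}\norm{\bfr}^2$ and the differential is $\rd l = \inprod{\bfr}{\rd(\V\X\a)}$ for any perturbation of the parameters.

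For the $\V$-gradient, since $\X$ and $\a$ are independent of $\V$, we get $\rd(\V\X\a) = (\rd\V)(\X\a)$, hence $\rd l = \inprod{\bfr}{(\rd\V)(\X\a)} = \inprod{\bfr(\X\a)\trans}{\rd\V}$ using the identity $\inprod{\u}{\mbf{M}\v} = \inprod{\u\v\trans}{\mbf{M}}$. Matching the coefficient of $\rd\V$ yields $\nabla_\V l = \bfr(\X\a)\trans = (\V\X\a - \e_{x_o})(\X\a)\trans$. For the $\a\ps{k}$-gradient, first observe that $\a = \A\e_{x_{T+1}}$ is precisely the $x_{T+1}$-th column of $\A$, so $l$ does not depend on $\a\ps{k}$ when $k \ne x_{T+1}$, which produces the factor $\indi\{x_{T+1}=k\}$. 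When $k = x_{T+1}$, $\rd(\V\X\a\ps{k}) = \V\X(\rd\a\ps{k})$, so $\rd l = \inprod{\bfr}{\V\X\rd\a\ps{k}} = \inprod{(\V\X)\trans\bfr}{\rd\a\ps{k}}$, giving $\nabla_{\a\ps{k}} l = \indi\{x_{T+1}=k\}(\V\X)\trans(\V\X\a - \e_{x_o})$.

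There is no substantive obstacle here: the whole computation is a one-line application of the chain rule in differential form. The only point that needs a moment's care — and the only place the two formulas differ structurally — is recognizing that $\A$ affects the loss solely through its $x_{T+1}$-th column, which is exactly what gives rise to the indicator in the second identity; everything else is bookkeeping of transposes.
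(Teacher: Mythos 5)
Your proposal is correct and follows essentially the same route as the paper: both compute the matrix differential of $l = \frac{1}{2}\norm{\e_{x_o}-\V\X\a}^2$ and read off the gradients, with the indicator arising because $\A$ enters only through its $x_{T+1}$-th column.
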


\begin{lemma}
  \label{lemma: expectation: E exs extT}
  For any $k \in [N]$ and $s, t \in [T]$, we have 
  \[
    \Ek[ \e_{x_s} \e_{x_t}\trans ] 
    = \E[ \e_{x_s} \e_{x_t}\trans ] 
    = \begin{cases}
      \diag(\bmu),    & s = t, \\
      \bmu\bmu\trans, & s \ne t. 
    \end{cases}
  \]
\end{lemma}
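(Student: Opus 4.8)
The plan is to exploit the i.i.d.\ structure of the input tokens built into the $\scb$ model. The first equality $\Ek[\e_{x_s}\e_{x_t}\trans] = \E[\e_{x_s}\e_{x_t}\trans]$ is essentially free: since $x_1, \dots, x_{T+1}$ are drawn i.i.d.\ from $\bmu$ and $s, t \in [T]$, the pair $(x_s, x_t)$ is independent of $x_{T+1}$, so conditioning on the event $\{x_{T+1} = k\}$ does not change the joint law of $(x_s, x_t)$. Hence it suffices to compute the unconditional expectation.

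For the diagonal case $s = t$, I would use the elementary identity that a one-hot vector $\e_m \in \R^N$ satisfies $\e_m\e_m\trans = \diag(\e_m)$, the $0$/$1$ diagonal matrix with its only nonzero entry in position $m$. Thus $\e_{x_s}\e_{x_s}\trans$ is a random diagonal indicator matrix, and taking expectations coordinate-wise gives $\E[\e_{x_s}\e_{x_s}\trans] = \sum_{m=1}^N \P(x_s = m)\, \e_m\e_m\trans = \sum_{m=1}^N \mu_m\, \e_m\e_m\trans = \diag(\bmu)$. For the off-diagonal case $s \ne t$, I would invoke independence of $x_s$ and $x_t$ to factor the expectation as $\E[\e_{x_s}\e_{x_t}\trans] = \E[\e_{x_s}]\,\E[\e_{x_t}]\trans$, and then use $\E[\e_{x_s}] = \sum_{m=1}^N \mu_m \e_m = \bmu$ (and likewise for $x_t$) to conclude that this equals $\bmu\bmu\trans$.

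There is no real obstacle here; this is a short, routine computation. The only point that deserves even a moment's attention is the conditioning on $x_{T+1} = k$, which is harmless precisely because in the $\scb$ model the contextual tokens $x_1, \dots, x_T$ are sampled independently of $x_{T+1}$. (In the sequence-to-sequence variant mentioned in Section~\ref{subsec: data generating model}, where the tokens are no longer independent, this reduction would fail and a genuinely different argument would be needed.)
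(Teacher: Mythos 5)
Your proof is correct and follows essentially the same route as the paper: the conditioning on $x_{T+1}=k$ is dropped by independence, the off-diagonal case factors as $\E[\e_{x_s}]\E[\e_{x_t}]\trans = \bmu\bmu\trans$, and the diagonal case is the coordinate-wise sum $\sum_m \mu_m \e_m\e_m\trans = \diag(\bmu)$. Your explicit remark about why the conditioning is harmless is a small added clarification, but the argument is the same.
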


\begin{lemma}
  \label{lemma: expectation: sumn E Vnxs Vnxt}
  For any $\V \in \R^{N \times N}$ with $\V\bmu = \bmu$ and $s, t \in [T]$, we have 
  \[
    \sum_{n=1}^{N} \E[ V_{n, x_s} V_{n, x_t} ] 
    = \begin{cases}
      \norm{\V}_\mu^2, & s = t, \\
      \norm{\bmu}^2,   & s \ne t. 
    \end{cases}
  \]
\end{lemma}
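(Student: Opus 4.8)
The plan is to reduce the sum over $n$ to a single matrix expression, push the expectation inside, and then invoke Lemma~\ref{lemma: expectation: E exs extT}. First I would observe that $V_{n, x_s} = \e_n\trans \V \e_{x_s}$, so that for fixed realizations of $x_s, x_t$,
\[
  \sum_{n=1}^N V_{n, x_s} V_{n, x_t}
  = \sum_{n=1}^N \left( \e_n\trans \V \e_{x_s} \right)\left( \e_n\trans \V \e_{x_t} \right)
  = \left( \V \e_{x_s} \right)\trans \left( \V \e_{x_t} \right)
  = \Tr\!\left( \V\trans \V \, \e_{x_t} \e_{x_s}\trans \right).
\]
Taking expectations and using linearity of trace and expectation gives $\sum_n \E[ V_{n, x_s} V_{n, x_t} ] = \Tr\!\left( \V\trans \V \, \E[\e_{x_t}\e_{x_s}\trans] \right)$.

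Next I would split into the two cases according to Lemma~\ref{lemma: expectation: E exs extT}. When $s = t$, that lemma gives $\E[\e_{x_t}\e_{x_s}\trans] = \diag(\bmu)$, so the expression becomes $\Tr\!\left( \V\trans \V \diag(\bmu) \right) = \Tr\!\left( \V \diag(\bmu) \V\trans \right)$, which is exactly $\norm{\V}_\mu^2$ by the definition of the $\bmu$-norm given in the notations. When $s \ne t$, the lemma gives $\E[\e_{x_t}\e_{x_s}\trans] = \bmu\bmu\trans$, so the expression becomes $\Tr\!\left( \V\trans \V \bmu\bmu\trans \right) = \bmu\trans \V\trans \V \bmu = \norm{\V\bmu}^2$, and here is where I would use the hypothesis $\V\bmu = \bmu$ to conclude this equals $\norm{\bmu}^2$.

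There is no real obstacle here; the statement follows from a short algebraic manipulation plus the already-established Lemma~\ref{lemma: expectation: E exs extT}. The only points requiring a bit of care are keeping track of which matrix is transposed (so that the trace identity $\Tr(AB)=\Tr(BA)$ is applied correctly) and correctly matching the resulting trace to the definition $\norm{\A}_\bmu^2 = \Tr(\A\diag(\bmu)\A\trans)$. The hypothesis $\V\bmu = \bmu$ is used only in the off-diagonal case and is exactly what makes the $s \ne t$ value independent of $\V$.
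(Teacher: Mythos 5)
Your proof is correct and amounts to the same computation as the paper's: both cases reduce to the second moment $\E[\e_{x_s}\e_{x_t}\trans]$ being $\diag(\bmu)$ or $\bmu\bmu\trans$, with $\V\bmu=\bmu$ used only when $s\ne t$. The paper just carries this out as a direct scalar expansion (using independence to write $\E[V_{n,x_s}V_{n,x_t}]=(\E V_{n,x_s})^2$ for $s\ne t$) rather than your trace formulation citing Lemma~\ref{lemma: expectation: E exs extT}, which is a purely cosmetic difference.
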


\begin{lemma}
  \label{lemma: expectation: Ek Vxoxt}
  For any $\V \in \R^{N \times N}$ with $\V\bmu = \bmu$ and $t \in [T]$, we have 
  \[
    \Ek V_{x_o, x_t} 
    = q\ps{k}_t \inprod{\V}{\bP}_\mu + \left( 1 - q\ps{k}_t \right) \norm{\bmu}^2.  
  \]
\end{lemma}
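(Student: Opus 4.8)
The plan is to compute $\Ek V_{x_o, x_t}$ by conditioning on the value of $x_t$ and using the closed-form joint probability from Lemma~\ref{lemma: probability: P(xo, xt | xT+1)}. First I would write
\[
  \Ek V_{x_o, x_t}
  = \sum_{n, m \in [N]} V_{n, m} \, \P(x_o = n, x_t = m \mid x_{T+1} = k),
\]
and then substitute the identity $\P(x_o = n, x_t = m \mid x_{T+1} = k) = q\ps{k}_t P_{n, m} \mu_m + (1 - q\ps{k}_t) \mu_n \mu_m$. Splitting the sum along this decomposition gives a $q\ps{k}_t$-weighted term $\sum_{n, m} V_{n, m} P_{n, m} \mu_m$ and a $(1 - q\ps{k}_t)$-weighted term $\sum_{n, m} V_{n, m} \mu_n \mu_m$.

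The remaining work is to identify these two sums with $\inprod{\V}{\bP}_\mu$ and $\norm{\bmu}^2$ respectively. For the first, by the definition $\norm{\A}_{\bmu}^2 := \Tr(\A \diag(\bmu) \A\trans)$ and its polarization, $\inprod{\V}{\bP}_\mu = \Tr(\V \diag(\bmu) \bP\trans) = \sum_{n, m} V_{n, m} \mu_m P_{n, m}$, which matches exactly. For the second, $\sum_{n, m} V_{n, m} \mu_n \mu_m = \sum_m \mu_m \sum_n V_{n, m} \mu_n = \sum_m \mu_m (\V\bmu)_m$; here the hypothesis $\V\bmu = \bmu$ kicks in, so this equals $\sum_m \mu_m^2 = \norm{\bmu}^2$. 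Collecting the two pieces yields $\Ek V_{x_o, x_t} = q\ps{k}_t \inprod{\V}{\bP}_\mu + (1 - q\ps{k}_t) \norm{\bmu}^2$, as claimed.

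There is no real obstacle here; the only point requiring any care is making sure the $\bmu$-inner-product convention lines up with the index placement in Lemma~\ref{lemma: probability: P(xo, xt | xT+1)} (i.e., that it is $\mu_m$, the stationary weight of the conditioning coordinate $x_t = m$, that appears, consistent with $\diag(\bmu)$ sitting between $\V$ and $\bP\trans$), and invoking $\V\bmu = \bmu$ at the right moment. Everything else is a direct substitution, so the proof is essentially a two-line computation once Lemma~\ref{lemma: probability: P(xo, xt | xT+1)} is in hand.
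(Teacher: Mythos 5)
Your proposal matches the paper's proof essentially verbatim: expand $\Ek V_{x_o,x_t}$ against the joint law from Lemma~\ref{lemma: probability: P(xo, xt | xT+1)}, split the sum, and identify the two pieces as $\inprod{\V}{\bP}_\mu$ and $\bmu\trans\V\bmu = \norm{\bmu}^2$. The only (minor) difference is that you spell out the use of $\V\bmu = \bmu$ in the last step, which the paper leaves implicit.
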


\begin{lemma}[Expected gradients]
  \label{lemma: expectation: expected gradients}
  \begin{align*}
    \E \nabla_{\V} l 
    &= \norm{\A}_\mu^2 \V \diag(\bmu) + \left( 1 - \norm{\A}_\mu^2 \right) \bmu\bmu\trans  \\
      &\qquad
      - \inprod{\bQ}{\A}_\mu \bP\diag(\bmu) 
      - \left( 1 - \inprod{\bQ}{\A}_\mu \right) \bmu\bmu\trans, \\
    \E \nabla_{\a\ps{k}} l 
    &= \mu_k \left( \norm{\V}_\mu^2 - \norm{\bmu}^2 \right) \a\ps{k} 
        + \mu_k \One \norm{\bmu}^2  \\
      &\qquad
      - \mu_k \q\ps{k} \inprod{\V}{\bP}_\mu  
      - \mu_k \left( \One - \q\ps{k} \right) \norm{\bmu}^2. 
  \end{align*}
\end{lemma}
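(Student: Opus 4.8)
The plan is to start from the closed-form gradients of Lemma~\ref{lemma: gradient: vanilla gradient} and take expectations, handling the dependence on the last token $x_{T+1}$ first. Since $\a = \A\e_{x_{T+1}}$ and $x_{T+1}\sim\bmu$ is independent of the conditional law of $(\x, x_o)$ given $x_{T+1}$, it suffices to compute the conditional expectations $\Ek[\cdot]$ (under which $\a = \a\ps{k}$) and then average over $k$ with weight $\mu_k$; the $\mu_k$-weighted sums of $\norm{\a\ps{k}}^2$ and $\inprod{\q\ps{k}}{\a\ps{k}}$ that arise collapse into $\norm{\A}_\mu^2$ and $\inprod{\bQ}{\A}_\mu$ respectively. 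Throughout we use the invariants $\V\bmu=\bmu$ and $\One\trans\a\ps{k}=1$, which the preconditioning and projection steps of Algorithm~\ref{alg: training_alg} maintain.

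For $\E\nabla_\V l$, write (after conditioning on $x_{T+1}=k$) $\nabla_\V l = \V(\X\a\ps{k})(\X\a\ps{k})\trans - \e_{x_o}(\X\a\ps{k})\trans$ and expand both quadratic forms over pairs $(s,t)$ of positions. Lemma~\ref{lemma: expectation: E exs extT} gives $\Ek[\e_{x_s}\e_{x_t}\trans] = \diag(\bmu)$ for $s=t$ and $\bmu\bmu\trans$ for $s\neq t$, while Lemma~\ref{lemma: probability: P(xo, xt | xT+1)} gives $\Ek[\e_{x_o}\e_{x_t}\trans] = q\ps{k}_t\bP\diag(\bmu) + (1-q\ps{k}_t)\bmu\bmu\trans$. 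Contracting against $\a\ps{k}$, using $\One\trans\a\ps{k}=1$ freely and $\V\bmu=\bmu$ to simplify $\V\bmu\bmu\trans=\bmu\bmu\trans$, each conditional expectation becomes an affine combination of $\V\diag(\bmu)$, $\bP\diag(\bmu)$ and $\bmu\bmu\trans$ with coefficients built from $\norm{\a\ps{k}}^2$ and $\inprod{\q\ps{k}}{\a\ps{k}}$; averaging over $k$ then yields the stated formula.

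For $\E\nabla_{\a\ps{k}} l$, the indicator $\indi\{x_{T+1}=k\}$ contributes a factor $\mu_k$ by independence, leaving $\mu_k\,\Ek[(\V\X)\trans\V\X\,\a\ps{k} - (\V\X)\trans\e_{x_o}]$. Lemma~\ref{lemma: expectation: sumn E Vnxs Vnxt} gives $\Ek[(\V\X)\trans\V\X] = (\norm{\V}_\mu^2-\norm{\bmu}^2)\Id_T + \norm{\bmu}^2\One\One\trans$, so after applying it to $\a\ps{k}$ and using $\One\trans\a\ps{k}=1$ the first term becomes $(\norm{\V}_\mu^2-\norm{\bmu}^2)\a\ps{k} + \norm{\bmu}^2\One$; Lemma~\ref{lemma: expectation: Ek Vxoxt} gives $\Ek[(\V\X)\trans\e_{x_o}] = \q\ps{k}\inprod{\V}{\bP}_\mu + (\One-\q\ps{k})\norm{\bmu}^2$. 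Subtracting and scaling by $\mu_k$ gives the claim. The computation is mechanical; the only points that need care are conditioning on $x_{T+1}$ \emph{before} invoking the conditional-expectation lemmas of this section (since $\a$ itself depends on $x_{T+1}$) and correctly accounting for the off-diagonal $s\neq t$ terms in the quadratic forms, which is exactly where the $\One\One\trans$ and $\bmu\bmu\trans$ pieces — and hence the normalization $\One\trans\a\ps{k}=1$ — enter. There is no substantive obstacle beyond this bookkeeping.
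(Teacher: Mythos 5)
Your proposal is correct and follows essentially the same route as the paper: take the per-sample gradients from Lemma~\ref{lemma: gradient: vanilla gradient}, condition on $x_{T+1}=k$, evaluate the conditional second moments via Lemmas~\ref{lemma: expectation: E exs extT}, \ref{lemma: probability: P(xo, xt | xT+1)}, \ref{lemma: expectation: sumn E Vnxs Vnxt}, and~\ref{lemma: expectation: Ek Vxoxt}, and average over $k$ using $\One\trans\a\ps{k}=1$ and $\V\bmu=\bmu$ exactly as the paper does.
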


\begin{lemma}[Expected preconditioned gradients]
  \label{lemma: expectation: expected preconditioned gradients}
  \begin{align*}
    \E\hat{\nabla}_{\V} l 
    &= \norm{\A}_\mu^2 \left( \V - \bmu\One\trans \right)
      - \inprod{\bQ}{\A}_\mu \left( \bP - \bmu\One\trans \right), \\
    \E \hat\nabla_{\a\ps{k}} l 
    &= \left( \norm{\V}_\mu^2 - \norm{\bmu}^2 \right) \left( \a\ps{k} - \frac{\One}{T} \right) 
      - \left( \inprod{\V}{\bP}_\mu - \norm{\bmu}^2 \right) \left( \q\ps{k} - \frac{\One}{T} \right). 
  \end{align*}
\end{lemma}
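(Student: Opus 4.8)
The plan is to derive the expected preconditioned gradients by starting from the expected raw gradients (Lemma~\ref{lemma: expectation: expected gradients}) and then applying the linear preconditioning operators in \eqref{eq: preconditioned gradients}. For $\V$, recall $\hat\nabla_\V l = (\Id_N - \One\One\trans/N)(\nabla_\V l)\diag(1/\bmu)(\Id_N - \bmu\bmu\trans/\norm{\bmu}^2)$; by linearity of expectation, I may substitute $\E\nabla_\V l$ directly. The first step is to right-multiply $\E\nabla_\V l$ by $\diag(1/\bmu)$: the terms $\norm{\A}_\mu^2\V\diag(\bmu)$ and $\inprod{\bQ}{\A}_\mu\bP\diag(\bmu)$ become $\norm{\A}_\mu^2\V$ and $\inprod{\bQ}{\A}_\mu\bP$, while the $\bmu\bmu\trans$ terms become $\bmu\One\trans$ (using $\bmu\trans\diag(1/\bmu)=\One\trans$). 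After this step the bracketed expression is $\norm{\A}_\mu^2(\V-\bmu\One\trans) - \inprod{\bQ}{\A}_\mu(\bP - \bmu\One\trans)$ plus a leftover $(1-\norm{\A}_\mu^2 - 1 + \inprod{\bQ}{\A}_\mu)\bmu\One\trans = (\inprod{\bQ}{\A}_\mu - \norm{\A}_\mu^2)\bmu\One\trans$, which I should fold in — actually it is cleaner to just collect all four terms after multiplying by $\diag(1/\bmu)$ and observe they sum to $\norm{\A}_\mu^2\V - \inprod{\bQ}{\A}_\mu\bP + (\inprod{\bQ}{\A}_\mu - \norm{\A}_\mu^2)\bmu\One\trans$.

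Next I apply the two outer projections. The left projection $\Id_N - \One\One\trans/N$ annihilates any matrix whose columns are constant multiples of $\One$; since $\V\bmu=\bmu$ and $\bP\bmu = \bmu$ are not directly relevant here, instead I use $\One\trans\V = \One\trans$ and $\One\trans\bP=\One\trans$ (columns are probability vectors), so $\One\One\trans\V/N = \One\One\trans/N = \One\One\trans\bP/N$; hence $(\Id_N - \One\One\trans/N)(\V - \bP) = \V - \bP$ up to these canceling constant pieces, and $(\Id_N - \One\One\trans/N)\bmu\One\trans = \bmu\One\trans - \One\One\trans\cdot(\One\trans\bmu)/N\cdot\dots$ — here $\One\trans\bmu = 1$ so this equals $(\Id_N - \One\One\trans/N)\bmu\One\trans = (\bmu - \One/N)\One\trans$. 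The right projection $\Id_N - \bmu\bmu\trans/\norm{\bmu}^2$ then kills the $\One\trans$ pieces: $\One\trans(\Id - \bmu\bmu\trans/\norm{\bmu}^2) = \One\trans - \bmu\trans/\norm{\bmu}^2 \cdot \dots$ wait, $\One\trans\bmu = 1$ so $\One\trans(\Id - \bmu\bmu\trans/\norm{\bmu}^2) = \One\trans - \bmu\trans/\norm{\bmu}^2$, which is not zero; the cleaner route is that $(\V-\bmu\One\trans)(\Id - \bmu\bmu\trans/\norm{\bmu}^2) = \V - \bmu\One\trans$ since $\V\bmu = \bmu$ gives $(\V - \bmu\One\trans)\bmu = \bmu - \bmu = 0$. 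So the right projection fixes $\V - \bmu\One\trans$ and $\bP - \bmu\One\trans$ (using $\bP\bmu = \bmu$), and the leftover constant term gets killed. Putting the left and right projections together and tracking that $\One\One\trans/N$ acts trivially on the already-centered $\V - \bmu\One\trans$ (because its column sums vanish: $\One\trans(\V - \bmu\One\trans) = \One\trans - \One\trans = 0$), I get exactly $\norm{\A}_\mu^2(\V - \bmu\One\trans) - \inprod{\bQ}{\A}_\mu(\bP - \bmu\One\trans)$.

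For the $\a\ps{k}$ gradient the computation is shorter: $\hat\nabla_{\a\ps{k}} l = \frac{1}{\mu_k}(\Id_T - \One_T\One_T\trans/T)\nabla_{\a\ps{k}} l$, and from Lemma~\ref{lemma: expectation: expected gradients}, $\E\nabla_{\a\ps{k}} l = \mu_k(\norm{\V}_\mu^2 - \norm{\bmu}^2)\a\ps{k} - \mu_k(\inprod{\V}{\bP}_\mu - \norm{\bmu}^2)\q\ps{k} + \mu_k\norm{\bmu}^2\One - \mu_k(\One - \q\ps{k})\norm{\bmu}^2$, where the last two $\One$-terms combine. Dividing by $\mu_k$ and applying $\Id_T - \One_T\One_T\trans/T$: since $\One_T\trans\a\ps{k} = 1$ (assuming the projection maintained $\One\trans\a\ps{k} = 1$) and $\One_T\trans\q\ps{k} = 1$, the centering sends $\a\ps{k}\mapsto \a\ps{k} - \One/T$ and $\q\ps{k}\mapsto\q\ps{k} - \One/T$, and annihilates any pure multiple of $\One$. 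Collecting terms gives $(\norm{\V}_\mu^2 - \norm{\bmu}^2)(\a\ps{k} - \One/T) - (\inprod{\V}{\bP}_\mu - \norm{\bmu}^2)(\q\ps{k} - \One/T)$, as claimed.

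\textbf{The main obstacle} is bookkeeping the constant ($\One$ and $\bmu$) directions correctly — making sure I invoke the right invariants ($\One\trans\V = \One\trans$, $\V\bmu = \bmu$, $\One\trans\bmu = 1$, $\One\trans\a\ps{k} = 1$) at each projection step, and confirming that the stray $\bmu\One\trans$ coefficient mismatch between the $\norm{\A}_\mu^2$ and $\inprod{\bQ}{\A}_\mu$ terms is genuinely annihilated by the combined left-and-right projection rather than surviving. There is no deep difficulty; it is a careful linear-algebra verification, and the cleanest exposition is probably to handle the $\bmu\One\trans$-collinear remainder in one line by noting it lies in the kernel of $(\Id_N - \One\One\trans/N)\,\cdot\,(\Id_N - \bmu\bmu\trans/\norm{\bmu}^2)$ after the $\diag(1/\bmu)$ rescaling turns it into $\bmu\One\trans$, which is killed on the right.
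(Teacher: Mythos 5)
Your derivation is in substance correct and follows essentially the same route as the paper: substitute the expected gradient from Lemma~\ref{lemma: expectation: expected gradients} into the preconditioning operators by linearity, and check that the projections act trivially on the centered result; the paper does the same computation, applying $\diag(1/\bmu)$ and the right projection term-by-term and observing that the $\bmu\bmu\trans/\norm{\bmu}^2$ corrections cancel because their coefficients sum to zero, with the left projection implicitly trivial since $\One\trans(\V-\bmu\One\trans)=\One\trans(\bP-\bmu\One\trans)=0$.

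One remark in your closing paragraph is wrong, though, and you should not lean on it: $\bmu\One\trans$ is \emph{not} killed by the right projection, since $\bmu\One\trans\left(\Id_N - \frac{\bmu\bmu\trans}{\norm{\bmu}^2}\right) = \bmu\One\trans - \frac{\bmu\bmu\trans}{\norm{\bmu}^2} \neq 0$, and the combined left-and-right projection sends $\bmu\One\trans$ to $\left(\bmu - \frac{\One}{N}\right)\left(\One\trans - \frac{\bmu\trans}{\norm{\bmu}^2}\right)$, which is nonzero unless $\bmu = \One/N$. So if you genuinely split off a $\bmu\One\trans$-collinear remainder and argued it is annihilated by the projections, the proof would fail. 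The saving fact — which your ``cleaner'' reformulation already contains — is that there is no remainder at all: after right-multiplying $\E\nabla_\V l$ by $\diag(1/\bmu)$, the four terms sum exactly to $\norm{\A}_\mu^2(\V - \bmu\One\trans) - \inprod{\bQ}{\A}_\mu(\bP - \bmu\One\trans)$ (your claimed extra $(\inprod{\bQ}{\A}_\mu - \norm{\A}_\mu^2)\bmu\One\trans$ double-counts what is already inside the two centered terms), and this matrix is \emph{fixed}, not annihilated, by both projections because its column sums vanish and it kills $\bmu$ on the right ($\V\bmu=\bP\bmu=\bmu$, $\One\trans\bmu=1$). State it that way, or follow the paper and cancel the $\bmu\bmu\trans/\norm{\bmu}^2$ corrections directly. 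The $\a\ps{k}$ part of your argument is fine and matches the paper, using $\One\trans\a\ps{k}=\One\trans\q\ps{k}=1$.
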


\subsection{Deferred proofs of this section}

\subsubsection{Probabilities}

\begin{proof}[Proof of Lemma~\ref{lemma: probability: P(xo, xt | xT+1)}]
  For notational simplicity, define $\x_{-t} = (x_1, \dots, x_{t-1}, x_{t+1}, \dots, x_T)$. We compute 
  \begin{align*}
    & \P(x_o = n, x_t = m \mid x_{T+1} = k)  \\
    =\;& \sum_{\hat\m \in [N]^{T-1}} \P(x_o = n, x_t = m, \x_{-t} = \hat{\m} \mid x_{T+1} = k) \\
    =\;& \sum_{\hat\m \in [N]^{T-1}} 
      \P(x_o = n \mid x_t = m, \x_{-t} = \hat{\m}, x_{T+1} = k)  \\
      &\qquad\qquad \times 
      \P(x_t = m, \x_{-t} = \hat{\m} \mid x_{T+1} = k). 
  \end{align*}
  By the independence assumption, we have $\P(x_t = m, \x_{-t} = \hat{\m} \mid x_{T+1} = k)
  = \P(x_t = m, \x_{-t} = \hat{\m} ) = \P(x_t = m) \P(\x_{-t} = \hat\m)$. For the first factor, we have 
  $ 
    \P(x_o = n \mid x_t = m, \x_{-t} = \hat{\m}, x_{T+1} = k)
    = Q\ps{k}_t P_{n, m} + \sum_{s \ne t} Q\ps{k}_s P_{n, \hat{m}_s}.  
  $
  Therefore, 
  \begin{align*}
    & \P(x_o = n, x_t = m \mid x_{T+1} = k)  \\
    =\;& \sum_{\hat\m \in [T]^{N-1}} 
      \left( Q\ps{k}_t P_{n, m} + \sum_{s \ne t} Q\ps{k}_s P_{n, \hat{m}_s} \right)
      \P(x_t = m) \P(\x_{-t} = \hat\m) \\
    =\;& 
      \left( 
        Q\ps{k}_t P_{n, m} 
        + \sum_{s \ne t} Q\ps{k}_s \sum_{\hat\m \in [T]^{N-1}} P_{n, \hat{m}_s} \P(\x_{-t} = \hat\m) 
      \right)
      \mu_m. 
  \end{align*}
  Note that for any $s \ne t$ 
  \[ 
    \sum_{\hat\m \in [T]^{N-1}} P_{n, \hat{m}_s} \P(\x_{-t} = \hat\m)
    = \sum_{\hat\m_{-s} \in [T]^{N-2}} \left( \sum_{\hat{m}_s=1}^N P_{n, \hat{m}_s} \mu_{\hat{m}_s} \right) 
      \P(\x_{-s, -t} = \hat\m_{-s}) 
    = \mu_n. 
  \] 
  Thus, 
  \begin{align*}
    \P(x_o = n, x_t = m \mid x_{T+1} = k) 
    &= \left( 
        q\ps{k}_t P_{n, m} 
        + \sum_{s \ne t} q\ps{k}_s \mu_n 
      \right) \mu_m \\
    &= q\ps{k}_t P_{n, m} \mu_m + \left( 1 - q\ps{k}_t \right) \mu_n \mu_m. 
  \end{align*}
\end{proof}

\begin{proof}[Proof of Lemma~\ref{lemma: probability: P(xo | xT+1, xs, xt)}]
  For notational simplicity, let $\x_{-s, -t}$ denote the vector obtained by removing the $s, t$ coordinates from 
  $\x$. Then, we compute 
  \begin{align*}
    & \P(x_o = n \mid x_{T+1} = k, x_s = m, x_t = l) \\
    =\;& \sum_{\hat\m \in [N]^{T-2}} \P(x_o = n, \x_{-s, -t} = \hat\m \mid x_{T+1} = k, x_s = m, x_t = l) \\
    =\;& \sum_{\hat\m \in [N]^{T-2}} \P(x_o = n \mid x_{T+1} = k, x_s = m, x_t = l, \x_{-s, -t} = \hat{\m}) 
      \P(\x_{-s, -t} = \hat{\m}) \\
    =\;& \sum_{\hat\m \in [N]^{T-2}} \left(
      q\ps{k}_s P_{n, m} + q\ps{k}_t P_{n, l} + \sum_{i \notin \{s, t\}} q\ps{k}_i P_{n, \hat{m}_i}
    \right)
      \P(\x_{-s, -t} = \hat{\m}) \\
    =\;& q\ps{k}_s P_{n, m} 
        + q\ps{k}_t P_{n, l} 
        + \left( 1 - q\ps{k}_s - q\ps{k}_t \right) \mu_n. 
  \end{align*}
\end{proof}

\subsubsection{Expectations}
\begin{proof}[Proof of Lemma~\ref{lemma: gradient: vanilla gradient}]
    For each sample $\X$, we have
    $$  l(\x, x_{T+1}, x_o; \V, \A) 
  := \frac{1}{2} \norm{ \e_{x_o} - \V\X\A\e_{x_{T+1}} }^2. $$
  and $\a$
  Then we have the matrix differential:
  \begin{align*}
      \mathrm{d}l = \qty(\V\X\a - \e_{x_o})^\top \mathrm{d}\V \X \a + \qty(\V\X\a - \e_{x_o})^\top \V\X\mathrm{d}\a
  \end{align*}
  Therefore, \begin{align*}
  \nabla_{\V} l 
  &= \left( \V\X\a - \e_{x_o} \right) (\X\a)\trans, \\
  \nabla_{\a\ps{k}} l 
  &= \indi\{ x_{T+1} = k \} (\V\X)\trans \left( \V\X\a - \e_{x_o} \right)
\end{align*}
\end{proof}
\begin{proof}[Proof of Lemma~\ref{lemma: expectation: E exs extT}]
  When $s \ne t$, we have $\Ek[ \e_{x_s} \e_{x_t}\trans ] = \Ek[ \e_{x_s}] \Ek[\e_{x_t}\trans ] = \bmu\bmu\trans$. 
  When $s = t$, we have $\Ek[ \e_{x_t} \e_{x_t}\trans ] = \sum_{k=1}^{N} \mu_k \e_k\e_k\trans = \diag(\bmu)$. 
\end{proof}

\begin{proof}[Proof of Lemma~\ref{lemma: expectation: sumn E Vnxs Vnxt}]
  When $s \ne t$, we have 
  \begin{align*}
    \sum_{n=1}^{N} \E[ V_{n, x_s} V_{n, x_t} ] 
    = \sum_{n=1}^{N} \left( \E V_{n, x_s} \right)^2 
    &= \sum_{n=1}^{N} \left( \sum_{k=1}^N \mu_k V_{n, k} \right)^2 \\
    &= \sum_{k, l=1}^{N} \mu_k \mu_l \sum_{n=1}^{N} V_{n, k} V_{n, l}  
    = \bmu\trans \V\trans\V\bmu 
    = \norm{\bmu}^2. 
  \end{align*}
  When $s = t$, we have 
  $
    \sum_{n=1}^{N} \E V_{n, x_t}^2
    = \sum_{n=1}^{N} \sum_{k=1}^{N} \mu_k V_{n, k}^2
    = \norm{\V}_\mu^2. 
  $
\end{proof}

\begin{proof}[Proof of Lemma~\ref{lemma: expectation: Ek Vxoxt}]
  Recall Lemma~\ref{lemma: probability: P(xo, xt | xT+1)}. Then, we compute 
  \begin{align*}
    \Ek V_{x_o, x_t} 
    &= \sum_{n, m=1}^N V_{n, m} \P(x_o = n, x_t = m \mid x_{T+1} = k) \\
    &= q\ps{k}_t \sum_{n, m=1}^N V_{n, m} P_{n, m} \mu_m 
      + \left( 1 - q\ps{k}_t \right)\sum_{n, m=1}^N V_{n, m}  \mu_n \mu_m \\
    &= q\ps{k}_t \inprod{\V}{\bP}_\mu  
      + \left( 1 - q\ps{k}_t \right) \bmu\trans\V\bmu. 
  \end{align*}
\end{proof}

\begin{proof}[Proof of Lemma~\ref{lemma: expectation: expected gradients}]
  First, we consider 
  $
    \nabla_{\V} l 
    = \left( \V\X\a - \e_{x_o} \right) (\X\a)\trans, 
  $
  and compute $\E (\X\a)(\X\a)\trans$ and $\E \e_{x_o}(\X\a)\trans$. Write $\X\a = \sum_{t=1}^T a_t \e_{x_t}$. Then,
  we have 
  \begin{align*}
    \Ek\left[ (\X\a)(\X\a)\trans \right] 
    &= \sum_{s, t = 1}^{T} a\ps{k}_s a\ps{k}_t \E[ \e_{x_s} \e_{x_t}\trans ] \\
    &= \sum_{t = 1}^{T} (a\ps{k}_t)^2 \E[ \e_{x_t} \e_{x_t}\trans ] 
      + \sum_{s \ne t} a\ps{k}_s a\ps{k}_t \E[ \e_{x_s} \e_{x_t}\trans ] \\
    &= \norm{\a\ps{k}}^2 \diag(\bmu) + \left( 1 - \norm{\a\ps{k}}^2 \right) \bmu\bmu\trans, 
  \end{align*}
  where the last line comes from Lemma~\ref{lemma: expectation: E exs extT}. Then, we compute 
  \begin{align*}
    \Ek[ \e_{x_o}(\X\a)\trans ] 
    = \sum_{t=1}^{T} a\ps{k}_t \Ek[ \e_{x_o} \e_{x_t}\trans ] 
    &= \sum_{t=1}^{T} a\ps{k}_t \left[ \P(x_o = n, x_t = m \mid x_{T+1} = k) \right]_{n, m\in [N]} \\
    &= \sum_{t=1}^{T} a\ps{k}_t \left( q\ps{k}_t \bP \diag(\bmu) + (1 - q\ps{k}_t) \bmu\bmu\trans \right) \\
    &= \inprod{\q\ps{k}}{\a\ps{k}} \bP \diag(\bmu) 
      + \left( 1 - \inprod{\q\ps{k}}{\a\ps{k}} \right) \bmu\bmu\trans, 
  \end{align*}
  where the second line comes from Lemma~\ref{lemma: probability: P(xo, xt | xT+1)}. Thus, for $\nabla_{\V} l$, we have 
  \begin{align*}
    \E \nabla_{\V} l 
    &= \V \sum_{k=1}^{N} \mu_k \Ek\left[ (\X\a)(\X\a)\trans \right]
      - \sum_{k=1}^{N} \mu_k \Ek[ \e_{x_o}(\X\a)\trans ]  \\
    &= \norm{\A}_\mu^2 \V \diag(\bmu) + \left( 1 - \norm{\A}_\mu^2 \right) \V\bmu\bmu\trans 
      - \inprod{\bQ}{\A}_\mu \bP\diag(\bmu) 
      - \left( 1 - \inprod{\bQ}{\A}_\mu \right) \bmu\bmu\trans. 
  \end{align*}
  Now, consider $\nabla_{\a\ps{k}} l = \indi\{ x_{T+1} = k \} (\V\X)\trans \left( \V\X\a - \e_{x_o} \right)$ and 
  compute $\Ek (\V\X)\trans(\V\X)$ and $\Ek (\V\X)\trans\e_{x_i}$. By Lemma~\ref{lemma: expectation: sumn E Vnxs Vnxt},
  for each $s, t \in [T]$, we have 
  \[
    \e_s\trans \Ek[ (\V\X)\trans(\V\X) ] \e_t 
    = \Ek[ (\V\e_{x_s})\trans \V\e_{x_t} ] 
    = \sum_{n=1}^{N} \E V_{n, x_s} V_{n, x_t} 
    = \begin{cases}
      \norm{\V}_\mu^2, & s = t, \\
      \norm{\bmu}^2,   & s \ne t. 
    \end{cases}
  \]
  In matrix form, this is 
  \[
    \Ek (\V\X)\trans\V\X 
    = \left( \norm{\V}_\mu^2 - \norm{\bmu}^2 \right) \Id + \One\One\trans \norm{\bmu}^2. 
  \]
  Then, by Lemma~\ref{lemma: expectation: Ek Vxoxt}, for each $t \in [T]$, we have 
  \begin{align*}
    \e_t\trans \E[ (\V\X)\trans\e_{x_o} ] 
    = \Ek[ (\V\e_{x_t})\trans \e_{x_o} ] 
    &= \Ek V_{x_o, x_t}  \\
    &= q\ps{k}_t \inprod{\V}{\bP}_\mu +  \left( 1 - q\ps{k}_t \right) \norm{\bmu}^2. 
  \end{align*}
  In matrix for, this is 
  \[
    \E[ (\V\X)\trans\e_{x_o} ] 
    = \q\ps{k} \inprod{\V}{\bP}_\mu + \left( \One - \q\ps{k} \right) \norm{\bmu}^2. 
  \]
  Combine these together, and we obtain 
  \begin{align*}
    \mu_k\inv \E \nabla_{\a\ps{k}} l 
    &= \Ek\left[ (\V\X)\trans\V\X \right]\a\ps{k}
        - \Ek\left[ (\V\X)\trans\e_{x_o} \right] \\
    &= \left( \norm{\V}_\mu^2 - \norm{\bmu}^2 \right) \a\ps{k} + \One \norm{\bmu}^2 
        - \q\ps{k} \inprod{\V}{\bP}_\mu  
        - \left( \One - \q\ps{k} \right) \norm{\bmu}^2. 
  \end{align*}
\end{proof}

\begin{proof}[Proof of Lemma~\ref{lemma: expectation: expected preconditioned gradients}]
  Recall from Lemma~\ref{lemma: expectation: expected gradients} that 
  \begin{align*}
    \E \nabla_{\V} l 
    &= \norm{\A}_\mu^2 \V \diag(\bmu) + \left( 1 - \norm{\A}_\mu^2 \right) \V\bmu\bmu\trans  \\
      &\qquad
      - \inprod{\bQ}{\A}_\mu \bP\diag(\bmu) 
      - \left( 1 - \inprod{\bQ}{\A}_\mu \right) \bmu\bmu\trans, \\
    \E \nabla_{\a\ps{k}} l 
    &= \mu_k \left( \norm{\V}_\mu^2 - \norm{\bmu}^2 \right) \a\ps{k} 
        + \mu_k \One \norm{\bmu}^2  \\
      &\qquad
      - \mu_k \q\ps{k} \inprod{\V}{\bP}_\mu  
      - \mu_k \left( \One - \q\ps{k} \right) \norm{\bmu}^2. 
  \end{align*}
  For $\nabla_{\V}$, we have 
  \begin{align*}
    \V\diag(\bmu) \diag(1/\bmu) \left( \Id - \frac{\bmu\bmu\trans}{\norm{\bmu}^2} \right) 
    &= \V - \frac{\bmu\bmu\trans}{\norm{\bmu}^2}, \\
    \bP \diag(\bmu) \diag(1/\bmu) \left( \Id - \frac{\bmu\bmu\trans}{\norm{\bmu}^2} \right) 
    &= \bP - \frac{\bmu\bmu\trans}{\norm{\bmu}^2}, \\
    \bmu\bmu\trans \diag(1/\bmu) \left( \Id - \frac{\bmu\bmu\trans}{\norm{\bmu}^2} \right) 
    &= \bmu\One\trans  - \frac{\bmu\bmu\trans}{\norm{\bmu}^2}. 
  \end{align*}
  In particular, note that the $\bmu\bmu\trans / \norm{\bmu}^2$ terms will cancel with each other. Thus, we have 
  \begin{align*}
    \E\hat{\nabla}_{\V} l 
    &= \norm{\A}_\mu^2 \V 
      + \left( 1 - \norm{\A}_\mu^2 \right) \bmu\One\trans
      - \inprod{\bQ}{\A}_\mu \bP 
      - \left( 1 - \inprod{\bQ}{\A}_\mu\right) \bmu\One\trans \\
    &= \norm{\A}_\mu^2 \left( \V - \bmu\One\trans \right)
      - \inprod{\bQ}{\A}_\mu \left( \bP - \bmu\One\trans \right) . 
  \end{align*}
  For $\hat\nabla_{\a\ps{k}} l$, we have 
  \begin{align*}
    \E \hat\nabla_{\a\ps{k}} l 
    &= \left( \norm{\V}_\mu^2 - \norm{\bmu}^2 \right) \left( \Id - \frac{\One\One\trans}{T} \right) \a\ps{k} 
      - \left( \Id - \frac{\One\One\trans}{T} \right) \q\ps{k} \inprod{\V}{\bP}_\mu   \\
      &\qquad
      + \left( \Id - \frac{\One\One\trans}{T} \right) \q\ps{k} \norm{\bmu}^2 \\
    &= \left( \norm{\V}_\mu^2 - \norm{\bmu}^2 \right) \left( \a\ps{k} - \frac{\One}{T} \right) 
      - \left( \q\ps{k} - \frac{\One}{T} \right) \inprod{\V}{\bP}_\mu 
      + \left( \q\ps{k} - \frac{\One}{T} \right) \norm{\bmu}^2 \\
    &= \left( \norm{\V}_\mu^2 - \norm{\bmu}^2 \right) \left( \a\ps{k} - \frac{\One}{T} \right) 
      - \left( \inprod{\V}{\bP}_\mu - \norm{\bmu}^2 \right) \left( \q\ps{k} - \frac{\One}{T} \right). 
  \end{align*}
\end{proof}


  

\subsection{Concentration}
In this section, we provide concentration inequalities for the gradients of the loss function. The concentration is applied on the gradient noise term
\begin{align*}
  \h_{\V, \tau}&:= \left( \hat\nabla\ps{B}_{\V} l - \E \hat\nabla_{\V} l \right)  \\
  \h_{\A, \tau}&:=\left( \hat\nabla\ps{B}_{\A} l - \E \hat\nabla_{\A} l \right)
\end{align*}
where $\hat\nabla\ps{B}_{\V} l$ and $\hat\nabla\ps{B}_{\A} l$ are the preconditioned empirical gradients computed from a batch of size $B$. Here, we first consider the concentration of the original gradients:
\begin{align*}
  \nabla_{\V} l 
  &= \left( \V\X\a - \e_{x_o} \right) (\X\a)\trans, \\
  \nabla_{\a\ps{k}} l 
  &= \indi\{ x_{T+1} = k \} (\V\X)\trans \left( \V\X\a - \e_{x_o} \right)
\end{align*}
and then consider the concentration of the preconditioned gradients. In this paper, we focus on $\norm{\cdot}_\mu$ as the mostly used metric for the gradient matrices.

First we prove a naive concentration w.r.t. any random vector $\y$ with bounded second moment with any $\norm{\cdot}$. 
\begin{lemma}
  \label{lemma: concentration: basic inequality}
  Fix $\delta, \eps > 0$. Let $\y$ be a $D$-dimensional random vector with $\E \norm{\y}^2 \le G$. Define $\y\ps{B} := B\inv \sum_{i=1}^{B} \y_i$ where $(\y_i)_i$ are i.i.d.~versions of $\y$. If 
  \[ 
    B \ge \frac{G}{\delta \eps^2}, 
  \]
  then with probability at least $1 - \delta$, we have $\norm{\y - \E \y} \le \eps$. 
\end{lemma}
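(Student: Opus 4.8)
The plan is a one-step second-moment bound followed by Markov's inequality. First I would note that it suffices to control the mean squared error $\E\norm{\y\ps{B} - \E\y}^2$, since applying Markov's inequality to the nonnegative random variable $\norm{\y\ps{B} - \E\y}^2$ gives
\[
  \P\left( \norm{\y\ps{B} - \E\y} > \eps \right)
  = \P\left( \norm{\y\ps{B} - \E\y}^2 > \eps^2 \right)
  \le \frac{1}{\eps^2}\, \E\norm{\y\ps{B} - \E\y}^2 .
\]

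To evaluate the right-hand side, I would write $\y\ps{B} - \E\y = B\inv \sum_{i=1}^B (\y_i - \E\y)$, observe that the summands are i.i.d.\ and centered, and use that independence makes them coordinatewise uncorrelated; expanding the Euclidean norm coordinate by coordinate then yields the standard variance-of-the-mean identity
\[
  \E\norm{\y\ps{B} - \E\y}^2
  = \frac{1}{B^2} \sum_{i=1}^B \E\norm{\y_i - \E\y}^2
  = \frac{1}{B}\, \E\norm{\y - \E\y}^2 .
\]
Since centering can only shrink the second moment, $\E\norm{\y - \E\y}^2 = \E\norm{\y}^2 - \norm{\E\y}^2 \le \E\norm{\y}^2 \le G$, so $\E\norm{\y\ps{B} - \E\y}^2 \le G/B$. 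Combining this with the Markov bound gives $\P(\norm{\y\ps{B} - \E\y} > \eps) \le G/(B\eps^2)$, and the hypothesis $B \ge G/(\delta\eps^2)$ forces the right-hand side to be at most $\delta$, which is exactly the claim (with $\norm{\y\ps{B} - \E\y} \le \eps$ holding on the complementary event of probability at least $1-\delta$).

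There is no real obstacle here; the one point worth flagging is that the variance-of-the-mean identity is being invoked for a vector-valued random variable, so I would state it for the Euclidean norm and obtain it by applying the scalar identity in each coordinate and summing. This is all that is needed later, since the norms to which this lemma is ultimately applied --- the Frobenius norm and the $\bmu$-norm $\norm{\A}_{\bmu}^2 = \Tr(\A\diag(\bmu)\A\trans)$ --- are induced by inner products, so the identity carries over verbatim after reshaping the matrix into a vector.
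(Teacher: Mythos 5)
Your proposal is correct and follows essentially the same route as the paper: bound $\E\norm{\y\ps{B}-\E\y}^2$ by $G/B$ via the i.i.d.\ variance-of-the-mean identity and then apply Markov's inequality to the squared norm. Your explicit remark that centering can only reduce the second moment is a small tidy-up of the paper's ``assume w.l.o.g.\ $\E\y=0$'' step, but the argument is otherwise identical.
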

\begin{proof}[Proof of Lemma~\ref{lemma: concentration: basic inequality}]
  Assume w.l.o.g.~that $\E \y = 0$. First, note that 
  \[ 
    \E \norm{\y\ps{B}}^2 
    = \frac{1}{B^2} \sum_{i, j=1}^N \E \inprod{ \y_i}{\y_j} 
    = \frac{1}{B^2} \sum_{i=1}^N \E \norm{\y_i}^2
    \le \frac{G}{B}. 
  \] 
  Hence, by the Markov inequality, we have 
  \[
    \P\left( \norm{\y\ps{B}} \ge \eps \right)
    = \P\left( \norm{\y\ps{B}}^2 \ge \eps^2 \right)
    \le \frac{\E \norm{\y\ps{B}}^2}{\eps^2}
    \le \frac{G}{B \eps^2}. 
  \]
  Thus, for fixed $\eps, \delta \in (0, 1)$, if we choose $B = G / (\delta \eps^2)$, then we have with probability 
  at least $1 - \delta$, $\norm{\y\ps{B}} \le \eps$. 
\end{proof}

Now we upper bound the infinity norm of the preconditioned gradients to apply concentration.
\begin{lemma}
  \label{lemma: concentration: gradient infinity norm}
  Suppose that $\norm{\mrm{Vec}\,\V}_\infty = O(1)$,$\V=\tilde{\V}+\bDelta_{\V},\A=\tilde{\A}+\bDelta_{\A}$, where $\tilde{\V}\in \R^{N\times N}$ is a transition probability matrix, and in the attention matrix $\tilde\A$ each column $\tilde\a\ps{k}$ is a probability vector. 
  Moreover, $\|\bDelta_{\A}\|_F^2,\|\bDelta_{\V}\|_F^2\leq O(1/T).$
  Then, we have 
  \[
    \norm{\hat\nabla_{\a\ps{k}} l}_\infty 
    \le O(N), \ \ \ \norm{\mrm{Vec}\hat\nabla_{\V} l}_\infty
    \le O(N)
  \]
\end{lemma}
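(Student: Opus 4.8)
The plan is to bound each coordinate of the (preconditioned) gradients by $O(N)$ through a chain of crude but elementary estimates, using that $\V$ and $\A$ are close to stochastic matrices with the deviations $\bDelta_\V, \bDelta_\A$ controlled in Frobenius norm by $O(1/T)$. First I would record the vanilla gradient formulas from Lemma~\ref{lemma: gradient: vanilla gradient}:
\[
  \nabla_{\V} l = (\V\X\a - \e_{x_o})(\X\a)\trans,
  \qquad
  \nabla_{\a\ps{k}} l = \indi\{x_{T+1}=k\}(\V\X)\trans(\V\X\a - \e_{x_o}).
\]
The key observation is that $\X\a = \sum_{t=1}^T a_t \e_{x_t}$ is a (signed) combination of standard basis vectors with coefficients summing to $1$ (since $\One\trans\a = 1$ after the projection step) and $\ell_1$-norm at most $1 + O(1/\sqrt T)$: indeed $\a = \tilde\a\ps{k} + \bdelta$ with $\tilde\a\ps{k}$ a probability vector and $\norm{\bdelta}_1 \le \sqrt T \norm{\bdelta}_2 \le \sqrt T \cdot O(1/\sqrt T) = O(1)$. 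Hence $\norm{\X\a}_\infty \le \norm{\X\a}_1 \le \norm{\a}_1 = O(1)$ and likewise $\norm{\X\a}_2 = O(1)$. Similarly $\V\X\a$ has $\ell_\infty$-norm $O(1)$ because $\norm{\Vect\V}_\infty = O(1)$ and each entry of $\V\X\a$ is a bounded combination of $O(1)$ entries of $\V$... actually more carefully, $(\V\X\a)_n = \sum_t a_t V_{n,x_t}$, so $|(\V\X\a)_n| \le \norm{\a}_1 \norm{\Vect\V}_\infty = O(1)$. Together with $\norm{\e_{x_o}}_\infty \le 1$ this gives $\norm{\V\X\a - \e_{x_o}}_\infty = O(1)$.

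Next I would push these through to the preconditioned gradients. For $\hat\nabla_\V l$, we have $\hat\nabla_\V l = (\Id - \One\One\trans/N)(\nabla_\V l)\diag(1/\bmu)(\Id - \bmu\bmu\trans/\norm{\bmu}^2)$; since $\bmu$ is well-conditioned, $\diag(1/\bmu)$ has entries $O(N)$, while the two projection matrices are averaging operators and cannot increase the $\ell_\infty$-norm of a matrix by more than an $O(1)$ factor (left-multiplying by $\Id - \One\One\trans/N$ replaces each column by itself minus its mean; right-multiplying by $\Id - \bmu\bmu\trans/\norm{\bmu}^2$ replaces each row by itself minus a $\bmu$-weighted average, both $\ell_\infty \to \ell_\infty$ bounded by $2$). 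So I only need $\norm{\Vect\nabla_\V l}_\infty = O(1)$, which follows from the factorization $\nabla_\V l = (\V\X\a - \e_{x_o})(\X\a)\trans$ and the two $\ell_\infty$ bounds above: $|(\nabla_\V l)_{n,m}| \le \norm{\V\X\a - \e_{x_o}}_\infty \cdot \norm{\X\a}_\infty = O(1)$. Multiplying by the $O(N)$ from $\diag(1/\bmu)$ gives $\norm{\Vect\hat\nabla_\V l}_\infty = O(N)$.

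For $\hat\nabla_{\a\ps{k}} l = \mu_k\inv(\Id - \One\One\trans/T)\nabla_{\a\ps{k}} l$, the prefactor $\mu_k\inv = O(N)$ and the averaging projection is again $\ell_\infty$-bounded, so it suffices to show $\norm{\nabla_{\a\ps{k}} l}_\infty = O(1)$. Here $\nabla_{\a\ps{k}} l = \indi\{x_{T+1}=k\}(\V\X)\trans(\V\X\a - \e_{x_o})$, and its $t$-th coordinate is $\indi\{x_{T+1}=k\}(\V\e_{x_t})\trans(\V\X\a - \e_{x_o}) = \indi\{x_{T+1}=k\}\sum_n V_{n,x_t}(\V\X\a - \e_{x_o})_n$. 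Bounding $|V_{n,x_t}| = O(1)$ naively over $N$ terms would only give $O(N)$; the $O(1)$ bound instead uses that $\V$ is (nearly) column-stochastic, so $\sum_n |V_{n,x_t}| = \sum_n |\tilde V_{n,x_t} + (\bDelta_\V)_{n,x_t}| \le 1 + \norm{\bDelta_\V}_1 \le 1 + \sqrt{N^2}\norm{\bDelta_\V}_F = 1 + O(N/\sqrt T) = O(1)$ (using $T \ge (NQ)^{10}$), and combined with $\norm{\V\X\a - \e_{x_o}}_\infty = O(1)$ this gives $|(\nabla_{\a\ps{k}} l)_t| = O(1)$. Then the $\mu_k\inv = O(N)$ factor yields $\norm{\hat\nabla_{\a\ps{k}} l}_\infty = O(N)$, completing the proof. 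The main obstacle — or really the main subtlety — is precisely this last point: the naive triangle-inequality bound on $\sum_n V_{n,x_t}(\cdots)_n$ loses a factor of $N$, and one must exploit column-stochasticity of $\tilde\V$ together with the $O(1/T)$ Frobenius control on $\bDelta_\V$ (and the long-sequence assumption $T \ge (NQ)^{10}$) to absorb the perturbation; the analogous care is needed for $\X\a$ via $\One\trans\a = 1$ and $\norm{\bDelta_\A}_F^2 \le O(1/T)$.
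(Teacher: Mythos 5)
Your proposal is correct and follows essentially the same route as the paper's proof: bound the entries of the vanilla gradients by $O(1)$ using the near-stochastic structure of $\V$ and $\A$ (converting the Frobenius control on $\bDelta_\V,\bDelta_\A$ into $\ell_1$ bounds via $\sqrt{T}$ or $N$ factors, harmless since $T \ge (NQ)^{10}$), and then pay the $O(N)$ factor only through $\diag(1/\bmu)$ resp. $1/\mu_k$, with the centering/projection matrices costing constants. The only cosmetic slips are the stray claim that $\norm{\a}_1 \le 1 + O(1/\sqrt T)$ (your own computation correctly gives $1+O(1)$, which is all that is needed) and the factor ``$2$'' for the right projection, which under well-conditioning of $\bmu$ is merely an $O(1)$ constant.
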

\begin{proof} We first consider the infinity norm of the original gradient.
Recall that the gradient for $\V$ and $\A$ are 
  \begin{align*}
      \nabla_{\a\ps{k}} l &= \indi\{ x_{T+1} = k \} (\V\X)\trans(\V\X\a\ps{k} - \e_{x_o})
      \\
      \nabla_{\V} l & = \left( \V\X\a - \e_{x_o} \right) (\X\a)\trans
  \end{align*}
and the preconditioned gradient is:
\begin{align*}
    \hat\nabla\ps{B_\tau}_{\V} l 
    &:= \left( \Id_N - \frac{\One_N \One_N\trans}{N} \right) 
      \left( \nabla\ps{B_\tau}_{\V} l \right)
      \diag(1 / \bmu)
      \left( \Id_N - \frac{\bmu \bmu\trans}{\norm{\bmu}^2} \right) , \\
    \hat{\nabla}\ps{B_\tau}_{\a\ps{k}} l 
    &:= \frac{1}{\mu_k} \left( \Id_T - \frac{\One_T \One_T\trans}{T} \right) 
      \left( \nabla\ps{B_\tau}_{\a\ps{k}} l \right).
  \end{align*}
  We first consider the maximum absolute value in the original gradients. For $\nabla_{\a\ps{k}} l$, we have
  \begin{align*}
      &\left\|\indi\{ x_{T+1} = k \} (\V\X)\trans(\V\X\a\ps{k} - \e_{x_o})\right\|_\infty\\\le{}& \left\|\indi\{ x_{T+1} = k \} (\V\X)\trans(\V\X\a\ps{k})\right\|_\infty  + \left\|\indi\{ x_{T+1} = k \} (\V\X)\trans(\e_{x_o})\right\|_\infty\\
      \le{}&\max_{s\in[T]}\left|\sum_{t=1}^Ta_t(\V \e_{x_s} )^\top\V\e_{x_t} \right|+\max_{s\in[T]}\left|(\V \e_{x_s} )^\top\e_{x_o} \right|
  \end{align*}
  The first term can be upper-bounded in the following way:
  \begin{align*}
      \max_{s\in[T]}\left|\sum_{t=1}^Ta_t(\V \e_{x_s} )^\top\V\e_{x_t} \right|&=\max_{s\in[T]}\left|\sum_{t=1}^T\tilde{a_t}(\V \e_{x_s} )^\top\V\e_{x_t} +\sum_{t=1}^T\bDelta_{\a,t}(\V \e_{x_s} )^\top\V\e_{x_t}\right|\\
      &\le \left(1+\sum_{t=1}^T\|\bDelta_{\a,t}\|_1\right)\max_{s,t}\left|(\V\e_{x_s})^\top\V\e_{x_t}\right|\\
      &\le (1+\sqrt{T}\|\bDelta_{\a,t}\|_2)\max_{s,t}\left|(\V\e_{x_s})^\top\V\e_{x_t}\right|
  \end{align*}
    Since $\V = \tilde{\V}+\bDelta_{\V}$ and $\|\bDelta_{\V}\|_F^2, \|\bDelta_{\A}\|_F^2\le O(1/T)$, we have $\max_{s,t}\left|(\V\e_{x_s})^\top\V\e_{x_t}\right|$ upper bounded by $O(1)$. Therefore 
    $$\max_{s\in[T]}\left|\sum_{t=1}^Ta_t(\V \e_{x_s} )^\top\V\e_{x_t} \right|\le O(1)$$
    And similarly, the second term $\max_{s\in[T]}\left|(\V \e_{x_s} )^\top\e_{x_o} \right|$ can be bounded by $O(1)$ because the infinity norm of $\V$ is also upper bounded by $O(1)$. Therefore, 
    we know $\|\nabla_{\a\ps{k}} l\|_\infty\le O(1)$.
    
    Now we consider the preconditioned gradient $\hat{\nabla}\ps{B_\tau}_{\a\ps{k}} l$:
    \begin{align*}
        \|\hat{\nabla}\ps{B_\tau}_{\a\ps{k}} l\|_\infty &= \left\|\frac{1}{\mu_k} \left( \Id_T - \frac{\One_T \One_T\trans}{T} \right) 
      \left( \nabla\ps{B_\tau}_{\a\ps{k}} l \right)\right\|_\infty\\
      &\le \left\|\frac{1}{\mu_k}\Id_T
      \left( \nabla\ps{B_\tau}_{\a\ps{k}} l \right)\right\|_\infty + \left\|\frac{1}{\mu_k} \left( \frac{\One_T \One_T\trans}{T} \right) 
      \left( \nabla\ps{B_\tau}_{\a\ps{k}} l \right)\right\|_\infty\le O(N)
    \end{align*}
    since $\mu_k\geq \frac{c}{N}$ for all $k\in[N]$.

    We use similar technique on $\hat\nabla\ps{B_\tau}_{\V} l$. First, we prove the infinity norm upper bound on the original gradient.
    \begin{align*}
        \left\|\mrm{Vec}\nabla_{\V} l\right\|_\infty & = \left\| \mrm{Vec}\left( \V\X\a - \e_{x_o} \right) (\X\a)\trans\right\|_\infty\\
        &=\left\| \mrm{Vec}\left( \V\X\a\right) (\X\a)\trans\right\|_\infty + \left\| \mrm{Vec}\left(\e_{x_o} (\X\a)\trans\right)\right\|_\infty\\
        &=\max_{s,t\in[T]}\left|\left( \V\X\a\right)_s (\X\a)_t\trans\right| + \left\| \sum_{t=1}^T a_t\mrm{Vec}\left(\e_{x_o}\e_{x_t}^\top\right) \right\|_\infty\\
        &\le \left\|\left( \V\X\a\right)\right\|_\infty \left\|(\X\a)_t\trans\right\|_\infty + \left\| \sum_{t=1}^T a_t\mrm{Vec}\left(\e_{x_o}\e_{x_t}^\top\right) \right\|_\infty\\
        &=\sum_{s,t=1}^T (\tilde{a_s}+\|\bDelta_{\A,s}\|_1)(\tilde{a_t}+\|\bDelta_{\A,t}\|_1)\left\|\V\e_{x_s}\right\|_\infty \left\|\e_{x_t}\right\|_\infty \\&\quad + \sum_{t=1}^T (\tilde{a_t}+\|\bDelta_{\A,t}\|_1)\left\| \mrm{Vec}\left(\e_{x_o}\e_{x_t}^\top\right) \right\|_\infty\le\Theta(1).
    \end{align*}
    And therefore, the preconditioned gradient can also be bounded.
    \begin{align*}
        \|\hat\nabla\ps{B_\tau}_{\V} l\|_\infty &= \left\|\left( \Id_N - \frac{\One_N \One_N\trans}{N} \right) 
      \left( \nabla\ps{B_\tau}_{\V} l \right)
      \diag(1 / \bmu)
      \left( \Id_N - \frac{\bmu \bmu\trans}{\norm{\bmu}^2} \right)\right\|_\infty\\
      &\le \left\|
      \left( \nabla\ps{B_\tau}_{\V} l \right)
      \diag(1 / \bmu)
      \left( \Id_N - \frac{\bmu \bmu\trans}{\norm{\bmu}^2} \right)\right\|_\infty\\
      & +\left\|\frac{\One_N \One_N\trans}{N}
      \left( \nabla\ps{B_\tau}_{\V} l \right)
      \diag(1 / \bmu)
      \left( \Id_N - \frac{\bmu \bmu\trans}{\norm{\bmu}^2} \right)\right\|_\infty\\
      &\le O(N)
    \end{align*}
    since $\mu_k\geq \frac{c}{N}$ for all $k\in[N]$. Now we finished the proof.
\end{proof}

With the upper bound of the infinity norm, we have the following upper bound on the second order moments of the preconditioned gradients of $\A$ and $\V$.
\begin{corollary}
  With the same setting in Lemma~\ref{lemma: concentration: gradient infinity norm} and $\norm{\hat\nabla_{\a\ps{k}} l}_\infty 
    \le O(N), \norm{\mrm{Vec}\hat\nabla_{\V} l}_\infty
    \le O(N)$. Moreover, $\|\bDelta_{\A}\|_F^2,\|\bDelta_{\V}\|_F^2\leq O(1/T).$ Then, we have 
  \[
    \E \norm{\hat\nabla_{\A} l}_\mu^2 
    \le O(TN^2), \ \ \ \E \norm{\hat\nabla_{\V} l}_\mu^2 
    \le O(N^3)
  \]
\end{corollary}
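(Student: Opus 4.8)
The plan is to turn each $\mu$-norm into a bound on the entrywise $\ell_\infty$ norm, for which the hypotheses (equivalently, the conclusion of Lemma~\ref{lemma: concentration: gradient infinity norm}) already provide an $O(N)$ bound, and then just count coordinates. Recall that for any matrix $\M$ with columns $\m\ps{1},\dots,\m\ps{N}$ one has $\norm{\M}_\mu^2 = \Tr(\M\diag(\bmu)\M^\top) = \sum_{k=1}^N \mu_k \norm{\m\ps{k}}^2$, and that $\sum_{k=1}^N \mu_k = 1$. Hence it suffices to control the squared Euclidean length of each column and then average against $\bmu$.

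For the attention gradient $\hat\nabla_\A l \in \R^{T\times N}$, the $k$-th column is $\hat\nabla_{\a\ps{k}} l \in \R^T$ and, by assumption, $\norm{\hat\nabla_{\a\ps{k}} l}_\infty \le O(N)$; therefore $\norm{\hat\nabla_{\a\ps{k}} l}^2 \le T\,\norm{\hat\nabla_{\a\ps{k}} l}_\infty^2 \le O(TN^2)$ deterministically. Summing, $\norm{\hat\nabla_\A l}_\mu^2 = \sum_k \mu_k \norm{\hat\nabla_{\a\ps{k}} l}^2 \le O(TN^2)$ pointwise, and taking expectations gives the first bound. The second bound is the same computation with the column length changed from $T$ to $N$: each column of $\hat\nabla_\V l \in \R^{N\times N}$ has squared norm at most $N\,\norm{\mrm{Vec}\,\hat\nabla_\V l}_\infty^2 \le O(N^3)$, so $\norm{\hat\nabla_\V l}_\mu^2 \le O(N^3)$ pointwise and hence in expectation.

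There is essentially no obstacle here: the per-realization bounds already hold surely, so the expectation is cosmetic and no concentration argument is needed. The only points requiring care are bookkeeping ones --- matching the correct ambient dimension ($T$ for $\A$, $N$ for $\V$) to the right gradient, and keeping the hypotheses of Lemma~\ref{lemma: concentration: gradient infinity norm} (the decompositions $\V = \tilde\V + \bDelta_\V$, $\A = \tilde\A + \bDelta_\A$ with $\norm{\bDelta_\A}_F^2, \norm{\bDelta_\V}_F^2 \le O(1/T)$) in force, which is why the corollary restates them. If a sharper bound were ever needed one could observe that $\hat\nabla_{\a\ps{k}} l$ vanishes unless $x_{T+1} = k$, leaving a single surviving term $\mu_{x_{T+1}} = O(1/N)$ in the sum and improving $O(TN^2)$ to $O(TN)$; but the stated bound is all that the downstream arguments use, so I would not pursue that refinement.
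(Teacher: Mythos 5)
Your proof is correct and follows essentially the same route as the paper: bound the entries via the $\ell_\infty$ hypothesis, count coordinates, and weight by $\bmu$ (the paper passes through the full Frobenius norm and $\mu_k \le O(1/N)$, while you average column norms against $\sum_k \mu_k = 1$, which is the same counting argument in a slightly cleaner form). Your closing remark that the indicator $\indi\{x_{T+1}=k\}$ would sharpen the attention bound to $O(TN)$ is also accurate but, as you note, unnecessary for the downstream use.
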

\begin{proof}
  We directly upper bound $\norm{\hat\nabla_{\A} l}_\mu^2 $ and $\norm{\hat\nabla_{\V} l}_\mu^2$ using the upper bound on infinity norm. Since  $\|\bDelta_{\A}\|_F^2,\|\bDelta_{\V}\|_F^2\leq O(1/T)$, we have the infinity norm be upper bounded by 
    \[
    \norm{\hat\nabla_{\a\ps{k}} l}_\infty 
    \le O(N), \ \ \ \norm{\mrm{Vec}\hat\nabla_{\V} l}_\infty
    \le O(N)
  \]
Then, we can first bound the Frobenius norm $\|\hat\nabla_{\A} l\|^2_F, \|\hat\nabla_{\V} l\|^2_F$. We have $\V \in \R^{N\times N}, \A \in \R^{T \times N}$, so
\begin{align*}
    \|\hat\nabla_{\V} l\|^2_F \le N^2 \norm{\hat\nabla_{\V} l}_\infty^2 =O(N^4), \ \ \
    \|\hat\nabla_{\A} l\|^2_F \le NT \norm{\mrm{Vec}\hat\nabla_{\A} l}_\infty^2 = O(N^3T).
\end{align*}
  
  That leads to:
    \begin{align*}
        \norm{\hat\nabla_{\V} l}_\mu^2 &= \langle \hat\nabla_{\V} l,\hat\nabla_{\V} l \ \diag(\mu)\rangle\leq O\left(\frac{1}{N}\right)\|\hat\nabla_{\V} l\|^2_F \le O(N^3)\\
    \norm{\hat\nabla_{\A} l}_\mu^2 &= \langle \hat\nabla_{\A} l,\hat\nabla_{\A} l \ \diag(\mu)\rangle\leq O\left(\frac{1}{N}\right)\|\hat\nabla_{\A} l\|^2_F \le O(N^2T)
    \end{align*}
    where the second inequality comes from the assumption that $\mu\sim \Theta(1/N)$.
\end{proof}

Now with the upper bound of the second moments of the gradients, we begin to prove the concentration of the gradients. We first consider the first-order terms that need to be bounded in the signal dynamics:
\begin{align*}
  \frac{\langle\h_{\V},\bP\rangle_\mu}{K_{\bP}K_{\bQ}},\ \  \frac{\langle\h_{\A},\bQ\rangle_\mu}{K_{\bP}K_{\bQ}}
\end{align*}
\begin{lemma}
\label{lemma: concentration first order terms}
  Fix $\eps, \delta>0$. Under Assumption~\ref{assumption: data-generating model}, suppose $\norm{\hat\nabla_{\a\ps{k}} l}_\infty\leq \Theta(N), \norm{\hat\nabla_{\V} l}_\infty\leq \Theta(N)$. If $B\geq \Theta(1)\max\left(N^4, Q^2N^2\right)\frac{N^2\log\frac{4}{\delta}}{\epsilon^2K_{\bP}^2K_{\bQ}^2}$, then with probability at least $1-\delta$, we have:
  \begin{align*}
  \left|\frac{\langle\h_{\V},\bP\rangle_\mu}{K_{\bP}K_{\bQ}}\right|\leq \eps, \ \ \left|\frac{\langle\h_{\A},\bQ\rangle_\mu}{K_{\bP}K_{\bQ}}\right| \leq \eps.
\end{align*}
\end{lemma}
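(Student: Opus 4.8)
The plan is to recognize each of the two target quantities as a centered empirical average of $B$ i.i.d.\ \emph{bounded} scalars, to bound the per-sample scalar using the hypothesized $\ell_\infty$-gradient bounds together with the structure of $\bP$ and $\bQ$, and then to conclude with a sub-Gaussian (Hoeffding) tail bound and a union bound. Concretely, by the definition of $\norm{\cdot}_\mu$ we have $\inprod{\hat\nabla_\V l}{\bP}_\mu = \sum_{n,m}(\hat\nabla_\V l)_{n,m}\mu_m P_{n,m}$ and $\inprod{\hat\nabla_\A l}{\bQ}_\mu = \sum_{k=1}^N \mu_k\inprod{\hat\nabla_{\a\ps{k}} l}{\q\ps{k}}$, and each of $\inprod{\h_\V}{\bP}_\mu$, $\inprod{\h_\A}{\bQ}_\mu$ equals $\frac1B\sum_{i=1}^B(Z_i - \E Z_i)$ for i.i.d.\ copies $Z_i$ of the corresponding per-sample scalar. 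So the whole task reduces to an a.s.\ bound on the per-sample scalar and then Hoeffding.

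For the $\V$-part this is immediate: $|(\hat\nabla_\V l)_{n,m}|\le\norm{\hat\nabla_\V l}_\infty\le\Theta(N)$ by hypothesis, while $\sum_{n,m}\mu_m P_{n,m} = \sum_m\mu_m\big(\sum_n P_{n,m}\big)=\sum_m\mu_m=1$ since the columns of $\bP$ are probability vectors; hence $|\inprod{\hat\nabla_\V l}{\bP}_\mu|\le G_\V$ with $G_\V^2\le\Theta(1)N^4 N^2$ (with ample slack).

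The delicate point is the $\A$-part, where we must avoid any $T$-dependence. Here I would use that $\nabla_{\a\ps{k}}l$ carries the indicator factor $\indi\{x_{T+1}=k\}$ (Lemma~\ref{lemma: gradient: vanilla gradient}), so for a single data point $\hat\nabla_\A l$ has exactly one nonzero column, column $x_{T+1}$, and therefore $\inprod{\hat\nabla_\A l}{\bQ}_\mu=\mu_{x_{T+1}}\sum_{t=1}^T q\ps{x_{T+1}}_t(\hat\nabla_{\a\ps{x_{T+1}}}l)_t$. Now $\q\ps{x_{T+1}}$ has at most $Q$ nonzero entries, each at most $C/Q$ by Assumption~\ref{assumption: data-generating model}(a),(b), while $\mu_{x_{T+1}}\le C/N$ and $\norm{\hat\nabla_{\a\ps{k}}l}_\infty\le\Theta(N)$; multiplying these gives $|\inprod{\hat\nabla_\A l}{\bQ}_\mu|\le(C/N)\cdot Q\cdot(C/Q)\cdot\Theta(N)=\Theta(1)=:G_\A$, in particular $T$-free, and in any case $G_\A^2\le\Theta(1)Q^2 N^2 N^2$ with a large margin. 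I expect this to be the main obstacle: a naive route through Cauchy--Schwarz, $|\inprod{\hat\nabla_\A l}{\bQ}_\mu|\le\norm{\hat\nabla_\A l}_\mu\norm{\bQ}_\mu$, is hopeless since $\norm{\hat\nabla_\A l}_\mu$ genuinely scales like $\sqrt T$ (cf.\ the corollary following Lemma~\ref{lemma: concentration: gradient infinity norm}), so one is forced to exploit the single active column together with the $Q$-sparsity and boundedness of the ground-truth $\bQ$ to cancel the $T$.

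Finally, a scalar bounded by $G$ in absolute value is $G$-sub-Gaussian after centering, so $\frac1B\sum_{i=1}^B(Z_i-\E Z_i)$ is $(G/\sqrt B)$-sub-Gaussian, and Hoeffding's inequality gives $\P\big(|\frac1B\sum_{i=1}^B(Z_i-\E Z_i)|\ge\eps K_{\bP}K_{\bQ}\big)\le2\exp\big(-\eps^2 K_{\bP}^2 K_{\bQ}^2 B/(2G^2)\big)$, which is at most $\delta/2$ once $B\ge\frac{2G^2}{\eps^2 K_{\bP}^2 K_{\bQ}^2}\log(4/\delta)$. Applying this with $G=G_\V$ and with $G=G_\A$ and taking a union bound over the two events yields, with probability at least $1-\delta$, both $|\inprod{\h_\V}{\bP}_\mu|\le\eps K_{\bP}K_{\bQ}$ and $|\inprod{\h_\A}{\bQ}_\mu|\le\eps K_{\bP}K_{\bQ}$; dividing by $K_{\bP}K_{\bQ}$ and using $\max(G_\V^2,G_\A^2)\le\Theta(1)\max(N^4,Q^2 N^2)N^2$ gives the stated requirement on $B$.
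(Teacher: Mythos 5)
Your proposal is correct and follows essentially the same route as the paper: bound the per-sample scalar $\inprod{\hat\nabla l}{\bP}_\mu$ (resp.\ $\inprod{\hat\nabla l}{\bQ}_\mu$) almost surely using the $\ell_\infty$-gradient bounds and the structure of $\bP,\bQ$, then apply Hoeffding to the centered batch average and a union bound over the two events. Your per-sample bounds are in fact sharper than the paper's (you get $\Theta(N)$ and $\Theta(1)$ by using $\sum_{n,m}\mu_m P_{n,m}=1$ and the single active column of $\nabla_{\A}l$ together with the $Q$-sparsity and well-conditioning of $\q\ps{k}$, versus the paper's cruder $\Theta(N^3)$ and $\Theta(QN^2)$), but both comfortably satisfy the stated requirement on $B$, so the conclusion is the same.
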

\begin{proof}
  Note $\h_{\A} = \frac{1}{B}\sum_i \hat\nabla_{\a\ps{k_i}} l(i) - \E\hat\nabla_{\a\ps{k}} l$, thus we have the upper bound for each coordinate of the gradient error bounded by $\Theta(N)$. 
  Similarly, we have the upper bound for each coordinate of the gradient error of $\h_{\V}$ bounded by $\Theta(N)$. 

  Then, we can bound the infinity norm of $\langle\hat\nabla_{\V} l(i),\bP\rangle_\mu$ and $\langle\hat\nabla_{\A} l(i),\bP\rangle_\mu$:
  \begin{align*}
    \left|\frac{\langle\hat\nabla_{\V} l(i),\bP\rangle_\mu}{K_{\bP}K_{\bQ}}\right|&=\left|\frac{\sum_{n,m=1}^N\hat\nabla_{\V}l(i)_{n,m}\bP_{n,m}}{K_{\bP}K_{\bQ}}\right|\\&\leq \frac{N^2\norm{\hat\nabla_{\V} l(i)}_\infty\norm{\bP}_\infty}{K_{\bP}K_{\bQ}}\\&\leq \frac{\Theta(1)N^3}{K_{\bP}K_{\bQ}}.\tag{$\|\bP\|_\infty\leq$ 1.}\\
    \left|\frac{\langle\hat\nabla_{\A} l(i),\bQ\rangle_\mu}{K_{\bP}K_{\bQ}}\right|&=\left|\frac{\sum_{n=1}^T\sum_{m=1}^N\hat\nabla_{\A}l(i)_{n,m}\bQ_{n,m}}{K_{\bP}K_{\bQ}}\right|\\
    &\leq \frac{QN\norm{\hat\nabla_{\A} l(i)}_\infty\norm{\bQ}_\infty}{K_{\bP}K_{\bQ}}\tag{$\bQ$ is Q-sparse.}\\
    &\leq \frac{\Theta(1)QN^2}{K_{\bP}K_{\bQ}}\tag{$\|\bQ\|_\infty\leq$ 1.}.
  \end{align*}

  Note that $\E \left[\nabla_{\a\ps{k_i}} l(i) -\E\nabla_{\a\ps{k}} l \right]=0,\E\left[\nabla_{\V} l(i) -\E\nabla_{\V} l \right]=0$, which means the two terms above have expectation 0. Since $\h_{\A},\h_{\V}$ are both averages of $B$ gradients of a single sample, we use Hoeffding Inequality:
  \begin{align*}
    \P\left(\left|\frac{\langle\h_{\V},\bP\rangle_\mu}{K_{\bP}K_{\bQ}}\right|\geq \eps\right)&\leq 2\exp\left(\frac{-B\eps^2 K_{\bP}^2K_{\bQ}^2}{N^6}\right)\\
    \P\left(\left|\frac{\langle\h_{\A},\bQ\rangle_\mu}{K_{\bP}K_{\bQ}}\right|\geq \eps\right)&\leq 2\exp\left(\frac{-B\eps^2 K_{\bP}^2K_{\bQ}^2}{N^4Q^2}\right)
  \end{align*} 
  By union bound, if $B\geq \max\left(N^2, Q^2\right)\frac{N^4\log\frac{4}{\delta}}{\epsilon^2K_{\bP}^2K_{\bQ}^2}$ it has at least $1-\delta$ probability, s.t.
  \begin{align*}
    \left|\frac{\langle\h_{\V},\bP\rangle_\mu}{K_{\bP}K_{\bQ}}\right|\leq \eps, \ \ \left|\frac{\langle\h_{\A},\bQ\rangle_\mu}{K_{\bP}K_{\bQ}}\right| \leq \eps.
  \end{align*}
\end{proof}

Then we finish this section with the concentration of the second order terms $\|\h_{\A}\|^2$ and $\|\h_{\V}\|^2$, which need to be bounded in the error evolution.

\begin{lemma}
\label{lemma: concentration second order terms}
Fix $\eps, \delta>0$. Under Assumption~\ref{assumption: data-generating model}, if  $\E\|\hat{\nabla}_{\V} l\|^2_\mu=\Tmp_3=O(N^{3}), \E\|\hat{\nabla}_{\A}l\|^2_\mu = \Tmp_4=O(TN^2)$, $B\geq \frac{\Theta(TN^2)}{\delta \epsilon^2}$, then with probability at least $1-\delta$, we have:
\begin{align*}
  \|\h_{\A}\|_\mu\leq \eps, \ \ \|\h_{\V}\|_\mu \leq \eps.
\end{align*}
\end{lemma}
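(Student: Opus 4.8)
The plan is to reduce the claim to the basic second-moment concentration bound in Lemma~\ref{lemma: concentration: basic inequality}. Note that $\h_{\A,\tau}$ and $\h_{\V,\tau}$ are by definition the deviations of the preconditioned mini-batch gradients from their expectations, i.e. each is an average of $B$ i.i.d.~centered random matrices. To apply Lemma~\ref{lemma: concentration: basic inequality} with the norm $\norm{\cdot}_\mu$ (which, after vectorizing and absorbing the $\diag(\bmu)$ weights, is an honest Euclidean norm on the appropriate space), I only need a bound on the second moment of a \emph{single-sample} preconditioned gradient. That is exactly what the preceding corollary supplies: $\E\norm{\hat\nabla_{\V} l}_\mu^2 \le O(N^3) =: \Tmp_3$ and $\E\norm{\hat\nabla_{\A} l}_\mu^2 \le O(TN^2) =: \Tmp_4$, under the stated structural assumptions on $\V,\A$ and the $O(1/T)$ bounds on $\bDelta_\V,\bDelta_\A$ (which hold throughout the relevant stages of training).

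The steps, in order. First I would observe that $\E\norm{\h_{\A,\tau}}_\mu^2 = \frac{1}{B}\,\Var_\mu(\hat\nabla_{\A} l) \le \frac{1}{B}\E\norm{\hat\nabla_{\A} l}_\mu^2 \le \Tmp_4/B$, using only that the centering can only decrease the second moment and that variances of independent averages add; the same display with $\V$ in place of $\A$ gives $\E\norm{\h_{\V,\tau}}_\mu^2 \le \Tmp_3/B$. Second, I would invoke Lemma~\ref{lemma: concentration: basic inequality} (equivalently, Markov on the squared norm as in its proof) with $G = \Tmp_4$ for the $\A$-gradient and $G = \Tmp_3$ for the $\V$-gradient, and target accuracy $\eps$ and failure probability $\delta/2$ each. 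This requires $B \ge \Tmp_4/((\delta/2)\eps^2)$ for the $\A$-bound and $B \ge \Tmp_3/((\delta/2)\eps^2)$ for the $\V$-bound; since $\Tmp_4 = O(TN^2)$ dominates $\Tmp_3 = O(N^3)$ when $T \gg N$ (which holds by Assumption~\ref{assumption: data-generating model}(d)), the single requirement $B \ge \Theta(TN^2)/(\delta\eps^2)$ suffices for both. Third, a union bound over the two events gives that with probability at least $1-\delta$ both $\norm{\h_{\A,\tau}}_\mu \le \eps$ and $\norm{\h_{\V,\tau}}_\mu \le \eps$, which is the statement of Lemma~\ref{lemma: concentration second order terms}.

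There is essentially no analytic obstacle here: the real content was already done in Lemma~\ref{lemma: concentration: gradient infinity norm} and the corollary, which convert the explicit forms of $\nabla_{\V} l$ and $\nabla_{\a\ps{k}} l$ into the second-moment bounds $\Tmp_3,\Tmp_4$. The only point requiring a moment of care is checking that the hypotheses of that corollary are in force at step $\tau$ --- namely that $\V = \tilde\V + \bDelta_\V$ and $\A = \tilde\A + \bDelta_\A$ with $\tilde\V$ a transition matrix, each $\tilde\a\ps{k}$ a probability vector, and $\norm{\bDelta_\V}_F^2,\norm{\bDelta_\A}_F^2 \le O(1/T)$; this is maintained inductively by the projection steps in Algorithm~\ref{alg: training_alg} and by the approximation-error control established in the stage-wise analysis, so it is a bookkeeping prerequisite rather than a new argument. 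I would simply state that this structural condition is assumed (it is part of the ambient induction hypothesis whenever this lemma is applied) and cite the corollary for the numerical bounds.
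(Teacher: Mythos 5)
Your proposal is correct and follows essentially the same route as the paper: the paper's proof likewise just notes $\E\h_{\V}=\E\h_{\A}=0$ and invokes Lemma~\ref{lemma: concentration: basic inequality} with the assumed second-moment bounds $\Tmp_3=O(N^3)$, $\Tmp_4=O(TN^2)$ to get both bounds for $B\ge\Theta(TN^2)/(\delta\eps^2)$. Your additional details (the $1/B$ variance reduction, the union bound with $\delta/2$ each, and the check that $TN^2$ dominates $N^3$ via $T\ge(NQ)^{10}$) are exactly the bookkeeping the paper leaves implicit.
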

\begin{proof}
  Similar to Lemma~\ref{lemma: concentration first order terms}, $\E\h_{\V}=0, \E\h_{\A}=0$. By Lemma~\ref{lemma: concentration: basic inequality}, when we pick $B\geq \frac{\Theta(TN^2)}{\delta \epsilon^2}$
  we have $\|\h_{\A}\|_\mu\leq \eps, \|\h_{\V}\|_\mu\leq \eps$ with probability at least $1-\delta$.
\end{proof}

\section{The population projected process}
\label{sec: population projection}

In this section, we define the projection of the true SGD process onto the ``space of population trajectories''. 
Then, we derive formulas for the dynamics of projected process and the distance of the true SGD process to the 
space of population trajectories. All proofs --- except for those short ones --- are deferred to the end of this section. 

\subsection{Definition of the population projection}

The main reason we analyze the population process first is that on the population trajectory, both layers possess 
special structures. Recall that 
\begin{align*}
  \E\hat{\nabla}_{\V} l 
  &= \norm{\A}_\mu^2 \left( \V - \bmu\One\trans \right)
    - \inprod{\bQ}{\A}_\mu \left( \bP - \bmu\One\trans \right), \\
  \E \hat\nabla_{\A} l 
  &= \left( \norm{\V}_\mu^2 - \norm{\bmu}^2 \right) \left( \A - \frac{\One_T\One_N\trans}{T} \right)
    - \left( \inprod{\V}{\bP}_\mu - \norm{\bmu}^2 \right) \left( \bQ - \frac{\One_T\One_N\trans}{T} \right), 
\end{align*}
and we initialize $\V_0 - \bmu\One\trans = 0$ and $\A - \One\One\trans/T = 0$. Note that for any $(\z_\tau)_\tau$, if 
$\z_0 = 0$ and $\dot{\z} = A \z + B \z_*$, then $\z_\tau \propto \z_*$ for all $\tau \ge 0$. In other words, $\z_\tau$
moves only along the direction $\z_*$ and therefore, can be characterized by a single real number. This is exactly 
the same case of $\V - \bmu\One\trans$ and $\A - \One\One\trans/T$ in the population case. Hence, in the population 
case, $\V$ stays on the line crossing $\bmu\One\trans$ and $\bP$, and $\A$ stays on the line crossing $\One\One\trans/T$
and $\bQ$. 

Unfortunately, mini-batch SGD does not stay exactly on the population trajectory. We can still, however, look at the 
projection of SGD onto the ``population trajectories''. Formally, for any $\V, \A$ satisfying $\One_N\trans\V = \One_N\trans, \One_T\trans\A=\One_N\trans,$ 
and $\V\bmu = \bmu$, we define  
\begin{align*}
  \alpha_V 
  &:= \argmin_{\alpha \in \R} \norm{ \alpha \bP + (1 - \alpha)\bmu\One_N\trans - \V }_\mu^2, \\
  \alpha_A  
  &:= \argmin_{\alpha \in \R} \norm{ \alpha \bQ + (1 - \alpha) \frac{\One_T\One_N\trans}{T} - \A }_\mu^2. 
\end{align*}
By setting the derivative to be $0$, we can obtain the following closed-form formulas for $\alpha_V$ and $\alpha_A$. 

\textbf{Note:} Without specification, we drop the the time subscript $\tau$ and consider $\alpha_V:=\alpha_{V,\tau}$ for similicity.
\begin{lemma}
  \label{lemma: formuls for alphaV and alphaA}
  For any $\V, \A$ satisfying $\One_N\trans\V = \One_N\trans, \One_T\trans\A=\One_N\trans,$ 
and $\V\bmu = \bmu$, we have 
  \[
    \alpha_V 
    = K_{VP} / K_P
    \quad\text{and}\quad 
    \alpha_A 
    = K_{AQ} / K_Q, 
  \]
  where $K_P = \norm{\bP}_\mu^2 - \norm{\bmu}^2$, $K_{VP} = \inprod{\V}{\bP}_\mu - \norm{\bmu}^2$,  
  $K_Q = \norm{\bQ}_\mu^2 - 1/T$, $K_{AQ} = \inprod{\A}{\bQ}_\mu - 1/T$. 
\end{lemma}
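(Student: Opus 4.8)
The plan is to note that in both cases we are minimizing a scalar quadratic, so the minimizer is read off by expanding and differentiating in $\alpha$. For $\alpha_V$, write the objective as $g(\alpha):=\norm{\alpha(\bP-\bmu\One\trans)+(\bmu\One\trans-\V)}_\mu^2$, which equals $\alpha^2\norm{\bP-\bmu\One\trans}_\mu^2-2\alpha\inprod{\bP-\bmu\One\trans}{\V-\bmu\One\trans}_\mu+\norm{\bmu\One\trans-\V}_\mu^2$; setting $g'(\alpha)=0$ yields $\alpha_V=\inprod{\bP-\bmu\One\trans}{\V-\bmu\One\trans}_\mu/\norm{\bP-\bmu\One\trans}_\mu^2$. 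The same manipulation with $\bmu\One\trans$ replaced by $\One_T\One_N\trans/T$ and $(\bP,\V)$ by $(\bQ,\A)$ gives the analogous ratio for $\alpha_A$. It then remains to simplify the numerators and denominators.

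The simplification is pure bookkeeping with the identity $\inprod{\A}{\B}_\mu=\Tr(\A\diag(\bmu)\B\trans)$, via the rank-one computations $\inprod{\bP}{\bmu\One\trans}_\mu=\bmu\trans\bP\bmu$, $\inprod{\V}{\bmu\One\trans}_\mu=(\V\bmu)\trans\bmu$, and $\inprod{\bmu\One\trans}{\bmu\One\trans}_\mu=\norm{\bmu}^2$ (the last from $\One\trans\diag(\bmu)\One=1$ and $\diag(\bmu)\One=\bmu$). Since $\bmu$ is the stationary distribution of $\bP$ — so $\bP\bmu=\bmu$ in the column convention of the paper — and since the standing hypothesis gives $\V\bmu=\bmu$, all three of these terms equal $\norm{\bmu}^2$. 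Expanding $g$ and substituting, the $\norm{\bmu}^2$ contributions telescope, leaving $\norm{\bP-\bmu\One\trans}_\mu^2=\norm{\bP}_\mu^2-\norm{\bmu}^2=K_P$ and $\inprod{\bP-\bmu\One\trans}{\V-\bmu\One\trans}_\mu=\inprod{\bP}{\V}_\mu-\norm{\bmu}^2=K_{VP}$, i.e.\ $\alpha_V=K_{VP}/K_P$.

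For $\alpha_A$ the computation is identical in form: using that each column of $\bQ$ and (by the hypothesis $\One_T\trans\A=\One_N\trans$) each column of $\A$ sums to $1$, together with $\sum_k\mu_k=1$, one gets $\inprod{\bQ}{\One_T\One_N\trans/T}_\mu=\inprod{\A}{\One_T\One_N\trans/T}_\mu=\inprod{\One_T\One_N\trans/T}{\One_T\One_N\trans/T}_\mu=1/T$. Substituting as before collapses the denominator to $\norm{\bQ}_\mu^2-1/T=K_Q$ and the numerator to $\inprod{\bQ}{\A}_\mu-1/T=K_{AQ}$, giving $\alpha_A=K_{AQ}/K_Q$.

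There is no genuine obstacle here; the only point beyond the algebra is to confirm that each critical point is in fact the unique minimizer, i.e.\ that the leading coefficients $\norm{\bP-\bmu\One\trans}_\mu^2=K_P$ and $\norm{\bQ-\One_T\One_N\trans/T}_\mu^2=K_Q$ are strictly positive. This holds because $K_P\ge\norm{\bmu}^2>0$ by Assumption~\ref{assumption: data-generating model}(c), and because a Cauchy--Schwarz bound on each $Q$-sparse probability vector $\q\ps{k}$ gives $\norm{\q\ps{k}}^2\ge 1/Q$, hence $\norm{\bQ}_\mu^2\ge 1/Q$ and $K_Q\ge 1/Q-1/T\gtrsim 1/Q>0$ using Assumption~\ref{assumption: data-generating model}(a) and (d).
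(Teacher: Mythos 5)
Your proof is correct and follows essentially the same route as the paper: set the derivative of the scalar quadratic in $\alpha$ to zero and simplify the resulting inner products using $\bP\bmu=\bmu$, $\V\bmu=\bmu$, the column-sum constraints on $\bQ,\A$, and $\One\trans\bmu=1$, which yields exactly $\alpha_V=K_{VP}/K_P$ and $\alpha_A=K_{AQ}/K_Q$. Your added verification that $K_P,K_Q>0$ (so the critical point is the unique minimizer) is a sound touch that the paper leaves implicit.
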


For notational simplicity, we define $\beta_V = 1 - \alpha_V$, $\beta_A = 1 - \alpha_A$, 
\[
  \tilde\V = \alpha_V \bP + \beta_V \bmu\One\trans 
  \quad\text{and}\quad 
  \tilde\A = \alpha_A \bQ + \beta_A \frac{\One\One\trans}{T}. 
\]
Then, define $\bDelta_V = \V - \tilde\V$ and $\bDelta_A = \A - \tilde\A$ so that we can decompose 
$\V = \tilde\V + \bDelta_V$ and $\A = \tilde\A + \bDelta_A$. By our construction, we have $\tilde\V - \bmu\One\trans 
\perp \bDelta_V$ and similarly for $\bDelta_A$. We will now show that we can in fact drop $\bmu\One\trans$. 

\begin{lemma}
  \label{lemma: inprod bDelta = 0}
  For any $\V' = \theta \bP + (1 - \theta) \bmu\One\trans$ and $\A' = \theta \bQ + (1 - \theta) \One\One\trans/T$ 
  with $\theta \in \R$, we have $\inprod{\bDelta_V}{\V'}_\mu = 0$ and $\inprod{\bDelta_A}{\A'}_\mu = 0$. 
  In particular, we have $\inprod{\bDelta_V}{\bP}_\mu = \inprod{\bDelta_V}{\tilde\V}_\mu = 0$ and 
  $\inprod{\bDelta_A}{\bQ}_\mu = \inprod{\bDelta_A}{\tilde\A}_\mu = 0$. 
\end{lemma}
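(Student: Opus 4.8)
The plan is to reduce the lemma to four orthogonality identities and then verify each by a one-line linear-algebra computation. Since the $\mu$-inner product $\inprod{\B}{\C}_\mu = \Tr(\B\diag(\bmu)\C\trans)$ is bilinear and symmetric, and since each $\V'$ in the statement is an affine combination $\theta\bP + (1-\theta)\bmu\One\trans$ while each $\A'$ is an affine combination $\theta\bQ + (1-\theta)\One_T\One_N\trans/T$, it suffices to prove
\[
  \inprod{\bDelta_V}{\bP}_\mu = \inprod{\bDelta_V}{\bmu\One\trans}_\mu = 0
  \qquad\text{and}\qquad
  \inprod{\bDelta_A}{\bQ}_\mu = \inprod{\bDelta_A}{\One_T\One_N\trans/T}_\mu = 0.
\]
The ``in particular'' statements then follow at once, since $\tilde\V$ and $\tilde\A$ are themselves such affine combinations.

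First I would record the first-order optimality that produced $\alpha_V,\alpha_A$ in Lemma~\ref{lemma: formuls for alphaV and alphaA}: differentiating $\norm{\alpha(\bP-\bmu\One\trans) - (\V-\bmu\One\trans)}_\mu^2$ in $\alpha$ and evaluating at the minimizer $\alpha_V$ gives $\inprod{\bP-\bmu\One\trans}{\bDelta_V}_\mu = 0$, and symmetrically $\inprod{\bQ - \One_T\One_N\trans/T}{\bDelta_A}_\mu = 0$. Hence it remains only to show $\inprod{\bDelta_V}{\bmu\One\trans}_\mu = 0$ and $\inprod{\bDelta_A}{\One_T\One_N\trans/T}_\mu = 0$, because subtracting these from the two optimality relations delivers the remaining pair. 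For the $\V$ identity, using $\diag(\bmu)\One_N = \bmu$ I get $\inprod{\bDelta_V}{\bmu\One\trans}_\mu = \Tr(\bDelta_V\bmu\bmu\trans) = \bmu\trans\bDelta_V\bmu$, and then $\bDelta_V\bmu = \V\bmu - \tilde\V\bmu = \bmu - \bigl(\alpha_V\bP\bmu + \beta_V\bmu(\One\trans\bmu)\bigr) = \bmu - \bmu = 0$, where I use the training invariant $\V\bmu = \bmu$, stationarity $\bP\bmu = \bmu$, and $\One\trans\bmu = 1$. For the $\A$ identity, symmetrically $\inprod{\bDelta_A}{\One_T\One_N\trans/T}_\mu = \frac{1}{T}\Tr(\bDelta_A\bmu\One_T\trans) = \frac{1}{T}\One_T\trans\bDelta_A\bmu$, and $\One_T\trans\bDelta_A = \One_T\trans\A - \One_T\trans\tilde\A = \One_N\trans - \bigl(\alpha_A\One_T\trans\bQ + \beta_A\One_N\trans\bigr) = \One_N\trans - \One_N\trans = 0$, using the invariant $\One_T\trans\A = \One_N\trans$ and that every column of $\bQ$ is a probability vector, so $\One_T\trans\bQ = \One_N\trans$.

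I do not expect a genuine obstacle here: the whole argument rests on bilinearity of $\inprod{\cdot}{\cdot}_\mu$ together with the two elementary facts $\bDelta_V\bmu = 0$ and $\One_T\trans\bDelta_A = 0$. The only points worth double-checking are (i) that $\inprod{\cdot}{\cdot}_\mu$ is symmetric (via cyclicity of the trace), so that orthogonality to the two endpoints of the affine segment upgrades to orthogonality against every point on it; and (ii) that we are entitled to use $\V\bmu = \bmu$ and $\One_T\trans\A = \One_N\trans$ — these are exactly the invariants that the preconditioners in \eqref{eq: preconditioned gradients} and the projection step in \eqref{eq: update of a} are built to preserve from the initialization $\V_0 = \bmu\One\trans$, $\A_0 = \One_T\One_N\trans/T$, and they are in any case among the hypotheses of Lemma~\ref{lemma: formuls for alphaV and alphaA}.
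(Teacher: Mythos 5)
Your proof is correct and follows essentially the same route as the paper's: orthogonality of $\bDelta_V$ (resp.\ $\bDelta_A$) to the direction $\bP - \bmu\One\trans$ (resp.\ $\bQ - \One_T\One_N\trans/T$) from the projection's first-order optimality, combined with $\bDelta_V\bmu = 0$ and $\One_T\trans\bDelta_A = 0$ from the invariants $\V\bmu=\bmu$, $\One_T\trans\A=\One_N\trans$ (and $\bP\bmu=\bmu$, $\One_T\trans\bQ=\One_N\trans$). Your write-up just makes explicit the steps the paper compresses into one line.
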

\begin{proof}
  Note that $\inprod{\bmu\One\trans}{\bDelta_V}_\mu = \inprod{\bmu}{(\V - \tilde\V)\bmu} = 0$. Hence, 
  $\inprod{\bDelta_V}{\V'} = \inprod{\bDelta_V}{\V' - \bmu\One\trans} = 0$. For $\inprod{\bDelta_A}{\A'}$, it suffices 
  to note that $\inprod{\bDelta_A}{\One\One\trans/T}_\mu = \inprod{\One\trans(\A - \tilde\A)}{\One\trans/T}_\mu = 0$. 
\end{proof}

The following lemma the basic definitions and results about the population projection. 
\begin{lemma}[Definitions and basic results on the population projection]
  \label{lemma: population projection, def and basic results}
  Suppose that $\V, \A$ satisfy $\One_N\trans\V = \One_N\trans, \One_T\trans\A=\One_N\trans,$ 
and $\V\bmu = \bmu$. We define the following: 
  \begin{gather*}
    K_P = \norm{\bP}_\mu^2 - \norm{\bmu}^2, \quad 
    K_{VP} = \inprod{\V}{\bP}_\mu - \norm{\bmu}^2, \quad 
    K_V = \norm{\V}_\mu^2 - \norm{\bmu}^2, \\
    \alpha_V = K_{VP} / K_P, \quad 
    \beta_V = 1 - \alpha_V, \quad 
    \tilde{\V} = \alpha_V \bP + \beta_V \bmu\One\trans, \\
    \bDelta_{\V} = \V - \tilde{\V}, \\
    K_Q = \norm{\bQ}_\mu^2 - 1/T, \quad 
    K_{AQ} = \inprod{\A}{\bQ}_\mu - 1/T, \quad 
    K_{A} = \norm{\A}_\mu^2 - 1/T,  \\
    \alpha_A = K_{AQ} / K_Q, \quad 
    \beta_A = 1 - \beta_A, \quad 
    \tilde{\A} 
    = \alpha_A \A + \beta_A \One_T\One_N\trans/T,  \\
    \bDelta_{\A} = \A - \tilde{\A}.
  \end{gather*}
  Moreover, by Lemma~\ref{lemma: inprod bDelta = 0}, the following hold.
  \begin{align*}
    K_{VP} 
    &= \inprod{\tilde{\V}}{\bP}_\mu - \norm{\bmu}^2
    = \alpha_V K_P, \\
    K_V 
    &= \norm{\tilde{\V}}_\mu^2 + \norm{\bDelta_{\V}}_\mu^2 - \norm{\bmu}^2 
    = \alpha_V^2 K_P 
      + \norm{\bDelta_{\V}}_\mu^2, \\
    K_{AQ} 
    &= \inprod{\tilde\A}{\bQ}_\mu - 1/T
    = \alpha_A  K_Q, \\
    K_A 
    &= \norm{\tilde\A}_\mu^2 + \norm{\bDelta_{\A}}_\mu^2 - 1/T
    = \alpha_A^2 K_Q 
      + \norm{\bDelta_{\A}}_\mu^2. 
  \end{align*}
\end{lemma}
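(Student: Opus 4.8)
The statement is largely a repackaging of the preceding definitions, so the plan is to verify the four displayed identities by reducing each to the orthogonality relations of Lemma~\ref{lemma: inprod bDelta = 0} together with a short list of elementary $\mu$-inner-product computations involving the ``trivial'' matrices $\bmu\One\trans$ and $\One_T\One_N\trans/T$. (Throughout I read the two evident typos in the statement as the intended $\tilde\A = \alpha_A\bQ + \beta_A\One_T\One_N\trans/T$ and $\beta_A = 1 - \alpha_A$.)

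First I would record the auxiliary facts
\[
  \inprod{\bP}{\bmu\One\trans}_\mu = \norm{\bmu}^2, \quad
  \norm{\bmu\One\trans}_\mu^2 = \norm{\bmu}^2, \quad
  \inprod{\bQ}{\tfrac{\One_T\One_N\trans}{T}}_\mu = \tfrac1T, \quad
  \norm{\tfrac{\One_T\One_N\trans}{T}}_\mu^2 = \tfrac1T,
\]
all of which follow directly from $\inprod{\A}{\B}_\mu = \Tr(\A\diag(\bmu)\B\trans)$: the first uses stationarity $\bP\bmu = \bmu$ and $\One_N\trans\diag(\bmu) = \bmu\trans$; the third uses that each $\q\ps{k}$ is a probability vector, so $\One_T\trans\bQ\bmu = \sum_k\mu_k = 1$; and the second and fourth use $\One_N\trans\diag(\bmu)\One_N = 1$. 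An immediate consequence is $\norm{\bP - \bmu\One\trans}_\mu^2 = \norm{\bP}_\mu^2 - \norm{\bmu}^2 = K_P$ and, likewise, $\norm{\bQ - \One_T\One_N\trans/T}_\mu^2 = K_Q$, together with $\inprod{\bP - \bmu\One\trans}{\bmu\One\trans}_\mu = 0$ and $\inprod{\bQ - \One_T\One_N\trans/T}{\One_T\One_N\trans/T}_\mu = 0$.

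Then I would run the four identities in turn. For $K_{VP}$: writing $\V = \tilde\V + \bDelta_V$ and using $\inprod{\bDelta_V}{\bP}_\mu = 0$ from Lemma~\ref{lemma: inprod bDelta = 0} gives $K_{VP} = \inprod{\tilde\V}{\bP}_\mu - \norm{\bmu}^2$, while $K_{VP} = \alpha_V K_P$ is just the defining relation $\alpha_V = K_{VP}/K_P$ of Lemma~\ref{lemma: formuls for alphaV and alphaA}. For $K_V$: since $\inprod{\bDelta_V}{\tilde\V}_\mu = 0$, the Pythagorean identity gives $\norm{\V}_\mu^2 = \norm{\tilde\V}_\mu^2 + \norm{\bDelta_V}_\mu^2$; and writing $\tilde\V - \bmu\One\trans = \alpha_V(\bP - \bmu\One\trans)$ with $\inprod{\tilde\V - \bmu\One\trans}{\bmu\One\trans}_\mu = 0$ yields $\norm{\tilde\V}_\mu^2 = \norm{\bmu}^2 + \alpha_V^2 K_P$, hence $K_V = \alpha_V^2 K_P + \norm{\bDelta_V}_\mu^2$. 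The identities for $K_{AQ}$ and $K_A$ are proved verbatim, replacing $(\V, \bP, \bmu\One\trans, \norm{\bmu}^2, \alpha_V, \bDelta_V, K_P)$ by $(\A, \bQ, \One_T\One_N\trans/T, 1/T, \alpha_A, \bDelta_A, K_Q)$ and invoking the $\A$-half of Lemma~\ref{lemma: inprod bDelta = 0} together with $\One_T\trans\A = \One_N\trans$.

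There is no genuine obstacle here; the only thing to watch is the bookkeeping — keeping straight which trivial matrix and which normalization constant ($\norm{\bmu}^2$ versus $1/T$) pairs with $\V$ versus $\A$, and noting that the role played by stationarity $\bP\bmu = \bmu$ on the $\V$ side is played by column-stochasticity $\One_T\trans\bQ = \One_N\trans$ (and the hypothesis $\One_T\trans\A = \One_N\trans$) on the $\A$ side.
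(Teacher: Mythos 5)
Your proposal is correct and follows exactly the route the paper intends: the paper gives no separate proof beyond citing Lemma~\ref{lemma: inprod bDelta = 0}, and your argument (orthogonality of $\bDelta_V,\bDelta_A$ to the respective lines, Pythagoras, and the elementary computations $\inprod{\bP}{\bmu\One\trans}_\mu=\norm{\bmu\One\trans}_\mu^2=\norm{\bmu}^2$, $\inprod{\bQ}{\One_T\One_N\trans/T}_\mu=\norm{\One_T\One_N\trans/T}_\mu^2=1/T$) is precisely the implicit justification, including the correct reading of the two typos ($\tilde\A=\alpha_A\bQ+\beta_A\One_T\One_N\trans/T$, $\beta_A=1-\alpha_A$).
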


\subsection{Dynamics of the population projected process and the approximation error}
We write 
\begin{align*}
  \V_{\tau+1} 
  &= \V_\tau - \eta_{V}\E\hat\nabla_{\V} l
    - \eta_{V} \left( \hat\nabla\ps{B}_{\V} l - \E \hat\nabla_{\V} l \right) 
  =: \V_\tau - \eta_{V} \E \hat\nabla_{\V} l - \eta_{V} \h_{\V, \tau}, \\
  \A_{\tau+1}
  &= \A_\tau - \eta_{A} \E \hat\nabla_{\A} l 
    - \eta_{A} \left( \hat\nabla\ps{B}_{\A} l - \E \hat\nabla_{\A} l \right)  
  =: \A_\tau - \eta_{A} \E \hat\nabla_{\A} l - \eta_{A} \h_{\A, \tau}, 
\end{align*}
where the expectations are taken over the fresh samples at step $\tau$. 

First, we expand the expected preconditioned gradients around the population projection. 
\begin{lemma}[Expanding the gradients]
  \label{lemma: approximating the gradients}
  \begin{align*}
    \E \hat\nabla_{\A} l 
    &= K_P \alpha_V \left( \alpha_V \alpha_A  -  1 \right) \left( \bQ - \frac{\One_T\One_N\trans}{T} \right)
      +  K_P \alpha_V^2 \bDelta_{\A} 
      + \norm{\bDelta_V}_\mu^2 \left( \A - \frac{\One_T\One_N\trans}{T} \right), \\
    \E\hat{\nabla}_{\V} l 
    &= \alpha_A K_Q  \left( \alpha_A \alpha_V - 1 \right) \left( \bP - \bmu\One\trans \right) 
      + \frac{\alpha_V - 1}{T} \left( \bP - \bmu\One\trans \right) \\
      &\qquad
      + \left( \alpha_A^2 K_Q + \frac{1}{T} \right) \bDelta_{\V} 
      + \norm{\bDelta_A}^2 \left( \V - \bmu\One\trans \right). 
  \end{align*}
\end{lemma}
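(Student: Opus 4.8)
The plan is to derive both identities by pure bookkeeping: start from the closed-form expected preconditioned gradients of Lemma~\ref{lemma: expectation: expected preconditioned gradients}, substitute the population-projection identities from Lemma~\ref{lemma: population projection, def and basic results} (which in turn rely on the orthogonality relations $\inprod{\bDelta_V}{\bP}_\mu = 0$ and $\inprod{\bDelta_A}{\bQ}_\mu = 0$ of Lemma~\ref{lemma: inprod bDelta = 0}), and collect terms. No analysis is involved; the whole argument is a chain of equalities in the $\mu$-inner product.

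For the $\A$-gradient, the matrix form of Lemma~\ref{lemma: expectation: expected preconditioned gradients} (stacking the columns $\a\ps{k}$ and $\q\ps{k}$) reads $\E\hat\nabla_{\A} l = K_V(\A - \One_T\One_N\trans/T) - K_{VP}(\bQ - \One_T\One_N\trans/T)$, where $K_V = \norm{\V}_\mu^2 - \norm{\bmu}^2$ and $K_{VP} = \inprod{\V}{\bP}_\mu - \norm{\bmu}^2$. I would write $K_V = \alpha_V^2 K_P + \norm{\bDelta_V}_\mu^2$ and $K_{VP} = \alpha_V K_P$, split $K_V(\A - \One_T\One_N\trans/T)$ into $\alpha_V^2 K_P(\A - \One_T\One_N\trans/T) + \norm{\bDelta_V}_\mu^2(\A - \One_T\One_N\trans/T)$, and expand only the first summand using $\A - \One_T\One_N\trans/T = \alpha_A(\bQ - \One_T\One_N\trans/T) + \bDelta_A$ (which follows from $\A = \alpha_A\bQ + \beta_A\One_T\One_N\trans/T + \bDelta_A$). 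The coefficient of $\bQ - \One_T\One_N\trans/T$ then becomes $\alpha_V^2\alpha_A K_P - \alpha_V K_P = K_P\alpha_V(\alpha_V\alpha_A - 1)$, the term $\alpha_V^2 K_P\bDelta_A$ survives from the expansion, and $\norm{\bDelta_V}_\mu^2(\A - \One_T\One_N\trans/T)$ is already in the target form, giving the first claimed identity.

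For the $\V$-gradient the computation is identical up to one wrinkle: the coefficients in Lemma~\ref{lemma: expectation: expected preconditioned gradients} are $\norm{\A}_\mu^2$ and $\inprod{\bQ}{\A}_\mu$ rather than the shifted quantities $K_A$ and $K_{AQ}$, so I would use $\norm{\A}_\mu^2 = \alpha_A^2 K_Q + \norm{\bDelta_A}_\mu^2 + 1/T$ and $\inprod{\bQ}{\A}_\mu = \alpha_A K_Q + 1/T$, split off the $\norm{\bDelta_A}_\mu^2(\V - \bmu\One\trans)$ piece, and expand the rest via $\V - \bmu\One\trans = \alpha_V(\bP - \bmu\One\trans) + \bDelta_V$. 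The coefficient of $\bP - \bmu\One\trans$ collects to $(\alpha_A^2 K_Q + 1/T)\alpha_V - (\alpha_A K_Q + 1/T) = \alpha_A K_Q(\alpha_A\alpha_V - 1) + (\alpha_V - 1)/T$, the multiplier of $\bDelta_V$ is $\alpha_A^2 K_Q + 1/T$, and the $\norm{\bDelta_A}_\mu^2(\V - \bmu\One\trans)$ term carries over, which is exactly the second identity. The only place a sign or a stray $1/T$ could slip in is keeping straight which of the two baselines carries the extra additive constant — $\norm{\One_T\One_N\trans/T}_\mu^2 = 1/T$ versus $\norm{\bmu\One\trans}_\mu^2 = \norm{\bmu}^2$ — and this asymmetry is precisely what makes the $\V$-expansion pick up the drift term $\tfrac{\alpha_V-1}{T}(\bP - \bmu\One\trans)$ while the $\A$-expansion does not.
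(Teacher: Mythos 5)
Your proposal is correct and follows essentially the same route as the paper: start from the expected preconditioned gradients of Lemma~\ref{lemma: expectation: expected preconditioned gradients}, substitute $K_{VP}=\alpha_V K_P$, $K_V=\alpha_V^2K_P+\norm{\bDelta_V}_\mu^2$ (resp.\ $K_{AQ}+1/T=\alpha_AK_Q+1/T$, $K_A+1/T=\alpha_A^2K_Q+\norm{\bDelta_A}_\mu^2+1/T$) from Lemma~\ref{lemma: population projection, def and basic results}, expand $\A-\One_T\One_N\trans/T=\alpha_A(\bQ-\One_T\One_N\trans/T)+\bDelta_A$ and $\V-\bmu\One\trans=\alpha_V(\bP-\bmu\One\trans)+\bDelta_V$, and collect coefficients. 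Your bookkeeping, including the asymmetry that produces the extra $\tfrac{\alpha_V-1}{T}(\bP-\bmu\One\trans)$ drift term only in the $\V$-expansion, matches the paper's calculation exactly.
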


Then, we compute the dynamics of the projected process. 
\begin{lemma}[Dynamics of the population projection]
  \label{lemma: dynamics of the population projection}
  \begin{align*}
    \alpha_{V, \tau+1}
    &= \alpha_{V, \tau} 
      + \eta_V K_Q \left( 1 - \alpha_A \alpha_V \right) \alpha_A
      + \eta_V \frac{1 - \alpha_V}{T} 
      - \eta_V \alpha_V \norm{\bDelta_A}^2 
      - \frac{\eta_V }{K_P} \inprod{ \h_{\V, \tau} }{\bP}_\mu, \\
    \alpha_{A, \tau+1} 
    &= \alpha_{A, \tau} 
      + \eta_A K_P \left( 1 - \alpha_V \alpha_A \right) \alpha_V 
      - \eta_A \alpha_A \norm{\bDelta_V}_\mu^2
      - \frac{\eta_A }{K_Q} \inprod{ \h_{\A, \tau} }{\bQ}_\mu. 
  \end{align*}
  Note that $\tilde{\V} = \bmu\One\trans + \alpha_V (\bP - \bmu\One\trans)$ and $\tilde{\A} = \One_T\One_N\trans/T 
  + \alpha_A (\bQ - \One_T\One_N\trans/T)$. Hence, this also gives formulas for $\tilde{\V}_{\tau+1}$ and 
  $\tilde{\A}_{\tau+1}$. 
\end{lemma}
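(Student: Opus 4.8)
The plan is to exploit the fact that, on the affine constraint set where $\One_N\trans\V = \One_N\trans$, $\One_T\trans\A = \One_N\trans$ and $\V\bmu = \bmu$, the projection coordinate $\alpha_V$ is an \emph{affine functional} of $\V$: by Lemma~\ref{lemma: formuls for alphaV and alphaA}, $\alpha_{V,\tau} = (\inprod{\V_\tau}{\bP}_\mu - \norm{\bmu}^2)/K_P$, and likewise $\alpha_{A,\tau} = (\inprod{\A_\tau}{\bQ}_\mu - 1/T)/K_Q$. Hence the one-step dynamics of $\alpha_V$ is obtained simply by applying this functional to the update $\V_{\tau+1} = \V_\tau - \eta_V\E\hat\nabla_\V l - \eta_V\h_{\V,\tau}$, and similarly for $\alpha_A$. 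No new estimate is needed beyond Lemma~\ref{lemma: approximating the gradients} and the orthogonality relations of Lemma~\ref{lemma: inprod bDelta = 0}; the rest is bookkeeping with the $\mu$-inner product.

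First I would record the inner products needed for $\alpha_V$: since $\bP\bmu = \bmu$ we have $\inprod{\bmu\One\trans}{\bP}_\mu = \bmu\trans\bP\trans\bmu = \norm{\bmu}^2$, hence $\inprod{\bP - \bmu\One\trans}{\bP}_\mu = K_P$ and $\inprod{\V - \bmu\One\trans}{\bP}_\mu = K_{VP} = \alpha_V K_P$; crucially, $\inprod{\bDelta_V}{\bP}_\mu = 0$ by Lemma~\ref{lemma: inprod bDelta = 0}. Substituting the expansion of $\E\hat\nabla_\V l$ from Lemma~\ref{lemma: approximating the gradients} into $\inprod{\cdot}{\bP}_\mu$, the $\bDelta_V$ term drops and the three surviving terms contribute $\alpha_A K_Q(\alpha_A\alpha_V - 1)K_P$, $\frac{\alpha_V - 1}{T}K_P$ and $\norm{\bDelta_A}^2\alpha_V K_P$. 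Dividing by $K_P$ and appending $-\frac{\eta_V}{K_P}\inprod{\h_{\V,\tau}}{\bP}_\mu$ gives exactly the claimed identity for $\alpha_{V,\tau+1}$.

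The computation for $\alpha_A$ is symmetric: here I would use $\inprod{\One_T\One_N\trans/T}{\bQ}_\mu = \tfrac1T\sum_k \mu_k\,\One_T\trans\q\ps{k} = 1/T$ (each $\q\ps{k}$ being a probability vector), so $\inprod{\bQ - \One_T\One_N\trans/T}{\bQ}_\mu = K_Q$, together with $\inprod{\A - \One_T\One_N\trans/T}{\bQ}_\mu = K_{AQ} = \alpha_A K_Q$ and $\inprod{\bDelta_A}{\bQ}_\mu = 0$. Applying the functional $\A\mapsto(\inprod{\A}{\bQ}_\mu - 1/T)/K_Q$ to $\A_{\tau+1} = \A_\tau - \eta_A\E\hat\nabla_\A l - \eta_A\h_{\A,\tau}$, inserting the expansion of $\E\hat\nabla_\A l$ and simplifying yields $\alpha_{A,\tau+1} = \alpha_{A,\tau} + \eta_A K_P(1 - \alpha_V\alpha_A)\alpha_V - \eta_A\alpha_A\norm{\bDelta_V}_\mu^2 - \frac{\eta_A}{K_Q}\inprod{\h_{\A,\tau}}{\bQ}_\mu$. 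The closing remark about $\tilde\V_{\tau+1}$, $\tilde\A_{\tau+1}$ is then immediate from $\tilde\V = \bmu\One\trans + \alpha_V(\bP - \bmu\One\trans)$ and $\tilde\A = \One_T\One_N\trans/T + \alpha_A(\bQ - \One_T\One_N\trans/T)$.

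The one point that needs care — and the closest thing to an obstacle — is justifying that $\V_\tau$ and $\A_\tau$ stay inside the affine constraint set on which Lemma~\ref{lemma: formuls for alphaV and alphaA} applies, so that the substitutions above are legitimate at every step. This follows by induction on $\tau$ from the initialization $\V_0 = \bmu\One\trans$, $\A_0 = \One_T\One_N\trans/T$, using that the projectors $\Id_N - \One_N\One_N\trans/N$, $\Id_N - \bmu\bmu\trans/\norm{\bmu}^2$ and $\Id_T - \One_T\One_T\trans/T$ built into the preconditioner \eqref{eq: preconditioned gradients} annihilate precisely the directions that would violate those constraints — and the same projectors act on the noise $\h_{\V,\tau},\h_{\A,\tau}$ since it is a difference of preconditioned gradients, so the noise cannot break the invariants either. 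Everything else is a one-line substitution.
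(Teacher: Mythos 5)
Your proposal is correct and follows essentially the same route as the paper's proof: apply the affine functionals $\alpha_V = (\inprod{\V}{\bP}_\mu - \norm{\bmu}^2)/K_P$ and $\alpha_A = (\inprod{\A}{\bQ}_\mu - 1/T)/K_Q$ to the SGD update, substitute the gradient expansion of Lemma~\ref{lemma: approximating the gradients}, and use the orthogonality $\inprod{\bDelta_V}{\bP}_\mu = \inprod{\bDelta_A}{\bQ}_\mu = 0$ together with $\inprod{\bP - \bmu\One\trans}{\bP}_\mu = K_P$ and $\inprod{\bQ - \One_T\One_N\trans/T}{\bQ}_\mu = K_Q$. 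Your added induction remark that the preconditioners keep $\V_\tau, \A_\tau$ in the affine constraint set is a harmless (and correct) explicit justification of an invariance the paper only states informally.
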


Now, we consider the dynamics of the errors. 
\begin{lemma}[Dynamics of the errors]
  \label{lemma: dynamics of the errors}
  \begin{align*}
    \norm{\bDelta_{\A, \tau+1}}_\mu^2 
    &= \left( 1 - \eta_A K_P \alpha_V^2  - \eta_A \norm{\bDelta_V}_\mu^2 \right)^2 \norm{ \bDelta_{\A} }_\mu^2 \\
      &\qquad
      - 2 \eta_A \left( 1 - \eta_A K_P \alpha_V^2  - \eta_A \norm{\bDelta_V}_\mu^2 \right) 
        \inprod{ \bDelta_{\A} }{ \h_{\A}}_\mu \\
      &\qquad
      - \frac{\eta_A^2 }{K_Q} \inprod{ \h_{\A, \tau} }{\bQ}_\mu^2
      + \eta_A^2 \norm{ \h_{\A} }_\mu^2  , \\
    \norm{ \bDelta_{\V, \tau+1} }_\mu^2 
    &= \left( 
        1 
        - \eta_V \left( \alpha_A^2 K_Q + \frac{1}{T} \right)  
        - \eta_V \norm{\bDelta_A}^2
      \right)^2
      \norm{ \bDelta_{\V} }_\mu^2
      \\
      &\qquad
      + 2 \eta_V  
        \left( 
          1 
          - \eta_V \left( \alpha_A^2 K_Q + \frac{1}{T} \right)  
          - \eta_V \norm{\bDelta_A}^2
        \right)
      \inprod{ \bDelta_{\V} }{ \h_{\V} }_\mu \\
      &\qquad
      - \frac{\eta_V^2 }{K_P}  \inprod{ \bP }{ \h_{\V} }_\mu^2
      + \eta_V^2 \norm{ \h_{\V} }_\mu^2.
  \end{align*}
\end{lemma}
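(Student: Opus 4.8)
The plan is to prove each of the two identities by first computing $\bDelta_{\A,\tau+1}=\A_{\tau+1}-\tilde\A_{\tau+1}$ (resp.\ $\bDelta_{\V,\tau+1}=\V_{\tau+1}-\tilde\V_{\tau+1}$) in closed form and then taking its squared $\mu$-norm. The two ingredients I would feed in are Lemma~\ref{lemma: approximating the gradients}, which splits the expected preconditioned gradient into a piece along the ``signal direction'' and a piece proportional to the current error, and Lemma~\ref{lemma: dynamics of the population projection}, which already records the motion of $\alpha_A,\alpha_V$ (hence of $\tilde\A,\tilde\V$) along that signal direction. Using $\A-\One\One\trans/T=\alpha_A(\bQ-\One\One\trans/T)+\bDelta_\A$ to absorb the last term of $\E\hat\nabla_\A l$, Lemma~\ref{lemma: approximating the gradients} takes the form
\[
  \E\hat\nabla_\A l
  = c_\tau\Bigl(\bQ-\frac{\One\One\trans}{T}\Bigr)
    + \Bigl(K_P\alpha_V^2+\norm{\bDelta_V}_\mu^2\Bigr)\bDelta_\A
\]
for a scalar $c_\tau$, and similarly $\E\hat\nabla_\V l=c'_\tau(\bP-\bmu\One\trans)+(\alpha_A^2K_Q+1/T+\norm{\bDelta_A}^2)\bDelta_V$.

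Next I would subtract the signal-direction motion. Writing $\A_{\tau+1}=\A_\tau-\eta_A\E\hat\nabla_\A l-\eta_A\h_{\A,\tau}$ and $\tilde\A_{\tau+1}=\tilde\A_\tau+(\alpha_{A,\tau+1}-\alpha_{A,\tau})(\bQ-\One\One\trans/T)$, the parts of the update proportional to $\bQ-\One\One\trans/T$ cancel exactly by the definition of $\alpha_{A,\tau+1}$ in Lemma~\ref{lemma: dynamics of the population projection}, and one is left with
\[
  \bDelta_{\A,\tau+1}
  = \Bigl(1-\eta_AK_P\alpha_V^2-\eta_A\norm{\bDelta_V}_\mu^2\Bigr)\bDelta_\A-\eta_A\h_{\A,\tau}^\perp,
  \quad
  \h_{\A,\tau}^\perp:=\h_{\A,\tau}-\frac{\inprod{\h_{\A,\tau}}{\bQ}_\mu}{K_Q}\Bigl(\bQ-\frac{\One\One\trans}{T}\Bigr).
\]
Here I use two facts. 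First, $\inprod{\h_{\A,\tau}}{\One\One\trans/T}_\mu=0$, because each column of the preconditioned $\A$-gradient sums to zero; hence the projection of $\h_{\A,\tau}$ onto $\bQ-\One\One\trans/T$ has coefficient $\inprod{\h_{\A,\tau}}{\bQ}_\mu/\norm{\bQ-\One\One\trans/T}_\mu^2=\inprod{\h_{\A,\tau}}{\bQ}_\mu/K_Q$. Second, $\bDelta_\A$ is already $\mu$-orthogonal to $\bQ-\One\One\trans/T$ by Lemma~\ref{lemma: inprod bDelta = 0}. The analogous reduction for $\V$ uses $\inprod{\h_{\V,\tau}}{\bmu\One\trans}_\mu=0$ (the right preconditioner $\Id_N-\bmu\bmu\trans/\norm{\bmu}^2$ annihilates $\bmu$) together with $\inprod{\bDelta_V}{\bmu\One\trans}_\mu=0$, giving $\bDelta_{\V,\tau+1}=(1-\eta_V(\alpha_A^2K_Q+1/T)-\eta_V\norm{\bDelta_A}^2)\bDelta_V-\eta_V\h_{\V,\tau}^\perp$ with $\h_{\V,\tau}^\perp:=\h_{\V,\tau}-\inprod{\h_{\V,\tau}}{\bP}_\mu K_P^{-1}(\bP-\bmu\One\trans)$.

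Finally I would square. Expanding $\norm{c\,\bDelta_\A-\eta_A\h_{\A,\tau}^\perp}_\mu^2=c^2\norm{\bDelta_\A}_\mu^2-2c\eta_A\inprod{\bDelta_\A}{\h_{\A,\tau}^\perp}_\mu+\eta_A^2\norm{\h_{\A,\tau}^\perp}_\mu^2$, then using $\inprod{\bDelta_\A}{\h_{\A,\tau}^\perp}_\mu=\inprod{\bDelta_\A}{\h_{\A,\tau}}_\mu$ (since $\bDelta_\A$ kills the subtracted direction) and the Pythagorean identity $\norm{\h_{\A,\tau}^\perp}_\mu^2=\norm{\h_{\A,\tau}}_\mu^2-\inprod{\h_{\A,\tau}}{\bQ}_\mu^2/K_Q$, yields the displayed identity for $\norm{\bDelta_{\A,\tau+1}}_\mu^2$; the $\V$ identity follows in the same way with $\bP,\bmu\One\trans,K_P$ in place of $\bQ,\One\One\trans/T,K_Q$. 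The computation is essentially mechanical once Lemmas~\ref{lemma: approximating the gradients} and~\ref{lemma: dynamics of the population projection} are available, so I do not expect a genuine obstacle; the only points that need care are bookkeeping of which $\alpha$'s carry the suppressed subscript $\tau$ versus $\tau+1$, and verifying that the invariants $\One\trans\A=\One\trans$, $\One\trans\V=\One\trans$, $\V\bmu=\bmu$ (preserved by the preconditioning) hold, since these are exactly what make the orthogonal splitting valid and force the spurious noise components to vanish.
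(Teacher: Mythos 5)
Your proposal is correct and follows essentially the same route as the paper's proof: expand the expected preconditioned gradients around the population projection (Lemma~\ref{lemma: approximating the gradients}), subtract the signal-direction motion given by Lemma~\ref{lemma: dynamics of the population projection} so that $\bDelta_{\A,\tau+1}$ (resp.\ $\bDelta_{\V,\tau+1}$) is a contraction of $\bDelta_\A$ (resp.\ $\bDelta_\V$) plus the noise with its component along $\bQ-\One\One\trans/T$ (resp.\ $\bP-\bmu\One\trans$) removed, and then square using the $\mu$-orthogonality facts $\inprod{\bDelta_\A}{\bQ-\One\One\trans/T}_\mu=0$, $\inprod{\h_\A}{\One\One\trans/T}_\mu=0$ (and their $\V$ analogues) together with the Pythagorean identity. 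The paper writes the leftover noise term explicitly as $-\eta_A\h_\A+\tfrac{\eta_A}{K_Q}\inprod{\h_\A}{\bQ}_\mu(\bQ-\One\One\trans/T)$ rather than naming it $\h_\A^\perp$, but the computation is identical.
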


\subsection{Omitted proofs in this section}

\begin{proof}[Proof of Lemma~\ref{lemma: formuls for alphaV and alphaA}]
  We compute 
  \begin{align*}
    \frac{1}{2} \partial_\alpha \norm{ \alpha \bP + (1 - \alpha)\bmu\One\trans - \V }_\mu^2
    &= \inprod{ \alpha \bP + (1 - \alpha)\bmu\One\trans - \V }{ \bP - \bmu\One\trans }_\mu  \\
    &= \inprod{ \alpha \bP + (1 - \alpha)\bmu\One\trans - \V }{ \bP }_\mu  
    = \alpha K_P 
      + \norm{\bmu}^2
      - \inprod{ \V }{ \bP }_\mu. 
  \end{align*}
  Set the derivative to be $0$, and we get $\alpha_V = (\inprod{\V}{\bP}_\mu - \norm{\bmu}^2) / K_P$. Similarly, we 
  compute 
  \begin{align*}
    \frac{1}{2} \partial_\alpha \norm{ \alpha \bQ + (1 - \alpha) \frac{\One\One\trans}{T} - \A }_\mu^2
    &= \inprod{\alpha \bQ + (1 - \alpha) \frac{\One\One\trans}{T} - \A}{\bQ - \frac{\One\One\trans}{T}}_\mu \\
    &= \inprod{\alpha \bQ + (1 - \alpha) \frac{\One\One\trans}{T} - \A}{\bQ }_\mu \\
    &= \alpha \left( \norm{\bQ}_\mu^2 - 1/T \right)
      - \left( \inprod{\A}{\bQ}_\mu - 1/T \right). 
  \end{align*}
  Again, set the derivative to be $0$, and we get $\alpha_A = \left( \inprod{\A}{\bQ}_\mu - 1/T \right) 
  / (\norm{\bQ}_\mu^2 - 1/T)$.
\end{proof}

\begin{proof}[Proof of Lemma~\ref{lemma: approximating the gradients}]
  Recall from Lemma~\ref{lemma: expectation: expected preconditioned gradients} that 
  \begin{align*}
    \E\hat{\nabla}_{\V} l 
    &= \left( K_A + \frac{1}{T} \right) \left( \V - \bmu\One\trans \right)
      - \left( K_{AQ} + \frac{1}{T} \right) \left( \bP - \bmu\One\trans \right), \\
    \E \hat\nabla_{\A} l 
    &= K_V \left( \A - \frac{\One_T\One_N\trans}{T} \right) - K_{VP} \left( \bQ - \frac{\One_T\One\trans}{T} \right). 
  \end{align*}
  First, consider the dynamics of $\A$. By Lemma~\ref{lemma: population projection, def and basic results}, we can 
  further decompose it as 
  \begin{align*}
    \E \hat\nabla_{\A} l 
    &= \left( \alpha_V^2 K_P + \norm{\bDelta_V}_\mu^2 \right) \left( \A - \frac{\One_T\One_N\trans}{T} \right) 
      - \alpha_V K_P \left( \bQ - \frac{\One_T\One\trans}{T} \right) \\
    &= K_P \alpha_V 
      \left( 
        \alpha_V \left( \A - \frac{\One_T\One_N\trans}{T} \right) 
        -  \left( \bQ - \frac{\One_T\One\trans}{T} \right) 
      \right) 
      + \norm{\bDelta_V}_\mu^2 \left( \A - \frac{\One_T\One_N\trans}{T} \right) \\
    &= K_P \alpha_V 
      \left( 
        \alpha_V \left( \tilde{\A} - \frac{\One_T\One_N\trans}{T} \right) 
        -  \left( \bQ - \frac{\One_T\One\trans}{T} \right) 
      \right) 
      +  K_P \alpha_V^2 \bDelta_{\A} 
      + \norm{\bDelta_V}_\mu^2 \left( \A - \frac{\One_T\One_N\trans}{T} \right) \\ 
    &= K_P \alpha_V \left( \alpha_V \alpha_A  -  1 \right) 
      \left( \bQ - \frac{\One_T\One_N\trans}{T} \right)
      +  K_P \alpha_V^2 \bDelta_{\A} 
      + \norm{\bDelta_V}_\mu^2 \left( \A - \frac{\One_T\One_N\trans}{T} \right). 
  \end{align*}
  Similarly, we can rewrite the expected preconditioned gradient of $\V$ as 
  \begin{align*}
    \E\hat{\nabla}_{\V} l 
    &= \left( \alpha_A^2 K_Q + \frac{1}{T} \right) \left( \V - \bmu\One\trans \right)
      - \left(\alpha_A K_Q + \frac{1}{T} \right) \left( \bP - \bmu\One\trans \right) 
      + \norm{\bDelta_A}^2 \left( \V - \bmu\One\trans \right) \\
    &= \left( \alpha_A^2 K_Q + \frac{1}{T} \right) \left( \tilde{\V} - \bmu\One\trans \right)
      - \left(\alpha_A K_Q + \frac{1}{T} \right) \left( \bP - \bmu\One\trans \right) \\
      &\qquad
      + \left( \alpha_A^2 K_Q + \frac{1}{T} \right) \bDelta_{\V} 
      + \norm{\bDelta_A}^2 \left( \V - \bmu\One\trans \right) \\
    &= \alpha_A K_Q  \left( \alpha_A \alpha_V - 1 \right) \left( \bP - \bmu\One\trans \right) 
      + \frac{\alpha_V - 1}{T} \left( \bP - \bmu\One\trans \right) \\
      &\qquad
      + \left( \alpha_A^2 K_Q + \frac{1}{T} \right) \bDelta_{\V} 
      + \norm{\bDelta_A}^2 \left( \V - \bmu\One\trans \right). 
  \end{align*}
\end{proof}

\begin{proof}[Proof of Lemma~\ref{lemma: dynamics of the population projection}]
  Recall that $\alpha_V = \inprod{\V}{\bP}_\mu / K_P$ and $\alpha_A = \inprod{\A}{\bQ}_\mu / K_Q$. First, consider
  the dynamics of $\V$. By Lemma~\ref{lemma: approximating the gradients} and Lemma~\ref{lemma: population projection, 
  def and basic results}, we have 
  \begin{align*}
    \alpha_{V, \tau+1}
    &= \alpha_{V, \tau} - \eta_V \frac{ \inprod{ \hat\nabla_{\V} \Loss + \h_{\V, \tau} }{\bP}_\mu  }{K_P} \\
    &= \alpha_{V, \tau} 
      - \frac{ \eta_V }{K_P} \alpha_A K_Q  \left( \alpha_A \alpha_V - 1 \right)
        \inprod{  \bP - \bmu\One\trans }{\bP}_\mu 
      - \frac{ \eta_V }{K_P} \frac{\alpha_V - 1}{T} 
        \inprod{  \bP - \bmu\One\trans }{\bP}_\mu 
      \\
      &\qquad
      - \frac{ \eta_V }{K_P} \norm{\bDelta_A}^2 
      \inprod{  \V - \bmu\One\trans }{\bP}_\mu 
      - \eta_V \frac{ \inprod{ \h_{\V, \tau} }{\bP}_\mu  }{K_P} \\
    &= \alpha_{V, \tau} 
      + \eta_V K_Q \left( 1 - \alpha_A \alpha_V \right) \alpha_A
      + \eta_V \frac{1 - \alpha_V}{T} 
      - \eta_V \alpha_V \norm{\bDelta_A}^2 
      - \frac{\eta_V }{K_P} \inprod{ \h_{\V, \tau} }{\bP}_\mu. 
  \end{align*}
  Similarly, for $\V$, we have 
  \begin{align*}
    \alpha_{A, \tau+1} 
    &= \alpha_{A, \tau} - \eta_A \frac{ \inprod{ \hat\nabla_{\A} \Loss + \h_{\A, \tau} }{\bQ}_\mu  }{K_Q} \\
    &= \alpha_{A, \tau} 
      - \frac{\eta_A }{K_Q} K_P \alpha_V \left( \alpha_V \alpha_A  -  1 \right) 
        \inprod{ \bQ - \frac{\One_T\One_N\trans}{T} }{\bQ}_\mu \\
      &\qquad
      - \frac{\eta_A }{K_Q} \norm{\bDelta_V}_\mu^2
        \inprod{ \A - \frac{\One_T\One_N\trans}{T} }{\bQ}_\mu
      - \eta_A \frac{ \inprod{ \h_{\A, \tau} }{\bQ}_\mu  }{K_Q} \\
    &= \alpha_{A, \tau} 
      + \eta_A K_P \left( 1 - \alpha_V \alpha_A \right) \alpha_V 
      - \eta_A \alpha_A \norm{\bDelta_V}_\mu^2
      - \frac{\eta_A }{K_Q} \inprod{ \h_{\A, \tau} }{\bQ}_\mu. 
  \end{align*}
\end{proof}

\begin{proof}[Proof of Lemma~\ref{lemma: dynamics of the errors}]
  First, consider the dynamics of $\bDelta_{\A}$, which is given by  
  \begin{align*}
    \bDelta_{\A, \tau+1}
    &= \bDelta_{\A}
      - \eta_A K_P \alpha_V^2 \bDelta_{\A} 
      - \eta_A \norm{\bDelta_V}_\mu^2 \left( \A - \frac{\One_T\One_N\trans}{T} \right) 
      - \eta_A \h_{\A} \\
      &\qquad
      + \left(
        \eta_A \alpha_A \norm{\bDelta_V}_\mu^2
        + \frac{\eta_A }{K_Q} \inprod{ \h_{\A, \tau} }{\bQ}_\mu
      \right) \left( \bQ - \frac{\One_T\One_N\trans}{T} \right). 
  \end{align*}
  Decompose $\A$ into $\tilde{\A} + \bDelta_{\A}$, rearrange terms, and we obtain 
  \begin{align*}
    \bDelta_{\A, \tau+1}
    &= \bDelta_{\A}
      - \eta_A K_P \alpha_V^2 \bDelta_{\A} 
      - \eta_A \norm{\bDelta_V}_\mu^2 \bDelta_{\A}  \\
      &\qquad
      - \eta_A \norm{\bDelta_V}_\mu^2 \left( \tilde\A - \frac{\One_T\One_N\trans}{T} \right) 
      + \eta_A \alpha_A \norm{\bDelta_V}_\mu^2 \left( \bQ - \frac{\One_T\One_N\trans}{T} \right)  
      \\
      &\qquad
      + \frac{\eta_A }{K_Q} \inprod{ \h_{\A, \tau} }{\bQ}_\mu \left( \bQ - \frac{\One_T\One_N\trans}{T} \right)
      - \eta_A \h_{\A} . 
  \end{align*}
  Note that $\tilde{\A} - \One\One\trans/T = \alpha_A \bQ + (1 - \alpha_A) \One\One\trans/T - \One\One\trans/T
  = \alpha_A (\bQ - \One\One\trans/T)$. Hence, we have 
  \begin{align*}
    \bDelta_{\A, \tau+1}
    &= \bDelta_{\A}
      - \eta_A K_P \alpha_V^2 \bDelta_{\A} 
      - \eta_A \norm{\bDelta_V}_\mu^2 \bDelta_{\A}  \\
      &\qquad
      - \cancel{\eta_A \alpha_A \norm{\bDelta_V}_\mu^2 \left( \bQ - \frac{\One_T\One_N\trans}{T} \right) }
      + \cancel{\eta_A \alpha_A \norm{\bDelta_V}_\mu^2 \left( \bQ - \frac{\One_T\One_N\trans}{T} \right) } 
      \\
      &\qquad
      + \frac{\eta_A }{K_Q} \inprod{ \h_{\A, \tau} }{\bQ}_\mu \left( \bQ - \frac{\One_T\One_N\trans}{T} \right)
      - \eta_A \h_{\A} \\
    &= \left( 1 - \eta_A K_P \alpha_V^2 - \eta_A \norm{\bDelta_V}_\mu^2 \bDelta_{\A} \right) \bDelta_{\A}
      + \frac{\eta_A }{K_Q} \inprod{ \h_{\A, \tau} }{\bQ}_\mu \left( \bQ - \frac{\One_T\One_N\trans}{T} \right)
      - \eta_A \h_{\A}. 
  \end{align*}
  Recall that $\inprod{ \bDelta_{\A} }{ \bQ - \One\One\trans/T }_\mu = 0$, $\norm{\bQ - \One\One\trans/T}_\mu^2 = K_Q$, 
  and $\inprod{\One_T\One_N\trans/T}{\h_{\A}}_\mu = 0$. Hence, 
  \begin{align*}
    \norm{\bDelta_{\A, \tau+1}}_\mu^2 
    &= \norm{ 
        \left( 1 - \eta_A K_P \alpha_V^2  - \eta_A \norm{\bDelta_V}_\mu^2 \right) \bDelta_{\A} 
        + \frac{\eta_A }{K_Q} \inprod{ \h_{\A, \tau} }{\bQ}_\mu \left( \bQ - \frac{\One_T\One_N\trans}{T} \right)
        - \eta_A \h_{\A}
      }_\mu^2 \\
    &= \left( 1 - \eta_A K_P \alpha_V^2  - \eta_A \norm{\bDelta_V}_\mu^2 \right)^2
      \norm{ \bDelta_{\A} }_\mu^2 
      + \left( \frac{\eta_A }{K_Q} \inprod{ \h_{\A, \tau} }{\bQ}_\mu \right)^2 K_Q 
      + \eta_A^2 \norm{ \h_{\A} }_\mu^2  \\
      &\qquad
      - 2 \eta_A \left( 1 - \eta_A K_P \alpha_V^2  - \eta_A \norm{\bDelta_V}_\mu^2 \right)
      \inprod{ \bDelta_{\A} }{ \h_{\A}}_\mu \\
      &\qquad
      - 2 \frac{\eta_A^2}{K_Q} \inprod{ \h_{\A, \tau} }{\bQ}_\mu \inprod{ \bQ }{ \h_{\A} }_\mu 
      \\
    &= \left( 1 - \eta_A K_P \alpha_V^2  - \eta_A \norm{\bDelta_V}_\mu^2 \right)^2 \norm{ \bDelta_{\A} }_\mu^2 
      - 2 \eta_A \left( 1 - \eta_A K_P \alpha_V^2  - \eta_A \norm{\bDelta_V}_\mu^2 \right) 
        \inprod{ \bDelta_{\A} }{ \h_{\A}}_\mu \\
      &\qquad
      + \eta_A^2 \norm{ \h_{\A} }_\mu^2  
      - \frac{\eta_A^2 }{K_Q} \inprod{ \h_{\A, \tau} }{\bQ}_\mu^2.  
  \end{align*}
  Now, consider $\bDelta_{\V}$. Similar to the previous calculation, we have 
  \begin{align*}
    \bDelta_{\V, \tau+1}
    &= \bDelta_{\V} 
      - \eta_V \left( \alpha_A^2 K_Q + \frac{1}{T} \right) \bDelta_{\V} 
      - \eta_V \norm{\bDelta_A}^2 \bDelta_{\V} 
      - \eta_V \h_{\V} \\
      &\qquad
      - \eta_V \frac{\alpha_V - 1}{T} \left( \bP - \bmu\One\trans \right)
      - \eta_V \norm{\bDelta_A}^2 \left( \tilde\V - \bmu\One\trans \right) 
       \\
      &\qquad
      - \left(
        \eta_V \frac{1 - \alpha_V}{T} 
        - \eta_V \alpha_V \norm{\bDelta_A}^2 
        - \frac{\eta_V }{K_P} \inprod{ \h_{\V, \tau} }{\bP}_\mu
      \right) \left( \bP - \bmu\One\trans \right) \\
    &= \left( 
        1 
        - \eta_V \left( \alpha_A^2 K_Q + \frac{1}{T} \right)  
        - \eta_V \norm{\bDelta_A}^2
      \right) \bDelta_{\V} 
    \\
      &\qquad
      + \frac{\eta_V }{K_P} \inprod{ \h_{\V, \tau} }{\bP}_\mu \left( \bP - \bmu\One\trans \right) 
      - \eta_V \h_{\V}. 
  \end{align*}
  Again, note that $\bDelta_{\V} \perp_\mu \bP - \bmu\One\trans$, $\norm{\bP-\bmu\One\trans}_\mu^2 = K_P$, 
  and $\inprod{\bmu\One\trans}{\h_{\V}}_\mu = 0$. Hence, we have 
  \begin{align*}
    \norm{ \bDelta_{\V, \tau+1} }_\mu^2 
    &= \left( 
        1 
        - \eta_V \left( \alpha_A^2 K_Q + \frac{1}{T} \right)  
        - \eta_V \norm{\bDelta_A}^2
      \right)^2
      \norm{ \bDelta_{\V} }_\mu^2
      \\
      &\qquad
      + 2 \eta_V  
        \left( 
          1 
          - \eta_V \left( \alpha_A^2 K_Q + \frac{1}{T} \right)  
          - \eta_V \norm{\bDelta_A}^2
        \right)
      \inprod{ \bDelta_{\V} }{ \h_{\V} }_\mu \\
      &\qquad
      - \frac{\eta_V^2 }{K_P}  \inprod{ \bP }{ \h_{\V} }_\mu^2
      + \eta_V^2 \norm{ \h_{\V} }_\mu^2. 
  \end{align*}
\end{proof}

\section{Stage 1: signal boosting}

In this section, we assume both $\alpha_{V}$ and $\alpha_{A}$ are close to $0$. In this case, we can 
approximate Lemma~\ref{lemma: dynamics of the population projection} with 
\begin{align*}
    \alpha_{V, \tau+1}
    &\approx \alpha_{V, \tau} 
    + \eta \alpha_A 
    + \eta \frac{1}{T K_Q} 
    - \eta \alpha_V  \frac{\norm{\bDelta_A}_\mu^2}{K_Q}
    - \eta \frac{\inprod{\h_{\V}}{\bP}_\mu}{K_Q K_P}  , \\
    \alpha_{A, \tau+1} 
    &\approx \alpha_{A, \tau} 
    + \eta  \alpha_V 
    - \eta \alpha_A \frac{\norm{\bDelta_V}_\mu^2}{K_P} 
    - \eta  \frac{\inprod{\h_{\A}}{\bQ}_\mu}{K_P K_Q} . 
\end{align*}
We can also write this matrix form as 
\begin{align*}
  \begin{bmatrix} \alpha_{V, \tau+1} \\ \alpha_{A, \tau+1} \end{bmatrix}
  &\approx \begin{bmatrix} \alpha_{V, \tau} \\ \alpha_{A, \tau} \end{bmatrix}  
    + \eta \begin{bmatrix}
        - \norm{\bDelta_A}_\mu^2 / K_Q & 1 \\
        1 & - \norm{\bDelta_V}_\mu^2 / K_P
      \end{bmatrix}
      \begin{bmatrix} \alpha_{V, \tau} \\ \alpha_{A, \tau} \end{bmatrix}   \\
    &\qquad
    + \eta \begin{bmatrix} 1 / (T K_Q)  \\ 0 \end{bmatrix}
    - \eta \begin{bmatrix}
      \alpha_A \norm{\bDelta_V}_\mu^2 / (K_P K_Q) \\ 
      \inprod{\h_{\A}}{\bQ}_\mu / (K_P K_Q). 
    \end{bmatrix}
\end{align*}
Suppose that $\norm{\bDelta_A}_\mu^2 / K_Q$ and $\norm{\bDelta_V}_\mu^2 / K_P$ are both bounded by $\delta^2$. Then, 
we have 
\begin{equation}
  \label{eq: stage 1: approx}
  \begin{aligned}
    \alpha_{V, \tau+1} + \alpha_{A, \tau+1}
    &\gtrapprox 
      ( 1 + \eta - 2 \delta^2 \eta ) \left( \alpha_{V, \tau} + \alpha_{A, \tau} \right) + \eta \frac{1}{T K_Q}  \\
      &\qquad
      - \eta \frac{\inprod{\h_{\V}}{\bP}_\mu}{K_Q K_P} 
      - \eta  \frac{\inprod{\h_{\A}}{\bQ}_\mu}{K_P K_Q}.  
  \end{aligned}
\end{equation}
As long as $\delta \ll 1$ and we choose a sufficiently large batch size so that the second line is bounded by 
$\eta / (2 T K_Q) $, $\alpha_V + \alpha_A$ grows exponentially fast. Similarly, one can also bound the difference 
between $\alpha_V$ and $\alpha_A$. Formally, we have the following lemma. 

\begin{lemma}[Main result of Stage~1]
  \label{lemma: stage 1: main result}
  Define the end of Stage~1 as 
  \[
    \cT_1
    := \min\braces{ \tau \ge 0 \,:\, \max\{\alpha_{V, \tau}, \alpha_{A, \tau}\} \ge \min\left\{\frac{1}{2},\frac{\Theta(1)}{QN}\right\} }. 
  \]
  Suppose $\eta\le 1/10, \eta_{V, \tau} = \eta / K_Q$, and $\eta_{A, \tau} = \eta / K_P$ for some $\eta \le \min\{K_P, K_Q\}$. 
  Let $B_\tau > 0$ be the number fresh samples we use at step $\tau$. Suppose that $B_\tau\ge \tilde{O}\left(\frac{T^2Q^4N^5}{\min\{K_P^3,K_Q^3\}}\right)$ are chosen s.t. with probability $1-\delta_\tau$
  \begin{gather}
    \max\braces{
      \left| \frac{\inprod{\h_{\V}}{\bP}_\mu}{K_Q K_P} \right|, 
      \left| \frac{\inprod{\h_{\A}}{\bQ}_\mu}{K_P K_Q} \right|
    }
    \le \frac{1}{4 T K_Q}, 
    \label{eq: stage 1: inprod hV, PQ <= 1/T} \\
    \max\left\{\norm{\h_{V, \tau}}_\mu, \norm{\h_{A, \tau}}_\mu\right\}\le \frac{\Theta(1) \min\left\{K_Q^{1/2},K_P^{1/2},\frac{1}{\sqrt{QN^{1/2}}}\right\}\min\{K_{Q}^{},K_{P}^{}\}}{T\log TK_{Q}}
    \label{eq: stage 1: h / delta <= 1/logT}
  \end{gather}

  Then, the following hold with probability at least $1 - \delta_P$: 
  \begin{enumerate}[(a)]
    \item $\cT_1 \le \Theta(1) \log( T K_Q) / \eta$. 
    \item Throughout Stage~1, $\norm{\bDelta_A}^2_\mu \le \Delta/T$ and $\norm{\bDelta_V}^2_\mu \le \Delta/T$, where $$\Delta := \min\left\{\frac{1}{4}K_Q,\frac{1}{4}K_P,\frac{\Theta(1)}{Q^4N^3}\right\}.$$ 
    \item At $\cT_1$, we have $\alpha_V + \alpha_A = \Theta(\frac{1}{QN})$ and 
      $| \alpha_V - \alpha_A | \le \frac{\Theta(1)}{TK_Q}$. 
    \item For all $\tau\le \cT_1$, we have $\|\V\|_\infty\le \Theta(1), \|\A\|_\infty\le \Theta(1/Q).$
  \end{enumerate}
\end{lemma}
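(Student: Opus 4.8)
The plan is to run an induction on $\tau$ that simultaneously propagates (b), (d), a one-sided signal bound $0 \le \alpha_{V,\tau}, \alpha_{A,\tau} \le \Theta(1)/(QN)$, and the gap bound $|\alpha_{V,\tau} - \alpha_{A,\tau}| \le \Theta(1)/(TK_Q)$, for all $\tau \le \cT_1$; items (a) and (c) are then exactly what holds at the stopping time. The base case $\tau = 0$ is immediate from $\V_0 = \bmu\One\trans$, $\A_0 = \One\One\trans/T$, which give $\alpha_{V,0} = \alpha_{A,0} = 0$ and $\bDelta_{V,0} = \bDelta_{A,0} = 0$. For the inductive step I would first note that (b) forces $\|\bDelta_A\|_F^2, \|\bDelta_V\|_F^2 \le CN\Delta/T = O(1/T)$ and (d) forces $\|\Vect\V\|_\infty = O(1)$, so we are in the regime of Lemma~\ref{lemma: concentration: gradient infinity norm}; the stated choice of $B_\tau$ then makes the noise conditions \eqref{eq: stage 1: inprod hV, PQ <= 1/T} and \eqref{eq: stage 1: h / delta <= 1/logT} hold at step $\tau$ with probability $1 - \delta_\tau$ via Lemmas~\ref{lemma: concentration first order terms} and~\ref{lemma: concentration second order terms}. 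I would union-bound these events over the $\Theta(\log(TK_Q)/\eta)$ steps of the stage and fold the total failure probability into $\delta_P$; everything below is on this good event.

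For the signal I would insert $\norm{\bDelta_A}_\mu^2/K_Q, \norm{\bDelta_V}_\mu^2/K_P \le \delta^2 := \Delta/\min\{TK_P, TK_Q\} \ll 1$ and \eqref{eq: stage 1: inprod hV, PQ <= 1/T} into the linearized recursion \eqref{eq: stage 1: approx}, obtaining $\alpha_{V,\tau+1} + \alpha_{A,\tau+1} \ge (1 + \eta/2)(\alpha_{V,\tau} + \alpha_{A,\tau}) + \eta/(2TK_Q)$. Hence after the first step the sum is $\ge \eta/(2TK_Q) > 0$ and then grows geometrically, reaching $\Theta(1)/(QN)$ within $\Theta(1)\log(TK_Q)/\eta$ steps ($T \ge (NQ)^{10}$ and $K_Q \gtrsim 1/Q$ absorb lower-order logarithms), which is (a); since one step multiplies the sum by at most $1 + \eta \le 1.1$ the overshoot is a constant factor, so $\alpha_{V,\cT_1} + \alpha_{A,\cT_1} = \Theta(1)/(QN)$, while the stopping rule makes this sum $\ge \max\{\alpha_{V,\cT_1}, \alpha_{A,\cT_1}\} \ge \Theta(1)/(QN)$, giving the matching lower bound. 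Nonnegativity of $\alpha_V, \alpha_A$ throughout is because the positive drift and cross terms dominate the noise. For the gap I would subtract the two lines of Lemma~\ref{lemma: dynamics of the population projection}: using smallness of $\alpha_V, \alpha_A$, $\norm{\bDelta_A}_\mu^2$, $\norm{\bDelta_V}_\mu^2$, and the noise bounds, $d_\tau := \alpha_{V,\tau} - \alpha_{A,\tau}$ obeys $d_{\tau+1} = (1 - \eta(1-\alpha_A\alpha_V)) d_\tau + \eta(1-\alpha_V)/(TK_Q) + r_\tau$ with $|r_\tau| \le \eta\,\Theta(1)/(TK_Q)$; a contraction with drift $O(\eta/(TK_Q))$ and $d_0 = 0$ keeps $|d_\tau| \le \Theta(1)/(TK_Q)$.

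For the error (b) I would feed Lemma~\ref{lemma: dynamics of the errors}: in Stage~1 the subtracted pieces of $1 - \eta_A K_P\alpha_V^2 - \eta_A\norm{\bDelta_V}_\mu^2$ are each $\le 1$, so that factor lies in $[0,1]$ and its square is $\le 1$; dropping the negative $-\frac{\eta_A^2}{K_Q}\inprod{\h_A}{\bQ}_\mu^2$ term and bounding the cross term by Cauchy--Schwarz in $\norm{\cdot}_\mu$ gives $\norm{\bDelta_{A,\tau+1}}_\mu^2 \le \bigl(\norm{\bDelta_{A,\tau}}_\mu + \eta_A\norm{\h_A}_\mu\bigr)^2$, hence $\norm{\bDelta_{A,\tau+1}}_\mu \le \norm{\bDelta_{A,\tau}}_\mu + \eta_A\norm{\h_A}_\mu$. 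Telescoping over $\cT_1 \le \Theta(1)\log(TK_Q)/\eta$ steps with $\eta_A = \eta/K_P$ and \eqref{eq: stage 1: h / delta <= 1/logT} yields $\norm{\bDelta_{A,\cT_1}}_\mu \le \cT_1(\eta/K_P)\max_\tau\norm{\h_{A,\tau}}_\mu \le \sqrt{\Delta/T}$ (this is where the extra $1/T$-versus-$1/\sqrt{T}$ margin in \eqref{eq: stage 1: h / delta <= 1/logT} and $T \ge (NQ)^{10}$ are used), and the same computation with $\eta_V = \eta/K_Q$ bounds $\norm{\bDelta_V}_\mu^2$. For (d) I would write $\V_{\tau+1} = \alpha_V\bP + (1-\alpha_V)\bmu\One\trans + \bDelta_{V,\tau+1}$ and use $\|\bP\|_\infty, \|\bmu\One\trans\|_\infty \le 1$, $0 \le \alpha_V \le 1$, and $\|\Vect\bDelta_{V,\tau+1}\|_\infty \le \|\bDelta_{V,\tau+1}\|_F \le \sqrt{CN\Delta/T} = o(1)$ to conclude $\|\V_{\tau+1}\|_\infty \le \Theta(1)$; likewise $\A_{\tau+1} = \alpha_A\bQ + (1-\alpha_A)\One\One\trans/T + \bDelta_{A,\tau+1}$ with $\|\bQ\|_\infty \le C/Q$, $\alpha_A \le 1$, and $\|\Vect\bDelta_{A,\tau+1}\|_\infty = o(1/Q)$ (using $\Delta \le \Theta(1)/(Q^4N^3)$ and $T \ge (NQ)^{10}$) gives $\|\A_{\tau+1}\|_\infty \le \Theta(1/Q)$. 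This closes the induction, modulo the mild bootstrap that the $\cT_1$-step count used in the telescope is itself the consequence of geometric growth, which is valid as long as the invariants hold.

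The step I expect to be the main obstacle is the error propagation (b). During Stage~1 the damping coefficient $\eta_A K_P\alpha_V^2$ is negligible, so essentially no contraction pulls $\norm{\bDelta}_\mu$ back down; the only lever is to drive the per-step $\mu$-norm gradient noise below roughly $\sqrt{\Delta/T}$ divided by the $\Theta(\log(TK_Q)/\eta)$ steps of the stage, which is exactly what forces the large sample requirement $B_\tau = \tilde{O}(T^2Q^4N^5/\min\{K_P^3,K_Q^3\})$, and then to verify that all these small-noise events (plus the first-order ones in \eqref{eq: stage 1: inprod hV, PQ <= 1/T}) survive a union bound over the whole stage while the decomposition-based $\ell_\infty$ and Frobenius bounds keep us inside the hypotheses of the concentration lemmas --- i.e., managing the joint bootstrap among the signal size, the approximation error, and the concentration conditions.
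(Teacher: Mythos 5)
Your proposal is correct and follows essentially the same route as the paper's proof: an induction that jointly maintains the error bound (b), the $\ell_\infty$ bounds (d), and the signal/gap bounds, using the population-projection dynamics for the geometric growth of $\alpha_V+\alpha_A$ and the contraction of $\alpha_V-\alpha_A$, telescoping the error recursion from Lemma~\ref{lemma: dynamics of the errors} over the $\Theta(\log(TK_Q)/\eta)$ steps, and closing the loop with the concentration lemmas and a union bound over the stage. The only cosmetic difference is that you telescope $\norm{\bDelta}_\mu$ linearly rather than the squared increments as the paper does, which yields the same bound.
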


The proof of this lemma is a large induction argument. We will first assume the bounds on $\norm{\bDelta_A}_\mu$
and $\norm{\bDelta_V}_\mu$ are true, so that the approximation \eqref{eq: stage 1: approx} is valid. This will give 
us an upper bound on the length of Stage~1. Then, we show that within this many steps, the errors cannot exceed 
the given maximum values. Thus, the induction hypotheses are true and Lemma~\ref{lemma: stage 1: main result} can 
be established. 

\subsubsection*{Part I of the proof of Lemma~\ref{lemma: stage 1: main result}: Signal growth rate}
\begin{proof}
  Since $\alpha_V \alpha_A \le \frac14$ by definition of Stage I, we can 
  rewrite Lemma~\ref{lemma: dynamics of the population projection} as 
  \begin{align*}
    \alpha_{V, \tau+1} + \alpha_{A, \tau+1}
    &\ge \left( 1 + 3\eta/4 \right) 
      \left( \alpha_{V, \tau} + \alpha_{A, \tau} \right) 
      + \eta \frac{3}{4 T K_Q} \\
      &\qquad
      - \eta \alpha_V  \frac{\norm{\bDelta_A}_\mu^2}{K_Q}
      - \eta \alpha_A \frac{\norm{\bDelta_V}_\mu^2}{K_P}
      - \eta \frac{\inprod{\h_{\V}}{\bP}_\mu}{K_Q K_P} 
      - \eta \frac{\inprod{\h_{\A}}{\bQ}_\mu}{K_P K_Q} \\
    &\ge \left( 1 + 3\eta/4 \right) 
    \left( \alpha_{V, \tau} + \alpha_{A, \tau} \right) 
    + \eta \frac{3}{4 T K_Q} 
    - \frac14\eta \alpha_V  
    - \frac14\eta \alpha_A - \frac{1}{4TK_{Q}}\times 2 \\
    &\ge \left( 1 + \frac{\eta}{2} \right) 
      \left( \alpha_{V, \tau} + \alpha_{A, \tau} \right) 
      + \eta \frac{1}{4 T K_Q}, 
  \end{align*}
  where the second line comes from induction hypothesis (b) and \eqref{eq: stage 1: inprod hV, PQ <= 1/T}. 
  Recursively expand the RHS, and we obtain 
  \[ 
    \alpha_{V, \tau} + \alpha_{A, \tau} 
    \ge \left( 1 + \frac{\eta}{2} \right)^\tau
      \left( \alpha_{V, 0} + \alpha_{A, 0} + \frac{1}{2 T K_Q} \right) 
      - \frac{1}{2 T K_Q}
    = \left( \left( 1 + \frac{\eta}{2} \right)^\tau - 1 \right) \frac{1}{2 T K_Q}. 
  \] 
  Since the RHS is upper bounded by 1 by definition of stage I, and we obtain 
  \[
    \cT_1 \le \Theta(1) \frac{\log(T K_Q)}{\eta}. 
  \]
\end{proof}

\subsubsection*{Part II of the proof of Lemma~\ref{lemma: stage 1: main result}: Upper bounds on $\norm{\bDelta}_\mu$}
\begin{proof}
  Recall from Lemma~\ref{lemma: dynamics of the errors} that 
  \begin{align*}
    \norm{\bDelta_{\V, \tau+1}}_\mu^2 
    &\le \norm{ \bDelta_{\V, \tau} }_\mu^2 
      + 2 \eta_{V} \norm{ \bDelta_{\V, \tau} }_\mu \norm{ \h_{\V, \tau} }_\mu 
      + \eta_{V}^2 \norm{ \h_{\V, \tau} }_\mu^2, \\
    \norm{\bDelta_{\A, \tau+1}}_\mu^2 
    &\le \norm{ \bDelta_{\A, \tau} }_\mu^2 
      + 2 \eta_{A} \norm{\bDelta_{\A, \tau}}_\mu \norm{\h_{\A, \tau}}_\mu 
      + \eta_{A}^2 \norm{ \h_{\A, \tau} }_\mu^2.   
  \end{align*}
  By part I of the proof, Stage~1 takes at most $\Theta(1) \log(T K_Q) / \eta$ steps. Hence, it suffices to 
  bound the increase of these $\norm{\bDelta}_\mu^2$ in this many steps. Recall the notation $\Delta = \min\left\{\frac{1}{4}K_Q,\frac{1}{4}K_P,\frac{\Theta(1)}{QN^{1/2}}\right\}$. By induction hypothesis (b), we have for all $\tau$,
  \begin{align*}
    \norm{\bDelta_{\V, \tau+1}}_\mu^2 
    &\le \norm{ \bDelta_{\V, \tau} }_\mu^2 
      + 2 \eta_{V} \norm{ \bDelta_{\V, \tau} }_\mu \norm{ \h_{\V, \tau} }_\mu 
      + \eta_{V}^2 \norm{ \h_{\V, \tau} }_\mu^2\\
    &\le \norm{ \bDelta_{\V, \tau} }_\mu^2+ 2\eta_{V} {\sqrt{\Delta/T}}\norm{ \h_{\V, \tau} }_\mu 
    + \eta_{V}^2 \norm{ \h_{\V, \tau} }_\mu^2\\
    &\le \norm{ \bDelta_{\V, \tau} }_\mu^2 + \frac{\Theta(1)\eta \sqrt{\Delta/T}}{K_{Q}}\cdot\frac{\sqrt{\Delta/T}\min\{K_{P},K_{Q}\}}{\log(TK_{Q})} + \eta^2 \frac{\Theta(1)\Delta\cdot\min\{K_{P}^{2},K_{Q}^{2}\}}{TK_{Q}^2\log(TK_{Q})^2}\\
    \norm{\bDelta_{\A, \tau+1}}_\mu^2 
    &\le \norm{ \bDelta_{\A, \tau} }_\mu^2 
      + 2 \eta_{A} \norm{\bDelta_{\A, \tau}}_\mu \norm{\h_{\A, \tau}}_\mu 
      + \eta_{A}^2 \norm{ \h_{\A, \tau} }_\mu^2\\
    &\le \norm{ \bDelta_{\A, \tau} }_\mu^2+ 2\eta_{A} {\sqrt{\Delta/T}}\norm{ \h_{\A, \tau} }_\mu + \eta_{A}^2 \norm{ \h_{\A, \tau} }_\mu^2\\
    &\le \norm{ \bDelta_{\A, \tau} }_\mu^2 + \frac{\Theta(1)\eta \sqrt{\Delta/T}}{K_{P}}\cdot\frac{\sqrt{\Delta/T}\min\{K_{P},K_{Q}\}}{\log(TK_{Q})} + \eta^2 \frac{\Theta(1)\Delta\cdot\min\{K_{P}^{2},K_{Q}^{2}\}}{TK_{P}^2\log(TK_{Q})^2}
  \end{align*}
  Since Stage I at most takes $\cT_1=\Theta(1) \log(T K_Q) / \eta$ steps, the increase of $\norm{\bDelta_{\A,\tau}}_\mu^2$ and $\norm{\bDelta_{\A,\tau}}_\mu^2$ in $\tau\le \cT_1$ are at most (since $\bDelta_{\A,0}=0,\bDelta_{\V,0}=0$):
    \begin{align*}
        \norm{\bDelta_{\V, t}}_\mu^2 &\le \left(\frac{\Theta(1)\eta \sqrt{\Delta/T}}{K_{Q}}\cdot\frac{\sqrt{\Delta/T}\min\{K_{P},K_{Q}\}}{\log(TK_{Q})} + \eta^2 \frac{\Theta(1)\Delta\cdot\min\{K_{P}^{2},K_{Q}^{2}\}}{TK_{Q}^2\log(TK_{Q})}\right)t \\
        &\le \left(\frac{\Theta(1)\eta \sqrt{\Delta/T}}{K_{Q}}\cdot\frac{\sqrt{\Delta/T}\min\{K_{P},K_{Q}\}}{\log(TK_{Q})^2} + \eta^2 \frac{\Theta(1)\Delta\cdot\min\{K_{P}^{2},K_{Q}^{2}\}}{TK_{Q}^2\log(TK_{Q})^2}\right)\cT_1\le \Delta/T,\\
        \norm{\bDelta_{\A, t}}_\mu^2 &\le \left(\frac{\Theta(1)\eta \sqrt{\Delta/T}}{K_{P}}\cdot\frac{\sqrt{\Delta/T}\min\{K_{P},K_{Q}\}}{\log(TK_{Q})} + \eta^2 \frac{\Theta(1)\Delta\cdot\min\{K_{P}^{2},K_{Q}^{2}\}}{TK_{P}^2\log(TK_{Q})^2}\right)t \\
        &\le \left(\frac{\Theta(1)\eta \sqrt{\Delta/T}}{K_{P}}\cdot\frac{\sqrt{\Delta/T}\min\{K_{P},K_{Q}\}}{\log(TK_{Q})} + \eta^2 \frac{\Theta(1)\Delta\cdot\min\{K_{P}^{2},K_{Q}^{2}\}}{TK_{P}^2\log(TK_{Q})^2}\right)\cT_1\le \Delta/T.
    \end{align*}
    Therefore, we completed the induction for the error terms.
\end{proof}

\subsubsection*{Part III of the proof of Lemma~\ref{lemma: stage 1: main result}: Ending state}

\begin{proof}
  First, consider the distance between $\alpha_V$ and $\alpha_A$. Similar to the part I of the proof, we rewrite 
  Lemma~\ref{lemma: dynamics of the population projection} as 
  \begin{align*}
    \alpha_{V, \tau+1} - \alpha_{A, \tau+1}
    &= - \left( 1 - \eta \left( 1 - \alpha_V \alpha_A \right) \right) 
      \left( \alpha_{V, \tau} - \alpha_{A, \tau} \right) 
      + \eta \frac{1 - \alpha_V}{T K_Q} \\
      &\qquad
      - \eta \alpha_V  \frac{\norm{\bDelta_A}_\mu^2}{K_Q}
      + \eta \alpha_A \frac{\norm{\bDelta_V}_\mu^2}{K_P}
      - \eta \frac{\inprod{\h_{\V}}{\bP}_\mu}{K_Q K_P} 
      + \eta  \frac{\inprod{\h_{\A}}{\bQ}_\mu}{K_P K_Q} \\
    &= - \left( 1 - \eta/2 \right) \left( \alpha_{V, \tau} - \alpha_{A, \tau} \right) 
      \pm O\left( \eta \frac{1}{T K_Q} \right). 
  \end{align*}
  Thus, whenever $\alpha_{V, \tau} - \alpha_{A, \tau} \ge \Omega(  1 / (T K_Q) )$, it will start to decrease. 
  Since the amount of increase at each step is also upper bounded by $O(1 / (T K_Q))$, this implies 
  $| \alpha_{V, \tau} - \alpha_{A, \tau} | \le O(1 / (TK_Q))$. The other direction can be proved in the same way. 

  Finally, we bound the possible amount of overshot. By the part I of the proof, we can also upper bound the signal term growth
  \[
    \alpha_{V, \tau+1} + \alpha_{V, \tau+1} 
    \le (1 + 2 \eta) \left( \alpha_{V, \tau} + \alpha_{A, \tau} \right) + O\left( \frac{\eta}{T K_Q} \right). 
  \]
  Since $\alpha_{V, \tau} + \alpha_{A, \tau} \le 1$ for all $\tau \le \cT_1$, we have 
  $\alpha_V + \alpha_A = \Theta(\frac{1}{QN})$ at time $\cT_1$.
\end{proof}
\subsubsection*{Part IV of the proof of Lemma~\ref{lemma: stage 1: main result}: Upper bound on Infinity norm of $\V$ and $\A$}
Here we consider the upper bound of the weights $\V$ and $\A$, which can be used in the concentration section below.
\begin{proof}
    First, we upper bound the infinity norm of $\V_\tau$. 
    \begin{align*}
        \|\V_\tau\|_\infty &= \|\tilde{\V}_\tau+\bDelta_{\V}\|_\infty\le \norm{\tilde{\V}_\tau}_\infty + \norm{\bDelta_{\V}}_\infty\le\norm{\tilde{\V}_\tau}_\infty + \norm{\bDelta_{\V}}_F.
    \end{align*}
    and we can upper bound $\norm{\bDelta_{\V}}_F$ by its $\mu$-norm:
    \begin{align*}
        \norm{\bDelta_{\V}}_{\mu}^2 = \langle \bDelta_{\V}, \bDelta_{\V}\diag(\mu)\rangle\ge \frac{c}{N}\norm{\bDelta_{\V}}_F^2.
    \end{align*}
    Thus we have
   $\|\V_\tau\|_\infty \le\norm{\tilde{\V}_\tau}_\infty +\Theta(1)\sqrt{N} \norm{\bDelta_{\V}}_\mu\le \norm{\tilde{\V}_\tau}_\infty + \Theta(\frac{1}{Q^2N^{}})$ by Induction hypothesis (b). And we can further bound $\norm{\tilde{\V}_\tau}_\infty$:
   \begin{align*}
       \norm{\tilde{\V}_\tau}_\infty = \norm{\alpha_{V,\tau}\bP + (1- \alpha_{V,\tau})\mu\One^\top}_\infty \leq \|\bP\|_\infty + \|\mu\One^\top\|_\infty \leq \Theta(1).
    \end{align*}
    Therefore, we have $\|\V_\tau\|_\infty\le \Theta(1).$

    Similarly, for $\bA$ we have (since $\norm{\tilde{\A}_\tau}_\infty=\norm{\One_T\One_N\trans/T 
  + \alpha_{A,\tau} (\bQ - \One_T\One_N\trans/T)}_\infty \le C/Q$.)
    \begin{align*}
        \|\A_\tau\|_\infty &= \|\tilde{\A}_\tau+\bDelta_{\A}\|_\infty\le \norm{\tilde{\A}_\tau}_\infty + \norm{\bDelta_{\A}}_\infty\le\norm{\tilde{\A}_\tau}_\infty + \norm{\bDelta_{\A}}_F\le \Theta(1/Q).
    \end{align*}

\end{proof}

\subsubsection*{Part V of the proof of Lemma~\ref{lemma: stage 1: main result}: Concentration}
Finally, we need to ensure that with high probability, all the error terms $\h$ cannot exceed the given bounds throughout $t\leq \cT_1$. We use Lemma~\ref{lemma: concentration first order terms} and Lemma~\ref{lemma: concentration second order terms} to bound the concentration of the error terms.

By Lemma~\ref{lemma: concentration second order terms} and union bound, we have that if $B_\tau\geq \frac{\Theta(1)\cT_1 \cdot T^2Q^4N^5\cT_1\log^2(TK_Q)}{\delta_\tau \min\{K_P^3,K_Q^{3},1\}}$, then with probability at least $1-\delta_\tau/2$, the following holds for all $t\leq \cT_1$:
$$\max\left\{\norm{\h_{V, \tau}}_\mu, \norm{\h_{A, \tau}}_\mu\right\}\le \frac{\Theta(1) \min\left\{K_Q^{1/2},K_P^{1/2},\frac{1}{{Q^2N^{3/2}}}\right\}\min\{K_{Q}^{},K_{P}^{}\}}{\sqrt{T}\log TK_{Q}}$$
By Lemma~\ref{lemma: concentration first order terms} and union bound, we have that if $B_\tau\geq \frac{\Theta(1)\cT_1\cdot T^2\max\{N^2,Q^2\}N^4\log\left(\frac{16\log(TK_Q)}{\delta_\tau\eta}\right)}{K_P^2 }$, then with probability at least $1-\delta_\tau/2$, the following holds for all $t\leq \cT_1$:
\begin{align*}
    \left|\frac{\langle\h_{\V},\bP\rangle_\mu}{K_{P}K_{Q}}\right|\leq \frac{1}{4TK_Q}, \ \ \left|\frac{\langle\h_{\A},\bQ\rangle_\mu}{K_{P}K_{Q}}\right| \leq \frac{1}{4TK_Q}.
\end{align*}

And by union bound, we have that with probability at least $1-\delta_\tau$, all the bounds above hold for all $t\leq \cT_1$. Therefore, we conclude the proof of Lemma~\ref{lemma: stage 1: main result}.

\newpage

\section{Stage 2: learning the model}
\label{appendix: stage 2: learning the model}

In this Stage~2, we use a positive $\lambda$ for the $\ell_1$-regularization. In this case, we can write the update rule 
of each $\a\ps{k}$ as 
\begin{equation}
  \label{eq: update rule, positive lambda}
  \begin{aligned}
    \a\ps{k, '}_{\tau+1}
    &= \a\ps{k}_\tau - \frac{\eta_A}{\mu_k} \nabla\ps{B_\tau}_{\a\ps{k}} l, 
      && \text{(gradient descent step)}, \\
    a\ps{k, ''}_{\tau+1, t}
    &= \begin{cases}
      a\ps{k, '}_{\tau+1, t} - \lambda, & \text{if } a\ps{k, '}_{\tau+1, t} \ge \lambda, \\
      0, & \text{if } \left| a\ps{k, '}_{\tau+1, t} \right| \le \lambda, 
    \end{cases}
      && \text{(proximal step)}, \\
    \a\ps{k}_{\tau+1}
    &= \a\ps{k, ''}_{\tau+1} + \left( 1 - \One\trans\a\ps{k, ''}_{\tau+1} \right) \frac{\One}{T}, 
      && \text{(projection step)}. 
  \end{aligned}
\end{equation}
For notational simplicity, we define 
\[
  \g\ps{k}_\tau 
  := - \eta_\tau\inv (\a\ps{k}_{\tau+1} - \a\ps{k}_\tau). 
\]
We further define $\G_{\lambda,\tau} = - \eta_A\inv 
( \A_{\tau+1} - \A_\tau )$ as the full gradient of the matrix $\A$.

First, we will show that with appropriate rounding at the beginning of Stage~2, we can ensure $\g_\tau\ps{k} \approx 
\eta \E\hat\nabla_{\a\ps{k}} l$ using only $B_\tau \propto \log(T)$ fresh samples at each step (Section~\ref{sec: stage 2: 
rounding and gradient denoising}).

\subsection{Rounding and gradient denoising}
\label{sec: stage 2: rounding and gradient denoising}

Recall that the first step of the Stage~2 is rounding each $\a\ps{k}$ by setting all small coordinates to $0$ and 
then projecting it back to the affine space the probability simplex lies. Since after Stage~1, there will be a 
separation between $a\ps{k}_t$ with $t \in \q\ps{k}$ and $t \notin \q\ps{k}$, this rounding step makes all $a\ps{k}_t$
with $t\notin \q\ps{k}$ have the same small value. 

Also recall that we use $\ell_1$-regularization and proximal gradients in the Stage~2. Effectively, the $\ell_1$-regularization
ensures those useless $a\ps{k}_t$ are always $0$ (before projection). Then, similar to the first rounding step, 
projection will again make then have the same small value. 

In this subsection, we formalize the above argument. We show that the rounding step can recover the support of 
$\q\ps{k}$, analyze its influence on $\alpha_A$ and the distance to the population subspace. Then, we analyze
the effect of use of the proximal gradients and show that with $\poly(N, Q)\log(T)$ fresh samples at each step, 
we can make sure the difference between update and the population update is small with high probability. 

\begin{lemma}[Separation between noise and signal]
  \label{lemma: stage 2: rounding threshold}
  Assume that at the beginning of Stage~2, we have 
  \[
    \alpha_V
    \ge \Theta\left( \frac{Q}{T} + Q\sqrt{N} \norm{\bDelta_{\A}}_\mu \right). 
  \]
  Then, we can choose a threshold $\theta_{\Tmp} = \Theta(\alpha_V / Q)$ s.t.~$a\ps{k}_t \le \theta_{\Tmp}$ iff 
  $q\ps{k}_t = 0$. 
\end{lemma}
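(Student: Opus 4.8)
The plan is to lean on the population-projection decomposition of Appendix~\ref{sec: population projection}: at the start of Stage~2 (i.e.\ the state produced by Stage~1, before the thresholding-projection) one can write $\A = \tilde\A + \bDelta_\A$ with $\tilde\A = \alpha_A\bQ + (1-\alpha_A)\One_T\One_N\trans/T$, so the $k$-th column reads $\a\ps{k} = \alpha_A\q\ps{k} + (1-\alpha_A)\One/T + \bdelta\ps{k}$, where $\bdelta\ps{k}$ is the $k$-th column of $\bDelta_\A$. I would then (i) sandwich the ``clean'' entry $\alpha_A q\ps{k}_t + (1-\alpha_A)/T$ according to whether $t$ is in the support of $\q\ps{k}$, (ii) convert the global bound on $\|\bDelta_\A\|_\mu$ into a coordinate-wise bound on $\bdelta\ps{k}$, and (iii) check that on the support the lower bound from (i) strictly exceeds, off the support, the upper bound from (i) plus the error from (ii) — which leaves a window into which $\theta_\Tmp$ can be dropped.

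For step (i): Lemma~\ref{lemma: stage 1: main result} gives $0 \le \alpha_A \le 1/2$ at the end of Stage~1, so $1-\alpha_A \in [1/2,1]$; when $q\ps{k}_t = 0$ the clean entry equals $(1-\alpha_A)/T \le 1/T$, and when $q\ps{k}_t \ne 0$, using $q\ps{k}_t \ge 1/(CQ)$ from Assumption~\ref{assumption: data-generating model}(b) and $\alpha_A \ge 0$, the clean entry is at least $\alpha_A q\ps{k}_t \ge \alpha_A/(CQ)$. For step (ii): from $\|\bDelta_\A\|_\mu^2 = \Tr(\bDelta_\A\diag(\bmu)\bDelta_\A\trans) = \sum_k \mu_k \|\bdelta\ps{k}\|^2$ and $\mu_k \ge 1/(CN)$ one gets $\|\bdelta\ps{k}\|_\infty \le \|\bdelta\ps{k}\|_2 \le \sqrt{CN}\,\|\bDelta_\A\|_\mu$ for every $k$, hence $|a\ps{k}_t - (\alpha_A q\ps{k}_t + (1-\alpha_A)/T)| \le \sqrt{CN}\,\|\bDelta_\A\|_\mu$ for all $(k,t)$.

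Putting these together, $a\ps{k}_t \le 1/T + \sqrt{CN}\,\|\bDelta_\A\|_\mu$ when $q\ps{k}_t = 0$ and $a\ps{k}_t \ge \alpha_A/(CQ) - \sqrt{CN}\,\|\bDelta_\A\|_\mu$ when $q\ps{k}_t \ne 0$. Lemma~\ref{lemma: stage 1: main result}(c) also provides $|\alpha_V - \alpha_A| \le \Theta(1/(TK_Q)) = O(Q/T)$ (since $K_Q \gtrsim 1/Q$), so in the regime of the hypothesis $\alpha_A = \Theta(\alpha_V)$. With a large enough absolute constant in the hypothesis $\alpha_V \ge \Theta(Q/T + Q\sqrt N\,\|\bDelta_\A\|_\mu)$, this forces $\alpha_A/(CQ) \ge 2/T + 2\sqrt{CN}\,\|\bDelta_\A\|_\mu$, so that $\theta_\Tmp := \alpha_A/(2CQ) = \Theta(\alpha_V/Q)$ satisfies $1/T + \sqrt{CN}\,\|\bDelta_\A\|_\mu \le \theta_\Tmp < \alpha_A/(CQ) - \sqrt{CN}\,\|\bDelta_\A\|_\mu$, which yields exactly the claimed equivalence $a\ps{k}_t \le \theta_\Tmp \iff q\ps{k}_t = 0$. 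The only place that needs care — and the reason the hypothesis carries a factor $Q\sqrt N$ rather than $Q$ in front of $\|\bDelta_\A\|_\mu$ — is the $\mu$-norm-to-$\ell_\infty$ passage in step (ii); beyond that there is no genuine obstacle, since the lemma is essentially a restatement of the Stage~1 error bound $\|\bDelta_\A\|_\mu^2 \le \Delta/T$ in the form needed to justify the thresholding-projection step with $\lambda_0 = \theta_\Tmp$.
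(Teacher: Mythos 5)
Your proposal is correct and follows essentially the same route as the paper's proof: decompose $\A = \tilde\A + \bDelta_\A$ via the population projection, convert the $\mu$-norm bound on $\bDelta_\A$ into an entrywise bound of order $\sqrt{N}\norm{\bDelta_\A}_\mu$ using $\mu_k \ge c/N$, and place the threshold in the resulting gap between on-support and off-support entries. The only (minor) difference is that you explicitly pass from $\alpha_A$ to $\alpha_V$ via Lemma~\ref{lemma: stage 1: main result}(c), a step the paper's proof glosses over by writing the on-support lower bound directly in terms of $\alpha_V$; your treatment is slightly more careful but not a different argument.
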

\begin{proof}
  Note that there exists some universal constant $c > 0$ such that $\q\ps{k}_t \ge c / Q$ for all nonzero $q\ps{k}_t$ 
  and $\mu_k \ge c / N$ for all $k \in [N]$. 
  Recall that the population process $\tilde{\A} = \alpha_A \bQ + (1 - \alpha_A) \One_T\One_N\trans / T$. 
  Hence, for all $k \in [N]$ and $t \in [T]$, 
  \begin{align*}
    \tilde{a}\ps{k}_t 
    \le 1/T, & \quad \text{if } t \notin \q\ps{k}, \\
    \tilde{a}\ps{k}_t 
    \ge c \alpha_V / Q, & \quad \text{if } t \in \q\ps{k}. 
  \end{align*}
  Then, note that 
  \begin{align*}
    \norm{\bDelta_{\A}}_\mu^2 
    &= \sum_{t=1}^T \sum_{k = 1}^{N} \left( a\ps{k}_t - \tilde{a}\ps{k}_t \right)^2 \mu_k \\
    &\ge \frac{c}{N} \sum_{t=1}^T \sum_{k = 1}^{N} \left( a\ps{k}_t - \tilde{a}\ps{k}_t \right)^2 
    \ge \frac{c}{N} \norm{ \Vect (\tilde{\A} - \A) }_2^2 
    \ge \frac{c}{N} \norm{ \Vect (\tilde{\A} - \A) }_\infty^2.  
  \end{align*}
  Hence, for any $k \in [N]$ and $t \in [T]$, we have $| a\ps{k}_t - \tilde{a}\ps{k}_t |
  \le \norm{\bDelta_{\A}}_\mu \sqrt{N / c}$. Combine these together, and we obtain 
  \begin{align*}
    a\ps{k}_t 
    \le 1/T + \norm{\bDelta_{\A}}_\mu \sqrt{N / c}, & \quad \text{if } t \notin \q\ps{k}, \\
    a\ps{k}_t 
    \ge c \alpha_V / Q - \norm{\bDelta_{\A}}_\mu \sqrt{N / c}, & \quad \text{if } t \in \q\ps{k}. 
  \end{align*}
  Hence, in order to get a separation, 
  \[ 
    1/T + \norm{\bDelta_{\A}}_\mu \sqrt{N / c}
    \le \frac{1}{2} \left( c \alpha_V / Q - \norm{\bDelta_{\A}}_\mu \sqrt{N / c} \right) 
    \quad\Leftarrow\quad 
    \frac{2Q}{c T} + \frac{3 Q\sqrt{N}}{c^{1.5}} \norm{\bDelta_{\A}}_\mu
    \le \alpha_V.  
  \] 
\end{proof}

\begin{lemma}[Effect of rounding]
  \label{lemma: stage 2: rounding effect}
  Under the conditions of Lemma~\ref{lemma: stage 2: rounding threshold}, choose $\lambda$ as in 
  Lemma~\ref{lemma: stage 2: rounding threshold}, and set 
  \begin{align*}
    \a\ps{k} 
    &\gets \left( \Id - \frac{\One\One\trans}{T} \right) \a\ps{k} \odot \left( \indi\{ a\ps{k}_t \ge \lambda \} \right)_{t=1}^T + \frac{\One}{T} \\
    &= \a\ps{k} \odot \left( \indi\{ a\ps{k}_t \ge \lambda \} \right)_{t=1}^T 
      + \left( 1 - \sum_{t \in \q\ps{k}} a\ps{k}_t \right)\frac{\One}{T}.  
  \end{align*} 
  We have $\alpha_A \gets \alpha_A + O(1)/T$ and $\norm{\bDelta_A}_\mu^2 \gets \norm{\bDelta_A}_\mu^2 + O(1)/T$. 
\end{lemma}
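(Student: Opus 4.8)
First I would note that the rounding step acts only on the columns of $\A$, so $\V$, $\alpha_V$ and $\bDelta_{\V}$ are untouched, and only $\alpha_A$ and $\norm{\bDelta_{\A}}_\mu^2$ need to be tracked. By Lemma~\ref{lemma: stage 2: rounding threshold}, with $\lambda$ chosen as there, the mask $m\ps{k}:=(\indi\{a\ps{k}_t\ge\lambda\})_t$ equals exactly the support indicator of $\q\ps{k}$. Moreover, the projection step in \eqref{eq: update of a} keeps $\One\trans\a\ps{k}=1$ throughout training (and the thresholding step here is built to preserve this), so, writing $\a\ps{k}_{\mrm{old}}$ for the column before rounding and defining its ``leaked mass'' $w\ps{k}:=\sum_{t\notin\supt\q\ps{k}}a\ps{k}_{\mrm{old},t}=1-\sum_{t\in\supt\q\ps{k}}a\ps{k}_{\mrm{old},t}$, the rounded column is $a\ps{k}_{\mrm{new},t}=a\ps{k}_{\mrm{old},t}+w\ps{k}/T$ on $\supt\q\ps{k}$ and $a\ps{k}_{\mrm{new},t}=w\ps{k}/T$ off it. This already verifies that the two displayed forms of the update coincide, since $(\Id-\One\One\trans/T)(\a\odot m)+\One/T=\a\odot m+\tfrac{1-\One\trans(\a\odot m)}{T}\One$ and $\One\trans(\a\ps{k}_{\mrm{old}}\odot m\ps{k})=\sum_{t\in\supt\q\ps{k}}a\ps{k}_{\mrm{old},t}$.

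Next I would bound $w\ps{k}$. Decomposing $a\ps{k}_{\mrm{old},t}=\tilde{a}\ps{k}_t+(a\ps{k}_{\mrm{old},t}-\tilde{a}\ps{k}_t)$ with $\tilde{a}\ps{k}_t=(1-\alpha_A)/T$ for $t\notin\supt\q\ps{k}$ and $\sum_{t\in\supt\q\ps{k}}\tilde{a}\ps{k}_t=\alpha_A+(1-\alpha_A)|\supt\q\ps{k}|/T$, one obtains $w\ps{k}=(1-\alpha_A)(1-|\supt\q\ps{k}|/T)-\sum_{t\in\supt\q\ps{k}}(a\ps{k}_{\mrm{old},t}-\tilde{a}\ps{k}_t)$; by Cauchy--Schwarz, $\norm{\bDelta_{\A}\ps{k}}_2\le\sqrt{N/c}\,\norm{\bDelta_{\A}}_\mu$, and the Stage-1 bound $\norm{\bDelta_{\A}}_\mu^2\le\Delta/T=O(1/T)$ (Lemma~\ref{lemma: stage 1: main result}(b)), the last sum is $O(\sqrt{QN}\,\norm{\bDelta_{\A}}_\mu)=o(1)$; together with $\sum_{t\in\supt\q\ps{k}}a\ps{k}_{\mrm{old},t}>0$ (positivity of the useful entries from Lemma~\ref{lemma: stage 2: rounding threshold}) this gives $w\ps{k}\in[-o(1),1]$, so $\abs{w\ps{k}}\le 2$.

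For the effect on $\alpha_A$, recall $\alpha_A=(\inprod{\A}{\bQ}_\mu-1/T)/K_Q$ (Lemma~\ref{lemma: population projection, def and basic results}). Since $\q\ps{k}$ is supported on $\supt\q\ps{k}$ with $\One\trans\q\ps{k}=1$, rounding changes $\inprod{\a\ps{k}}{\q\ps{k}}$ by exactly $(w\ps{k}/T)\sum_{t\in\supt\q\ps{k}}q\ps{k}_t=w\ps{k}/T$, hence $\inprod{\A}{\bQ}_\mu$ by $\tfrac1T\sum_k\mu_kw\ps{k}\in[-o(1/T),1/T]$, and therefore $\alpha_A\gets\alpha_A+\tfrac{1}{K_QT}\sum_k\mu_kw\ps{k}=\alpha_A+O(1)/T$ (the factor $1/K_Q=O(Q)$ being absorbed into the $O(1)$).

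For the effect on $\norm{\bDelta_{\A}}_\mu^2$, set $\bE:=\A_{\mrm{new}}-\A_{\mrm{old}}$ with $k$-th column $\bE\ps{k}$. For $t\in\supt\q\ps{k}$, $\abs{\bE\ps{k}_t}=w\ps{k}/T=O(1/T)$; for $t\notin\supt\q\ps{k}$, $\abs{\bE\ps{k}_t}=\abs{w\ps{k}/T-a\ps{k}_{\mrm{old},t}}\le O(1/T)+\abs{a\ps{k}_{\mrm{old},t}-\tilde{a}\ps{k}_t}$ because $\abs{\tilde{a}\ps{k}_t}\le 1/T$; summing at most $T$ such squares gives $\norm{\bE\ps{k}}_2^2\le O(1/T)+2\norm{\bDelta_{\A}\ps{k}}_2^2$, so $\norm{\bE}_\mu^2\le O(1/T)+2\norm{\bDelta_{\A}}_\mu^2=O(1)/T$. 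Finally, since the old population reference $\tilde{\A}_{\mrm{old}}$ lies on the line $\{\alpha\bQ+(1-\alpha)\One\One\trans/T:\alpha\in\R\}$, over which $\norm{\bDelta_{\A}}_\mu^2$ is by definition the squared $\mu$-distance from $\A$, we get $\norm{\bDelta_{\A,\mrm{new}}}_\mu^2\le\norm{\A_{\mrm{new}}-\tilde{\A}_{\mrm{old}}}_\mu^2=\norm{\bDelta_{\A,\mrm{old}}+\bE}_\mu^2\le\norm{\bDelta_{\A,\mrm{old}}}_\mu^2+2\norm{\bDelta_{\A,\mrm{old}}}_\mu\norm{\bE}_\mu+\norm{\bE}_\mu^2\le\norm{\bDelta_{\A,\mrm{old}}}_\mu^2+O(1)/T$, again using $\norm{\bDelta_{\A,\mrm{old}}}_\mu^2=O(1/T)$. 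The only point requiring care is that the population reference $\tilde{\A}$ itself moves (because $\alpha_A$ moves), which is handled above by comparing $\A_{\mrm{new}}$ to the \emph{old} reference $\tilde{\A}_{\mrm{old}}$; everything else is routine, the one thing to keep in mind being that the leaked mass $w\ps{k}$ has order $1$ rather than $o(1)$, and is harmless only because it always appears divided by $T$.
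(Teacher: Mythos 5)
Your argument is correct in the setting where the lemma is actually invoked, but it handles the error term by a genuinely different route than the paper, and that route quietly imports an extra hypothesis. The paper works column by column with the exact algebra of the rounding map: writing $\b\ps{k}=\a\ps{k}\odot(\indi\{a\ps{k}_t\ge\lambda\})_t$, it computes $\norm{\a\ps{k}_{\mathrm{new}}-\tilde{\a}\ps{k}_{\mathrm{new}}}^2$ against the \emph{new} projection coefficient $\alpha_{A}+O(1)/T$, uses $\inprod{\cdot}{\One/T}$ identities (valid because $\One\trans\a\ps{k}=\One\trans\q\ps{k}=1$) together with the monotonicity $\norm{\a\ps{k}-\alpha_A\q\ps{k}}^2\ge\norm{\b\ps{k}-\alpha_A\q\ps{k}}^2$, and so obtains the additive $O(1)/T$ increment using only boundedness of the entries --- no smallness of the pre-rounding error is needed. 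You instead compare $\A_{\mathrm{new}}$ to the \emph{old} reference $\tilde\A_{\mathrm{old}}$ (a legitimate trick, since $\norm{\bDelta_{\A,\mathrm{new}}}_\mu$ is a distance to the whole line) and run a perturbation bound $\norm{\bDelta_{\A,\mathrm{old}}+\bE}_\mu^2$, which forces you to control both $\norm{\bE}_\mu$ and the cross term by invoking $\norm{\bDelta_{\A}}_\mu^2\le\Delta/T$ from Lemma~\ref{lemma: stage 1: main result}(b). That bound is not among the stated hypotheses of this lemma (the conditions of Lemma~\ref{lemma: stage 2: rounding threshold} only lower-bound $\alpha_V$ in terms of $\norm{\bDelta_{\A}}_\mu$); under those hypotheses alone $\norm{\bDelta_{\A}}_\mu$ could be of order $\alpha_V/(Q\sqrt N)\gg1/\sqrt T$, in which case your estimate degrades to an increase of order $\norm{\bDelta_{\A}}_\mu^2$ rather than $O(1)/T$. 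So your proof establishes the lemma only in the training-from-scratch context (where it is applied), while the paper's exact-comparison argument is self-contained and survives a large initial error --- worth keeping in mind given that the Stage~2 machinery is deliberately reused for transfer learning with larger $\bDelta$'s. Everything else matches: your verification of the two displayed forms of the update, the support identification via the threshold lemma, the change $w\ps{k}/T$ in $\inprod{\a\ps{k}}{\q\ps{k}}$, and your explicit remark that the $1/K_Q=O(Q)$ factor is being absorbed into the $O(1)$ (the paper's own proof silently drops this factor, and it is harmless since $T\ge(NQ)^{10}$).
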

\begin{proof}
  For notational simplicity, put $\b\ps{k} = \a\ps{k} \odot ( \indi\{ a\ps{k}_t \ge \lambda \} )_{t=1}^T$. 
  By Lemma~\ref{lemma: stage 2: rounding threshold}, we know $\b\ps{k}$ is supported within $\q\ps{k}$. Set 
  $b\ps{k} = \sum_{t=1}^{T} b\ps{k}_t$. Then, we can write 
  \[
    \a\ps{k} 
    \gets \b\ps{k} + \left(1 - b\ps{k}\right) \frac{\One}{T} . 
  \]
  Recall that $\alpha_A = K_{AQ}/K_Q$ where $K_{AQ} = \inprod{\A}{\bQ}_\mu = \sum_{k=1}^{N} \mu_k \inprod{\a\ps{k}}{\q\ps{k}}$. 
  Hence, for each $k \in [N]$, we have 
  \begin{align*}
    \inprod{\a\ps{k}}{\q\ps{k}} 
    &\gets \inprod{ \b\ps{k} }{ \q\ps{k} } 
      + \left(1 - b\ps{k}\right) \inprod{  \frac{\One}{T} }{ \q\ps{k} } \\
    &\gets \inprod{ \a\ps{k} }{ \q\ps{k} } 
      + \left(1 - b\ps{k}\right) \frac{1}{T}. 
  \end{align*}
  As a result, 
  \[
    \alpha_A 
    \gets \alpha_A + \frac{1}{T} \sum_{k=1}^{N} \mu_k \left(1 - b\ps{k}\right) 
    = \alpha_A + \frac{O(1)}{T}. 
  \]
  \newcommand{\new}{\mrm{new}}
  Now, consider the effect of projection on the distance to the population subspace. We will use subscript $\new$ to 
  indicate values after rounding and use notations such as $\a\ps{k}$ to denote the values before rounding. 
  Recall that 
  $\norm{\bDelta_{\A}}_\mu^2 = \sum_{k=1}^{N} \mu_k \norm{ \a\ps{k} - \tilde{\a}\ps{k} }^2$. 
  We have 
  \begin{align*}
    \norm{ \a\ps{k}_{\new} - \tilde{\a}\ps{k}_{\new} }^2 
    &= \norm{ \b\ps{k} + (1 - b\ps{k}) \frac{\One}{T} - \frac{\One}{T} - \alpha_{A, \new} \left( \q\ps{k} - \frac{\One}{T} \right) }^2 \\
    &= \norm{ \b\ps{k} - \alpha_{A, \new} \q\ps{k} - \left( b\ps{k} - \alpha_{A, \new} \right) \frac{\One}{T}  }^2 \\ 
    &= \norm{ \b\ps{k} - \alpha_{A, \new} \q\ps{k}  }^2 
      + \left( b\ps{k} - \alpha_{A, \new} \right)^2 \frac{1}{T} \\
      &\qquad
      - 2 \left( b\ps{k} - \alpha_{A, \new} \right)  \inprod{ \b\ps{k} - \alpha_{A, \new} \q\ps{k} }{ \frac{\One}{T}  } \\ 
    &= \norm{ \b\ps{k} - (\alpha_A + O(1)/T) \q\ps{k}  }^2 
      - \left( b\ps{k} - \alpha_{A, \new} \right)^2 \frac{1}{T}. 
  \end{align*}
  Note that 
  \begin{align*}
    \norm{\a\ps{k} - \tilde{\a}\ps{k}}^2
    = \norm{\a\ps{k} - \alpha_A \q\ps{k} - (1 - \alpha_A) \frac{\One}{T} }^2  
    &= \norm{\a\ps{k} - \alpha_A \q\ps{k}  }^2 - (1 - \alpha_A)^2 \frac{1}{T} \\
    &\ge \norm{\b\ps{k} - \alpha_A \q\ps{k}  }^2 
      - (1 - \alpha_A)^2 \frac{1}{T}.  
  \end{align*}
  Hence, 
  \begin{multline*}
    \norm{ \a\ps{k}_{\new} - \tilde{\a}\ps{k}_{\new} }^2 - \norm{\a\ps{k} - \tilde{\a}\ps{k}}^2 \\
    \le \norm{ \b\ps{k} - \alpha_A \q\ps{k} - \frac{O(1)}{T} \q\ps{k}  }^2
      - \norm{\b\ps{k} - \alpha_A \q\ps{k}  }^2 
      + \frac{O(1)}{T} 
    \le \frac{O(1)}{T}.
  \end{multline*}
  Thus, $\norm{\bDelta_A}_\mu^2 \gets \norm{\bDelta_A}_\mu^2 + O(1)/T$. 
\end{proof}

As we have seen in Stage~1, the norm of $\nabla_{\a\ps{k}} l$ can scale linearly with $T$. Hence, in order to make 
$\h_{\a\ps{k}, \tau}$ has size $O(1)$ in terms of $\norm{\cdot}_2$, it is necessary to use $\poly(T)$ samples, which 
is undesirable. However, note that all entries of $\nabla_{\a\ps{k}} l$ are bounded, and therefore are subgaussian.
Hence, for each entry of $\nabla_{\a\ps{k}} l$, with $O(\log T)$ samples, we can make sure the relative error is 
small with probability at least $1 - 1/\poly(T)$. By union bound, this means the $\norm{\cdot}_\infty$ error can be 
made small using only $\log(T)$ samples. Note that there is a separation between the signal and 
noise parts of $\E \nabla_{\a\ps{k}} l$. This implies that we can distinguish them using $\log(T)$ samples and directly
remove the noise part. Formally, we have the following lemma. 

\begin{lemma}[Gradient denoising]
  \label{lemma: stage 2: gradient denoising} Given $\eps\in\qty(\Theta\qty(\frac{Q^2N^3}{\sqrt{T}}),0.1)$.
  Suppose that for any $k,m,n \in [N]$, $|\a\ps{k}_t| \le 1/T$ if $t \notin \q\ps{k}$,  
  $|V_{n, m}| \le O(1)$, and 
  $\norm{\bDelta_A}_\mu \le c (1 - \alpha_V) / ( \sqrt{N} Q )$ and 
  $\norm{\bDelta_V}_\mu^2 \le c (1 - \alpha_V) K_P $ for some small constant $c>0$. 
  For the target accuracy $\eps_{\Tmp} = \Theta(\frac{\eps}{QN^2})$, we have with probability $1-\delta_{\tau}$. 
  If we choose 
  \[
    B_\tau 
    \ge C \frac{N^8 Q^4} { \eps^2 \alpha_V^2 K_P^2 } \log\left( \frac{ C NT }{\delta_{\tau}} \right),
  \]
  for some large constant $C > 0$. Then, for each $k \in [N]$, 
  with probability at least $1 - \delta_{\Tmp}$, we have 
  \begin{align*}
    \partial\ps{B_\tau}_{a\ps{k}_t} l 
    &= \left(1 \pm \eps_{\Tmp} \right) \E\partial_{a\ps{k}_t} l 
    \ge \Theta\left( \frac{\alpha_V K_P}{NQ} \right) 
    &&\quad \forall t \in \q\ps{k}, \\
    \partial\ps{B_\tau}_{a\ps{k}_t} l 
    &= O\left( \frac{1}{T} + \eps_{\Tmp}\frac{\alpha_V K_P}{NQ} \right) = O\qty(\frac{\eps K_P}{N^3Q^2})
    &&\quad \forall t \notin \q\ps{k}. 
  \end{align*}
\end{lemma}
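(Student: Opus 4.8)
The plan is to first pin down $\E\partial_{a\ps{k}_t} l$ in closed form and then handle the sampling error coordinatewise with a Hoeffding bound, exploiting that every coordinate of the per-sample gradient is bounded by $O(N)$. For the expectation, I would start from Lemma~\ref{lemma: expectation: expected gradients}, which gives $\mu_k^{-1}\E\partial_{a\ps{k}_t} l = (\norm{\V}_\mu^2 - \norm{\bmu}^2)a\ps{k}_t - q\ps{k}_t(\inprod{\V}{\bP}_\mu - \norm{\bmu}^2)$. Substituting the population-projection identities $\norm{\V}_\mu^2 - \norm{\bmu}^2 = \alpha_V^2 K_P + \norm{\bDelta_V}_\mu^2$ and $\inprod{\V}{\bP}_\mu - \norm{\bmu}^2 = \alpha_V K_P$ from Lemma~\ref{lemma: population projection, def and basic results}, this becomes
\[
  -\E\partial_{a\ps{k}_t} l = \mu_k\alpha_V K_P\left(q\ps{k}_t - \alpha_V a\ps{k}_t\right) - \mu_k\norm{\bDelta_V}_\mu^2\, a\ps{k}_t,
\]
i.e.\ exactly the formula in \eqref{eq: E d akt l} up to the correction $\mu_k\norm{\bDelta_V}_\mu^2 a\ps{k}_t$, which the hypothesis $\norm{\bDelta_V}_\mu^2 \le c(1-\alpha_V)K_P$ controls.

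Next I would extract the two-case size estimates for the expectation. For a useless entry ($q\ps{k}_t = 0$) we have $|a\ps{k}_t| \le 1/T$ by assumption, so $|\E\partial_{a\ps{k}_t}l| = \mu_k(\alpha_V^2 K_P + \norm{\bDelta_V}_\mu^2)|a\ps{k}_t| = O(K_P/(NT)) = O(1/T)$, using $\mu_k = O(1/N)$, $\alpha_V \le 1$ and $\norm{\bDelta_V}_\mu^2 \le cK_P$. For a useful entry ($q\ps{k}_t \ne 0$) I need $q\ps{k}_t - \alpha_V a\ps{k}_t = \Theta(1/Q)$: the upper bound follows from $q\ps{k}_t \le C/Q$ (well-conditioning) and $\alpha_V a\ps{k}_t \ge 0$; the lower bound follows by writing $a\ps{k}_t \le \tilde a\ps{k}_t + \sqrt{N/c}\,\norm{\bDelta_A}_\mu$ with $\tilde a\ps{k}_t = \alpha_A q\ps{k}_t + (1-\alpha_A)/T$, so $q\ps{k}_t - \alpha_V a\ps{k}_t \ge (1-\alpha_V\alpha_A)q\ps{k}_t - \alpha_V/T - \alpha_V\sqrt{N/c}\,\norm{\bDelta_A}_\mu \gtrsim (1-\alpha_V)/Q$ once $\norm{\bDelta_A}_\mu \le c(1-\alpha_V)/(\sqrt N Q)$ and $T$ is large, which is $\Theta(1/Q)$ in the regime $1-\alpha_V = \Theta(1)$ where Stage~2 operates before the final rounding. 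Combining with $\mu_k \ge 1/(CN)$ gives $-\E\partial_{a\ps{k}_t}l = \Theta(\alpha_V K_P/(NQ))$ for useful entries.

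For the concentration step I would bound each per-sample coordinate: by Lemma~\ref{lemma: gradient: vanilla gradient}, $\partial_{a\ps{k}_t} l(i) = \indi\{x_{T+1}\ps{i}=k\}(\V\e_{x_t\ps{i}})\trans(\V\X\ps{i}\a - \e_{x_o\ps{i}})$, and since $\norm{\V}_\infty = O(1)$, $\norm{\bDelta_V}_F = O(\sqrt N)$ (from $\norm{\bDelta_V}_\mu^2 \le cK_P$), and $\a\ps{k}$ is a probability vector up to an $O(1)$-in-$\ell_1$ perturbation after the rounding step (Lemma~\ref{lemma: stage 2: rounding effect}), each such coordinate is bounded by $O(N)$ — this is the same estimate that underlies Lemma~\ref{lemma: concentration: gradient infinity norm}. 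A Hoeffding bound then gives, for a single $(k,t)$, $|\partial\ps{B_\tau}_{a\ps{k}_t}l - \E\partial_{a\ps{k}_t}l| \le \epsilon'$ with failure probability $2\exp(-B_\tau\epsilon'^2/O(N^2))$; taking $\epsilon' = \Theta(\eps_{\Tmp}\alpha_V K_P/(NQ))$ with $\eps_{\Tmp} = \Theta(\eps/(QN^2))$ and a union bound over the $\le NT$ coordinates forces exactly $B_\tau \ge C N^8 Q^4 \log(CNT/\delta_\tau)/(\eps^2\alpha_V^2 K_P^2)$. For useful entries $\epsilon'$ is $\eps_{\Tmp}$ times the $\Theta(\alpha_V K_P/(NQ))$ lower bound, giving the multiplicative $(1\pm\eps_{\Tmp})$ statement; for useless entries it adds to the $O(1/T)$ expectation, giving $\partial\ps{B_\tau}_{a\ps{k}_t}l = O(1/T + \eps_{\Tmp}\alpha_V K_P/(NQ))$, and I would finish by checking that $1/T$ is dominated, using $T \ge (NQ)^{10}$ and the stated lower bound $\eps = \Omega(Q^2 N^3/\sqrt T)$, so that the bound collapses to $O(\eps K_P/(N^3 Q^2))$. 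The main obstacle I expect is bookkeeping the two edge constraints simultaneously: getting the per-sample coordinate bound to be genuinely $O(N)$ (so the $N^8$ in $B_\tau$ is tight) and establishing $q\ps{k}_t - \alpha_V a\ps{k}_t = \Theta(1/Q)$ for useful entries from the $\bDelta$-smallness hypotheses without an extra $(1-\alpha_V)$ loss, i.e.\ invoking that Stage~2 runs in the regime $\alpha_V$ bounded away from $1$.
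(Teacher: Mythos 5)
Your proposal is correct and follows essentially the same route as the paper's proof: compute $\E\partial_{a\ps{k}_t} l = \mu_k K_V a\ps{k}_t - \mu_k K_{VP} q\ps{k}_t$, rewrite it via the population-projection identities $K_{VP}=\alpha_V K_P$, $K_V=\alpha_V^2K_P+\norm{\bDelta_V}_\mu^2$, use the $\bDelta_A,\bDelta_V$ hypotheses to separate useful entries ($\Omega(\alpha_V K_P/(NQ))$) from useless ones ($O(1/T)$), then apply a coordinatewise Hoeffding bound with the $O(N)$ per-sample entry bound and a union bound over the $NT$ coordinates, which yields exactly the stated $B_\tau$ after setting the radius to $\eps_{\Tmp}\cdot\Theta(\alpha_V K_P/(NQ))$. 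The $(1-\alpha_V)$ factor you flag is handled no more carefully in the paper (it is absorbed into the constant under the stated $\bDelta$-smallness assumptions), so your treatment matches the paper's.
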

\begin{proof}
  For $s \in [T]$, recall that 
  \begin{align*}
    \partial_{a\ps{k}_s} l 
    &= \indi\{ x_{T+1} = k \} (\V\e_{x_s})\trans\left( \V\X\a - \e_{x_o} \right), \\
    \E \partial_{a\ps{k}_s} l 
    &= \mu_k K_V a_s - \mu_k K_{VP} q_s. 
  \end{align*}
  First, consider the expectations. 
  If $s \notin \q\ps{k}$, then by our assumption, we have $|\E \partial_{a\ps{k}_s} l| = \mu_k K_V a_s = O(K_V / (N T))$. 
  Meanwhile, for $s \in \q\ps{k}$, we have 
  \begin{align*}
    - \E \partial_{a\ps{k}_s} l 
    &\ge \mu_k K_{VP} (q_s - a_s) + \mu_k (K_V - K_{VP} ) a_s \\
    &\ge \mu_k \left( 
        \alpha_V (1 - \alpha_V) K_P q_s 
        - \alpha_V K_P \norm{\bDelta_A}_\mu \sqrt{N}
        - \norm{\bDelta_V}_\mu^2 a_s 
      \right). 
  \end{align*}
  As a result, we have for any $k \in [N]$ and $s \in \q\ps{k}$, 
  \[ 
    - \E \partial_{a\ps{k}_s} l 
    \ge c \frac{\alpha_V K_P }{N Q} 
    \gg O\left( \frac{K_V}{N T} \right)
    \quad\Leftarrow\quad 
    \left\{
    \begin{aligned}
      &\norm{\bDelta_A}_\mu 
      \le c \frac{1 - \alpha_V}{Q \sqrt{N}}, \\
      & \norm{\bDelta_V}_\mu^2  
      \le c (1 - \alpha_V) K_P,  
    \end{aligned}
    \right.
  \] 
  for some small constant $c > 0$. Then, for the size of each entry, we have 
  \[ 
    \left| \partial_{a\ps{k}_s} l  \right|
    \le \left| (\V\e_{x_s})\trans\V\X\a \right|
      + \left| (\V\e_{x_s})\trans \e_{x_o} \right| 
    \le \sum_{t \in \q\ps{k}} a_t \sum_{n=1}^{N} |V_{n, x_s} V_{n, x_t}|
      + |V_{x_o, x_s}| 
    \le N. 
  \] 
  Therefore, $\partial_{a\ps{k}_s} l$ is $N^2 K_{\Tmp}^4$-subgaussian, whence $\partial_{a\ps{k}_s}\ps{B_\tau} l$
  is $N^2 K_{\Tmp}^4 / B_\tau$-subgaussian. As a result, for each $\xi > 0$, we have 
  \[
    \P\left[ \left| \partial_{a\ps{k}_s}\ps{B_\tau} l - \E \partial_{a\ps{k}_s} l  \right|  \ge \xi  \right]
    \le 2 \exp\left( \frac{-\xi^2 B_\tau}{ N^2 } \right). 
  \]
  Apply union bound and we get 
  \[
    \P\left[ \norm{ \nabla_{\A} l - \E \nabla_{\A} l  }_\infty  \ge \xi  \right]
    \le 2 T N \exp\left( \frac{-\xi^2 B_\tau}{ N^2 } \right) 
    = \exp\left( \log(2NT) - \frac{\xi^2 B_\tau}{ N^2 } \right). 
  \]
  Recall that the separation between the expectations is $c \alpha_V K_P / (NQ)$. Hence, it suffices to choose 
  $\xi = c \alpha_V K_P / (2 NQ)$. Then, to make the failure probability at most $\delta_{\Tmp}$, we can choose 
  $B_\tau$ as follows:
  \begin{align*}
    \exp\left( \log(2NT) - \frac{\xi^2 B_\tau}{ N^2 } \right) \le \delta_{\tau} 
    \quad\Leftarrow\quad 
    B_\tau 
    \ge C \frac{N^4 Q^2 } { \alpha_V^2 K_P^2 } \log\left( \frac{ C NT }{\delta_{\tau}} \right),
  \end{align*}
  for some large constant $C > 0$. Then, to boost the accuracy from $1/2$ to $\eps_{\Tmp}$, it suffices to 
  increase the batch size to 
  $C \frac{N^4 Q^2} { \eps_{\Tmp}^2 \alpha_V^2 K_P^2 } \log\left( \frac{ C NT }{\delta_{\tau}} \right)$. 
\end{proof}

Using this lemma, we can pick $\lambda =\Theta(\frac{\eps K_P}{Q^2N^3})$ to sparsify our proximal gradient. Notice that our proximal gradient is a biased estimate of the true preconditioned gradient, but the separation guarantees that it is possible to make the error controllable. The following lemma calculates the error each proximal step introduces. Here we define $\hh_{\A,\tau} = \G^{(k)}_{\lambda,\tau} - \E \hat\nabla_{\A} l$ instead of $\h$ because of the bias introduced by the proximal gradient.

\begin{lemma}
  \label{lemma: stage 2: bias of hat h}
  Under the same setting of Lemma~\ref{lemma: stage 2: gradient denoising}, if the batch size $$B_\tau \geq \max\qty{C \frac{N^8 Q^4} { \eps^2 \alpha_V^2 K_P^2 } \log\left( \frac{ 2C NT }{\delta_{\tau}} \right),\frac{\Theta(N^6Q^3)}{\delta \eps^2}},$$
  the noise attention score $|a_t^{(k)}|\le O(1/T)$ for all $t\not\in \q\ps{k}$ and all $k\in[N]$, and $\lambda =\Theta\qty(\frac{\eps K_{P}}{Q^2N^3})$, then with probability $1-\delta_\tau$, the gradient error at iteration $\tau$
  $$\|\hh_{\A,\tau}\|_\mu := \|\G_{\lambda,\tau} - \E \hat\nabla_{\A} l\|_\mu\le O\qty(\frac{\eps}{QN^2}).$$
\end{lemma}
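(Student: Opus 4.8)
The plan is to track the per‑sample proximal update~\eqref{eq: update rule, positive lambda} coordinate by coordinate. The first step is to invoke Lemma~\ref{lemma: stage 2: gradient denoising} at step $\tau$ — its hypotheses ($|a\ps{k}_t|\le O(1/T)$ for $t\notin\q\ps{k}$, $\|\bDelta_{\A}\|_\mu$ and $\|\bDelta_{\V}\|_\mu$ small, $\alpha_V$ bounded away from $0$, $\|\V\|_\infty=O(1)$) are exactly the Stage~2 induction hypotheses — so that, once $B_\tau\ge C N^8Q^4/(\eps^2\alpha_V^2K_P^2)\log(2CNT/\delta_\tau)$, with probability $1-\delta_\tau/2$ one has $\partial\ps{B_\tau}_{a\ps{k}_t}l=(1\pm\eps_{\Tmp})\,\E\partial_{a\ps{k}_t}l\ge\Theta(\alpha_VK_P/(NQ))$ for all $t\in\q\ps{k}$ and $|\partial\ps{B_\tau}_{a\ps{k}_t}l|=O(\eps K_P/(N^3Q^2))$ for all $t\notin\q\ps{k}$, uniformly over $k$.

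Next I read off the proximal and projection steps. For $t\in\q\ps{k}$ the gradient step only increases $a\ps{k}_{\tau,t}\ge\Omega(\alpha_V/Q)\gg\lambda$, so the proximal step subtracts exactly $\lambda$; for $t\notin\q\ps{k}$, since $|a\ps{k}_{\tau,t}|=O(1/T)$ and $(\eta_A/\mu_k)|\partial\ps{B_\tau}_{a\ps{k}_t}l|=O(\eta\eps/(N^2Q^2))$, the choice $\lambda=\Theta(\eps K_P/(Q^2N^3))$ keeps this coordinate in $[-\lambda,\lambda]$ and it is zeroed. Writing $c_k:=1-\One\trans\a\ps{k,''}_{\tau+1}=O(1)$ for the projection shift and using $\eta_A=\Theta(\eta/K_P)$, this gives
\begin{align*}
  g\ps{k}_{\tau,t} &= \tfrac1{\mu_k}\partial\ps{B_\tau}_{a\ps{k}_t}l + \tfrac{\lambda}{\eta_A} - \tfrac{c_k}{\eta_A T}, && t\in\q\ps{k},\\
  g\ps{k}_{\tau,t} &= \tfrac1{\eta_A}\big(a\ps{k}_{\tau,t}-\tfrac{c_k}{T}\big)=O\!\big(\tfrac{K_P}{\eta T}\big), && t\notin\q\ps{k}.
\end{align*}
On the other side, $(\E\hat\nabla_{\a\ps{k}}l)_t=\tfrac1{\mu_k}\E\partial_{a\ps{k}_t}l+\gamma_k$ where, from $\E\partial_{a\ps{k}_s}l=\mu_kK_Va_s-\mu_kK_{VP}q_s$ together with $\One\trans\a\ps{k}=\One\trans\q\ps{k}=1$, the column constant satisfies $|\gamma_k|=O(K_P/T)$. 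Subtracting, $\hh_{\A,\tau}=\G_{\lambda,\tau}-\E\hat\nabla_{\A}l$ splits into (i) an off‑support part of size $O(K_P/(\eta T))$ per entry; (ii) a deterministic bias $\lambda/\eta_A-c_k/(\eta_AT)-\gamma_k$, dominated by $\lambda/\eta_A$, on the $\le Q$ on‑support entries of each column; and (iii) the statistical error $\tfrac1{\mu_k}(\partial\ps{B_\tau}_{a\ps{k}_t}l-\E\partial_{a\ps{k}_t}l)$ on the on‑support entries.

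I then bound each piece in $\|\cdot\|_\mu$. Piece (i) contributes $\big(\sum_k\mu_k(T-Q)\,O(K_P/(\eta T))^2\big)^{1/2}=O(K_P/(\eta\sqrt T))$, which is $\ll\eps/(QN^2)$ because $T\ge(NQ)^{10}$ and $\eps\ge\Theta(Q^2N^3/\sqrt T)$. Piece (ii) contributes $\big(\sum_k\mu_k\,Q\,(\lambda/\eta_A)^2\big)^{1/2}=\Theta(\sqrt Q\,\lambda/\eta_A)=\Theta(\sqrt Q\,\lambda K_P/\eta)\ll\eps/(QN^2)$ precisely by the choice $\lambda=\Theta(\eps K_P/(Q^2N^3))$. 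For piece (iii) one may either use the relative‑error bound of Lemma~\ref{lemma: stage 2: gradient denoising} directly ($\tfrac1{\mu_k}|\partial\ps{B_\tau}-\E\partial|\le\eps_{\Tmp}\,O(K_P/Q)$ per on‑support entry, hence $\mu$‑norm $O(\eps_{\Tmp}K_P/\sqrt Q)\ll\eps/(QN^2)$), or, more robustly, apply Lemma~\ref{lemma: concentration: basic inequality} to the restriction of $\hat\nabla_{\A}l$ to the on‑support coordinates: this restriction has second moment only $O(N^2Q)$ — no $T$ dependence, since each column contributes $\le Q$ entries bounded by $O(N)$ — so $B_\tau\ge\Theta(N^6Q^3)/(\delta_\tau\eps^2)$ makes it $\le\eps/(QN^2)$ with probability $1-\delta_\tau/2$. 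A union bound over the two failure events and summing (i)–(iii) gives $\|\hh_{\A,\tau}\|_\mu=O(\eps/(QN^2))$ with probability $1-\delta_\tau$.

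The main obstacle is piece (ii) together with the validity of the closed form for $\g\ps{k}_\tau$. Unlike the pure SGD noise $\h_{\A,\tau}$, the quantity $\hh_{\A,\tau}$ carries a \emph{persistent, deterministic} bias $\lambda/\eta_A$ on every useful coordinate that no amount of sampling can remove, so the bound closes only because $\lambda$ is proportional to both $\eps$ and $K_P$; and the explicit formula for $\g\ps{k}_\tau$ rests on the support‑recovery conclusion of Lemma~\ref{lemma: stage 2: gradient denoising}, which holds only under the Stage~2 induction hypotheses on $\|\bDelta_{\A}\|_\mu$, $\|\bDelta_{\V}\|_\mu$, $\alpha_V$ and $\|\V\|_\infty$. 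Consequently this lemma has to be carried through consistently inside the Stage~2 induction rather than established in isolation.
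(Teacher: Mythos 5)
Your proof follows essentially the same route as the paper's: invoke Lemma~\ref{lemma: stage 2: gradient denoising} for support recovery, write out the closed form of the gradient+proximal+projection update, split $\hh_{\A,\tau}$ into an on-support concentration part, the deterministic $\lambda$-bias (plus projection constants), and off-support $O(1/T)$ terms, and bound each in $\norm{\cdot}_\mu$ using $Q$-sparsity, the choice of $\lambda$, $T \ge (NQ)^{10}$, and Lemma~\ref{lemma: concentration: basic inequality} with $B_\tau \ge \Theta(N^6Q^3)/(\delta\eps^2)$, finishing with a union bound. The only notable difference is bookkeeping: you carry the factor $\lambda/\eta_A = \lambda K_P/\eta$ explicitly, whereas the paper's bound of its term $(1^*)$ effectively treats it as $\lambda$, so your claim that the choice of $\lambda$ alone closes piece (ii) implicitly requires $\eta$ not be too small — a looseness already present in the paper's own estimate, not a new gap you introduced.
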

\begin{proof}
  For notational simplicity, we drop the superscript $k$.
  The goal is to estimate the difference between $\E \hat\nabla_{\a} l$ and $\g_{\lambda,\tau}$ by calculating the magnitude of the bias of the proximal gradient together with the concentration error.
  
  We consider the population gradient first. 
  Since we have $\{ a_{t, \tau} \}_{t \notin \q}$ are all the same in Stage 2, we can write  
  \[
    \a_{\tau} 
    = \left[ \a_{\tau} \right]_{\q} 
      + \frac{(1-b_\tau) \One_{\q^c}}{T - Q} 
    \quad \text{where} \quad 
    b_\tau = \sum_{t \in \q} a_t, 
  \]
  and recall the projected preconditioned gradient for $\A$.
\begin{align*}
\E \hat\nabla_{\a} l 
    &= \left( \norm{\V}_\mu^2 - \norm{\bmu}^2 \right) \left( \a\ps{k} - \frac{\One}{T} \right) 
      - \left( \inprod{\V}{\bP}_\mu - \norm{\bmu}^2 \right) \left( \q\ps{k} - \frac{\One}{T} \right).    
\end{align*}
  Therefore, we have 
  \begin{align*}
    \E \hat\nabla_{\a} l 
    &= \qty[\E \hat\nabla_{\a} l ]_{\q} 
          + K_{\V} \frac{(1-b_\tau)\One_{\q^c}}{T-Q} + (\norm{\V}^2_\mu - \inprod{\V}{\bP}_\mu)\frac{\One_{\q^c}}{T}.
  \end{align*}

  Now, we consider $\g$ by calculating the expression of $\a_{\tau+1}$.  
  First, note that by Lemma~\ref{lemma: stage 2: gradient denoising}, we have 
  $
    a''_{t, \tau+1}
    = \left( a'_{t, \tau+1} - \lambda \right) \indi\{ t \in \q \}. 
  $
  This implies that $\{ a_{t, \tau+1} \}_{t \notin \q}$ are all the same and the value is at most $1/T$. 
  Moreover, it also implies that it suffices to focus on $[\a''_{\tau+1}]_{\q} $, for which we have 
  \begin{align*}
    [\a''_{\tau+1}]_{\q} 
    &= [\a_{\tau+1}']_{\q} - \lambda \One_{\q} 
    = [\a_\tau]_{\q} 
      - \frac{\eta}{\mu_k} \left[ \nabla\ps{B_\tau}_{\a} l \right]_{\q}
      - \lambda \One_{\q}. 
  \end{align*}
  Therefore, 
  $
    \sum_{t \in \q} a'_{t, \tau+1}
    = b_\tau 
      - \frac{\eta}{\mu_k} \sum_{t \in \q} \partial\ps{B_\tau}_{a_t} l 
      - Q \lambda 
  $
  and 
  \begin{align*}
    \a_{\tau+1}
    &= [\a_\tau]_{\q} 
      - \frac{\eta}{\mu_k} \left[ \nabla\ps{B_\tau}_{\a} l \right]_{\q}
      - \lambda \One_{\q}
      + \left( 
        1 
        - b_\tau 
        + \frac{\eta}{\mu_k} \sum_{t \in \q} \partial\ps{B_\tau}_{a_t} l 
        + Q \lambda
       \right)
       \frac{\One}{T}\\
       &= \a_\tau- \frac{(1-b_\tau) \One_{\q^c}}{T - Q}
      - \frac{\eta}{\mu_k} \left[ \nabla\ps{B_\tau}_{\a} l \right]_{\q}
      - \lambda \One_{\q}
      + \left( 
        1 
        - b_\tau 
        + \frac{\eta}{\mu_k} \sum_{t \in \q} \partial\ps{B_\tau}_{a_t} l 
        + Q \lambda
       \right)
       \frac{\One}{T}.
  \end{align*}
  Thus, we can write an explicit update
  \begin{align*}
      \g_{\lambda,\tau}
    &= 
      \frac{1}{\mu_k} \left[ \nabla\ps{B_\tau}_{\a} l \right]_{\q}
      +\frac{\lambda}{\eta} \One_{\q}
      - \left( 
        1 
        - b_\tau 
        + \frac{\eta}{\mu_k} \sum_{t \in \q} \partial\ps{B_\tau}_{a_t} l 
        + Q \lambda
       \right)
       \frac{\One}{\eta T}+ \frac{(1-b_\tau) \One_{\q^c}}{\eta(T - Q)}\\
    &=\left[ \hat\nabla\ps{B_\tau}_{\a} l \right]_{\q} + \frac{\One_{\q} \One^\top \nabla\ps{B_\tau}_{\a} l}{\mu_k T}
      - \left( 
        1 
        - b_\tau 
        + \frac{\eta}{\mu_k} \sum_{t \in \q} \partial\ps{B_\tau}_{a_t} l 
        + Q \lambda
       \right)
       \frac{\One}{\eta T}\\&\quad + \frac{(1-b_\tau) \One_{\q^c}}{\eta(T - Q)}+\frac{\lambda}{\eta} \One_{\q}
  \end{align*} 
Then the gradient error at step $\tau$ can be decomposed into:
\begin{align*}
\g_{\lambda,\tau} - \E \hat\nabla_{\a} l 
    &= \qty[\hat\nabla\ps{B_\tau}_{\a} l - \E \hat\nabla_{\a} l ]_{\q}\tag{Concentration error}\\
    &\quad + \frac{\One_{\q} \One^\top \nabla\ps{B_\tau}_{\a} l}{\mu_k T}
      - \left( 
        1 
        - b_\tau 
        + \frac{\eta}{\mu_k} \sum_{t \in \q} \partial\ps{B_\tau}_{a_t} l 
        + Q \lambda
       \right)
       \frac{\One}{\eta T}+\frac{\lambda}{\eta} \One_{\q}\\&\quad + \frac{(1-b_\tau) \One_{\q^c}}{\eta(T - Q)}- K_{\V} \frac{(1-b_\tau)\One_{\q^c}}{T-Q} - (\norm{\V}^2_\mu - \inprod{\V}{\bP}_\mu)\frac{\One_{\q^c}}{T}
       \tag{Gradient bias error}
\end{align*}
Here the gradient bias error can be further simplified to 
\begin{align*}
     &\qty[\frac{1}{\mu_kT}\sum_{t\not\in{\q}}\partial \ps{B_\tau}_{a_t} l - \frac{1-b_\tau +Q\lambda}{\eta T} + \frac{\lambda}{\eta}]\One_q\tag{1*}\\
    +&\qty[\frac{Q(1-b_\tau - Q\lambda)-Q\lambda T - \eta TK_{\V}}{\eta T(T-Q)} -\frac{\sum_{t \in \q} \partial\ps{B_\tau}_{a_t} l }{\mu_kT}-\frac{\norm{\V}^2_\mu - \inprod{\V}{\bP}_\mu}{T}]\One_{\q^c}\tag{2*}
\end{align*}
First, we estimate the concentration error. Similar to Lemma~\ref{lemma: concentration: gradient infinity norm}, we first upper bound the infinity norm of the gradient. 
Consider the maximum absolute value in the original gradients. For $\nabla_{\a\ps{k}} l$, we have
  \begin{align*}
      &\left\|\indi\{ x_{T+1} = k \} (\V\X)\trans(\V\X\a\ps{k} - \e_{x_o})\right\|_\infty\\\le{}& \left\|\indi\{ x_{T+1} = k \} (\V\X)\trans(\V\X\a\ps{k})\right\|_\infty  + \left\|\indi\{ x_{T+1} = k \} (\V\X)\trans(\e_{x_o})\right\|_\infty\\
      \le{}&\max_{s\in[T]}\left|\sum_{t=1}^Ta_t(\V \e_{x_s} )^\top\V\e_{x_t} \right|+\max_{s\in[T]}\left|(\V \e_{x_s} )^\top\e_{x_o} \right|
  \end{align*}
  The first term can be upper-bounded in the following way:
  \begin{align*}
      \max_{s\in[T]}\left|\sum_{t=1}^Ta_t(\V \e_{x_s} )^\top\V\e_{x_t} \right|
      &\le \left(\sum_{t\in \q}|a_t|+\sum_{t\not\in \q}|a_t|\right)\max_{s,t}\left|(\V\e_{x_s})^\top\V\e_{x_t}\right|\\
      &\le\Theta(1)\max_{s,t}\left|(\V\e_{x_s})^\top\V\e_{x_t}\right|
  \end{align*}
  The second inequality is due to $|a_t^{(k)}|\le O(1/T)$ for $t\not\in \q$.
    Since $\V_{n,m} \le O(1)$, we have $\max_{s,t}\left|(\V\e_{x_s})^\top\V\e_{x_t}\right|$ upper bounded by $O(1)$. Therefore 
    $$\max_{s\in[T]}\left|\sum_{t=1}^Ta_t(\V \e_{x_s} )^\top\V\e_{x_t} \right|\le O(1)$$
    And similarly, the second term $\max_{s\in[T]}\left|(\V \e_{x_s} )^\top\e_{x_o} \right|$ can be bounded by $O(1)$ because the infinity norm of $\V$ is also upper bounded by $O(1)$. Therefore, 
    we know $\|\nabla_{\a\ps{k}} l\|_\infty\le O(1)$.
    
    Now we consider the preconditioned gradient $\hat{\nabla}\ps{B_\tau}_{\a} l$:
    \begin{align*}
        \|\hat{\nabla}\ps{B_\tau}_{\a} l\|_\infty &= \left\|\frac{1}{\mu_k} \left( \Id_T - \frac{\One_T \One_T\trans}{T} \right) 
      \left( \nabla\ps{B_\tau}_{\a\ps{k}} l \right)\right\|_\infty\\
      &\le \left\|\frac{1}{\mu_k}\Id_T
      \left( \nabla\ps{B_\tau}_{\a\ps{k}} l \right)\right\|_\infty + \left\|\frac{1}{\mu_k} \left( \frac{\One_T \One_T\trans}{T} \right) 
      \left( \nabla\ps{B_\tau}_{\a\ps{k}} l \right)\right\|_\infty\le O(N)
    \end{align*}
    since $\mu_k\geq \frac{c}{N}$ for all $k\in[N]$. Now since $\qty[\hat\nabla\ps{B_\tau}_{\a} l ]_{\q}$ is $Q$-sparse, we have $\E \norm{\qty[\hat\nabla_{\A} l]_{\q}}_\mu^2 
    \le O(QN^2)$. By Lemma~\ref{lemma: concentration: basic inequality}, 
 when $B_\tau\geq  \frac{\Theta(N^6Q^3)}{\delta \eps^2},$ with probability $1-\frac{\delta_\tau}{2}$,
\begin{align*}
    \norm{\qty[\hat\nabla\ps{B_\tau}_{\a} l - \E \hat\nabla_{\a} l ]_{\q}} \le O\qty(\frac{\eps}{QN^2})
\end{align*}

Then, consider the gradient bias term. With the selected $\lambda =\Theta\qty(\frac{\eps K_{P}}{Q^2N^3})$ and $\norm{\partial \ps{B_\tau}_{a_t} l}\le O\qty(\frac{\eps K_{P}}{N^3Q^2})$ for $t\not\in \q$, with probability $1-\delta_\tau /2$ we have the $\mu$-norm of first term (since $K_{P}\le O(N)$):
\begin{align*}
    \norm{(1^*)} & \le \qty(\left\|\frac{1}{\mu_kT}\sum_{t\not\in{\q}}\partial \ps{B_\tau}_{a_t} l\right\| + \frac{Q\lambda }{T} + \lambda)\cdot \sqrt{Q} \\
    &\le O\qty(\frac{\eps}{QN^2}) + O\qty(\frac{\eps}{TN^2\sqrt{Q}}) + O\qty(\frac{\eps }{Q^{3/2}N^2}) \le O\qty(\frac{\eps}{QN^2}).
\end{align*}
and the second term can be upper-bounded by 
\begin{align*}
    \|(2^*)\|\le \qty(O\qty(\frac{QN}{T})+O\qty(\frac{QN}{T})+O\qty(\frac{N}{T}))\cdot \sqrt{T}\le O\qty(\frac{\eps}{QN^2}).
\end{align*}
since $\frac{1}{\sqrt{T}}\le O\qty(\frac{\eps}{Q^2N^3})$, $\partial \ps{B_\tau}_{a_t} l=O(1)$, and $\norm{\V}_\mu^2\leq N$.

Combine all three terms and by union bound, we have with probability $1-\delta_\tau$ 
\begin{align*}
    \norm{\hh_{A,\tau}} &= \norm{\G^{(k)}_{\lambda,\tau} - \E \hat\nabla_{\A} l}_\mu \\
    &= \sqrt{\sum_{k=1}^N \mu_k \|\g\ps{k}_{\lambda,\tau} - \E \hat\nabla_{\a\ps{k}} l\|^2}\le O\qty(\frac{\eps}{QN^2})
\end{align*}
since $\mu_k =\Theta(1/N).$
\end{proof}

\subsection{Model aligning and the decrease of the errors}

First, we show that the signal will continue to grow and approximation error will decrease. This decouples the 
error at the end of Stage~1 and the final error. In particular, we show that eventually we will have $\alpha_V 
+ \alpha_A \approx 2$, $\norm{\bDelta_A}_\mu^2 \approx 0$ and $\norm{\bDelta_V}_\mu^2 \approx 0$\footnote{However, we cannot ensure $\alpha_A \approx \alpha_V$ since $\alpha_A - \alpha_V$ is not contractive toward the end of training due to the magnitude of gradient noise.This issue can be fixed by a final rounding stage (See Appendix~\ref{appendix: final rounding}). }.



For notational simplicity, define $\delta_{A}^2 = \norm{\bDelta_A}_\mu^2 / K_Q$ and 
$\delta_V^2 = \norm{\bDelta_V}_\mu^2 / K_P$. Recall Lemma~\ref{lemma: dynamics of the population projection} 
and Lemma~\ref{lemma: dynamics of the errors} and that we choose $\eta_V = \eta / K_Q$, $\eta_A = \eta / K_P$. 
The dynamics of the signals and the errors can be described using\footnote{Here $\h_{\A,\tau}\to\hh_{\A,\tau}$ because each gradient step is changed to the $\ell_1$-regularized gradient. It does not change the main parts in the population process.} 
\begin{align*}
  \alpha_{V, \tau+1}
  &= \alpha_{V, \tau} 
    + \eta \left( 1 - \alpha_A \alpha_V \right) \alpha_A
    + \frac{\eta}{K_Q} \frac{1 - \alpha_V}{T} 
    - \eta \alpha_V \delta_A^2
    - \eta \frac{\inprod{ \h_{\V, \tau} }{\bP}_\mu}{K_P K_Q} , \\
  \alpha_{A, \tau+1} 
  &= \alpha_{A, \tau} 
    + \eta \left( 1 - \alpha_V \alpha_A \right) \alpha_V 
    - \eta \alpha_A \delta_V^2
    - \eta \frac{ \inprod{ \hh_{\A, \tau} }{\bQ}_\mu}{K_P K_Q} , 
\end{align*}
and 
\begin{align*}
  \delta_{A, \tau+1}^2
  &= \left( 1 - \eta \alpha_V^2  - \eta \delta_A^2 \right)^2 \delta_A^2 
    - 2 \eta \left( 1 - \eta \alpha_V^2  - \eta \delta_V^2  \right) 
      \frac{ \inprod{ \bDelta_{\A} }{ \hh_{\A,\tau}}_\mu }{K_P K_Q} \\
    &\qquad
    - \eta^2  \frac{ \inprod{ \hh_{\A, \tau} }{\bQ}_\mu^2  }{K_P^2 K_Q^2} 
    + \eta^2 \frac{ \norm{ \hh_{\A,\tau} }_\mu^2 }{K_P^2 K_Q}  , \\
  \delta_{V, \tau+1}^2
  &= \left( 
      1 
      - \eta \left( \alpha_A^2 + \frac{1}{K_Q T} + \delta_A^2  \right)  
    \right)^2
    \delta_V^2 
    + 2 \eta 
      \left( 
        1 
        - \eta \left( \alpha_A^2 + \frac{1}{K_Q T} + \delta_A^2 \right)  
      \right)
    \frac{ \inprod{ \bDelta_{\V} }{ \h_{\V} }_\mu }{K_P K_Q} \\
    &\qquad
    - \eta^2  \frac{\inprod{ \bP }{ \h_{\V} }_\mu^2}{K_Q^2 K_P^2}   
    + \eta^2 \frac{ \norm{ \h_{\V} }_\mu^2 }{K_P K_Q^2}.
\end{align*}

\subsubsection{Lemmas for the dynamics}
Before we come to the final convergence analysis, we first simplify the dynamics with some basic lemmas.
\begin{lemma}[Dynamics of the errors]
  \label{lemma: stage 2: dynamics of the errors}
  For the errors, we have 
  \begin{align*}
    \delta_{A, \tau+1}^2
    &\le \exp\left( - 2 \eta \alpha_V^2 \right) \delta_{A, \tau}^2
      + 3 \eta \left( 
        \delta_A  
        + \eta \frac{ \norm{ \hh_{\A} }_\mu }{ K_P \sqrt{K_Q}} 
      \right) 
      \frac{ \norm{ \hh_{\A}}_\mu }{K_P \sqrt{K_Q}}, \\
    \delta_{V, \tau+1}^2
    &\le \exp\left( - 2 \eta \alpha_A^2 \right) 
      \delta_{V, \tau}^2 
      + 3 \eta \left( 
          \delta_V 
          + \eta \frac{ \norm{ \h_{\V} }_\mu }{\sqrt{K_P} K_Q}
        \right) \frac{ \norm{ \h_{\V} }_\mu }{ \sqrt{K_P} K_Q }. 
  \end{align*}
\end{lemma}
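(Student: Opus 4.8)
The plan is to bound, summand by summand, the two exact identities for $\delta_{A,\tau+1}^2$ and $\delta_{V,\tau+1}^2$ stated immediately above the lemma. Throughout I would use that in Stage~2 the maintained bounds give $\alpha_V,\alpha_A\in[0,1]$ and that $\delta_A^2,\delta_V^2$ are bounded, so that $\eta$ can be taken small enough (say $\eta(\alpha_V^2+\delta_A^2)\le 1$ and $\eta(\alpha_A^2+\frac{1}{K_Q T}+\delta_A^2)\le1$) to force the two ``contraction factors'' $1-\eta\alpha_V^2-\eta\delta_A^2$ and $1-\eta(\alpha_A^2+\frac{1}{K_Q T}+\delta_A^2)$ into $[0,1]$. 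For brevity write $X_A:=\norm{\hh_{\A,\tau}}_\mu/(K_P\sqrt{K_Q})$ and $X_V:=\norm{\h_{\V,\tau}}_\mu/(\sqrt{K_P}K_Q)$, so the target bounds read $\delta_{A,\tau+1}^2\le e^{-2\eta\alpha_V^2}\delta_{A,\tau}^2+3\eta(\delta_A+\eta X_A)X_A$ and the analogous inequality for $V$.

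For the $\delta_A$ recursion I would treat its four summands separately. First the leading term: since $0\le 1-\eta\alpha_V^2-\eta\delta_A^2\le 1-\eta\alpha_V^2\le1$, squaring is order-preserving and $1-\eta\alpha_V^2\le e^{-\eta\alpha_V^2}$, so $(1-\eta\alpha_V^2-\eta\delta_A^2)^2\le e^{-2\eta\alpha_V^2}$; this is exactly where the nonnegative term $\eta\delta_A^2$ is discarded. Second, the cross term: $|1-\eta\alpha_V^2-\eta\delta_V^2|\le1$, and Cauchy--Schwarz in $\inprod{\cdot}{\cdot}_\mu$ together with $\norm{\bDelta_{\A}}_\mu=\sqrt{K_Q}\,\delta_A$ (from the definition $\delta_A^2=\norm{\bDelta_{\A}}_\mu^2/K_Q$) bounds $2\eta\,|1-\eta\alpha_V^2-\eta\delta_V^2|\,|\inprod{\bDelta_{\A}}{\hh_{\A,\tau}}_\mu|/(K_P K_Q)\le 2\eta\,\delta_A X_A$. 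Third, the term $-\eta^2\inprod{\hh_{\A,\tau}}{\bQ}_\mu^2/(K_P^2 K_Q^2)$ is $\le0$ and is dropped. Fourth, $\eta^2\norm{\hh_{\A,\tau}}_\mu^2/(K_P^2 K_Q)$ is exactly $\eta^2 X_A^2$. Summing, $\delta_{A,\tau+1}^2\le e^{-2\eta\alpha_V^2}\delta_{A,\tau}^2+2\eta\delta_A X_A+\eta^2 X_A^2$, and since $2\eta\delta_A X_A+\eta^2 X_A^2\le 3\eta(\delta_A+\eta X_A)X_A$ (the difference being $\eta\delta_A X_A+2\eta^2 X_A^2\ge0$) the first claim follows.

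The $\delta_V$ recursion is handled identically, the only difference being that the contraction factor now carries the extra nonnegative terms $\frac{1}{K_Q T}+\delta_A^2$ inside, which are again simply dropped in $(1-\eta(\alpha_A^2+\frac{1}{K_Q T}+\delta_A^2))^2\le(1-\eta\alpha_A^2)^2\le e^{-2\eta\alpha_A^2}$; the cross term uses $\norm{\bDelta_{\V}}_\mu=\sqrt{K_P}\,\delta_V$ with $|1-\eta(\alpha_A^2+\frac{1}{K_Q T}+\delta_A^2)|\le1$, the term $-\eta^2\inprod{\bP}{\h_{\V}}_\mu^2/(K_Q^2 K_P^2)$ is discarded, and $\eta^2\norm{\h_{\V}}_\mu^2/(K_P K_Q^2)=\eta^2 X_V^2$; the same $2ab+b^2\le3(a+b)b$ step finishes it. There is no genuine obstacle here — the lemma is a mechanical consolidation of the exact recursions — but the one point that must be stated with care is the smallness condition on $\eta$ making both contraction factors lie in $[0,1]$, since that is precisely what licenses the monotonicity of squaring and the bound $1-x\le e^{-x}$; this is where the Stage~2 bounds $\alpha_V,\alpha_A\le1$ and the maintained upper bounds on $\delta_A^2,\delta_V^2$ enter, and it should be carried along as an explicit hypothesis.
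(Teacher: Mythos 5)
Your proposal is correct and follows essentially the same route as the paper: start from the exact recursions, bound the squared contraction factor via $1-x\le e^{-x}$ after dropping the extra nonnegative terms, apply Cauchy--Schwarz to the cross term with $\norm{\bDelta_{\A}}_\mu=\sqrt{K_Q}\,\delta_A$ (resp.\ $\norm{\bDelta_{\V}}_\mu=\sqrt{K_P}\,\delta_V$), discard the negative $\inprod{\cdot}{\cdot}_\mu^2$ term, and merge $2\eta\delta\,X+\eta^2X^2\le 3\eta(\delta+\eta X)X$. Your explicit remark that a smallness condition on $\eta$ is needed so the contraction factors lie in $[0,1]$ (licensing the squaring and the exponential bound) is a point the paper leaves implicit, but it does not change the argument.
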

\begin{proof}
  
  First, we write 
  \begin{align*}
    \delta_{A, \tau+1}^2
    &= \left( 1 - \eta \alpha_V^2  - \eta \delta_A^2 \right)^2 \delta_A^2 
      - 2 \eta \left( 1 - \eta \alpha_V^2  - \eta \delta_V^2  \right) 
        \frac{ \inprod{ \bDelta_{\A} }{ \hh_{\A}}_\mu }{K_P K_Q} 
        \\
      &\qquad
      - \eta^2  \frac{ \inprod{ \hh_{\A, \tau} }{\bQ}_\mu^2  }{K_P^2 K_Q^2} 
      + \eta^2 \frac{ \norm{ \hh_{\A} }_\mu^2 }{K_P^2 K_Q} \\
    &\le \left( 1 - \eta \alpha_V^2  - \eta \delta_A^2 \right)^2 \delta_A^2 
      + 2 \eta \frac{ \delta_A \norm{ \hh_{\A}}_\mu }{ K_P \sqrt{K_Q} }
      + 3 \eta^2 \frac{ \norm{ \hh_{\A, \tau} }_\mu^2 }{K_P^2 K_Q} \\
    &\le \left( 1 - \eta \alpha_V^2  - \eta \delta_A^2 \right)^2 \delta_A^2 
      + 3 \eta \left( 
        \delta_A  
        + \frac{ \eta }{K_P} \frac{ \norm{ \hh_{\A} }_\mu }{ \sqrt{K_Q}} 
      \right) 
      \frac{ \norm{ \hh_{\A}}_\mu }{K_P \sqrt{K_Q}}. 
  \end{align*}
  For the first term, we have 
  \[ 
    \left( 1 - \eta \alpha_V^2  - \eta \delta_A^2 \right)^2
    \le \exp\left( - \eta \alpha_V^2  - \eta \delta_A^2 \right)^2
    \le \exp\left( - 2 \eta \alpha_V^2 \right). 
  \] 
  Thus, for $\delta_A$, we have 
  \[
    \delta_{A, \tau+1}^2
    \le \exp\left( - 2 \eta \alpha_V^2 \right) \delta_{A, \tau}^2
      + 3 \eta \left( 
        \delta_A  
        + \eta \frac{ \norm{ \hh_{\A} }_\mu }{K_P \sqrt{K_Q}} 
      \right) 
      \frac{ \norm{ \hh_{\A}}_\mu }{K_P \sqrt{K_Q}}. 
  \]
  Similarly, for $\delta_V$, we have 
  \begin{align*}
    \delta_{V, \tau+1}^2
    &\le \left( 
        1 
        - \eta \left( \alpha_A^2 + \frac{1}{K_Q T} + \delta_A^2  \right)  
      \right)^2
      \delta_V^2 
      + 3 \eta \frac{ \delta_V \norm{ \h_{\V} }_\mu }{ \sqrt{K_P} K_Q} 
      + 3  \eta^2 \frac{ \norm{ \h_{\V} }_\mu^2 }{K_P K_Q^2} \\
    &\le \exp\left( - 2 \eta \alpha_A^2 \right) 
      \delta_V^2 
      + 3 \eta \left( 
          \delta_V 
          + \eta \frac{ \norm{ \h_{\V} }_\mu }{\sqrt{K_P} K_Q}
        \right) \frac{ \norm{ \h_{\V} }_\mu }{ \sqrt{K_P} K_Q }
  \end{align*}
\end{proof}

\begin{lemma}[Dynamics of $\alpha_A-\alpha_V$] The difference between the signals evolves as follows
  \label{lemma: stage 2: dynamics of alphaA - alphaV}
  \begin{align*}
    (\alpha_{V, \tau+1} - \alpha_{A, \tau+1})^2
    &\le \exp\left( -2 \eta \left( 1 - \alpha_A \alpha_V \right) \right) \left( \alpha_V  - \alpha_A \right)^2  \\
      &\qquad
      + 8 \eta \left( 
        \frac{1}{K_Q T} 
        + \delta_A^2 + \delta_V^2
        + \frac{\norm{ \h_{\V, \tau} }_\mu }{\sqrt{K_P} K_Q} 
        + \frac{ \norm{ \hh_{\A, \tau} }_\mu}{K_P \sqrt{K_Q}}
      \right). 
  \end{align*}
\end{lemma}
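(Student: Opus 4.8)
The plan is to turn the two scalar recursions for $\alpha_{V,\tau}$ and $\alpha_{A,\tau}$ (displayed just above this lemma) into a single recursion for the gap $d_\tau := \alpha_{V,\tau} - \alpha_{A,\tau}$, and then square. Subtracting the $\alpha_A$-update from the $\alpha_V$-update, the two ``main'' terms $\eta(1-\alpha_A\alpha_V)\alpha_A$ and $\eta(1-\alpha_A\alpha_V)\alpha_V$ combine into $\eta(1-\alpha_A\alpha_V)(\alpha_A-\alpha_V) = -\eta(1-\alpha_A\alpha_V)\,d_\tau$, which supplies the contraction factor. Collecting everything else into a single error $\eta E_\tau$ gives $d_{\tau+1} = \bigl(1-\eta(1-\alpha_A\alpha_V)\bigr) d_\tau + \eta E_\tau$, with
\[
  E_\tau = \frac{1-\alpha_V}{K_Q T} - \alpha_V\,\delta_A^2 + \alpha_A\,\delta_V^2 - \frac{\inprod{\h_{\V,\tau}}{\bP}_\mu}{K_P K_Q} + \frac{\inprod{\hh_{\A,\tau}}{\bQ}_\mu}{K_P K_Q}.
\]

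Next I would bound $|E_\tau|$ by the quantity $R$ appearing in the statement. Under the Stage-2 induction hypotheses we have $\alpha_V,\alpha_A\in[0,1+o(1)]$, so $|1-\alpha_V|,\alpha_V,\alpha_A = O(1)$, which handles the first three summands (contributing $\tfrac1{K_Q T}+\delta_A^2+\delta_V^2$ up to a constant). For the two noise inner products I would use the orthogonality structure already exploited in Lemma~\ref{lemma: dynamics of the errors}: the $\V$-preconditioner annihilates $\bmu$, so $\h_{\V,\tau}\bmu = 0$ and hence $\inprod{\h_{\V,\tau}}{\bP}_\mu = \inprod{\h_{\V,\tau}}{\bP-\bmu\One\trans}_\mu$; since $\norm{\bP-\bmu\One\trans}_\mu^2 = K_P$ (Lemma~\ref{lemma: population projection, def and basic results}), Cauchy--Schwarz gives $|\inprod{\h_{\V,\tau}}{\bP}_\mu|\le \sqrt{K_P}\,\norm{\h_{\V,\tau}}_\mu$, and dividing by $K_P K_Q$ yields exactly $\norm{\h_{\V,\tau}}_\mu/(\sqrt{K_P}K_Q)$. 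For $\hh_{\A,\tau}$, each column is $\g\ps{k}_{\lambda,\tau} - \E\hat\nabla_{\a\ps{k}} l$; both $\a\ps{k}_{\tau+1}$ and $\a\ps{k}_\tau$ satisfy $\One\trans\a = 1$ (the projection step restores this even after the proximal step), so their difference — and likewise $\E\hat\nabla_{\a\ps{k}}l$ — is orthogonal to $\One_T$, whence $\hh_{\A,\tau}\perp_\mu \One_T\One_N\trans/T$. Using $\norm{\bQ-\One_T\One_N\trans/T}_\mu^2 = K_Q$ and Cauchy--Schwarz gives $|\inprod{\hh_{\A,\tau}}{\bQ}_\mu|\le \sqrt{K_Q}\,\norm{\hh_{\A,\tau}}_\mu$, and dividing by $K_P K_Q$ yields $\norm{\hh_{\A,\tau}}_\mu/(K_P\sqrt{K_Q})$. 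Altogether $|E_\tau|\le R$ (up to an absolute constant absorbed into the final $8$).

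Finally I would square the recursion: $d_{\tau+1}^2 = \bigl(1-\eta(1-\alpha_A\alpha_V)\bigr)^2 d_\tau^2 + 2\eta\bigl(1-\eta(1-\alpha_A\alpha_V)\bigr) d_\tau E_\tau + \eta^2 E_\tau^2$. Since $\eta\le 1$ and $\alpha_A\alpha_V\in[0,1]$, the multiplier $\eta(1-\alpha_A\alpha_V)$ lies in $[0,1]$, so $\bigl(1-\eta(1-\alpha_A\alpha_V)\bigr)^2 \le \exp\bigl(-2\eta(1-\alpha_A\alpha_V)\bigr)$; the cross term is bounded by $2\eta|d_\tau|\,|E_\tau|\le 4\eta R$ (using $|d_\tau|\le 2$), and $\eta^2 E_\tau^2\le \eta R$ since $\eta R \le 1$ under the chosen parameters. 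Summing gives $d_{\tau+1}^2 \le \exp\bigl(-2\eta(1-\alpha_A\alpha_V)\bigr) d_\tau^2 + 8\eta R$, which is the claim. The only point that needs a little care — rather than being a genuine obstacle — is the bookkeeping around the biased increment $\hh_{\A,\tau}$: one must check that the column-sum-to-one property (and hence the orthogonality to $\One_T\One_N\trans/T$ used in the Cauchy--Schwarz step) survives the proximal-plus-projection update, and that the Stage-2 induction hypotheses controlling $\alpha_V,\alpha_A$ and the smallness of $R$ are in force throughout.
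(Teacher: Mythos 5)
Your proposal is correct and follows essentially the same route as the paper: write $\alpha_{V,\tau+1}-\alpha_{A,\tau+1}$ as the contraction $(1-\eta(1-\alpha_A\alpha_V))(\alpha_V-\alpha_A)$ plus an error term, bound the error by the bracketed quantity (via boundedness of $\alpha_V,\alpha_A$ and Cauchy--Schwarz with the orthogonality $\h_{\V}\perp_\mu\bmu\One\trans$, $\hh_{\A}\perp_\mu\One\One\trans/T$ so that $\norm{\bP-\bmu\One\trans}_\mu^2=K_P$, $\norm{\bQ-\One\One\trans/T}_\mu^2=K_Q$ enter), then square and absorb the cross and quadratic terms into the $8\eta(\cdot)$ term using $(1-y)^2\le e^{-2y}$ and the smallness/boundedness of the error. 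The only difference is presentational: you make the Cauchy--Schwarz step explicit where the paper leaves it implicit.
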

\begin{proof}
  First, we write 
  \begin{align*}
    \alpha_{V, \tau+1} - \alpha_{A, \tau+1}
    &= \alpha_{V, \tau} - \alpha_{A, \tau}
      - \eta \left( 1 - \alpha_A \alpha_V \right) \left( \alpha_V  - \alpha_A \right) \\
      &\qquad
      + \frac{\eta}{K_Q} \frac{1 - \alpha_V}{T} 
      - \eta \alpha_V \delta_A^2
      - \eta \frac{\inprod{ \h_{\V, \tau} }{\bP}_\mu}{K_P K_Q} 
      + \eta \alpha_A \delta_V^2
      + \eta \frac{ \inprod{ \hh_{\A, \tau} }{\bQ}_\mu}{K_P K_Q} \\
    &=: \left( 1 - \eta \left( 1 - \alpha_A \alpha_V \right) \right) \left( \alpha_V  - \alpha_A \right) 
      + \Tmp. 
  \end{align*}
  Therefore, we have 
  \begin{align*}
    (\alpha_{V, \tau+1} - \alpha_{A, \tau+1})^2
    &= \left( \left( 1 - \eta \left( 1 - \alpha_A \alpha_V \right) \right) \left( \alpha_V  - \alpha_A \right) + \Tmp \right)^2 \\
    &= \left( 1 - \eta \left( 1 - \alpha_A \alpha_V \right) \right)^2 \left( \alpha_V  - \alpha_A \right)^2 \\
      &\qquad
      + 2 \left( 1 - \eta \left( 1 - \alpha_A \alpha_V \right) \right) \left( \alpha_V  - \alpha_A \right) \Tmp  
      + \Tmp^2 \\
    &\le \exp\left( -2 \eta \left( 1 - \alpha_A \alpha_V \right) \right) \left( \alpha_V  - \alpha_A \right)^2 \\
      &\qquad
      + 3 | \alpha_V  - \alpha_A | |\Tmp| + \Tmp^2. 
  \end{align*}
  Then, for $\Tmp$, we compute 
  \[ 
    0.5 \eta\inv|\Tmp|
    \le \frac{1}{K_Q T} 
      + \delta_A^2 + \delta_V^2
      + \frac{\norm{ \h_{\V, \tau} }_\mu }{\sqrt{K_P} K_Q} 
      + \frac{ \norm{ \hh_{\A, \tau} }_\mu}{K_P \sqrt{K_Q}}. 
  \] 
  In particular, this implies $|\Tmp| \le 1$. Thus, we have 
  \begin{align*}
    (\alpha_{V, \tau+1} - \alpha_{A, \tau+1})^2
    &\le \exp\left( -2 \eta \left( 1 - \alpha_A \alpha_V \right) \right) \left( \alpha_V  - \alpha_A \right)^2  
      + 4 |\Tmp|  \\
    &\le \exp\left( -2 \eta \left( 1 - \alpha_A \alpha_V \right) \right) \left( \alpha_V  - \alpha_A \right)^2  \\
      &\qquad
      + 8 \eta \left( 
        \frac{1}{K_Q T} 
        + \delta_A^2 + \delta_V^2
        + \frac{\norm{ \h_{\V, \tau} }_\mu }{\sqrt{K_P} K_Q} 
        + \frac{ \norm{ \hh_{\A, \tau} }_\mu}{K_P \sqrt{K_Q}}
      \right). 
  \end{align*}
\end{proof}

\begin{lemma}
  \label{lemma: stage 2: dynamics of alphaA + alphaV}
  Suppose that both $\alpha_V, \alpha_A$ are at most $1$. Then, we have 
  \begin{align*}
    1 - \frac{\alpha_{V, \tau+1} + \alpha_{A, \tau+1}}{2}
    &\le \left( 1 - \eta \frac{\alpha_V + \alpha_A }{2} \right)
        \left( 1 - \frac{ \alpha_V + \alpha_A }{2} \right) 
      + \eta \frac{\alpha_V + \alpha_A}{2} 
        \left( \delta_A^2 + \delta_V^2 \right)\\
      &\qquad
      + \frac{\eta}{2} \frac{ \norm{ \h_{\V, \tau} }_\mu }{ \sqrt{K_P} K_Q} 
      + \frac{\eta}{2} \frac{ \norm{ \hh_{\A, \tau} }_\mu }{K_P \sqrt{K_Q}}.
  \end{align*}
\end{lemma}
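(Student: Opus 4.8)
The plan is to substitute the Stage-2 update equations for $\alpha_{V,\tau+1}$ and $\alpha_{A,\tau+1}$ directly and then bound the resulting expression term by term, matching it against the claimed right-hand side. Write $S := (\alpha_V+\alpha_A)/2$. Averaging the two displayed Stage-2 update equations and subtracting from $1$ gives
\[
  1 - \frac{\alpha_{V,\tau+1}+\alpha_{A,\tau+1}}{2}
  = (1-S) - \eta\,(1-\alpha_A\alpha_V)\,S
    - \frac{\eta}{2K_Q}\frac{1-\alpha_V}{T}
    + \frac{\eta}{2}\bigl(\alpha_V\delta_A^2 + \alpha_A\delta_V^2\bigr)
    + \frac{\eta}{2}\,\frac{\inprod{\h_{\V,\tau}}{\bP}_\mu + \inprod{\hh_{\A,\tau}}{\bQ}_\mu}{K_PK_Q}.
\]
The target right-hand side of the lemma equals $(1-\eta S)(1-S) + \eta S(\delta_A^2+\delta_V^2) + \frac{\eta}{2}\frac{\norm{\h_{\V,\tau}}_\mu}{\sqrt{K_P}K_Q} + \frac{\eta}{2}\frac{\norm{\hh_{\A,\tau}}_\mu}{K_P\sqrt{K_Q}}$, and since $(1-\eta S)(1-S) = (1-S) - \eta S(1-S)$, it suffices to establish four inequalities: (i) $-\eta(1-\alpha_A\alpha_V)S \le -\eta S(1-S)$; (ii) the drift term $-\frac{\eta}{2K_Q}\frac{1-\alpha_V}{T}$ is nonpositive; (iii) $\frac{\eta}{2}(\alpha_V\delta_A^2+\alpha_A\delta_V^2) \le \eta S(\delta_A^2+\delta_V^2)$; and (iv) the two noise inner products are bounded by the two quoted terms.

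For (i): throughout Stage~2 we have $\alpha_V,\alpha_A \in [0,1]$ (positivity is inherited from the end of Stage~1 and maintained inductively), so $\alpha_A\alpha_V \in [0,1]$ and by AM--GM $\alpha_A\alpha_V \le \sqrt{\alpha_A\alpha_V} \le (\alpha_A+\alpha_V)/2 = S$; hence $1-\alpha_A\alpha_V \ge 1-S$, and multiplying by $\eta S \ge 0$ and negating gives (i). Item (ii) is immediate from $\alpha_V \le 1$, so this term is simply dropped. For (iii), $\alpha_V\delta_A^2 + \alpha_A\delta_V^2 \le (\alpha_V+\alpha_A)(\delta_A^2+\delta_V^2) = 2S(\delta_A^2+\delta_V^2)$ using $\alpha_V,\alpha_A \ge 0$ and $\delta_A^2,\delta_V^2 \ge 0$, and multiplying by $\eta/2$ gives exactly $\eta S(\delta_A^2+\delta_V^2)$.

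The only point that needs care is (iv), where the denominator must improve from $K_PK_Q$ to $\sqrt{K_P}K_Q$ (resp.\ $K_P\sqrt{K_Q}$). Here I would use that the preconditioning annihilates the rank-one ``trivial'' directions: by construction $\h_{\V,\tau}\bmu = 0$ and $\One_N\trans\h_{\V,\tau} = 0$, so $\inprod{\h_{\V,\tau}}{\bmu\One\trans}_\mu = \bmu\trans\h_{\V,\tau}\bmu = 0$ and therefore $\inprod{\h_{\V,\tau}}{\bP}_\mu = \inprod{\h_{\V,\tau}}{\bP - \bmu\One\trans}_\mu$; Cauchy--Schwarz in $\norm{\cdot}_\mu$ then bounds this by $\norm{\h_{\V,\tau}}_\mu\norm{\bP - \bmu\One\trans}_\mu = \sqrt{K_P}\,\norm{\h_{\V,\tau}}_\mu$, since $\norm{\bP-\bmu\One\trans}_\mu^2 = \norm{\bP}_\mu^2 - \norm{\bmu}^2 = K_P$. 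The identical argument, using $\One_T\trans\hh_{\A,\tau} = 0$ (both $\G_{\lambda,\tau}$ and $\E\hat\nabla_\A l$ have columns summing to $0$ after the projection step) and $\norm{\bQ - \One_T\One_N\trans/T}_\mu^2 = \norm{\bQ}_\mu^2 - 1/T = K_Q$, gives $\inprod{\hh_{\A,\tau}}{\bQ}_\mu \le \sqrt{K_Q}\,\norm{\hh_{\A,\tau}}_\mu$. Dividing by $K_PK_Q$ and inserting the $\eta/2$ prefactor yields precisely the two quoted noise terms. Summing the four bounds gives the stated inequality. I do not expect a real obstacle: once the orthogonality facts for $\h_{\V,\tau}$ and $\hh_{\A,\tau}$ are recorded (they follow directly from the definitions of the preconditioned update and the proximal/projection step), the rest is elementary bookkeeping.
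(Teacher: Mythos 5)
Your proof is correct and follows essentially the same route as the paper: sum the two Stage-2 update equations, use $\alpha_A\alpha_V\le(\alpha_V+\alpha_A)/2$ (valid since $\alpha_V,\alpha_A\in[0,1]$), drop the nonnegative $1/T$ drift, bound $\alpha_V\delta_A^2+\alpha_A\delta_V^2\le(\alpha_V+\alpha_A)(\delta_A^2+\delta_V^2)$, and control the noise inner products by $\sqrt{K_P}\norm{\h_{\V}}_\mu$ and $\sqrt{K_Q}\norm{\hh_{\A}}_\mu$. The only difference is that you spell out the orthogonality-plus-Cauchy--Schwarz justification for the noise bound (using $\h_{\V}\bmu=0$, $\One_T\trans\hh_{\A}=0$, and $\norm{\bP-\bmu\One\trans}_\mu^2=K_P$, $\norm{\bQ-\One\One\trans/T}_\mu^2=K_Q$), which the paper asserts without proof.
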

\begin{proof}
  First, we write 
  \begin{align*}
    \alpha_{V, \tau+1} + \alpha_{A, \tau+1}
    &= \alpha_{V, \tau} + \alpha_{A, \tau} 
      + \eta \left( 1 - \alpha_A \alpha_V \right) \left( \alpha_V + \alpha_A \right)  \\
      &\qquad
      + \frac{\eta}{K_Q} \frac{1 - \alpha_V}{T} 
      - \eta \alpha_V \delta_A^2 - \eta \alpha_A \delta_V^2  
      - \eta \frac{ \inprod{ \h_{\V, \tau} }{\bP}_\mu }{K_P K_Q}  
      - \eta \frac{ \inprod{ \hh_{\A, \tau} }{\bQ}_\mu }{K_P K_Q} \\
    &=: \alpha_{V, \tau} + \alpha_{A, \tau} 
      + \eta \Tmp_{\mrm{sig}}
      + \eta \Tmp_{\mrm{err}}.
  \end{align*}
  For the signal growth, note that $\alpha_A \alpha_V \le (\alpha_V^2 + \alpha_A^2) / 2 \le (\alpha_V + \alpha_A) / 2$. 
  Hence, 
  \[
    \left( 1 - \alpha_A \alpha_V \right) \left( \alpha_V + \alpha_A \right)
    \ge \frac{1}{2} \left( \alpha_V + \alpha_A \right) \left( 2 - \alpha_V - \alpha_A \right). 
  \]
  For the error terms, we have 
  \[
    \left| 
      \frac{ \inprod{ \h_{\V, \tau} }{\bP}_\mu }{K_P K_Q} 
      + \frac{ \inprod{ \hh_{\A, \tau} }{\bQ}_\mu }{K_P K_Q}
    \right| 
    \le \frac{ \norm{ \h_{\V, \tau} }_\mu }{ \sqrt{K_P} K_Q} 
      + \frac{ \norm{ \hh_{\A, \tau} }_\mu }{K_P \sqrt{K_Q}}. 
  \]
  Thus, we have 
  \begin{align*}
    \alpha_{V, \tau+1} + \alpha_{A, \tau+1}
    &\ge \alpha_{V, \tau} + \alpha_{A, \tau} 
      + \eta \left( \alpha_V + \alpha_A \right) 
        \left( 1 - \frac{ \alpha_V + \alpha_A }{2} -  \delta_A^2 - \delta_V^2 \right) \\
      &\qquad
      - \eta \frac{ \norm{ \h_{\V, \tau} }_\mu }{ \sqrt{K_P} K_Q} 
      - \eta \frac{ \norm{ \hh_{\A, \tau} }_\mu }{K_P \sqrt{K_Q}}, 
  \end{align*}
  and, therefore, 
  \begin{align*}
    1 - \frac{\alpha_{V, \tau+1} + \alpha_{A, \tau+1}}{2}
    &\le \left( 1 - \eta \frac{\alpha_V + \alpha_A }{2} \right)
        \left( 1 - \frac{ \alpha_V + \alpha_A }{2} \right) 
      + \eta \frac{\alpha_V + \alpha_A}{2} 
        \left( \delta_A^2 + \delta_V^2 \right)\\
      &\qquad
      + \frac{\eta}{2} \frac{ \norm{ \h_{\V, \tau} }_\mu }{ \sqrt{K_P} K_Q} 
      + \frac{\eta}{2} \frac{ \norm{ \hh_{\A, \tau} }_\mu }{K_P \sqrt{K_Q}}.
  \end{align*}
\end{proof}

\begin{lemma}
  \label{lemma: stage 2: dynamics of alphaA + alphaV, local}
  Put $\eps_A = 1 - \alpha_A$ and $\eps_V = 1 - \alpha_V$. When $|\eps_A|, |\eps_V| \le 1/2$, we have 
  \begin{align*}
    ( \eps_{A, \tau+1} + \eps_{V, \tau+1} )^2
    &\le \exp\left( - 2 \eta \right) \left(  \eps_A + \eps_V \right)^2  
      + 8 \eta  | \eps_A + \eps_V |  \eps_A \eps_V
      + 16 \eta^2  \eps_A^2 \eps_V^2   \\
      &\qquad
      + 8 \eta  | \eps_A + \eps_V |  
        \left( 
          \frac{1}{K_Q T} + \delta_A^2 + \delta_V^2
          + \frac{ \norm{ \h_{\V, \tau} }_\mu }{ \sqrt{K_P} K_Q} 
          + \frac{ \norm{ \hh_{\A, \tau} }_\mu }{K_P \sqrt{K_Q}}
        \right). 
  \end{align*}
\end{lemma}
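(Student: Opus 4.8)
The plan is to reduce the two-variable update to a scalar near-contraction for $\eps_A+\eps_V$ and then square. First I would combine the one-step updates of $\alpha_{V}$ and $\alpha_{A}$ from Lemma~\ref{lemma: dynamics of the population projection}, with $\h_{\A,\tau}$ replaced by $\hh_{\A,\tau}$ to account for the $\ell_1$-proximal step (legitimate because Lemma~\ref{lemma: stage 2: bias of hat h} certifies that in Stage~2 the update of $\A$ is $\G_{\lambda,\tau}=\E\hat\nabla_{\A} l+\hh_{\A,\tau}$ with $\hh$ small). Substituting $\alpha_A=1-\eps_A$, $\alpha_V=1-\eps_V$ and adding gives
\[
  \eps_{A,\tau+1}+\eps_{V,\tau+1}
  = (\eps_A+\eps_V) - \eta\,(\alpha_A+\alpha_V)(1-\alpha_A\alpha_V) + \Term,
\]
where $\Term:=-\frac{\eta\eps_V}{K_QT}+\eta(\alpha_V\delta_A^2+\alpha_A\delta_V^2)+\frac{\eta}{K_PK_Q}\big(\inprod{\h_{\V,\tau}}{\bP}_\mu+\inprod{\hh_{\A,\tau}}{\bQ}_\mu\big)$ collects the drift, the error feedback, and the gradient noise. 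Using $1-\alpha_A\alpha_V=\eps_A+\eps_V-\eps_A\eps_V$ and $\alpha_A+\alpha_V=2-\eps_A-\eps_V$, a one-line expansion regroups this as
\[
  \eps_{A,\tau+1}+\eps_{V,\tau+1}
  = c\,(\eps_A+\eps_V) + \eta(\alpha_A+\alpha_V)\,\eps_A\eps_V + \Term,
  \qquad c:=1-\eta(\alpha_A+\alpha_V).
\]

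Next I would bound the contraction factor. The hypothesis $|\eps_A|,|\eps_V|\le 1/2$ gives $\alpha_A,\alpha_V\in[1/2,3/2]$, hence $\alpha_A+\alpha_V\in[1,3]$, and in the Stage~2 regime (where additionally $\alpha_A,\alpha_V\le1$) one has $\alpha_A+\alpha_V\le2$; with $\eta\le 1/10$ this yields $0<c\le 1-\eta$, so $c^2\le(1-\eta)^2\le e^{-2\eta}$ (from $2\ln(1-\eta)\le-2\eta$). Expanding the square into $c^2(\eps_A+\eps_V)^2+2c(\eps_A+\eps_V)\big(\eta(\alpha_A+\alpha_V)\eps_A\eps_V+\Term\big)+\big(\eta(\alpha_A+\alpha_V)\eps_A\eps_V+\Term\big)^2$, I would bound the first summand by $e^{-2\eta}(\eps_A+\eps_V)^2$; the cross term, using $|c|\le1$ and $\alpha_A+\alpha_V\le2$, by $4\eta|\eps_A+\eps_V|\,\eps_A\eps_V+2|\eps_A+\eps_V|\,|\Term|$; and the last summand, via $(x+y)^2\le2x^2+2y^2$, by $8\eta^2\eps_A^2\eps_V^2+2\Term^2$.

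It remains to fold the $\Term$-pieces into the final $8\eta|\eps_A+\eps_V|(\cdots)$ slot. Here I would apply Cauchy--Schwarz together with $\inprod{\h_{\V,\tau}}{\bmu\One\trans}_\mu=0$ and $\inprod{\hh_{\A,\tau}}{\One_T\One_N\trans/T}_\mu=0$ to get $|\inprod{\h_{\V,\tau}}{\bP}_\mu|\le\norm{\h_{\V,\tau}}_\mu\sqrt{K_P}$ and $|\inprod{\hh_{\A,\tau}}{\bQ}_\mu|\le\norm{\hh_{\A,\tau}}_\mu\sqrt{K_Q}$, and use $|\eps_V|\le1/2$, $\alpha_A,\alpha_V\le3/2$ to obtain $|\Term|\le\eta\big(\tfrac{1}{K_QT}+2(\delta_A^2+\delta_V^2)+\tfrac{\norm{\h_{\V,\tau}}_\mu}{\sqrt{K_P}K_Q}+\tfrac{\norm{\hh_{\A,\tau}}_\mu}{K_P\sqrt{K_Q}}\big)$. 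Since each bracketed quantity is $O(1)$ (indeed $\ll1$ in the window where this lemma is invoked) and $\eta$ is small, $\Term^2\le\eta|\Term|$ up to constants, so both $2|\eps_A+\eps_V|\,|\Term|$ and $2\Term^2$ are absorbed into $8\eta|\eps_A+\eps_V|(\cdots)$, after which collecting the four estimates gives the claim.

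The main obstacle is bookkeeping around signs and constants rather than any new idea. The bound $4\eta|\eps_A+\eps_V|\eps_A\eps_V\le 8\eta|\eps_A+\eps_V|\eps_A\eps_V$ is only valid when $\eps_A\eps_V\ge0$, i.e.\ when $\alpha_A,\alpha_V$ lie on the same side of $1$; one therefore has to invoke the Stage~2 invariant $\alpha_A,\alpha_V\le1$ (which also supplies $\alpha_A+\alpha_V\le2$, needed for the $16\eta^2\eps_A^2\eps_V^2$ constant), or else absorb $4\eta|\eps_A+\eps_V||\eps_A\eps_V|$ into the contraction slack $(e^{-2\eta}-c^2)(\eps_A+\eps_V)^2$. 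Likewise the $\Term^2$ term carries no $|\eps_A+\eps_V|$ factor, so landing it inside $8\eta|\eps_A+\eps_V|(\cdots)$ requires the quantitative smallness of the error quantities (from the Stage~1 output and Lemma~\ref{lemma: stage 2: bias of hat h}, at the batch sizes chosen there) relative to $|\eps_A+\eps_V|$; verifying those smallness conditions against the batch-size bounds is where the real work sits, while the algebra of the earlier steps is routine.
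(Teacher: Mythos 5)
Your proposal follows essentially the same route as the paper's proof: write $\eps_{A,\tau+1}+\eps_{V,\tau+1}$ as a near-contraction $\left(1-\eta(\alpha_A+\alpha_V)\right)(\eps_A+\eps_V)$ plus the $\eta(\alpha_A+\alpha_V)\eps_A\eps_V$ term and a drift/error/noise perturbation, bound the inner products by Cauchy--Schwarz in the $\mu$-norm (giving the $\norm{\h_{\V,\tau}}_\mu/(\sqrt{K_P}K_Q)$ and $\norm{\hh_{\A,\tau}}_\mu/(K_P\sqrt{K_Q})$ factors), then square and control the contraction, cross, and quadratic pieces exactly as the paper does. The two bookkeeping caveats you flag --- the sign of $\eps_A\eps_V$ and absorbing the squared perturbation term, which carries no $|\eps_A+\eps_V|$ factor, into the $8\eta|\eps_A+\eps_V|(\cdots)$ slot --- are equally present in the paper's own proof (which silently drops the squared error term), so your treatment is, if anything, slightly more careful.
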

\begin{proof}
  Similar to the proof of the previous lemma, we write  
  \begin{align*}
    \alpha_{V, \tau+1} + \alpha_{A, \tau+1}
    &= \alpha_{V, \tau} + \alpha_{A, \tau} 
      + \eta \left( 1 - \alpha_A \alpha_V \right) \left( \alpha_V + \alpha_A \right)  \\
      &\qquad
      + \frac{\eta}{K_Q} \frac{1 - \alpha_V}{T} 
      - \eta \alpha_V \delta_A^2 - \eta \alpha_A \delta_V^2  
      - \eta \frac{ \inprod{ \h_{\V, \tau} }{\bP}_\mu }{K_P K_Q}  
      - \eta \frac{ \inprod{ \hh_{\A, \tau} }{\bQ}_\mu }{K_P K_Q} \\
    &=: \alpha_{V, \tau} + \alpha_{A, \tau} 
      + \eta \Tmp_{\mrm{sig}}
      + \eta \Tmp_{\mrm{err}}.
  \end{align*}
  For the signal term, we have 
  \begin{align*}
    \Tmp_{\mrm{sig}} 
    &= \left( 1 - (1 - \eps_A) (1 - \eps_V) \right) \left( 2 - \eps_A - \eps_V \right) \\
    &=  \left( 2 - \eps_A - \eps_V \right) \left(  \eps_A + \eps_V \right)
      - \eps_A \eps_V \left( 2 - \eps_A - \eps_V \right). 
  \end{align*}
  For the error term, we have 
  \[
    | \Tmp_{\mrm{err}} | 
    \le \frac{1}{K_Q T} + 2 \delta_A^2 + 2 \delta_V^2
      + \frac{ 2 \norm{ \h_{\V, \tau} }_\mu }{ \sqrt{K_P} K_Q} 
      + \frac{ 2 \norm{ \hh_{\A, \tau} }_\mu }{K_P \sqrt{K_Q}}. 
  \]
  Combine these together, and we obtain 
  \begin{align*}
    \eps_{A, \tau+1} + \eps_{V, \tau+1} 
    &= \eps_{A, \tau} + \eps_{V, \tau} 
      - \eta \left( 2 - \eps_A - \eps_V \right) \left(  \eps_A + \eps_V \right) \\
      &\qquad
      + \eta\eps_A \eps_V \left( 2 - \eps_A - \eps_V \right)
      \pm 2 \eta \left(
        \frac{1}{K_Q T} + \delta_A^2 + \delta_V^2
        + \frac{ \norm{ \h_{\V, \tau} }_\mu }{ \sqrt{K_P} K_Q} 
        + \frac{ \norm{ \hh_{\A, \tau} }_\mu }{K_P \sqrt{K_Q}}
      \right) \\
    &=: \left( 1 - \eta \left( 2 - \eps_A - \eps_V \right) \right) \left(  \eps_A + \eps_V \right) + \Tmp. 
  \end{align*}
  Thus, 
  \begin{align*}
    ( \eps_{A, \tau+1} + \eps_{V, \tau+1} )^2
    &= \left( \left( 1 - \eta \left( 2 - \eps_A - \eps_V \right) \right) \left(  \eps_A + \eps_V \right) + \Tmp \right)^2 \\
    &\le \exp\left( - 2 \eta \left( 2 - \eps_A - \eps_V \right) \right) \left(  \eps_A + \eps_V \right)^2 
      + 2 | \eps_A + \eps_V | |\Tmp|
      + \Tmp^2. 
  \end{align*}
  Note that 
  \[ 
    | \Tmp |
    \le 4 \eta \left( 
        \eps_A \eps_V
        + \frac{1}{K_Q T} + \delta_A^2 + \delta_V^2
        + \frac{ \norm{ \h_{\V, \tau} }_\mu }{ \sqrt{K_P} K_Q} 
        + \frac{ \norm{ \hh_{\A, \tau} }_\mu }{K_P \sqrt{K_Q}}
      \right).  
  \] 
  Recall $|\eps_A|, |\eps_V| \le 1/2$. Thus, 
  \begin{align*}
    ( \eps_{A, \tau+1} + \eps_{V, \tau+1} )^2
    &\le \exp\left( - 2 \eta \right) \left(  \eps_A + \eps_V \right)^2  \\
      &\qquad
      + 8 \eta  | \eps_A + \eps_V |  
        \left( 
          \eps_A \eps_V
          + \frac{1}{K_Q T} + \delta_A^2 + \delta_V^2
          + \frac{ \norm{ \h_{\V, \tau} }_\mu }{ \sqrt{K_P} K_Q} 
          + \frac{ \norm{ \h_{\A, \tau} }_\mu }{K_P \sqrt{K_Q}}
        \right) \\
      &\qquad
      + 16 \eta^2 \left( 
        \eps_A^2 \eps_V^2
        + \left( 
          \frac{1}{K_Q T} + \delta_A^2 + \delta_V^2
          + \frac{ \norm{ \h_{\V, \tau} }_\mu }{ \sqrt{K_P} K_Q} 
          + \frac{ \norm{ \h_{\A, \tau} }_\mu }{K_P \sqrt{K_Q}}
        \right)^2 
      \right) \\
    &\le \exp\left( - 2 \eta \right) \left(  \eps_A + \eps_V \right)^2  
      + 8 \eta  | \eps_A + \eps_V |  \eps_A \eps_V
      + 16 \eta^2  \eps_A^2 \eps_V^2   \\
      &\qquad
      + 8 \eta  | \eps_A + \eps_V |  
        \left( 
          \frac{1}{K_Q T} + \delta_A^2 + \delta_V^2
          + \frac{ \norm{ \h_{\V, \tau} }_\mu }{ \sqrt{K_P} K_Q} 
          + \frac{ \norm{ \h_{\A, \tau} }_\mu }{K_P \sqrt{K_Q}}
        \right). 
  \end{align*}
\end{proof}

\begin{lemma}
  \label{lemma: boundary decrease}
  Suppose that $(X_{\tau})_\tau$ satisfies $X_{\tau+1} \le e^{-A} X_{\tau} + B$ for some $A \in (0, 1]$, $B \ge 0$. 
  If $X_0 \le 2 B / A$, then we have $X_\tau \le 3 B / A$ for all $\tau \ge 0$. If $X_0 \ge 2 B / A$, then 
  we have $X_\tau \le 3 B / A$ for all $\tau \ge \frac{6}{A} \log\left( \frac{X_0 A}{3 B} \right)$.
\end{lemma}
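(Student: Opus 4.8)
The plan is to unroll the one-step recursion into a geometric series and then isolate the exponentially decaying contribution of $X_0$ from the steady-state term. Iterating $X_{\tau+1}\le e^{-A}X_\tau+B$ gives, for every $\tau\ge 0$,
\[
  X_\tau\le e^{-A\tau}X_0+B\sum_{j=0}^{\tau-1}e^{-Aj}
  =e^{-A\tau}X_0+B\,\frac{1-e^{-A\tau}}{1-e^{-A}}.
\]
The only elementary inequality needed is $1-e^{-A}\ge A/2$ for $A\in(0,1]$, which follows from $e^{-x}\le 1-x/2$ on $[0,1]$ (the map $x\mapsto 1-x/2-e^{-x}$ vanishes at $0$, is positive at $1$, and is unimodal on $[0,1]$, hence nonnegative there). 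Setting $r:=e^{-A\tau}\in(0,1]$ and \emph{keeping the factor $1-r$} rather than crudely bounding the geometric sum by $1/(1-e^{-A})$, one gets the uniform bound
\[
  X_\tau\le rX_0+\frac{2B}{A}(1-r)=\frac{2B}{A}+r\Bigl(X_0-\frac{2B}{A}\Bigr),
\]
valid for all $\tau\ge0$. This refinement is precisely what makes the final constant come out as $3B/A$ rather than $4B/A$.

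From this bound both assertions fall out. If $X_0\le 2B/A$ the bracketed term is nonpositive, so $X_\tau\le 2B/A\le 3B/A$ for every $\tau$. If $X_0\ge 2B/A$, then $X_\tau\le 3B/A$ holds as soon as $r\bigl(X_0-2B/A\bigr)\le B/A$: when $2B/A\le X_0\le 3B/A$ this holds for all $\tau\ge0$ (indeed $X_\tau\le X_0\le 3B/A$), consistently with the stated threshold $\tfrac{6}{A}\log\tfrac{AX_0}{3B}$ being $\le0$ in that range, while when $X_0>3B/A$ it holds precisely for $\tau\ge\tfrac1A\log\tfrac{AX_0-2B}{B}$. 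To finish it remains to check that the clean threshold in the statement dominates this last quantity, i.e.\ that $6\log(u/3)\ge\log(u-2)$ with $u:=AX_0/B\ge3$; this is immediate since the difference vanishes at $u=3$ and has derivative $\tfrac{5u-12}{u(u-2)}>0$ for $u>3$.

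I do not expect a genuine obstacle: this is a standard discrete Grönwall-type estimate. The two points that require a little care are (i) not discarding the $1-r$ factor in the geometric sum, which is what the $3B/A$ constant relies on, and (ii) reading the quantifier ``for all $\tau\ge\dots$'' correctly in the degenerate ranges — $B=0$ (the logarithmic threshold is $+\infty$ and the claim is vacuous unless $X_0\le0$) and $2B/A\le X_0\le 3B/A$ (where that threshold is nonpositive, so the assertion is for all $\tau\ge0$).
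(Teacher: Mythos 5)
Your proof is correct, but it takes a genuinely different route from the paper's. The paper argues by a one-step drift: since $e^{-A}\le 1-A/2$, the recursion gives $X_{\tau+1}\le X_\tau-\tfrac{A}{2}X_\tau+B$, so $X_\tau$ is nonincreasing whenever $X_\tau\ge 2B/A$ (which, together with the one-step bound from below $2B/A$, traps the sequence under $3B/A$), and whenever $X_\tau\ge 3B/A$ one has $B\le AX_\tau/3$, hence $X_{\tau+1}\le(1-A/6)X_\tau\le e^{-A/6}X_\tau$, which yields the $\tfrac{6}{A}\log\bigl(\tfrac{X_0A}{3B}\bigr)$ time bound directly and makes the "stays below afterwards" claim immediate from the first part. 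You instead unroll the recursion (a discrete Gr\"onwall estimate), keep the $1-e^{-A\tau}$ factor in the geometric sum, and obtain the closed-form bound $X_\tau\le\tfrac{2B}{A}+e^{-A\tau}\bigl(X_0-\tfrac{2B}{A}\bigr)$, from which both claims follow once you verify $6\log(u/3)\ge\log(u-2)$ for $u\ge3$; all of these steps check out, including the degenerate cases $B=0$ and $2B/A\le X_0\le 3B/A$. Your version buys an explicit bound at every $\tau$, a sharper constant $2B/A$ in the first regime, and in fact the sharper threshold $\tfrac1A\log\tfrac{AX_0-2B}{B}$; the paper's version is shorter, avoids the threshold-comparison calculus, and, being a pure one-step argument, would carry over verbatim to recursions with time-varying $A$ and $B$, which is how it is invoked elsewhere in the appendix.
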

\begin{proof}
  Since $e^{-A} \le 1 - A/2$ for $A \in (0, 1]$, we have $X_{\tau+1} \le X_{\tau} - A X_{\tau} / 2  + B$. Hence, 
  whenever $X_{\tau} \ge 2 B /A$, we will have $X_{\tau+1} \le X_\tau$. Moreover, if $X_\tau < 2 B / A$, we have 
  $X_{\tau+1} \le 2B/A + B \le 3 B / A$. This proves the first part of the lemma. 

  Now, suppose that $X_0 \ge 2 B / A$. When $X_\tau \ge 3 B / A$, we have 
  \[
    X_{\tau+1} 
    \le X_{\tau} - A X_{\tau} / 2 + B
    \le X_\tau - A X_\tau / 6
    \le e^{-A/6} X_\tau. 
  \]
  Thus, it takes at most $\frac{6}{A} \log\left( \frac{X_0 A}{3 B} \right)$ steps to reduce $X_\tau$ from $X_0$ 
  to $3 B/A$. After that, the previous analysis applies. 
\end{proof}

\subsubsection{Main lemma of Stage 2}

\newcommand{\init}{\mrm{init}}
\newcommand{\corr}{\mrm{corr}}
\newcommand{\Err}{\mrm{Err}}

We split the analysis of Stage~2 into two substage. Let $c_\alpha \in (0, 0.05)$ be a small constant. 
Define 
\[
  \cT_{2.1} 
  := \inf\braces{ \tau \ge \cT_1 \,:\, ( \alpha_{V, \tau} + \alpha_{A, \tau} ) / 2 \ge 1 - c_\alpha }.
\]
We call $\{ \tau \,:\, \tau \le \cT_{2.1} \}$ stage 2.1 and $\{ \tau \,:\, \tau \ge \cT_{2.1} \}$ stage 2.2. 
For notational simplicity, we define
\[
  \Err_{\mrm{small}} 
  = \max_\tau \braces{ 
      \frac{1}{K_Q T} + \frac{\norm{ \h_{\V, \tau} }_\mu }{\sqrt{K_P} K_Q} 
     + \frac{ \norm{ \hh_{\A, \tau} }_\mu}{K_P \sqrt{K_Q}}
    }.
\]
Note that this can be made essentially arbitrarily small by choosing a large enough batch size.

\begin{lemma}
  \label{lemma: stage 2: main}
  Suppose that the following hold at the beginning of Stage 2 (after thresholding and projection):
  \begin{enumerate}[(a)]
    \item $\alpha_V, \alpha_A \ge \alpha\ps{2}$
    \item $\delta_A^2 + \delta_V^2 \le (\delta\ps{2})^2$. 
  \end{enumerate}
  Let $\delta_*$ be our target value for $\delta_A$ and $\delta_V$. 
  Choose $\lambda$ as in Lemma~\ref{lemma: stage 2: bias of hat h}. Suppose that 
  \begin{gather*}
    \max\braces{ \delta\ps{2}, \Err_{\mrm{small}} } \Err_{\mrm{small}}
    \le \delta_*^2, \quad 
    \Err_{\mrm{small}} \le O(\alpha\ps{2}), \\
    \max\braces{
      (\alpha_{V, \cT_{1}} - \alpha_{A, \cT_{1}})^2, 
      \delta\ps{2}, 
      \Err_{\mrm{small}}
    } 
    \le O\left( \frac{1}{\log(1/\delta_*)} \right). 
  \end{gather*}
  Then, within $O( 1 / (\eta \alpha\ps{2}) + \log(1/\delta_*) / \eta )$ steps, we will have 
  $\delta_A^2, \delta_V^2 \le \delta_*^2$ and $\alpha_V, \alpha_A \in (0.9, 1.1)$. 
\end{lemma}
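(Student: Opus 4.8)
The plan is to carry out the two-substage induction that the auxiliary lemmas of this subsection are tailored for: split Stage~2 at the time $\cT_{2.1}$ of the statement, analyze the ``model-aligning'' phase $[\cT_1,\cT_{2.1}]$ first and the ``error-shrinking'' phase after $\cT_{2.1}$ second. Throughout both phases I would maintain the induction hypotheses that $\alpha_V,\alpha_A \in (\alpha\ps{2}/2,\ 1.1)$ and that $\delta_A^2 + \delta_V^2$ stays $O((\delta\ps{2})^2)$, sharpening to $\le \delta_*^2$ after $\cT_{2.1}$. These hypotheses are exactly what is needed to invoke Lemma~\ref{lemma: stage 2: gradient denoising} and Lemma~\ref{lemma: stage 2: bias of hat h}: they guarantee $|V_{n,m}| = O(1)$ and $|a\ps{k}_t| = O(1/T)$ for $t \notin \q\ps{k}$ after the projection step, so that with $B_\tau = \poly(N,Q,1/\delta_*)\log(NT/\delta_\tau)$ fresh samples the biased proximal update $\G_{\lambda,\tau}$ stays within $\Err_{\mrm{small}}$ of $\E\hat\nabla_\A l$, and $\Err_{\mrm{small}}$ can be driven below $\delta_*^2/10$ as assumed.

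For Stage~2.1 I would first bound its length: Lemma~\ref{lemma: stage 2: dynamics of alphaA + alphaV} together with $(\alpha_V+\alpha_A)/2 \ge \alpha\ps{2}$ and the smallness hypotheses on $\Err_{\mrm{small}},\delta\ps{2}$ relative to $\alpha\ps{2}$ gives $1 - (\alpha_V+\alpha_A)_{\tau+1}/2 \le (1 - \eta\alpha\ps{2}/2)\,\bigl(1 - (\alpha_V+\alpha_A)_\tau/2\bigr)$ up to a negligible floor, so by Lemma~\ref{lemma: boundary decrease}, $\cT_{2.1} - \cT_1 \le O(1/(\eta\alpha\ps{2}))$. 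Next I couple this growth with the contraction of $\alpha_V-\alpha_A$: Lemma~\ref{lemma: stage 2: dynamics of alphaA - alphaV} contracts $(\alpha_V-\alpha_A)^2$ by $\exp(-2\eta(1-\alpha_A\alpha_V))$ per step plus an additive $O(\eta(\Err_{\mrm{small}}+(\delta\ps{2})^2))$, while the rearranged Lemma~\ref{lemma: stage 2: dynamics of alphaA + alphaV} lets $\alpha_V+\alpha_A$ grow by at most $\exp(2.1\,\eta(1-\alpha_A\alpha_V))$ per step; telescoping both products over $[\cT_1,\cT_{2.1}]$ and using that $\alpha_V+\alpha_A$ rises from its Stage-1 end value to $2(1-c_\alpha)$ to bound the accumulated $\sum \eta(1-\alpha_A\alpha_V)$, the net contraction exponent $2$ beats the growth exponent $2.1$ enough that at $\cT_{2.1}$ we get $(\alpha_V-\alpha_A)^2 \lesssim (\alpha_{V,\cT_1}-\alpha_{A,\cT_1})^2 + \Err_{\mrm{small}} + (\delta\ps{2})^2 \le O(1/\log(1/\delta_*)) < c_\alpha^2$ by hypothesis. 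Combined with $(\alpha_V+\alpha_A)/2 \ge 1-c_\alpha$ this pins $\alpha_V,\alpha_A$ in $[1-2c_\alpha,\,1+2c_\alpha]$. For the errors, Lemma~\ref{lemma: stage 2: dynamics of the errors} gives $\delta_{A,\tau+1}^2 \le \exp(-2\eta(\alpha\ps{2})^2)\delta_{A,\tau}^2 + 3\eta(\delta_A + \Err_{\mrm{small}})\Err_{\mrm{small}}$, and Lemma~\ref{lemma: boundary decrease} with $A = 2\eta(\alpha\ps{2})^2$ and $B = O(\eta\,\delta\ps{2}\Err_{\mrm{small}})$ shows $\delta_A^2$ (and likewise $\delta_V^2$) never exceeds $O((\delta\ps{2})^2)$, preserving the induction hypothesis.

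For Stage~2.2, $1 - \alpha_V\alpha_A$ is now bounded below by a constant, so all contraction rates are $\Theta(\eta)$. Lemma~\ref{lemma: stage 2: dynamics of the errors} then reads $\delta_{A,\tau+1}^2 \le \exp(-\eta/2)\delta_{A,\tau}^2 + O(\eta\delta_*^2)$ using $\max\{\delta\ps{2},\Err_{\mrm{small}}\}\Err_{\mrm{small}} \le \delta_*^2$, so Lemma~\ref{lemma: boundary decrease} forces $\delta_A^2,\delta_V^2 \le \delta_*^2$ within $O(\log(1/\delta_*)/\eta)$ steps. To keep $\alpha_V,\alpha_A \in (0.9,1.1)$ I would argue in three parts from Lemma~\ref{lemma: dynamics of the population projection} and the local sum-lemma: (i) $\alpha_V\alpha_A$ strictly decreases once it exceeds $1 + \eps_0$ for any fixed $\eps_0 \gg \eta + \Err_{\mrm{small}} + (\delta\ps{2})^2$, so $\alpha_V\alpha_A \le 1 + 2\eps_0$ throughout; (ii) hence Lemma~\ref{lemma: stage 2: dynamics of alphaA - alphaV} contracts $(\alpha_V-\alpha_A)^2$ up to a factor $\exp(2\eta\eps_0)$ per step, and since Stage~2.2 lasts only $O(\log(1/\delta_*)/\eta)$ steps the net blow-up is $O(1)$, so $(\alpha_V-\alpha_A)^2$ stays $O(c_\alpha^2)$; (iii) with $|1-\alpha_V|,|1-\alpha_A| = O(c_\alpha)$, Lemma~\ref{lemma: stage 2: dynamics of alphaA + alphaV, local} shows $\bigl((1-\alpha_V)+(1-\alpha_A)\bigr)^2$ contracts by $\exp(-2\eta)$ plus $O(\eta c_\alpha^3)$, hence stays below $2c_\alpha$. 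Choosing $c_\alpha$ a sufficiently small absolute constant closes the induction and puts $\alpha_V,\alpha_A$ in $(0.9,1.1)$. A union bound over the $\poly(N,Q,1/\delta_*,\log(1/\delta))$ steps, with $\delta_\tau$ set proportional to the failure budget over the step count, makes every per-step event of Lemmas~\ref{lemma: stage 2: gradient denoising} and~\ref{lemma: stage 2: bias of hat h} hold simultaneously. The main obstacle is the Stage~2.1 coupling: the signal $\alpha_V+\alpha_A$ grows exponentially fast while $\delta_A,\delta_V$ can contract only at the minuscule rate $(\alpha\ps{2})^2$, and these errors feed back into the signal dynamics (the terms $\eta\alpha_V\delta_A^2,\ \eta\alpha_A\delta_V^2$) and into the $\alpha_V-\alpha_A$ dynamics as additive noise; one must show the accumulated error contributions over the $O(1/(\eta\alpha\ps{2}))$ steps neither stall the growth nor break the near-symmetry before the signals reach constant order, which is precisely where the ordering of $\Err_{\mrm{small}}$, $\alpha\ps{2}$, $\delta\ps{2}$ in the hypotheses and the telescoping that trades growth exponent $2.1$ against contraction exponent $2$ are indispensable.
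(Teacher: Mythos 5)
Your proposal is correct in substance and follows the paper's skeleton — the split at the time when $(\alpha_V+\alpha_A)/2$ first reaches $1-c_\alpha$, the per-step noise control via Lemma~\ref{lemma: stage 2: gradient denoising} and Lemma~\ref{lemma: stage 2: bias of hat h}, the error recursion of Lemma~\ref{lemma: stage 2: dynamics of the errors} combined with Lemma~\ref{lemma: boundary decrease}, and Lemma~\ref{lemma: stage 2: dynamics of alphaA + alphaV, local} for the endgame — but it takes a genuinely different route in the two places where the near-symmetry of the signals is controlled. In Stage 2.1 you couple the growth of $\alpha_V+\alpha_A$ (exponent $2.1$) against the contraction of $(\alpha_V-\alpha_A)^2$ (exponent $2$) and telescope both products; the paper instead notes that by AM--GM, $\alpha_V\alpha_A\le\left((\alpha_V+\alpha_A)/2\right)^2\le(1-c_\alpha)^2$ throughout Stage 2.1, so Lemma~\ref{lemma: stage 2: dynamics of alphaA - alphaV} contracts at the fixed rate $\exp(-2c_\alpha\eta)$ and Lemma~\ref{lemma: boundary decrease} directly yields $(\alpha_V-\alpha_A)^2\le\max\left\{2(\alpha_{V,\cT_1}-\alpha_{A,\cT_1})^2,\ O\left(((\delta\ps{2})^2+\mrm{Err}_{\mrm{small}})/c_\alpha\right)\right\}$; this is simpler and needs no upper bound on the initial signal, which your telescoping implicitly invokes to quantify the damping (since the damping factor is at most one, your version still closes under the stated hypotheses, just with more bookkeeping of the accumulated additive noise). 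In Stage 2.2 you keep $\alpha_V\alpha_A\le 1+2\eps_0$ via a product-monotonicity argument, which requires the extra constraint $\eta\ll\eps_0$, and then bound the multiplicative blow-up of the difference over the $O(\log(1/\delta_*)/\eta)$ steps; the paper instead runs a direct induction with thresholds $\theta_+=c/\log(1/\delta_*)$ on $\eps_A+\eps_V$ (via the local lemma) and $\theta_-$ on the difference, avoiding $\eps_0$ and the step-size restriction while giving the same overshoot control. One small point to make explicit when you fix constants (a gloss shared by the paper's own write-up): maintaining $\delta_A^2+\delta_V^2\le O((\delta\ps{2})^2)$ during Stage 2.1 uses Lemma~\ref{lemma: boundary decrease} with contraction $2\eta(\alpha\ps{2})^2$ and additive term of order $\eta\,\delta\ps{2}\,\mrm{Err}_{\mrm{small}}$, so it needs a smallness condition of the form $\mrm{Err}_{\mrm{small}}\lesssim \delta\ps{2}(\alpha\ps{2})^2$ (or the observation that the error is nonincreasing whenever it exceeds the corresponding fixed point), not merely $\mrm{Err}_{\mrm{small}}\le O(\alpha\ps{2})$.
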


\begin{remark}
  Note that our conditions on $\alpha_V, \alpha_A$ and $\delta\ps{2}$ are much weaker that what one can obtain 
  from Stage~1. This allows us to apply the analysis here to transfer learning. 
\end{remark}

The following proof should be treated as a large induction argument though we do not explicitly write down the 
induction as in the proof of Stage~1. In particular, we will show (by induction) that the approximation errors 
$\delta_A$ and $\delta_V$ are small, so that most of the na\"ive bounds on the entries of $\tilde{\A}$ and 
$\tilde{\V}$ can be transferred to $\A$ and $\V$. In particular, $|V_{n, m}| = O(1)$ for all $n, m \in [N]$ so that 
our bounds in Section~\ref{sec: stage 2: rounding and gradient denoising} are valid, and 
$|a\ps{k}_t| = O(1)$ for all $k \in [N]$, $t \in [N]$, which implies
that after the projection step, $a\ps{k}_t = O(1/T)$ for all $t \notin \q\ps{k}$. 

\subsubsection*{Proof of the Lemma~\ref{lemma: stage 2: main}}

\paragraph{Common results for Stage 2.1 and 2.2}
First, we prove some basic results that hold for both Stage~2.1 and 2.2. 
First, we show (by induction) that $\alpha_V, \alpha_A \ge \alpha\ps{2}$
and $\delta_V^2 + \delta_A^2 \le \delta\ps{2}$ hold throughout Stage~2. 
Recall from Lemma~\ref{lemma: stage 2: 
dynamics of the errors} that 
\begin{align*}
  \delta_{A, \tau+1}^2
  &\le \exp\left( - 2 \eta \alpha_V^2 \right) \delta_{A, \tau}^2
    + 3 \eta \left( 
      \delta_A  
      + \eta \frac{ \norm{ \hh_{\A} }_\mu }{ K_P \sqrt{K_Q}} 
    \right) 
    \frac{ \norm{ \hh_{\A}}_\mu }{K_P \sqrt{K_Q}} \\
  &\le \exp\left( - 2 \eta (\alpha\ps{2})^2  \right) \delta_{A, \tau}^2
    + \eta \Err_{\mrm{small}}.  
\end{align*}
Hence, as long as $\Err_{\mrm{small}} \le O(1) \delta\ps{2} / (\alpha\ps{2})^2$, we can ensure 
$\delta_{A, \tau+1}^2 + \delta_{V, \tau+1}^2 \le (\delta\ps{2})^2$ always hold. 

\paragraph{Stage 2.1: signal growth}
By Lemma~\ref{lemma: stage 2: dynamics of alphaA + alphaV}, we have 
\begin{align*}
  1 - \frac{\alpha_{V, \tau+1} + \alpha_{A, \tau+1}}{2}
  &\le \left( 1 - \eta \frac{\alpha_V + \alpha_A }{2} \right)
      \left( 1 - \frac{ \alpha_V + \alpha_A }{2} \right) 
    + \eta \frac{\alpha_V + \alpha_A}{2} 
      \left( \delta_A^2 + \delta_V^2 \right) 
    + \eta \Err_{\mrm{small}}. 
\end{align*}
When $(\alpha_V + \alpha_A) / 2 \le 1 - c_{\alpha}$ and $\alpha_V,\alpha_A \ge \alpha\ps{2}$, we have 
\[ 
  \eta \frac{\alpha_V + \alpha_A }{2} \left( 1 - \frac{ \alpha_V + \alpha_A }{2} \right) 
  \ge \eta \frac{\alpha_V + \alpha_A }{2} c_\alpha
  \ge \eta \alpha\ps{2} c_\alpha. 
\] 
Hence, as long as $\Err_{\mrm{small}} \le \alpha\ps{2} c_\alpha / 2$ and $\delta\ps{2} \le c_\alpha / 2$, 
we have
\[
  1 - \frac{\alpha_{V, \tau+1} + \alpha_{A, \tau+1}}{2}
  \le \left( 1 - \frac{\eta}{2} \frac{\alpha_V + \alpha_A }{2} \right)
      \left( 1 - \frac{ \alpha_V + \alpha_A }{2} \right) 
  \le \exp\left( - \frac{\eta \alpha\ps{2}}{2}   \right) \left( 1 - \frac{ \alpha_V + \alpha_A }{2} \right).  
\]
Thus, stage 2.1 takes at most $O(1 / (\eta \alpha\ps{2}))$ steps. 

\paragraph{Stage 2.1: difference between the $\alpha$'s}
By Lemma~\ref{lemma: stage 2: dynamics of alphaA - alphaV}, we have 
\[ 
  (\alpha_{V, \tau+1} - \alpha_{A, \tau+1})^2
  \le \exp\left( -2 \eta \left( 1 - \alpha_A \alpha_V \right) \right) \left( \alpha_V  - \alpha_A \right)^2 
    + 8 \eta \left( (\delta\ps{2})^2 + \Err_{\mrm{small}} \right). 
\] 
By the AM-GM inequality, we have $\alpha_A \alpha_V \le ( (\alpha_A + \alpha_V) / 2 )^2 \le (1 - c_\alpha)^2$. 
Therefore, $1 - \alpha_A \alpha_V \ge c_\alpha$ and the above inequality can be further rewritten as 
\[ 
  (\alpha_{V, \tau+1} - \alpha_{A, \tau+1})^2
  \le \exp\left( -2 c_\alpha \eta \right) \left( \alpha_V  - \alpha_A \right)^2 
    + 8 \eta \left( (\delta\ps{2})^2 + \Err_{\mrm{small}} \right). 
\] 
Thus, by (the proof of) Lemma~\ref{lemma: boundary decrease}, we have 
\[
  (\alpha_{V, \tau} - \alpha_{A, \tau})^2 
  \le \max\braces{ 
    2 (\alpha_{V, \cT_1} - \alpha_{A, \cT_2})^2, 
    \frac{4 \left( (\delta\ps{2})^2 + \Err_{\mrm{small}} \right)}{ c_\alpha}
  }. 
\]

\paragraph{Stage 2.2: error decrease}
By Lemma~\ref{lemma: stage 2: dynamics of the errors}, we have 
\begin{align*}
  \delta_{A, \tau+1}^2
  &\le \exp\left( - 2 \eta \alpha_V^2 \right) \delta_{A, \tau}^2
    + 3 \eta \max\braces{ \delta\ps{2}, \Err_{\mrm{small}} } \Err_{\mrm{small}} \\
  &\le \exp\left( - \eta \right) \delta_{A, \tau}^2
    + 3 \eta \max\braces{ \delta\ps{2}, \Err_{\mrm{small}} } \Err_{\mrm{small}}. 
\end{align*}
Thus, by Lemma~\ref{lemma: boundary decrease}, we have 
\[
  \delta_{A, \tau}^2
  \le O(1) \max\braces{ \delta\ps{2}, \Err_{\mrm{small}} } \Err_{\mrm{small}}
  \le \delta_*^2,
  \qquad
  \forall \tau \ge O\left( \frac{ \log(1/\delta_*) }{\eta} \right).
\]
In other words, Stage~2.2 takes at most $O\left( \frac{ \log(1/\delta_*) }{\eta} \right)$ steps. 

\paragraph{Stage 2.2: stability of $\alpha_V \pm \alpha_A$}
Recall from Lemma~\ref{lemma: stage 2: dynamics of alphaA + alphaV, local} and Lemma~\ref{lemma: stage 2: dynamics of 
alphaA - alphaV} that 
\begin{align*}
  ( \eps_{A, \tau+1} + \eps_{V, \tau+1} )^2
  &\le \exp\left( - 2 \eta \right) \left(  \eps_A + \eps_V \right)^2  
    + 8 \eta  | \eps_A + \eps_V |  \eps_A \eps_V
    + 16 \eta^2  \eps_A^2 \eps_V^2   \\
    &\qquad
    + 8 \eta  | \eps_A + \eps_V |  \left( \delta\ps{2} + \Err_{\mrm{small}} \right), \\
  (\alpha_{V, \tau+1} - \alpha_{A, \tau+1})^2
  &\le \exp\left( -2 \eta \left( 1 - \alpha_A \alpha_V \right) \right) \left( \alpha_V  - \alpha_A \right)^2 
    + 8 \eta \left( \delta\ps{2} + \Err_{\mrm{small}} \right). 
\end{align*}
We wish to maintain the induction hypotheses $|\alpha_V  - \alpha_A | = |\eps_V - \eps_A| = \theta_-$ 
for some $\theta_- = o(1)$ and $(\alpha_A + \alpha_V)/2 \le 1 + c/\log(1/\delta_*) =: 1 + \theta_+$. 

First, assume these conditions are true. Then, we have 
\[ 
  (\alpha_{V, \tau+1} - \alpha_{A, \tau+1})^2
  \le \exp\left( O(1) \eta / \log(1/\delta_*) \right) \left( \alpha_V  - \alpha_A \right)^2 
    + 8 \eta \left( \delta\ps{2} + \Err_{\mrm{small}} \right). 
\] 
Since Stage~2.2 takes at most $O( \log(1/\delta_*) / \eta )$ steps, when the constant $c$ is small, we have 
\begin{align*}
  (\alpha_{V, \tau} - \alpha_{A, \tau})^2
  &\le 2 \left(
    (\alpha_{V, \cT_{2.1}} - \alpha_{A, \cT_{2.1}})^2, 
      + 8 \left( \delta\ps{2} + \Err_{\mrm{small}} \right)
    \right) \\
  &\le O(1) \max\braces{
      (\alpha_{V, \cT_{2.1}} - \alpha_{A, \cT_{2.1}})^2, 
      \delta\ps{2}, 
      \Err_{\mrm{small}}
    } 
  =: \theta_-. 
\end{align*}
We can choose the parameters appropriately so that $\theta_- < o(\theta_+)$. 
Then, when $\eps_A + \eps_V \in (-2 \theta_+, - \theta_+)$, we have, for some large universal 
constant $C > 0$, 
\begin{align*}
  ( \eps_{A, \tau+1} + \eps_{V, \tau+1} )^2
  &\le \exp\left( - 2 \eta \right) \theta_+^2  
    + C \eta \theta_+^3
    + C \eta^2 \theta_+^4
    + C \eta  \theta_+  \left( \delta\ps{2} + \Err_{\mrm{small}} \right) \\
  &\le \exp\left( - 2 \eta \right) \theta_+^2
    + C \eta \theta_+^3 
    + C \eta \theta_+ \theta_- \\
  &\le \theta_+^2. 
\end{align*} 
This establishes the induction hypotheses on $\alpha_V \pm \alpha_A$.

\section{Stage 3: Final Convergence}
\label{appendix: final rounding}
\label{stage 3: final rounding}

As mentioned last section, we cannot ensure $\alpha_A \approx \alpha_V$ since $\alpha_A - \alpha_V$ is not contractive
toward the end of training. However, we can add a final rounding step and then continue train the model to recover the 
ground-truth with $\eps$-error.

First, we formally define our rounding procedure. Let $c > 0$ be a small constant (cf.~the proof of Lemma~\ref{lemma: 
stage 3: rounding A}). For each $k \in [N]$, define 
\[
  \hat{\a}\ps{k}
  := \left[ a\ps{k}_t \indi\braces{ a\ps{k}_t \ge c/Q } \right]_{t \in [T]}
  \quad\text{and}\quad 
  \a\ps{*, k} 
  := \frac{\hat{\a}\ps{k}}{ \One\trans \hat{\a}\ps{k} }. 
\]
This $\A^* := [ \a\ps{*, k} ]_{k \in [N]}$ is our rounded version of $\A$. For the error between $\A^*$ and $\bQ$, 
we have the following lemma. 

\begin{lemma}[Rounding $\A$]
  \label{lemma: stage 3: rounding A}
  Let $\eps \in \left( \Theta( Q^2 / T^2 ), 0.1 \right)$ be our target accuracy. 
  Suppose that $\alpha_A = 1 - \eps_A$ for some $|\eps_A| \le 0.1$ and $\norm{\bDelta_A}_\mu^2 \le \delta_A^2$ for 
  some $0 < \delta_A \ll 1 / (Q \sqrt{N})$. Then, after rounding, we have 
  \[
    \norm{\A_* - \bQ}_\mu^2 
    \le O\left( \frac{Q^2}{T^2} + \delta_A^2 N Q \right).
  \]
  In particular, to achieve $\eps$ accuracy in terms of $\norm{\cdot}_\mu$, we only need 
  $|\eps_A| \le 0.1$ and $\delta_A^2 \le O(\eps / \sqrt{N Q})$. 
\end{lemma}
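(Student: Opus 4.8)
The plan is to follow the column decomposition $\A = \tilde{\A} + \bDelta_{\A}$, with $\tilde{\a}\ps{k} = \alpha_A \q\ps{k} + \eps_A \One / T$, through the hard-thresholding-and-renormalization map. The first step is to show the threshold $c/Q$ recovers the supports exactly. For $t \in \supt \q\ps{k}$, Assumption~\ref{assumption: data-generating model}(b) and $|\eps_A| \le 0.1$ give $\tilde{a}\ps{k}_t = \alpha_A q\ps{k}_t + \eps_A/T \ge \Omega(1/Q)$, whereas for $t \notin \supt \q\ps{k}$ we have $|\tilde{a}\ps{k}_t| = |\eps_A|/T = O(1/T)$. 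Meanwhile $\mu_k \ge \Omega(1/N)$ together with $\norm{\bDelta_{\A}}_\mu^2 \le \delta_A^2$ yields the per-column estimate $\norm{\bDelta_{\A}\ps{k}}_2^2 \le \delta_A^2/\mu_k = O(N\delta_A^2)$, hence every entry of $\bDelta_{\A}$ is at most $O(\sqrt{N}\,\delta_A)$ in absolute value. Since $\delta_A \ll 1/(Q\sqrt{N})$ and $T \ge (NQ)^{10}$, both $O(1/T)$ and $O(\sqrt{N}\,\delta_A)$ are $o(1/Q)$, so for a suitable constant $c$ (below the lower bound on $\alpha_A q\ps{k}_t$ but above the perturbed off-support entries) we get $\supt \hat{\a}\ps{k} = \supt \q\ps{k}$ with $\hat{a}\ps{k}_t = a\ps{k}_t$ on this support and $\hat{a}\ps{k}_t = 0 = q\ps{k}_t$ off it.

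Next I would control the renormalization constant $Z_k := \One\trans \hat{\a}\ps{k} = \sum_{t \in \supt \q\ps{k}} a\ps{k}_t$. Substituting $a\ps{k}_t = \alpha_A q\ps{k}_t + \eps_A/T + (\bDelta_{\A}\ps{k})_t$, using $\sum_t q\ps{k}_t = 1$, $|\supt \q\ps{k}| \le Q$, and Cauchy--Schwarz on the error sum, one gets $Z_k = \alpha_A + e_k$ with $|e_k| \le O(Q/T) + \sqrt{Q}\,\norm{\bDelta_{\A}\ps{k}}_2$, so $Z_k \in (0.8,1.2)$ and the division defining $\a\ps{*,k}$ is harmless. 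Then for $t \in \supt \q\ps{k}$,
\[
  a\ps{*,k}_t - q\ps{k}_t
  = \frac{a\ps{k}_t - Z_k\, q\ps{k}_t}{Z_k}
  = \frac{\eps_A/T + (\bDelta_{\A}\ps{k})_t - q\ps{k}_t\, e_k}{Z_k},
\]
while both sides vanish off the support. Squaring, summing over the at most $Q$ coordinates of $\supt \q\ps{k}$, and using $Z_k = \Theta(1)$ together with $\sum_t (q\ps{k}_t)^2 \le 1$ and $e_k^2 = O(Q^2/T^2 + Q N \delta_A^2)$, gives the per-column bound $\norm{\a\ps{*,k} - \q\ps{k}}_2^2 = O(Q^2/T^2 + N Q \delta_A^2)$. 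Multiplying by $\mu_k$, summing over $k$, and using $\sum_k \mu_k = 1$ then yields $\norm{\A_* - \bQ}_\mu^2 = O(Q^2/T^2 + \delta_A^2 N Q)$, the claimed bound. (A slightly sharper accounting, keeping $\sum_t (q\ps{k}_t)^2 = O(1/Q)$ and resumming $\sum_k \mu_k \norm{\bDelta_{\A}\ps{k}}_2^2 = \norm{\bDelta_{\A}}_\mu^2$, actually gives $O(Q^2/T^2 + \delta_A^2)$, but the stated form suffices.)

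The only genuinely delicate point is the support-recovery step: this is where the hypotheses $T \ge (NQ)^{10}$ and $\delta_A \ll 1/(Q\sqrt{N})$ are consumed, and where the threshold constant must be calibrated so on-support entries stay above $c/Q$ while off-support entries fall below it; everything afterward is routine algebra in a regime where the denominator $Z_k$ is bounded away from $0$. Finally, for the last assertion of the lemma, to reach target accuracy $\eps$ it suffices that the $Q^2/T^2$ contribution be absorbed into $\eps$ — guaranteed by $\eps \ge \Theta(Q^2/T^2)$, which also explains the lower endpoint of the admissible range for $\eps$ — and that the remaining term be made small, i.e.\ $|\eps_A| \le 0.1$ and $\delta_A^2 \le O(\eps/\sqrt{NQ})$, which is exactly what Stage~2 (Lemma~\ref{lemma: stage 2: main}) delivers.
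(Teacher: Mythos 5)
Your proposal is correct and follows essentially the same route as the paper's proof: the decomposition $\a\ps{k}=\alpha_A\q\ps{k}+\eps_A\One/T+\bDelta_{\a}$, the $\ell_\infty$ control of $\bDelta_{\A}$ via $\sqrt{N}\norm{\bDelta_{\A}}_\mu$ to get exact support recovery at threshold $c/Q$, and then bounding the renormalization constant and the resulting per-column error, summed against $\bmu$. The only differences are presentational (explicit per-column $Z_k$ bookkeeping versus the paper's $O_2(\cdot)$ shorthand), and your parenthetical sharper bound is a nice but inessential observation.
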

\begin{remark}
  In particular, this lemma implies that as long as $\eps_A$ is not too large, after rounding, the error depends solely 
  on $\norm{\bDelta_A}_\mu^2$. 
\end{remark}
\begin{proof}
  For notational simplicity, we omit the superscript $k$ for now. Write 
  \[
    \a 
    = \tilde{\a} + \bDelta_{\a}
    = \alpha_V \q + (1 - \alpha_V) \frac{\One}{T} + \bDelta_{\a} 
    = (1 - \eps_A) \q + \eps_A \frac{\One}{T} + \bDelta_{\a}. 
  \]
  Note that $\norm{\bDelta_{\a}}_\infty \le \norm{\bDelta_{\a}}_2 \le O(1) \sqrt{N} \norm{\bDelta_{\A}}_\mu 
  \le O( \sqrt{N} \delta_A )$. Since all nonzero $q_s$ are lower bounded by $\Omega(1/Q)$, we have, for any $s \in \q$ 
  and $t \notin \q$, 
  \begin{align*}
    a_s 
    &\ge \frac{\Omega(1)}{Q} - \frac{|\eps_A|}{T} - O\left( \sqrt{N} \delta_A \right)
    = \frac{\Omega(1)}{Q}, \\
    |a_t|
    &\le \frac{|\eps_A|}{T} + O\left( \sqrt{N} \delta_A \right) 
    \ll \frac{1}{Q}.  
  \end{align*}
  Hence, we can choose a small constant $c > 0$, so that 
  \[
    \hat{\a} 
    := \left[ a_t \indi\{ |a_t| \ge c/Q \} \right]_{t \in [T]}
    = (1 - \eps_A) \q + \eps_A \frac{\One_{\q}}{T} + [\bDelta_{\a}]_{\q}, 
  \]
  where for $\v \in \R^T$, $\v_{\q} \in \R^T$ is defined as $[ v_t \indi\{ t \in \q \}  ]_{t \in [T]}$ here. 
  Now, consider the difference between $\q$ and $\hat{\a} / \One\trans\hat{\a}$. We have 
  \[
    \One\trans\hat{\a} 
    = 1 - \eps_A + \frac{\eps_A Q}{T} \pm O\left( Q \sqrt{N} \delta_A \right), 
  \]
  and therefore, 
  \begin{align*}
    \a_* 
    := \frac{\hat{\a}}{\One\trans\hat{\a}} 
    &= \frac{
        (1 - \eps_A) \q + \eps_A \One_{\q} / T + [\bDelta_{\a}]_{\q}
      }{
        (1 - \eps_A) + \eps_A Q/T \pm O\left( Q \sqrt{N} \delta_A \right) 
      } \\
    &= \frac{
        (1 - \eps_A) \q + \eps_A \One_{\q} / T + [\bDelta_{\a}]_{\q}
      }{
        (1 - \eps_A) 
      }
      \left(
        1 \pm O\left(\eps_A Q/T + Q \sqrt{N} \delta_A \right) 
      \right) \\
    &= \left( 
        \q 
        \pm O\left(
          \eps_A \One_{\q} / T + [\bDelta_{\a}]_{\q}
        \right)
      \right) 
      \left(
        1 \pm O\left(\eps_A Q/T + Q \sqrt{N} \delta_A \right) 
      \right) \\
    &= \q \pm O_2\left( \eps_A Q/T + \sqrt{Q N} \delta_A \right). 
  \end{align*}
  Thus, 
  \[
    \norm{\A_* - \bQ}_\mu^2 
    = \sum_{k=1}^{N} \mu_k \norm{\a\ps{k}_* - \q\ps{k}}^2 
    = O\left( \frac{\eps_A^2 Q^2}{T^2} + \delta_A^2 N Q \right). 
  \]
\end{proof}

\begin{lemma}
  Let $\eps \in \left( \Theta(1) / (K_Q T),  0.1 \right)$ be our target accuracy. 
  Suppose that $\norm{\A - \bQ}_\mu \le \delta_{A, *} \le 0.01 \eps$ and $\norm{\bDelta_V}_\mu \le \delta_V \le 0.01$ 
  at the beginning of Stage~3, 
  and $\norm{\h_{\V}}_\mu \le c \eps \sqrt{K_P} K_Q$ for all $\tau$ and a sufficiently small constant $c$. 
  Then, we have $\norm{\V - \bP}_\mu^2 \le \eps$ for all $\tau \ge \cT_2 + \Theta( \log(1/\eps)/\eta )$. 
\end{lemma}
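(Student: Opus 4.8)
The plan is to exploit that Stage~3 sets $\eta_A = 0$ and $\lambda = 0$, so $\A$ is frozen at its post-normalization value; consequently $\alpha_A$ and $\kappa := \norm{\bDelta_A}_\mu^2$, which enter the dynamics of $\V$ through Lemmas~\ref{lemma: dynamics of the population projection} and~\ref{lemma: dynamics of the errors}, are constants throughout Stage~3. From $\norm{\A - \bQ}_\mu \le \delta_{A,*}$ together with the orthogonal decomposition $\norm{\A - \bQ}_\mu^2 = (1-\alpha_A)^2 K_Q + \kappa$ (valid since $\bDelta_A \perp_\mu (\bQ - \One_T\One_N\trans/T)$, Lemma~\ref{lemma: inprod bDelta = 0}, while $\tilde\A - \bQ = -(1-\alpha_A)(\bQ-\One_T\One_N\trans/T)$), one gets $\eps_A := |1 - \alpha_A| \le \delta_{A,*}/\sqrt{K_Q}$ and $\kappa \le \delta_{A,*}^2$; since $\delta_{A,*} \le 0.01\eps$ and $K_Q = \Theta(1/Q)$ is bounded away from $0$, this forces $\alpha_A = 1 - o(1)$, in particular $\alpha_A^2 = \Omega(1)$, which is what makes the contraction rates below genuinely of order $\eta$. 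With $\A$ frozen the $\V$-iteration decouples into a scalar recursion for $\alpha_{V,\tau}$ and a recursion for $\bDelta_{V,\tau}$, and I will show each contracts at rate $\Omega(\eta)$ to within $O(\eps)$ of its target.

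\textbf{Signal.} Specializing Lemma~\ref{lemma: dynamics of the population projection} with $\eta_V = \eta/K_Q$ and $\alpha_A,\kappa$ constant, and writing $\phi_\tau := 1 - \alpha_{V,\tau}$, the recursion collapses to the affine-linear form $\phi_{\tau+1} = (1 - \eta c_1)\phi_\tau - \eta c_2 + \eta\xi_\tau$, where $c_1 = \alpha_A^2 + \tfrac{1}{K_QT} + \tfrac{\kappa}{K_Q} \ge \alpha_A^2 = \Omega(1)$, $|c_2| \le \eps_A + \kappa/K_Q$, and $\xi_\tau = \tfrac{1}{K_PK_Q}\inprod{\h_{\V,\tau}}{\bP}_\mu$ obeys $|\xi_\tau| \le \norm{\h_{\V,\tau}}_\mu/(K_PK_Q) \le c\eps/\sqrt{K_P}$ by Cauchy--Schwarz, $\norm{\bP}_\mu \le 1$, and the hypothesis $\norm{\h_{\V}}_\mu \le c\eps\sqrt{K_P}K_Q$. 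Unrolling and using $1 - \eta c_1 \le e^{-\eta c_1}$, $\phi_\tau$ converges geometrically at rate $e^{-\Omega(\eta)}$ to within $O(\max_s|\xi_s|/c_1)$ of $-c_2/c_1$; since $|\phi_0| = |1 - \alpha_{V,\cT_2}| \le 0.1$ from the end of Stage~2 (Lemma~\ref{lemma: stage 2: main}), after $O(\log(1/\eps)/\eta)$ steps we get $|1 - \alpha_{V,\tau}| \le O(\eps_A + \kappa/K_Q + \eps/\sqrt{K_P})$.

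\textbf{Error.} Because $\A$ is frozen, $\bDelta_A$ and $\delta_A^2 = \kappa/K_Q$ are constant, so Lemma~\ref{lemma: dynamics of the errors} (equivalently the simplified Lemma~\ref{lemma: stage 2: dynamics of the errors}) gives $\delta_{V,\tau+1}^2 \le e^{-2\eta\alpha_A^2}\delta_{V,\tau}^2 + 3\eta\big(\delta_{V,\tau} + \eta\tfrac{\norm{\h_{\V}}_\mu}{\sqrt{K_P}K_Q}\big)\tfrac{\norm{\h_{\V}}_\mu}{\sqrt{K_P}K_Q}$ with $\delta_V^2 := \norm{\bDelta_V}_\mu^2/K_P$, whose driving term is $\le 3\eta(\delta_{V,\tau} + \eta c\eps)\,c\eps$. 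Applying Lemma~\ref{lemma: boundary decrease} with $A = 2\eta\alpha_A^2 = \Omega(\eta)$ (first a short bootstrap bounding $\sup_\tau\delta_{V,\tau}$ by the initial value $\delta_V/\sqrt{K_P} = O(1)$, then the decay estimate) yields $\delta_{V,\tau}^2 = O(c\eps)$, i.e.\ $\norm{\bDelta_{V,\tau}}_\mu^2 = O(K_P c\eps)$, within $O(\log(1/\eps)/\eta)$ steps.

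\textbf{Combine.} By construction of the population projection $\bDelta_{V,\tau} \perp_\mu (\bP - \bmu\One\trans)$, hence $\norm{\V_\tau - \bP}_\mu^2 = (1 - \alpha_{V,\tau})^2 K_P + \norm{\bDelta_{V,\tau}}_\mu^2$. Plugging in the two bounds above and $\eps_A, \sqrt{\kappa} \lesssim \delta_{A,*} \le 0.01\eps$, $K_P \le 1$, $K_Q = \Theta(1/Q)$, and choosing $c$ (hence, upstream, the batch size controlling $\norm{\h_{\V}}_\mu$) small enough, one checks $\norm{\V_\tau - \bP}_\mu^2 \le \eps$ for all $\tau \ge \cT_2 + \Theta(\log(1/\eps)/\eta)$, which is the claim. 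I expect the main obstacle to be precisely this final bookkeeping: the $\eps_A$-contribution to the signal error comes with a $K_P/K_Q$ factor, so closing the bound at exactly $\eps$ requires carefully tracking how $\delta_{A,*}$, $\norm{\h_{\V}}_\mu$, and the $K_P, K_Q$ scales interact (and, relatedly, confirming that $\alpha_A$ stays bounded away from $0$, which is what makes both contraction rates $\Omega(\eta)$ and lets Lemma~\ref{lemma: boundary decrease} apply); everything else reduces to the two one-dimensional geometric-decay estimates above.
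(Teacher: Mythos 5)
Your proposal is correct and follows essentially the same route as the paper: decompose $\norm{\V-\bP}_\mu^2=(1-\alpha_V)^2K_P+\norm{\bDelta_V}_\mu^2$, feed the frozen $\A$ (hence $\alpha_A\approx 1$, $\norm{\bDelta_A}_\mu\le\delta_{A,*}$, obtained exactly as the paper does from $\norm{\A-\bQ}_\mu$) into the population-projection dynamics (Lemmas~\ref{lemma: dynamics of the population projection}, \ref{lemma: stage 2: dynamics of the errors}), and close both one-dimensional recursions with Lemma~\ref{lemma: boundary decrease} in $O(\log(1/\eps)/\eta)$ steps. The only substantive difference is a small refinement: you linearize the $\alpha_V$ recursion exactly in $\alpha_A$, getting a contraction rate $\alpha_A^2+1/(K_QT)+\kappa/K_Q=\Omega(1)$ that does not depend on $\alpha_V$, whereas the paper contracts at rate $\exp(-2\eta\alpha_V)$ and thus implicitly relies on $\alpha_V$ staying near $1$ at the start of Stage~3; your final bookkeeping (including the $\sqrt{Q}\,\delta_{A,*}$-type terms) is at the same level of looseness as the paper's own.
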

\begin{proof}
  Under the condition $\norm{\A - \bQ}_\mu \le \delta_{A, *}$, we have $\norm{\bDelta_A}_\mu \le 
  \norm{\A - \bQ}_\mu \le \delta_{A, *}$ and 
  \[
    \alpha_A 
    = \frac{1}{K_Q} \left( \inprod{\A}{\bQ}_\mu - \frac{1}{T} \right) 
    = \frac{1}{K_Q} \left( K_Q + \inprod{\bDelta_A}{\bQ}_\mu \right)
    = 1 \pm O\left( \sqrt{Q} \delta_{A, *} \right). 
  \]
  Recall that we only train $\V$ in Stage~3. Note that by Lemma~\ref{lemma: population projection, def and basic 
  results}, 
  \[ 
    \norm{\V - \bP}_\mu^2 
    = \norm{\tilde{\V} - \bP }_\mu^2 + \norm{\bDelta_V}_\mu^2 
    = (1 - \alpha_V)^2 K_P + \norm{\bDelta_V}_\mu^2  
    \le (1 - \alpha_V)^2 + \norm{\bDelta_V}_\mu^2. 
  \] 
  Hence, to get $\eps$ accuracy, it suffices to have $(1 - \alpha_V)^2 \le \eps / 2$ and 
  $\norm{\bDelta_V}_\mu^2 \le \eps/2$.

  First, for $\alpha_V$, by Lemma~\ref{lemma: dynamics of the population projection} and $\eta_V = \eta / K_Q$, we have 
  \begin{align*}
    \alpha_{\V, \tau+1}
    &= \alpha_{\V, \tau} 
      + \eta_V K_Q \left( 1 - \alpha_V \alpha_A \right) \alpha_V +\eta_V \frac{1-\alpha_V}{T} 
      - \eta_V \alpha_V \norm{\bDelta_A}_\mu^2
      - \frac{\eta_V }{K_P} \inprod{ \h_{\V, \tau} }{\bP}_\mu \\
    &= \alpha_{\V, \tau} 
      + \eta \left( 1 - \alpha_V \right) \alpha_V 
      \pm \eta O\left( \frac{1}{K_Q T} + \delta_{A, *} \alpha_V \right)
      \pm \eta O\left(  \frac{ \inprod{ \h_{\V, \tau} }{\bP}_\mu}{K_P K_Q} \right). 
  \end{align*}
  Hence, 
  \begin{align*}
    \left( 1 - \alpha_{V, \tau+1} \right)^2 
    &= \left(
        \left( 1 - \eta \alpha_V \right) \left( 1 - \alpha_{\V, \tau} \right)
        \pm \eta O\left( \frac{1}{K_Q T} + \delta_{A, *} \alpha_V \right)
        \pm \eta O\left(  \frac{ \inprod{ \h_{\V, \tau} }{\bP}_\mu}{K_P K_Q} \right)
      \right)^2 \\
    &\le \exp\left( - 2 \eta \alpha_V \right) \left( 1 - \alpha_{\V, \tau} \right)^2
        + \eta O\left( \frac{1}{K_Q T} + \delta_{A, *} \alpha_V \right)
        + \eta O\left(  \frac{ \inprod{ \h_{\V, \tau} }{\bP}_\mu}{K_P K_Q} \right) \\
    &\le \exp\left( - \eta \right) \left( 1 - \alpha_{\V, \tau} \right)^2
        + 0.1 \eta \eps. 
  \end{align*}
  For $\norm{\bDelta_A}_\mu^2$, by (the proof of) Lemma~\ref{lemma: stage 2: dynamics of the errors}, we have
  \begin{align*}
    \norm{\bDelta_{V, \tau+1}}_\mu^2
    &\le \exp\left( - 2 \eta \alpha_A^2 \right) 
      \norm{\bDelta_{V, \tau}}_\mu^2
      + 3 \eta \left( 
          \norm{\bDelta_{V, \tau}}_\mu
          + \eta \frac{ \norm{ \h_{\V} }_\mu }{\sqrt{K_P} K_Q}
        \right) \frac{ \norm{ \h_{\V} }_\mu }{ \sqrt{K_P} K_Q }  \\
    &\le \exp\left( - \eta \right) 
        \norm{\bDelta_{V, \tau}}_\mu^2
        + 0.1 \eta \eps. 
  \end{align*}
  Thus, by Lemma~\ref{lemma: boundary decrease}, we have $\left( 1 - \alpha_V \right)^2  \le \eps/2$
  and $\norm{\bDelta_V}_\mu^2 \le \eps/2$ for all $\tau \ge \cT_2 + \Theta\left( \log(1/\eps) / \eta \right)$. 
\end{proof}

\begin{corollary}
  \label{cor: stage 3: main}
  Let $\eps \in (\Theta(1)/(K_Q T), 0.1)$ be our target accuracy.
  Suppose that $\alpha_A \in (0.9, 1.1)$, $\norm{\bDelta_A}_\mu^2 \le O(\eps / (Q \sqrt{N}))$, and 
  $\norm{\bDelta_V}_\mu^2 \le 0.01$. Then with $\poly(N, Q, 1/\eps)$ samples, we have with high probability that 
  $\norm{\A - \bQ}_\mu^2 \le \eps$ and $\norm{\V - \bP}_\mu^2 \le \eps$ after Stage~3, which takes 
  $O(\log(1/\eps)/\eta)$ steps. 
\end{corollary}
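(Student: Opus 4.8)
The plan is to chain the two lemmas that precede this corollary, since Stage~3 (as described in Algorithm~\ref{alg: training_alg}) is exactly one thresholding-normalization rounding of $\A$, followed by freezing $\A$ ($\eta_A = 0$) and running the preconditioned update on $\V$ alone for $\cT_3 - \cT_2$ steps. First I would invoke Lemma~\ref{lemma: stage 3: rounding A} for the rounding step, then the lemma immediately preceding this corollary for the frozen-$\A$ dynamics of $\V$, and glue them together with a concentration bound on the gradient noise $\h_{\V,\tau}$ over the (logarithmically many) steps of Stage~3.

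For the rounding step, I would verify the hypotheses of Lemma~\ref{lemma: stage 3: rounding A}. Setting $\eps_A := 1 - \alpha_A$, the assumption $\alpha_A \in (0.9, 1.1)$ gives $|\eps_A| \le 0.1$, and $\norm{\bDelta_A}_\mu^2 \le O(\eps/(Q\sqrt N))$ supplies both the mild condition $\delta_A \ll 1/(Q\sqrt N)$ and a bound of the form $\delta_A^2 \le O(\eps/\sqrt{NQ})$ on $\delta_A := \norm{\bDelta_A}_\mu$. Lemma~\ref{lemma: stage 3: rounding A} then bounds $\norm{\A_* - \bQ}_\mu^2 \le O(Q^2/T^2 + \delta_A^2 NQ)$; using $\eps > \Theta(1)/(K_Q T) \ge \Theta(Q^2/T^2)$ (a consequence of $T \ge (NQ)^{10}$, which makes $Q^2/T \ll 1$) and shrinking the hidden constant in the hypothesis on $\norm{\bDelta_A}_\mu$ as needed, this can be made at most the tolerance $\delta_{A,*}^2 \le (0.01\eps)^2$ required by the next lemma. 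Since $\A$ is frozen for the remainder of Stage~3, $\norm{\A - \bQ}_\mu$ — and hence $\norm{\bDelta_A}_\mu$ and $\alpha_A = 1 \pm O(\sqrt Q\,\delta_{A,*})$ — stay at these values throughout.

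Next I would apply the preceding lemma to the $\V$-dynamics. Its hypotheses hold: $\norm{\A - \bQ}_\mu \le \delta_{A,*} \le 0.01\eps$ from the rounding step, $\norm{\bDelta_V}_\mu \le 0.01$ at the start by assumption, and $\norm{\h_{\V,\tau}}_\mu \le c\eps\sqrt{K_P}K_Q$ for every step (guaranteed by the concentration step below). The lemma then gives $(1 - \alpha_V)^2 \le \eps/2$ and $\norm{\bDelta_V}_\mu^2 \le \eps/2$ after $\Theta(\log(1/\eps)/\eta)$ steps — where the contraction in the recursion for $\norm{\bDelta_V}_\mu^2$, controlled via Lemma~\ref{lemma: boundary decrease}, also keeps $\norm{\bDelta_V}_\mu \le 0.01$ along the way — so that $\norm{\V - \bP}_\mu^2 = (1 - \alpha_V)^2 K_P + \norm{\bDelta_V}_\mu^2 \le \eps$, while $\norm{\A - \bQ}_\mu^2 \le \delta_{A,*}^2 \le \eps$ since $\A$ is unchanged.

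For the sample complexity, the only randomness in Stage~3 enters through $\h_{\V,\tau} = \hat\nabla^{(B_\tau)}_\V l - \E\hat\nabla_\V l$. Re-running the gradient $\ell_\infty$-norm bound of Lemma~\ref{lemma: concentration: gradient infinity norm} under the conditions that hold in Stage~3 — each rounded column $\a\ps{k}$ is supported on $\q\ps{k}$, so $a\ps{k}_t = 0$ for $t \notin \q\ps{k}$, and the entries of $\V$ are polynomially bounded because $\norm{\bDelta_V}_\mu \le 0.01$ — yields $\E\norm{\hat\nabla_\V l}_\mu^2 = \poly(N)$ with no $T$-dependence; then Lemma~\ref{lemma: concentration: basic inequality} shows that a batch size $B_\tau = \poly(N,Q,1/\eps)/\delta_\tau$ makes $\norm{\h_{\V,\tau}}_\mu \le c\eps\sqrt{K_P}K_Q$ with probability $1 - \delta_\tau$. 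A union bound over the $\Theta(\log(1/\eps)/\eta)$ steps, with the failure budget split accordingly, gives total sample complexity $\poly(N,Q,1/\eps)$ (up to the $\log(1/\eps)$ factor), which completes the proof. The one genuinely substantive ingredient here is Lemma~\ref{lemma: stage 3: rounding A}: converting the $\mu$-geometry error $\norm{\bDelta_A}_\mu$ into a per-column $\ell_\infty$ separation that is robust to the normalization $\a\ps{k}\mapsto\hat\a\ps{k}/\One\trans\hat\a\ps{k}$. Once that lemma is granted, the corollary is essentially bookkeeping — checking that the tolerances of the two lemmas are mutually compatible and that the required noise level is reachable with polynomially many samples.
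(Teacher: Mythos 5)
Your proposal is correct and follows essentially the same route as the paper, whose proof is exactly this: combine Lemma~\ref{lemma: stage 3: rounding A} with the immediately preceding lemma on the frozen-$\A$ dynamics of $\V$, together with the concentration results and a union bound over the $O(\log(1/\eps)/\eta)$ steps. The one caveat — inherited from the paper, which glosses it identically — is that passing from $\norm{\bDelta_A}_\mu^2 \le O(\eps/(Q\sqrt{N}))$ through the rounding bound $O(Q^2/T^2 + \delta_A^2 N Q)$ to the tolerance $\delta_{A,*} \le 0.01\eps$ required by the $\V$-lemma is not literally a matter of ``shrinking the hidden constant,'' since the needed factor depends on $N$, $Q$, and $\eps$.
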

\begin{proof}
  It suffices to combine the previous two lemmas, the concentration results in Section~\ref{sec: probabilities, expectations, and variances}, and apply union bound. 
\end{proof}

\section{Proof of the main theorem}

In this section, we combine the results from the last three sections and prove the following formal version 
of Theorem~\ref{thm: informal main thm}. 

\begin{theorem}
  \label{thm: main}
  Let $\eps > 0$ be our target accuracy and $\cT_1 = \min\{ \tau \ge 0 \,:\, 
  \max\{ \alpha_{V, \tau}, \alpha_{A, \tau} \} \ge \Theta(1/(QN)) \}$. 
  We can choose the hyperparameters in Algorithm~\ref{alg: training_alg} such that within $O\left(
    \log(T)/\eta_1 + 1 / (\eta \alpha\ps{2}) + \log(1/\eps) / \eta
  \right)$ steps, we have 
  $\norm{\A - \bQ}_\mu^2 \le \eps$ and $\norm{\V - \bP}_\mu^2 \le \eps$ with probability at least $1 - \delta$ and 
  the number of samples used before and after $\cT_1$ are $\poly(T, \delta)$ and $\poly(N, Q, 1/\eps, \log T, \delta)$, 
  respectively.
\end{theorem}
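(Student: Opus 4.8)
The plan is to prove Theorem~\ref{thm: main} by concatenating the three stage analyses --- Lemma~\ref{lemma: stage 1: main result} for Stage~1, Lemma~\ref{lemma: stage 2: main} (together with the rounding Lemmas~\ref{lemma: stage 2: rounding threshold} and \ref{lemma: stage 2: rounding effect} and the proximal-bias Lemma~\ref{lemma: stage 2: bias of hat h}) for Stage~2, and Corollary~\ref{cor: stage 3: main} for Stage~3 --- checking at each handoff that the conclusion of one stage, after the intervening thresholding step, implies the hypotheses of the next, and then summing step counts, sample counts, and failure probabilities over the three stages.

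First I would run Stage~1: choose $\eta_1 \le \min\{1/10, K_P, K_Q\}$, $\eta_A\ps{1} = \eta_1/K_P$, $\eta_V\ps{1} = \eta_1/K_Q$, $\lambda = 0$, and Stage-1 batch sizes $B_\tau \ge \tilde{O}(T^2 Q^4 N^5 / \min\{K_P^3, K_Q^3\})$ with per-step failure probabilities summing to at most $\delta/3$, so that Lemma~\ref{lemma: stage 1: main result} applies. This gives $\cT_1 \le \Theta(\log(TK_Q))/\eta_1 = O(\log T / \eta_1)$, $\|\bDelta_A\|_\mu^2, \|\bDelta_V\|_\mu^2 \le \Delta/T$ with $\Delta \le \Theta(1/(Q^4 N^3))$, $\alpha_V + \alpha_A = \Theta(1/(QN))$, $|\alpha_V - \alpha_A| \le \Theta(1/(T K_Q))$, and $\|\V\|_\infty = O(1)$, $\|\A\|_\infty = O(1/Q)$; in particular $\alpha_V, \alpha_A = \Theta(1/(QN))$ individually. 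The samples used up to $\cT_1$ are $\cT_1 \cdot \max_\tau B_\tau = \poly(T, 1/\delta)$. Next, apply the thresholding-projection step with $\lambda_0 = \Theta(\alpha_V/Q)$: its hypothesis $\alpha_V \ge \Theta(Q/T + Q\sqrt N\,\|\bDelta_A\|_\mu)$ holds since $Q/T$ and $Q\sqrt N\sqrt{\Delta/T}$ are both $\ll 1/(QN)$ by $T \ge (NQ)^{10}$ (Lemma~\ref{lemma: stage 2: rounding threshold}), so the support of each $\q\ps{k}$ is recovered and, by Lemma~\ref{lemma: stage 2: rounding effect}, $\alpha_A$ and $\|\bDelta_A\|_\mu^2$ move by only $O(1/T)$. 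Hence at the start of Stage~2 we have $\alpha_V, \alpha_A \ge \alpha\ps{2} := \Theta(1/(QN))$ and $\delta_A^2 + \delta_V^2 \le (\delta\ps{2})^2 = o(1)$.

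Then I would run Stage~2: fix the target $\delta_*^2 = O\!\left(\eps/(K_Q Q\sqrt N)\right)$, set $\hat\lambda = \Theta(\eps K_P/(Q^2 N^3))$ as in Lemma~\ref{lemma: stage 2: bias of hat h}, choose $\eta$ suitably small, and pick Stage-2 batch sizes $B_\tau \ge \max\{ C N^8 Q^4/(\eps^2 (\alpha\ps{2})^2 K_P^2)\log(CNT/\delta_\tau),\ \Theta(N^6 Q^3)/(\delta_\tau\eps^2) \}$ with $\sum\delta_\tau \le \delta/3$, which makes $\Err_{\mrm{small}}$ smaller than every threshold demanded by Lemma~\ref{lemma: stage 2: main}. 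That lemma then gives, in $O(1/(\eta\alpha\ps{2}) + \log(1/\delta_*)/\eta)$ steps and with probability $\ge 1-\delta/3$, $\alpha_V, \alpha_A \in (0.9, 1.1)$ and $\delta_A^2, \delta_V^2 \le \delta_*^2$, so $\|\bDelta_A\|_\mu^2 = K_Q\delta_A^2 \le O(\eps/(Q\sqrt N))$ and $\|\bDelta_V\|_\mu^2 = K_P\delta_V^2 \le 0.01$; since each $B_\tau = \poly(N,Q,1/\eps)\log(T/\delta)$ and the step count is $\poly(N,Q)\log(1/\eps)$, Stage~2 uses $\poly(N,Q,1/\eps,\log T, 1/\delta)$ samples. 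These are exactly the hypotheses of Corollary~\ref{cor: stage 3: main}, so the thresholding-normalization followed by $\cT_3 - \cT_2 = O(\log(1/\eps)/\eta)$ steps training only $\V$ (with $\poly(N,Q,1/\eps)$ more samples, probability $\ge 1-\delta/3$) yields $\|\A - \bQ\|_\mu^2 \le \eps$ and $\|\V - \bP\|_\mu^2 \le \eps$. A union bound over the three stages gives success probability $\ge 1-\delta$, the step counts add to $O(\log T/\eta_1 + 1/(\eta\alpha\ps{2}) + \log(1/\eps)/\eta)$, and the sample counts are $\poly(T, 1/\delta)$ before $\cT_1$ and $\poly(N,Q,1/\eps,\log T, 1/\delta)$ after.

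The bookkeeping is routine; the real work is verifying the chain of conditions is simultaneously satisfiable. The tightest link is the exit from Stage~1: the errors $\|\bDelta\|_\mu = O(\sqrt{\Delta/T})$ and the signal $\alpha_V = \Theta(1/(QN))$ must simultaneously satisfy the separation condition of Lemma~\ref{lemma: stage 2: rounding threshold} and the smallness requirement on $\delta\ps{2}$ in Lemma~\ref{lemma: stage 2: main}, which is precisely where $T \ge (NQ)^{10}$ is needed. A second delicate point is that the Stage-2 batch size scales like $1/\alpha_V^2$, so the claimed polynomial bound relies on $\alpha_V$ never dropping below its $\Theta(1/(QN))$ initial value during Stage~2 --- guaranteed by the monotone signal growth established in the proof of Lemma~\ref{lemma: stage 2: main}.
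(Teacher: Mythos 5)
Your proposal is correct and follows essentially the same route as the paper's own proof: Stage~1 via Lemma~\ref{lemma: stage 1: main result}, the thresholding handoff via Lemmas~\ref{lemma: stage 2: rounding threshold}--\ref{lemma: stage 2: rounding effect} together with Lemmas~\ref{lemma: stage 2: gradient denoising} and \ref{lemma: stage 2: bias of hat h}, then Lemma~\ref{lemma: stage 2: main} with $\delta_*^2 = O(\eps/(Q\sqrt{N}))$, and finally Corollary~\ref{cor: stage 3: main}, with the same union-bound bookkeeping over steps, samples, and failure probabilities. Your explicit verification of the handoff conditions (using $T \ge (NQ)^{10}$) and of the persistence of $\alpha_V \ge \alpha\ps{2}$ during Stage~2 matches what the paper's argument relies on.
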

\begin{proof}
  The results for Stage~1 follow directly from Lemma~\ref{lemma: stage 1: main result}.  
  Now, consider the results for Stage 2 and 3. 
  First, by Corollary~\ref{cor: stage 3: main}, it suffices to make sure at time $\cT_2$, 
  we have $\alpha_A \in (0.9, 1.1)$, $\norm{\Delta_A}_\mu^2 \le O( \eps / ( Q \sqrt{N} ) )$ and 
  $\norm{\bDelta_V}_\mu^2 \le 0.01$ (with high probability). By Lemma~\ref{lemma: stage 1: main result}, we know 
  the following hold at time $\cT_1$ w.h.p: 
  \[ 
    \alpha_A, \alpha_V = \Theta(1/(QN))
    \quad\text{and}\quad
    \norm{\bDelta_A}_\mu^2, \norm{\bDelta_V}_\mu^2 \le O(1/T). 
  \] 
  Therefore, by Lemma~\ref{lemma: stage 2: rounding threshold}, if we choose the threshold to be $\Theta(1/(Q^2N))$,
  then after thresholding and projection, we have $a\ps{k}_t = O(1/T)$ for all $t \notin \q\ps{k}$. Thus, by 
  Lemma~\ref{lemma: stage 2: gradient denoising} and Lemma~\ref{lemma: stage 2: bias of hat h}, with 
  $\poly(Q, N, 1/\eps, \log(T))$ samples, we have with high probability that 
  $a\ps{k}_t = O(1/T)$ for all $t \notin \q\ps{k}$ holds and $\norm{\hh_A}_\mu$ satisfies the requirements in 
  Lemma~\ref{lemma: stage 2: main} throughout Stage~2. Thus, by Lemma~\ref{lemma: stage 2: main} with $\delta_*^2 
  = O(\eps / Q \sqrt{N})$, at the end of 
  Stage~2, we have with high probability that $\alpha_A \in (0.9, 1.1)$, 
  $\norm{\Delta_A}_\mu^2 \le O( \eps / ( Q \sqrt{N} ) )$ and 
  $\norm{\bDelta_V}_\mu^2 \le 0.01$. When combined with Corollary~\ref{cor: stage 3: main}, this completes the proof. 
\end{proof}

\section{Transfer learning}

\begin{lemma}[Initialization]
  \label{lemma: transfer learning: init}
  Suppose that we have learned $\hat{\bP}$ and $\inprod{\hat\bP}{\bP}_\mu \ge 2 \norm{\bmu}^2$, $\norm{\hat\bP}_\mu^2 = \Theta(1) \norm{\bP}_\mu^2$. 
  Let $\V = \theta \hat{\bP} + (1 - \theta) \bmu\One\trans $ for some $\theta \in (0, 1)$. We have 
  \[
    \alpha_V 
    \in \left[ \Theta\left( \frac{\theta}{N K_P} \right), \Theta\left( \theta \right) \right]
    \quad\text{and}\quad 
    \norm{\bDelta_V}_\mu^2 
    \le \Theta(1) \theta^2 K_P. 
  \]
\end{lemma}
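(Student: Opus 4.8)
The plan is to evaluate the closed-form expressions for $\alpha_V$ and $\norm{\bDelta_V}_\mu^2$ supplied by Lemma~\ref{lemma: formuls for alphaV and alphaA} and Lemma~\ref{lemma: population projection, def and basic results} at the specific matrix $\V=\theta\hat\bP+(1-\theta)\bmu\One\trans$, using bilinearity of $\inprod{\cdot}{\cdot}_\mu$; the whole argument is algebraic.

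First I would record the identity $\inprod{\bmu\One\trans}{\bP}_\mu=\inprod{\bmu\One\trans}{\hat\bP}_\mu=\norm{\bmu\One\trans}_\mu^2=\norm{\bmu}^2$. This is a one-line computation from the definition of $\inprod{\cdot}{\cdot}_\mu$ together with the fact that both $\bP$ and $\hat\bP$ have stationary distribution $\bmu$ (so that the $\bmu$-weighted column sums collapse), and it is the reason every cross term with $\bmu\One\trans$ simplifies. By bilinearity this gives $K_{VP}=\inprod{\V}{\bP}_\mu-\norm{\bmu}^2=\theta\bigl(\inprod{\hat\bP}{\bP}_\mu-\norm{\bmu}^2\bigr)$, hence $\alpha_V=K_{VP}/K_P=\theta\bigl(\inprod{\hat\bP}{\bP}_\mu-\norm{\bmu}^2\bigr)/K_P$.

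For the two-sided bound on $\alpha_V$: the lower bound uses the hypothesis $\inprod{\hat\bP}{\bP}_\mu\ge2\norm{\bmu}^2$, so $\inprod{\hat\bP}{\bP}_\mu-\norm{\bmu}^2\ge\norm{\bmu}^2\ge\Omega(1/N)$ (the last step being the well-conditioned bound $\norm{\bmu}^2\ge\Omega(1/N)$ from the Remark after Assumption~\ref{assumption: data-generating model}), which yields $\alpha_V\ge\Theta(\theta/(NK_P))$. The upper bound uses Cauchy--Schwarz in the $\mu$-inner product together with $\norm{\hat\bP}_\mu^2=\Theta(1)\norm{\bP}_\mu^2$: $\inprod{\hat\bP}{\bP}_\mu\le\norm{\hat\bP}_\mu\norm{\bP}_\mu=\Theta(1)\norm{\bP}_\mu^2$, and condition (c) of the Assumption gives $\norm{\bP}_\mu^2=K_P+\norm{\bmu}^2\le 2K_P$, so $\inprod{\hat\bP}{\bP}_\mu-\norm{\bmu}^2\le\Theta(K_P)$ and $\alpha_V\le\Theta(\theta)$.

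For the error I would avoid computing $\bDelta_V$ directly and instead invoke the identity $K_V=\alpha_V^2K_P+\norm{\bDelta_V}_\mu^2$ from Lemma~\ref{lemma: population projection, def and basic results}, which gives $\norm{\bDelta_V}_\mu^2=K_V-\alpha_V^2K_P\le K_V$ (and, via the Cauchy--Schwarz step above, shows this quantity is nonnegative). Expanding $\norm{\V}_\mu^2$ with the identity from the first step and $\norm{\bmu\One\trans}_\mu^2=\norm{\bmu}^2$ gives $\norm{\V}_\mu^2=\theta^2\norm{\hat\bP}_\mu^2+(1-\theta^2)\norm{\bmu}^2$, hence $K_V=\theta^2\bigl(\norm{\hat\bP}_\mu^2-\norm{\bmu}^2\bigr)\le\Theta(1)\theta^2\norm{\bP}_\mu^2\le\Theta(1)\theta^2K_P$, which is the claimed bound. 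There is no genuine obstacle here; the only points needing care are the shared-stationary-distribution identity that kills the $\bmu\One\trans$ cross terms, and the two conversions from $\norm{\bP}_\mu^2$-scale to $K_P$-scale, both of which are exactly what condition (c) and the assumption $\norm{\hat\bP}_\mu^2=\Theta(1)\norm{\bP}_\mu^2$ are for.
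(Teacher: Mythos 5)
Your proof is correct. The $\alpha_V$ part is essentially the paper's: both of you use stationarity to kill the $\bmu\One\trans$ cross terms, get $K_{VP}=\theta\bigl(\inprod{\hat\bP}{\bP}_\mu-\norm{\bmu}^2\bigr)$, and use the hypothesis $\inprod{\hat\bP}{\bP}_\mu\ge 2\norm{\bmu}^2$ plus $\norm{\bmu}^2\ge\Omega(1/N)$ for the lower bound; in fact you are slightly more complete, since the paper only derives the lower bound explicitly and leaves the upper bound $\alpha_V\le\Theta(\theta)$ implicit, whereas you supply it via Cauchy--Schwarz, $\norm{\hat\bP}_\mu^2=\Theta(1)\norm{\bP}_\mu^2$, and $\norm{\bP}_\mu^2\le 2K_P$ from condition (c). For the error term the routes differ: the paper exploits the variational characterization of $\bDelta_V$ as a projection residual and bounds it by the distance to the specific point $\theta\bP+(1-\theta)\bmu\One\trans$ on the population line, giving $\norm{\bDelta_V}_\mu^2\le\theta^2\norm{\hat\bP-\bP}_\mu^2\le\Theta(1)\theta^2 K_P$; you instead use the Pythagorean identity $K_V=\alpha_V^2K_P+\norm{\bDelta_V}_\mu^2$ from Lemma~\ref{lemma: population projection, def and basic results}, drop the nonnegative $\alpha_V^2K_P$ term, and compute $K_V=\theta^2\bigl(\norm{\hat\bP}_\mu^2-\norm{\bmu}^2\bigr)$ by expanding $\norm{\V}_\mu^2$. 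Both yield the same bound with the same constants of proportionality; the paper's comparison-point argument avoids expanding $\norm{\V}_\mu^2$ but needs a Young/triangle step on $\norm{\hat\bP-\bP}_\mu^2$, while your orthogonal-decomposition route avoids that step at the cost of the (easy) check that $\V$ satisfies the affine constraints required by the projection lemma. The parenthetical claim that Cauchy--Schwarz shows nonnegativity of $\norm{\bDelta_V}_\mu^2$ is unnecessary (it is a squared norm), but harmless.
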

\begin{proof}
  First, consider $\bDelta_V$. Since $\norm{\bDelta_V}_\mu$ is the distance to a projection, we have 
  \begin{align*}
    \norm{\bDelta_V}_\mu^2 
    \le \norm{
        \V - \left( \theta \bP + (1 - \theta) \bmu\One\trans \right)
      }_\mu^2 
    = \theta^2 \norm{\hat\bP - \bP}_\mu^2 
    \le \Theta(1) \theta^2 \norm{\bP}_\mu^2 
    = \Theta(1) \theta^2 K_P. 
  \end{align*}
  For $\alpha_V$, we compute 
  \[ 
    \inprod{\V}{\bP}_\mu - \norm{\bmu}^2 
    = \theta \left( \inprod{ \hat{\bP} }{\bP}_\mu - \norm{\bmu}^2 \right)
    \ge \theta \norm{\bmu}^2
    \ge \Theta(1) \theta / N. 
  \] 
  In particular, this implies $\alpha_V \ge \Theta\left( \frac{\theta}{N K_P} \right)$. 
\end{proof}

\begin{lemma}[First gradient step]
  \label{lemma: transfer learning: first step}
  Let $\A = \One\One\trans/T$ and $\V$ given by Lemma~\ref{lemma: transfer learning: init}. Let $\eps_{\Tmp} \le 
  O(1/(N K_P))$. Run one gradient step with $\poly(N, Q, \log T, 1/\eps_{\Tmp})$ samples and $\eta = 1$, remove 
  all entries of $\a\ps{k}$ with $|a\ps{k}_t| \le \Theta(\theta / (N K_P))$ and the replace $\a\ps{k}$ with the projection
  $(\Id - \One\One\trans/T)\a\ps{k}$. With high probability, we have $\alpha_A = (1 \pm O(\eps_{\Tmp})) \alpha_V$ 
  and $\norm{\bDelta_A}_\mu^2 \le O\left( \frac{ \eps_{\Tmp}^2 \alpha_V^2 }{Q} + \frac{1}{T}  \right)$. 
\end{lemma}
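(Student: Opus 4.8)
The plan is to carry out the single gradient step at the population level, where it is exact, and then show that the only deviation --- the mini-batch gradient noise --- is controlled \emph{coordinate-wise}, so that the thresholding step annihilates its effect on all but the $Q$ support coordinates of each $\q\ps{k}$. Concretely, at $\A_0 = \One_T\One_N\trans/T$ the term proportional to $\a\ps{k}_0 - \One/T$ in Lemma~\ref{lemma: expectation: expected preconditioned gradients} vanishes, so $\E\hat\nabla_{\a\ps{k}} l = -K_{VP}(\q\ps{k} - \One/T) = -\alpha_V K_P(\q\ps{k} - \One/T)$, where $\alpha_V$ is the population projection of the transfer initialization supplied by Lemma~\ref{lemma: transfer learning: init}. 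With the step-size convention $\eta_A = \eta/K_P$ and $\eta = 1$, one step gives
\[
  \a\ps{k}_1 = \alpha_V\q\ps{k} + (1-\alpha_V)\frac{\One}{T} - \frac{1}{K_P}\h_{\a\ps{k}},
  \qquad
  \h_{\a\ps{k}} := \hat\nabla^{(B)}_{\a\ps{k}} l - \E\hat\nabla_{\a\ps{k}} l ;
\]
that is, modulo the noise the step lands exactly on the population line at $\alpha_A = \alpha_V$ with zero approximation error. (The simultaneous step on $\V$ is harmless since $\E\hat\nabla_\V l = T^{-1}(\V-\bP)$ at $\A_0$, so $\V$ and hence $\alpha_V,\norm{\bDelta_V}_\mu$ move by only $O(1/T)$ up to noise.)

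Next I would establish coordinate-wise concentration for $\h_{\a\ps{k}}$. At $\A_0,\V_0$ we have $\norm{\Vect\V}_\infty = O(1)$ and the columns of $\A_0$ are probability vectors, so Lemma~\ref{lemma: concentration: gradient infinity norm} gives $\norm{\hat\nabla_{\a\ps{k}} l}_\infty = O(N)$; hence each entry of $\h_{\a\ps{k}}$ is $O(N^2/B)$-subgaussian, and a union bound over the $NT$ entries forces $\norm{\h_\A}_\infty \le \xi$ with probability $\ge 1-\delta$ once $B \gtrsim N^2\xi^{-2}\log(NT/\delta)$. I would take $\xi \asymp \eps_{\Tmp}\,\alpha_V K_P/Q$; since $\alpha_V K_P \gtrsim \theta/N$ by Lemma~\ref{lemma: transfer learning: init}, this needs only $B = \poly(N,Q,1/\eps_{\Tmp},1/\theta,\log(T/\delta))$ samples, and the same event also gives $|\inprod{\h_\A}{\bQ}_\mu| \le \sum_k\mu_k\sum_{t\in\q\ps{k}}|(\h_{\a\ps{k}})_t|\,q\ps{k}_t = O(\xi)$ by the $Q$-sparsity of $\bQ$ and $q\ps{k}_t = O(1/Q)$.

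I would then push these estimates through the thresholding-projection step. On the good event, for $t$ in the support of $\q\ps{k}$ one has $(\a\ps{k}_1)_t \ge \Omega(\alpha_V/Q) - 1/T - \xi/K_P = \Theta(\alpha_V/Q)$, while for $t$ outside the support $|(\a\ps{k}_1)_t| \le 1/T + \xi/K_P = O(\eps_{\Tmp}\alpha_V/Q)$; since $\alpha_V \gtrsim \theta/(NK_P) \gg 1/T$, the threshold $\theta_{\Tmp}$ (on the order of $\theta/(NK_P)$, and by Lemma~\ref{lemma: transfer learning: init} placeable strictly between the $O(1/T)$ noise floor and the $\Omega(\alpha_V/Q)$ signal level) separates the two groups exactly, so after thresholding $\hat\a\ps{k} = \alpha_V\q\ps{k} + (1-\alpha_V)\One_{\q\ps{k}}/T - K_P^{-1}[\h_{\a\ps{k}}]_{\q\ps{k}}$ and the projection adds only an $O(1/T)$-scale multiple of $\One$. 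Expanding $\inprod{\A}{\bQ}_\mu = \sum_k\mu_k\inprod{\a\ps{k}}{\q\ps{k}}$ with $\sum_k\mu_k\norm{\q\ps{k}}^2 = K_Q + 1/T$ yields $\alpha_A = \alpha_V - \inprod{\h_\A}{\bQ}_\mu/(K_PK_Q) + O(1/T)$, which is $(1\pm O(\eps_{\Tmp}))\alpha_V$ by the choice of $\xi$ and $\alpha_V \gg 1/(\eps_{\Tmp}T)$; and bounding $\norm{\bDelta_{\a\ps{k}}}^2$ by the squared distance from the thresholded-projected vector to the point $\alpha_V\q\ps{k} + (1-\alpha_V)\One/T$ on the line, only $K_P^{-1}[\h_{\a\ps{k}}]_{\q\ps{k}}$ (supported on $Q$ coordinates of size $\le\xi$), the discarded uniform mass off the support, and the $O(1/T)$ correction survive, giving $\norm{\bDelta_A}_\mu^2 = \sum_k\mu_k\norm{\bDelta_{\a\ps{k}}}^2 \le O(K_P^{-2}Q\xi^2 + 1/T) = O(\eps_{\Tmp}^2\alpha_V^2/Q + 1/T)$; the $O(1/T)$ bias from discarding the off-support uniform mass is precisely the phenomenon quantified in Lemma~\ref{lemma: stage 2: rounding effect}.

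The main obstacle is that the naive $\ell_2$ bound $\E\norm{\h_\A}_\mu^2 = O(TN^2)$ grows with $T$, so no polynomial number of samples can make $\norm{\bDelta_A}_\mu$ small through $\ell_2$ concentration --- the same difficulty that forces Stage~1 to spend $\poly(T)$ samples. The thresholding step is what circumvents it: it replaces the $T$-dimensional $\ell_2$ control by $\ell_\infty$ control of $NT$ bounded (hence subgaussian) coordinates, after which only the $Q$ coordinates on the support of each $\q\ps{k}$ remain. The delicate point is calibrating the threshold strictly between the $O(1/T)$ noise floor and the $\Theta(\alpha_V/Q)$ signal level; this is feasible precisely because the pretrained model supplies $\alpha_V \gtrsim \theta/(NK_P) \gg 1/T$, so no signal-boosting stage --- and hence no $\poly(T)$ sample cost --- is needed before handing off to the Stage~2 analysis.
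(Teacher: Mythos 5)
Your proposal is correct and follows essentially the same route as the paper's proof: one preconditioned step from $\A_0=\One\One\trans/T$ lands on the population line at $\alpha_A\approx\alpha_V$, coordinate-wise (subgaussian) concentration with the $O(N)$ entry bound separates support from non-support entries, and the thresholding–projection step then yields $\alpha_A=(1\pm O(\eps_{\Tmp}))\alpha_V$ and $\norm{\bDelta_A}_\mu^2\le O(\eps_{\Tmp}^2\alpha_V^2/Q+1/T)$. The only cosmetic difference is that you phrase the concentration as an $\ell_\infty$ bound on the preconditioned gradient noise with $\xi\asymp\eps_{\Tmp}\alpha_V K_P/Q$, while the paper states it as a relative $(1\pm\eps_{\Tmp})$ error on the raw per-coordinate gradients; these are equivalent.
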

\begin{proof}
  The proof idea is essentially the same as Lemma~\ref{lemma: stage 2: gradient denoising}, though the reinitialization 
  of the first layer allows better estimations in several places. 
  Recall that for each $s \in [T]$, we have 
  \begin{align*}
    \partial_{a\ps{k}_s} l 
    &= \indi\{ x_{T+1} = k \} (\V\e_{x_s})\trans\left( \V\X\One/T - \e_{x_o} \right), \\
    \E \partial_{a\ps{k}_s} l 
    &= \mu_k K_V/T - \mu_k K_{VP} q_s.
  \end{align*}
  Also recall from Lemma~\ref{lemma: population projection, def and basic results} that 
  $K_{VP} = \alpha_V K_P$ and $K_V = \alpha_V^2 K_P + \norm{\bDelta_V}_\mu^2$. For any $s \in \q\ps{k}$, 
  we have 
  \begin{align*}
    - \E \partial_{a\ps{k}_s} l 
    &= \mu_k \left( \alpha_V K_P q_s -  \frac{\alpha_V^2 K_P + \norm{\bDelta_V}_\mu^2}{T} \right) 
    \ge \frac{\mu_k \alpha_V K_P q_s}{2}  
  \end{align*}
  where the inequality comes from $\mu_k \alpha_V K_P q_s \ge \Omega( \alpha_V / N^2 / Q ) \gg 1/T$. Meanwhile,
  for any $s \notin [T]$, we have $\E \partial_{a\ps{k}_s} l = O(1/T)$. 

  Meanwhile, for any $s \in [T]$ we have 
  \[ 
    \left| \partial_{a\ps{k}_{s}} l \right|
    \le \left| (\V\e_{x_s})\trans \V\X\One/T \right| + \left| (\V\e_{x_s})\trans \e_{x_o} \right|  
    \le \frac{1}{T} \sum_{t=1}^{T} \left| \inprod{ \V_{:, x_s} }{ \V_{:, x_t} } \right| + \left| V_{x_o, x_s} \right| \\
    \le O(N). 
  \] 
  Thus, by some standard concentration argument similar to the one in Lemma~\ref{lemma: stage 2: gradient denoising},
  we can show that with $\poly(N, Q, 1/\alpha_V, 1/\eps_{\Tmp}, \log T)$ samples, we can make sure with high probability,
  \begin{align*}
    \partial\ps{B_\tau}_{a\ps{k}_t} l 
    &= \left(1 \pm \eps_{\Tmp} \right) \E\partial_{a\ps{k}_t} l
    = \left(1 \pm \eps_{\Tmp} \right) \mu_k \alpha_V K_P q_s 
    &&\quad \forall t \in \q\ps{k}, \\
    \partial\ps{B_\tau}_{a\ps{k}_t} l 
    &= O\left( \frac{1}{T} + \eps_{\Tmp}\frac{\alpha_V K_P}{NQ} \right) 
    = O\left( \eps_{\Tmp}\frac{\alpha_V K_P}{NQ} \right)
    &&\quad \forall t \notin \q\ps{k}. 
  \end{align*}
  Thus, after one gradient step with $\eta = 1$, we have 
  \begin{align*}
    a\ps{k}_t 
    &= (1 \pm \eps_{\Tmp}) \alpha_V q\ps{k}_t 
    &&\quad \forall t \in \q\ps{k}, \\
    a\ps{k}_t
    &= O\left( \eps_{\Tmp} \frac{\alpha_V}{Q} \right)
    &&\quad \forall t \notin \q\ps{k}. 
  \end{align*}
  Recall from Lemma~\ref{lemma: transfer learning: init} that $\alpha_V \in [ \Theta(\theta / (N K_P)), \Theta(\theta) ]$. 
  Hence, we can choose the threshold $\lambda_0$ to be $\Theta( \theta / (N K_P) )$ and $\eps_{\Tmp} \le 1/(N K_P)$
  so that 
  \[
    \hat{\a}\ps{k} 
    = \left[ a\ps{k}_t \indi\{ a\ps{k}_t \ge \lambda_0 \} \right]_{t \in [T]}
    = \left[ (1 \pm \eps_{\Tmp}) \alpha_V q\ps{k}_t \indi\{ t \in \q\ps{k} \} \right]_{t \in [T]}. 
  \]
  Now, set  
  \[
    \a\ps{k} 
    \gets (\Id - \One\One\trans/T)\hat\a\ps{k}
    = (1 \pm O_\infty(\eps_{\Tmp})) \alpha_V \q\ps{k}  + O_\infty(1)\frac{\One}{T}. 
  \]
  Note that this implies that after the first step, we have 
  \[ 
    \alpha_A 
    = \frac{\sum_k \mu_k \inprod{\a\ps{k}}{\q\ps{k}}  - 1/T}{K_Q} 
    = \frac{\alpha_V (1 \pm O(\eps_{\Tmp})) \norm{\bQ}_\mu^2 + O(Q/T) }{K_Q} 
    = (1 \pm O(\eps_{\Tmp})) \alpha_V,   
  \] 
  and 
  \[
    \norm{\bDelta_A}_\mu^2 
    \le \sum_{k=1}^N \mu_k \norm{ \a\ps{k} - \left( \alpha_V \q\ps{k} + (1 - \alpha_V) \One/T \right)  }_\mu^2
    \le O\left( \frac{ \eps_{\Tmp}^2 \alpha_V^2 }{Q} + \frac{1}{T}  \right). 
  \]
\end{proof}

\begin{theorem}[Main theorem for transfer learning]
    \label{thm: transfer learning: main}
  Let $\eps > 0$ be our target accuracy. 
  Consider the same setting of Lemma~\ref{lemma: transfer learning: init} and Lemma~\ref{lemma: transfer learning: first 
  step}. Choose $\theta = \sqrt{ O(1 / \log(1/\eps)) }$ and $\eps_{\Tmp} = O\left( \frac{1}{\log(1/\eps)} \right)$. 
  Then, after one step of update on $\A$ as in Lemma~\ref{lemma: transfer learning: first step}, $\A$ and $\V$ 
  satisfies the conditions of Lemma~\ref{lemma: stage 2: main}, and therefore we can learn $(\bP, \bQ)$ to 
  $\eps$-accuracy using $\poly(N, Q, 1/\eps, 1/\delta)$ samples with probability at least $1 - \delta$ within 
  $\poly(N, Q, 1/\eps, 1/\delta)$ steps. 
\end{theorem}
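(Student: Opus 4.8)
The plan is to chain the two preparatory lemmas into the Stage~2 analysis. First I would invoke Lemma~\ref{lemma: transfer learning: init}: starting from $\V = \theta\hat\bP + (1-\theta)\bmu\One\trans$ and $\A = \One\One\trans/T$, this gives $\alpha_V \in [\Theta(\theta/(N K_P)), \Theta(\theta)]$ and $\norm{\bDelta_V}_\mu^2 \le \Theta(1)\,\theta^2 K_P$, so $\delta_V^2 = \norm{\bDelta_V}_\mu^2 / K_P \le \Theta(1)\,\theta^2$. Then I would perform the single preconditioned gradient step on $\A$ of Lemma~\ref{lemma: transfer learning: first step} with $\eps_{\Tmp} = O(1/\log(1/\eps))$ (chosen small enough to also respect that lemma's standing requirement $\eps_{\Tmp} \le O(1/(N K_P))$); using $\poly(N,Q,\log T,\log(1/\eps))$ samples this yields, with high probability, $\alpha_A = (1 \pm O(\eps_{\Tmp}))\alpha_V$ and $\norm{\bDelta_A}_\mu^2 \le O(\eps_{\Tmp}^2 \alpha_V^2 / Q + 1/T)$, so $\delta_A^2 = \norm{\bDelta_A}_\mu^2 / K_Q$ is negligible next to $\delta_V^2$.

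Next I would verify that the post-step state meets the hypotheses of Lemma~\ref{lemma: stage 2: main}, with the bookkeeping identifications $\alpha\ps{2} := \Theta(\theta/(N K_P))$ (a legitimate lower bound for both $\alpha_V$ and $\alpha_A$, since $\alpha_A \ge (1 - O(\eps_{\Tmp}))\alpha_V \ge \alpha_V/2$ for small $\eps_{\Tmp}$) and $(\delta\ps{2})^2 := \delta_A^2 + \delta_V^2 = \Theta(\theta^2)$. Conditions (a) and (b) of that lemma hold by construction. For the side conditions: the term $(\alpha_{V} - \alpha_{A})^2 = O(\eps_{\Tmp}^2\alpha_V^2) = O(\eps_{\Tmp}^2\theta^2)$ and $\delta\ps{2} = O(\theta)$ are both $O(1/\log(1/\delta_*))$ once $\theta = \sqrt{O(1/\log(1/\eps))}$ (recall $\log(1/\delta_*) = \Theta(\log(1/\eps) + \log NQ)$ for the target accuracy $\delta_*^2 = O(\eps/(Q\sqrt N))$ demanded by Corollary~\ref{cor: stage 3: main}); and $\Err_{\mrm{small}} \le O(\alpha\ps{2})$ together with $\max\{\delta\ps{2},\Err_{\mrm{small}}\}\Err_{\mrm{small}} \le \delta_*^2$ are secured by taking the Stage~2 batch sizes polynomially large — the $1/(K_Q T)$ part of $\Err_{\mrm{small}}$ is automatically tiny since $T \ge (NQ)^{10}$, and the gradient-noise parts are controlled by the concentration bounds of Section~\ref{sec: probabilities, expectations, and variances} together with Lemma~\ref{lemma: stage 2: gradient denoising} and Lemma~\ref{lemma: stage 2: bias of hat h}, whose batch-size requirement scales like $N^8 Q^4/(\eps^2\alpha_V^2 K_P^2) = \poly(N,Q,1/\eps)$ and stays polynomial because $\alpha_V,\alpha_A$ never fall below $\alpha\ps{2}$ during Stage~2.

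Then I would conclude: Lemma~\ref{lemma: stage 2: main} produces, within $O(1/(\eta\alpha\ps{2}) + \log(1/\delta_*)/\eta) = \poly(N,Q,\log(1/\eps))$ steps, a state with $\alpha_V,\alpha_A \in (0.9,1.1)$ and $\delta_A^2,\delta_V^2 \le \delta_*^2 = O(\eps/(Q\sqrt N))$; in particular $\norm{\bDelta_A}_\mu^2 \le O(\eps/(Q\sqrt N))$ and $\norm{\bDelta_V}_\mu^2 \le 0.01$, which are exactly the premises of Corollary~\ref{cor: stage 3: main}. Stage~3 then recovers $(\bP,\bQ)$ to $\norm{\cdot}_\mu^2$-accuracy $\eps$ in a further $O(\log(1/\eps)/\eta)$ steps and $\poly(N,Q,1/\eps)$ samples. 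A union bound over all steps of the three phases — the first gradient step, Stage~2, Stage~3 — each failing with probability $\le \delta/\poly$, yields the $1-\delta$ guarantee, and summing the per-step sample counts gives the total $\poly(N,Q,1/\eps,1/\delta)$ bound on samples and steps.

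The part I expect to require the most care is the parameter bookkeeping, because the genuinely delicate phenomenon here — that the approximation error $\norm{\bDelta_V}_\mu$ is \emph{larger} than the signal $\alpha_V$ (of order $\theta^2 K_P$ versus $\theta/(N K_P)$), so it cannot be treated as a naive perturbation — forces every smallness constraint on $\theta$ and $\eps_{\Tmp}$ to be pushed through simultaneously while keeping every batch size polynomial. The two places where this tension is actually resolved are both already available to cite: in Lemma~\ref{lemma: transfer learning: first step} the spurious drift $\mu_k K_V/T$ in $\E\partial_{a\ps{k}_s} l$ is damped by $1/T$ with $T \ge (NQ)^{10}$, so it cannot swamp the true signal $\mu_k\alpha_V K_P q_s$; and Lemma~\ref{lemma: stage 2: main} was stated with deliberately weak hypotheses (see its remark) — $\delta\ps{2}$ only $o(1)$ and $\alpha\ps{2}$ only $1/\poly$ — precisely to cover this transfer regime, the structural reason being that $\norm{\bDelta_V}_\mu^2$ enters the $\alpha_A$-update multiplied by $\alpha_A$ and does not grow while $\alpha_A$ is bounded away from $0$. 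Given those inputs, the remaining work in this theorem is exactly the chaining and accounting sketched above.
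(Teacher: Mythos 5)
Your chaining is exactly the argument the paper intends (the paper gives no separate written proof of Theorem~\ref{thm: transfer learning: main}; it is precisely Lemma~\ref{lemma: transfer learning: init} $\to$ Lemma~\ref{lemma: transfer learning: first step} $\to$ Lemma~\ref{lemma: stage 2: main} $\to$ Corollary~\ref{cor: stage 3: main}), and your identifications $\alpha\ps{2} = \Theta(\theta/(N K_P))$, $(\delta\ps{2})^2 = \Theta(\theta^2)$, together with the batch-size accounting, are the right bookkeeping.

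One step of your verification is, however, false as written, and it sits at the only delicate point. You assert that $\delta\ps{2} = O(\theta)$ is $O(1/\log(1/\delta_*))$ once $\theta = \sqrt{O(1/\log(1/\eps))}$. Since $\log(1/\delta_*) = \Theta(\log(1/\eps))$ for $\delta_*^2 = O(\eps/(Q\sqrt{N}))$, you actually only get $\delta\ps{2} = O\left(1/\sqrt{\log(1/\eps)}\right)$, which does not satisfy the literal third hypothesis of Lemma~\ref{lemma: stage 2: main} (that hypothesis bounds $\delta\ps{2}$, not its square). To close this you must either (i) shrink the initialization to $\theta = O(1/\log(1/\eps))$, which changes the theorem's stated choice of $\theta$, or (ii) observe that in the Stage~2 dynamics the approximation error only ever enters through $\delta_A^2 + \delta_V^2 \le (\delta\ps{2})^2$ (see Lemma~\ref{lemma: stage 2: dynamics of alphaA - alphaV} and Lemma~\ref{lemma: stage 2: dynamics of alphaA + alphaV, local}; the unsquared $\delta\ps{2}$ in the Stage~2.2 stability paragraph is just the crude bound $(\delta\ps{2})^2 \le \delta\ps{2}$), so that the hypothesis can be applied with $(\delta\ps{2})^2$ in place of $\delta\ps{2}$; with $(\delta\ps{2})^2 = \Theta(\theta^2) = O(1/\log(1/\eps))$ the condition then holds, and this squared reading is in fact the one the paper's own choice $\theta = \sqrt{O(1/\log(1/\eps))}$ requires. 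Make whichever repair explicit; everything else in your write-up (the $\eps_{\Tmp}$ constraint, $(\alpha_V-\alpha_A)^2 = O(\eps_{\Tmp}^2\theta^2)$, the $\Err_{\mrm{small}}$ bounds via polynomial batch sizes, and the handoff to Stage~3 and the union bound) goes through as you describe.
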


\section{Additional Experiment}
\label{appendix: experiment details}
For all our experiments, we use Numpy and run on a normal laptop which takes about 20 minutes.

\textbf{Setup.} In all our experiments, we choose $T=5000, Q=2, N=3$. The architecture is
\begin{equation}
  \label{appendix eq: learner models}
  \F(\x, x_{T+1}; \V, \A) 
  := \V\X \left( \Id_T \A \e_{x_{T+1}} \right)
  =: \V\X \a\ps{x_{T+1}},
\end{equation}
and the data model is the $\scb$ (\ref{eq: data-generating model}) data-generating model. The batch size is $B=64$ and the regularization hyperparameter is $\lambda = 1$e-5. The total time is $\cT=1000$ iterations where stage 1 takes $\tau\in[0,400]$ with learning rate $\eta_1=0.01$. After $\tau> 400$, we use $\eta_2=0.005$ for further improvement (stages 2 and 3).

\textbf{Hyperparameter selection.} Due to the limitation of computational resources, we do experiments with $N\in [3,20]$ for real-world batched gradient experiments, and $N \in \{100, 500\}, T = 100000$ experiments by using Gaussian noise SGD simulations based on the dynamics of Lemma C.5 and C.6. 
As $T$ needs to scale with $N$ polynomially, it would be beyond our computation capability to experiment with larger $N$. 
As for other hyperparameters, $\lambda$ is chosen based on our theoretical results (Theorem 3.1 and G.1): $\lambda\sim \Theta(\epsilon K_P/Q^2N^3)$ in Lemma E.4. The batch size can be chosen from standard $\{64, 128, 256\}$, while smaller batch size will lead to divergence for both SGD and regularized GD. $\eta$ is chosen as the largest learning rate without divergence. 

Besides the original parameters, we consider the approximation error after the normalization step (stage 3) in real-time. That is thresholding and normalizing the attention block
$$\forall k \in [n], \hat{\a}\ps{k} = [ a\ps{k}_t \indi\{ a\ps{k}_t \ge \Omega(1/Q) \} ]_t.\ \ \a\ps{k} \gets \hat{\a}\ps{k} / \One\trans \hat{\a}\ps{k}$$
and we do further gradient descent to recover $\V$. In this case, we will directly use the linear regression solution on population loss for $\V$.

Here we report in addition: (1) original signal projection on the population process trajectory $\alpha_A$, $\alpha_V$ and the distance to the trajectory $\bDelta_{\A}, \bDelta_{\V}$. (2) The approximation error/similarity before and after normalization for both SGD and proximal gradient descent. We conclude that in all metrics proximal gradient descent performs better than the vanilla gradient descent with a small batch size (when the noise is large).

\begin{figure}[ht]
    \centering
    \includegraphics[width=0.9\textwidth]{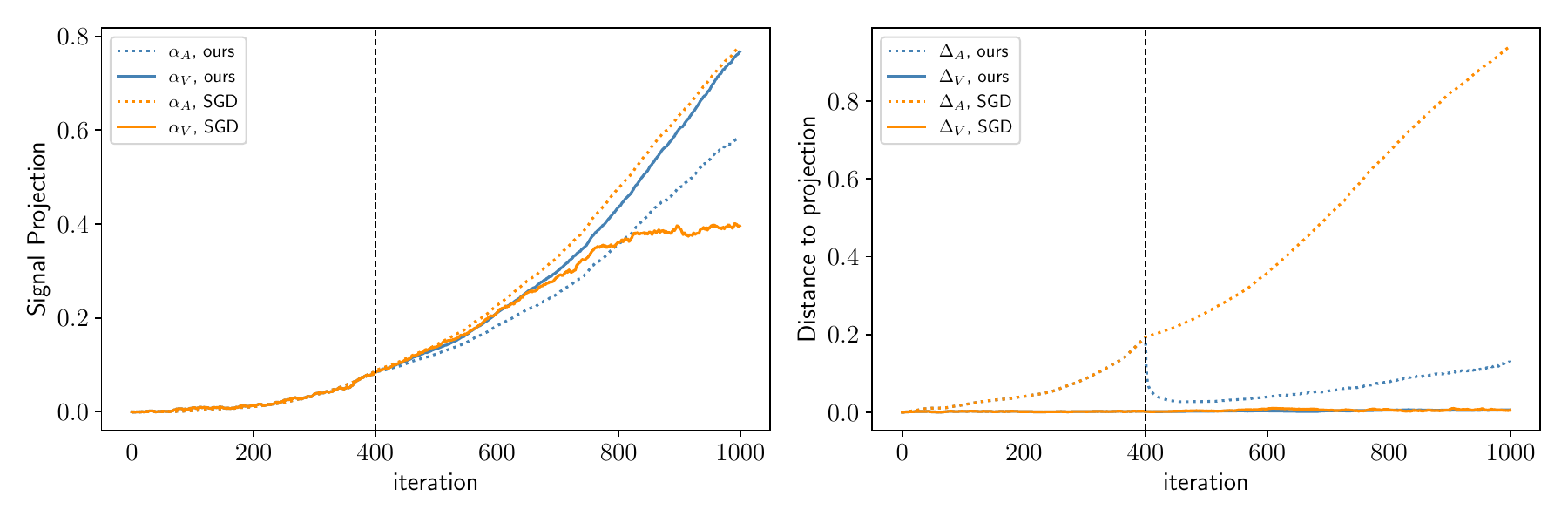}
    \vspace{-0.5cm}
    \caption{\textbf{Signals $\alpha_A,\alpha_V$ and the distance to population process $\bDelta_{\A},\bDelta_{\V}$.} For the SGD, the distance to the population process of the attention matrix $\A$ keeps growing and dominates the signal term. That explains the failure to learn the correct attention pattern, which leads to saturation of the signal. In comparison, our proximal methods dramatically help reduce the gradient noise and keep close to the population process. Though $\|\bDelta_A\|$ eventually grows up due to the bias of the gradient estimate (the original signal growth is also slowed down), after normalization it can still approximately learn the correct pattern. Both $\|\bDelta_{\V}\|$ stay small empirically.}
    \label{fig: alpha and delta}
\end{figure}

\begin{figure}[ht]
    \centering
    \vspace{-0.7cm}
    \includegraphics[width=0.9\textwidth]{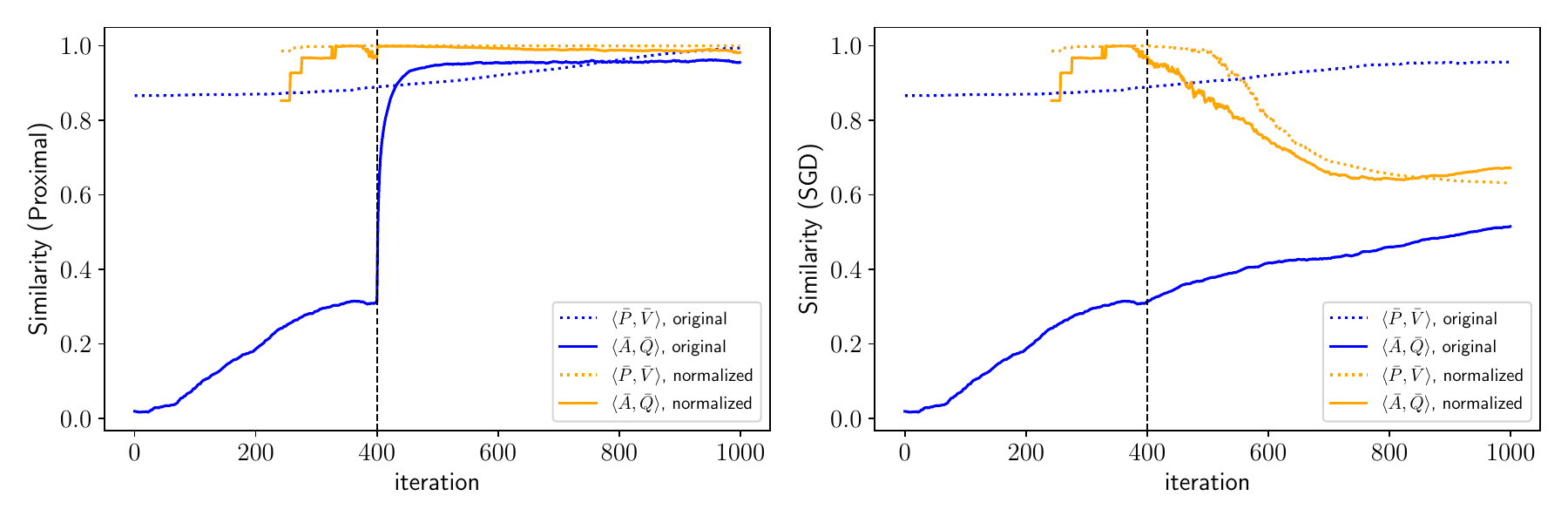}
    \vspace{-0.5cm}
    \caption{\textbf{Similarity with the ground-truth.} The figure shows after Stage 1, normalization helps further improve the solution of the proximal method. Meanwhile, with or without normalization, our proximal method always outperforms the vanilla SGD, which fails to recover the ground-truth.}
    \label{fig: similarity appendix}
            \vspace{-0.5cm}
\end{figure}
We tried different orders of state number $N$ and show that $\ell_1$ regularization is necessary and outperforms SGD when batch size is small (gradient noise is large).  Due to computation limitation, we experiment with real batched gradient on $N\leq 20, T\leq 5000$, and do SGD simulation by combining our population gradient + Gaussian noise (to mimic the batch gradient noise) for $N\leq 500, T\leq 100000$. 

We also corroborate our previous experiments with the new test loss plot to show the convergence of training. Note that since there are multiple global minima for the linear attention, $\ell_1$ regularized dynamics eventually will make $A$ and $Q$ deviate from the ground-truth while representing the same function. That is why the loss converges but the distance to the ground-truth increases after some point, making the final normalization step essential to recover the ground-truth. Another point is that according to our theory, the regularization will eventually distort the learned pattern when trained for too many iterations. Empirically, the loss also increases a little after it converges. Therefore, we must stop early and normalize before the distortion happens.
\begin{figure}[ht]
    \centering        
        \vspace{-0.1cm}
    \includegraphics[width=\textwidth]{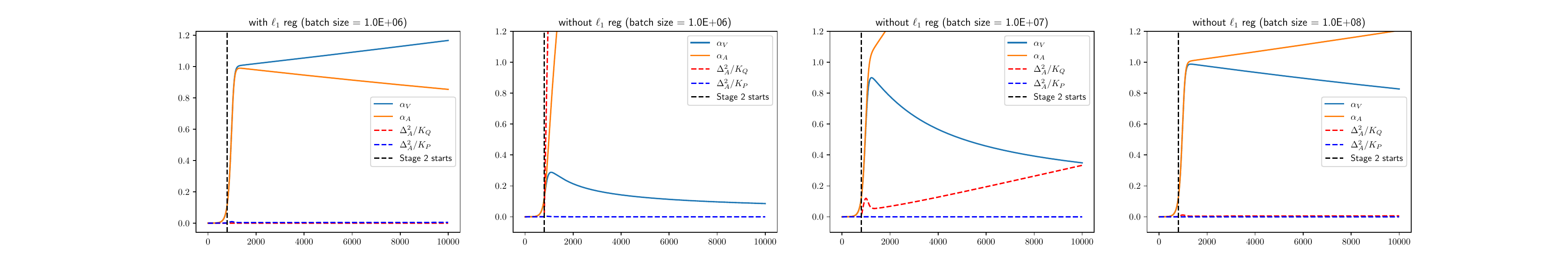}
    \includegraphics[width=\textwidth]{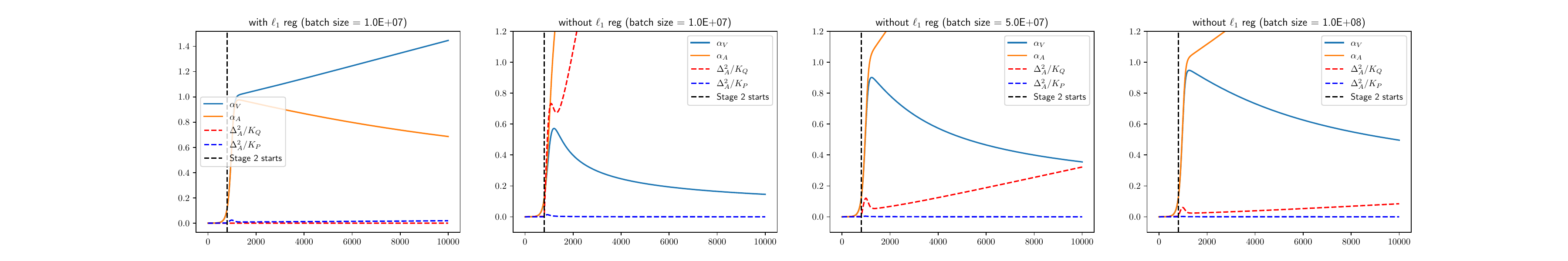}
        \vspace{-0.5cm}
    \caption{\textbf{Simulation with larger $N$ and $T$.} We simulate the SGD/$\ell_1$ regularized dynamics by replacing the batched noise with Gaussian noise in the dynamics formula in Lemma C.5 and C.6. The gaussian noise variance scales with the inverse of batch size. The experiments show that the conclusions drawn from the small $N$ cases still hold in those simulations: when $T = 100000, N =100/500$, our $\ell_1$ regularized algorithm can recover the ground-truth since the distance to the population trajectory ($\Delta_A,\Delta_V$) stays  very small, while the error along SGD trajectories quickly increases with the same batch size.
    } 
    \label{fig: rebuttal experiment 3: gaussian simulation}
\end{figure}
\begin{figure}[H]
    \centering
    \includegraphics[width=\textwidth]{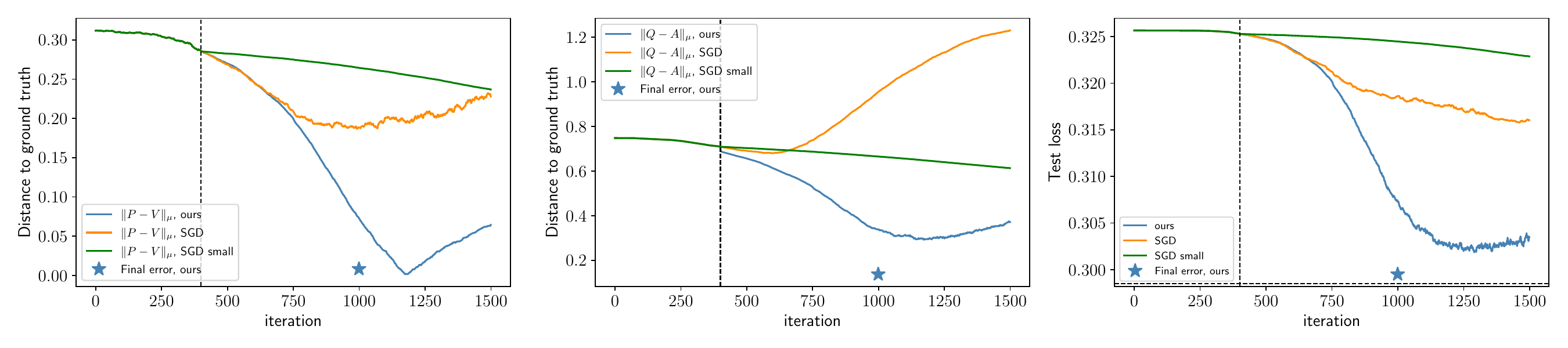}
    \includegraphics[width=\textwidth]{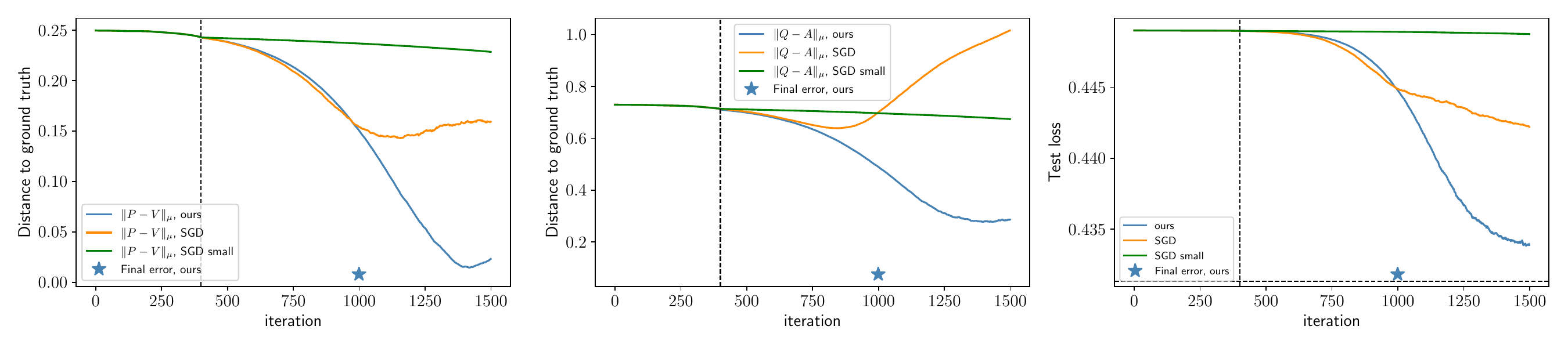}
    \includegraphics[width=\textwidth]{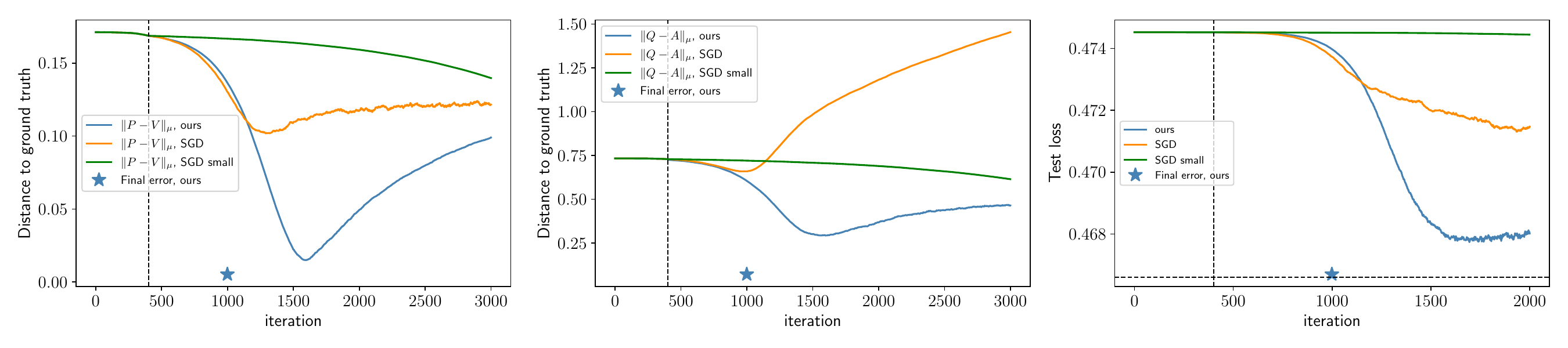}
        \vspace{-0.5cm}
    \caption{\textbf{Convergence analysis.} We plot the distance to the ground-truth and the test loss for $N=3,10,20$ (from top to bottom). It shows that when gradient noise is large, $\ell_1$ regularized algorithm with normalization and early-stopping can almost perfectly recover the ground-truth (the star), while SGD struggles to learn the target function. Figure 3 (Appendix I) also shows when the gradient noise is large, SGD never learns ground-truth $Q$ even with normalization.  
    } 
    \label{fig: rebuttal experiment 2: convergence analysis}
\end{figure}


\newpage

\end{document}